\def\[#1\]{\begin{bmatrix}#1\end{bmatrix}}
\def\cme{{\mathtt{CME}}}
\def\Tr{\operatorname{tr}}
\def\att{\mathtt{attn}}
\def\softmax{\mathtt{softmax}}
\def\mha{{\mathtt{mha}}}
\def\head{{\mathtt{head}}}
\def\mask{{\mathtt{msk}}}
\def\tm{{\mathtt{m}}}
\def\noend{\notag \\}
\def\agg{\mathtt{agg}}
\def\nn{\mathtt{nn}}
\def\fk{\mathfrak{K}}
\def\fl{\mathfrak{L}}
\def\sm{{\mathtt{SM}}}
\def\Norm{\mathtt{norm}}
\def\tq{{\mathrm{q}}}
\def\tk{{\mathrm{k}}}
\def\tv{{\mathrm{v}}}
\def\tp{{\mathrm{p}}}
\def\relu{{\mathtt{ReLU}}}
\def\trans{{\mathtt{trans}}}
\def\ffn{{\mathtt{ffn}}}
\def\tx{{\mathrm{x}}}
\def\ty{{\mathrm{y}}}
\def\vec{{\mathrm{vec}}}
\def\ssl{{\mathtt{SSL}}}
\def\rbf{{\mathtt{RBF}}}
\def\gp{\mathtt{GP}}
\def\ltv{{\mathtt{LTV}}}
\def\dt{{\mathtt{DS}}}
\def\pre{{\mathtt{PT}}}
\def\fX{{\mathfrak{X}}}
\def\fY{{\mathfrak{Y}}}
\def\fW{{\mathfrak{W}}}
\def\texp{{\mathtt{EXP}}}
\title{\huge An Analysis of Attention via the Lens of Exchangeability and Latent Variable Models}
\author{\small Yufeng Zhang\thanks{equal contribution} \thanks{Northwestern University; \texttt{yufengzhang2023@u.northwestern.edu}},~~Boyi Liu\footnotemark[1] \thanks{Northwestern University; \texttt{boyiliu2018@u.northwestern.edu}},~~Qi Cai\thanks{Northwestern University; \texttt{qicai2022@u.northwestern.edu}},~~Lingxiao Wang\thanks{Northwestern University; \texttt{lingxiaowang2022@u.northwestern.edu}},~~Zhaoran Wang\thanks{Northwestern University; \texttt{zhaoranwang@gmail.com}}}
\date{}
\begin{document}

\maketitle

\begin{abstract}
With the attention mechanism, transformers achieve significant empirical successes in natural language processing and computer vision. Despite the intuitive understanding that transformers perform relational inference (or ``inductive reasoning'') over long sequences to produce desirable representations, we lack a rigorous theory on how the attention mechanism achieves it. In particular, several intriguing questions remain open: (a) What makes a desirable representation? (b) How does the attention mechanism infer the desirable representation within the forward pass? (c) How does a pretraining procedure learn to infer the desirable representation through the backward pass?

\vskip4pt
We aim to answer the three questions via the lens of exchangeability. Specifically, we observe that, as is the case in BERT and ViT, input tokens are often exchangeable since they already include positional encodings. The notion of exchangeability induces a latent variable model that is invariant to input sizes, which enables our theoretical analysis. 
\begin{itemize}[wide=0pt,topsep=1pt,itemsep=1pt,partopsep=0pt,parsep=0pt]
\item[-] To answer (a) on representation, we establish the existence of a sufficient and minimal representation of input tokens. In particular, such a representation instantiates the posterior distribution of the latent variable (or ``concept'') given input tokens, which plays a central role in predicting output labels and solving downstream tasks.
\item[-] To answer (b) on inference, we prove that attention with the desired parameter infers the latent posterior up to an approximation error, which is decreasing in input sizes. In detail, we quantify how attention approximates the conditional mean of the value given the key, which characterizes how it performs relational inference over long sequences. 
\item[-] To answer (c) on learning, we prove that both supervised and self-supervised objectives allow empirical risk minimization to learn the desired parameter up to a generalization error, which is independent of input sizes. Particularly, in the self-supervised setting, we identify a condition number that is pivotal to solving downstream tasks.
\end{itemize}
Our theoretical analysis gives a complete characterization of the attention mechanism as a ``greybox'' design, which unifies the handcrafted architecture induced by the latent variable model (``whitebox'') and the learnable parameter estimated from data (``blackbox'') with provable approximation, generalization, and optimization guarantees. 

\end{abstract}

\section{Introduction} 
Transformers are the state-of-the-art architecture for a variety of tasks in natural language processing \citep{vaswani2017attention}, computer vision \citep{dosovitskiy2020image}, and multimodal generation \citep{ramesh2021zero}. At the core of their significant empirical successes is the attention mechanism, which is defined by a computation graph for the forward pass. In particular, the computation graph performs a specific form of message passing across input tokens \citep{bronstein2021geometric}. It is commonly believed that the attention mechanism is capable of handling long sequences and performing relational inference (or ``inductive reasoning''), which appears to be the key advantage of transformers. However, the intuitive understanding lacks a quantitative justification, which leaves many intriguing questions open:

\begin{itemize}[wide=0pt,topsep=2pt,itemsep=2pt,partopsep=0pt,parsep=0pt]
\item[(a)] What makes a desirable representation? Ideally, the desirable representation of input tokens is sufficient and minimal in the sense that it preserves all relevant information for predicting output labels or solving downstream tasks (sufficiency) while it neglects all irreverent information (minimality). However, we lack a quantitative definition of sufficiency and minimality, which requires a probabilistic model. 

\item[(b)] How does the attention mechanism infer the desirable representation within the forward pass? Intuitively, the attention mechanism is defined by a computation graph that resembles kernel smoothing or kernel regression \citep{shawe2004kernel} for predicting the value given the key. However, we lack a formal characterization of what function class the attention mechanism parameterizes or approximates. Also, it remains unclear why the specific form of message passing produces the desirable representation of input tokens, that is, one with sufficiency and minimality. 

\item[(c)] How does a pretraining procedure learn the desirable representation through the backward pass? Empirically, the pretraining procedure that minimizes empirical risks for predicting masked tokens \citep{devlin2018bert, dosovitskiy2020image, he2022masked} appears to succeed in the presence of long sequences. However, we lack a theoretical justification of whether the pretraining procedure with the masked objective attains a desirable estimator that generalizes and why the generalization error does not appear to degrade for long sequences. In particular, it remains unclear to what degree the estimated representation facilitates solving downstream tasks. 
\end{itemize}

In this paper, we answer the three questions via the lens of exchangeability. The key observation is that, as is the case in BERT \citep{devlin2018bert} and ViT \citep{dosovitskiy2020image}, input tokens are exchangeable since they include positional encodings. In other words, the joint distribution of input tokens, e.g., vector embeddings of words in a paragraph or patches in an image with positional encodings, remains the same upon permuting their orders. Meanwhile, the attention mechanism and entrywise feedforward neural networks preserve the notion of exchangeability throughout all transformer layers. By the de Finetti Theorem \citep{de1937prevision}, the notion of exchangeability induces a latent variable model that is invariant to input sizes. Unlike classical Bayesian settings, where the latent variable model is defined across many data points, ours is defined over input tokens within one data point (in an ``in-context'' manner), which captures a fine-grained structure of interactions as relational inductive biases \citep{battaglia2018relational}. The latent variable model enables our theoretical analysis, which is summarized in the following:

\begin{itemize}[wide=0pt,topsep=2pt,itemsep=2pt,partopsep=0pt,parsep=0pt]
\item[-] To answer (a) on representation, we establish the existence of a sufficient and minimal representation of input tokens based on the latent variable model, which is induced by exchangeability. In particular, we leverage the latent variable model to define sufficiency and minimality following the factorization theorem and the sufficiency principle \citep{fisher1922mathematical}. Moreover, we prove that the posterior distribution of the latent variable given input tokens is a sufficient and minimal representation, which plays a central role in predicting output labels and solving downstream tasks. Intuitively, the latent variable instantiates the ``concept'' of a paragraph or an image, which is ``summarized'' over words or patches. In detail, the ``summarization'' process is formalized by the mapping from input tokens to the posterior distribution of the latent variable, that is, inferring the ``concept'' in a Bayesian manner within the forward pass. 

Given the answer to (a), which defines the desirable representation as the latent posterior, it remains unclear how to parameterize or approximate the latent posterior, which is addressed by the answer to (b).


\item[-] To answer (b) on inference, we prove that the attention mechanism with the desired parameter infers the latent posterior up to an approximation error, which is decreasing in input sizes. In particular, we prove that a specific parameterization of the latent posterior yields a variant of the attention mechanism based on kernel conditional mean embedding (CME), namely the CME attention, which infers the conditional mean of the value given the key. Here the value and the key (or the query) are obtained from a parameterized transformation of input tokens, where the unknown parameter requires learning. 

Although the CME attention recovers the latent posterior for any input sizes, it differs from the commonly used softmax attention by a normalization matrix. To this end, we prove that the CME attention and the softmax attention are equivalent at the infinite limit of input sizes by drawing a connection to nonparametric conditional density estimation. In other words, the softmax attention recovers the latent posterior up to an approximation error that is decreasing in input sizes, which characterizes how it performs relational inference over long sequences. As byproducts, we justify the necessity of multiple attention heads in transformers and provide a causal interpretation of the inferred representation through instrumental variables. 

Given the answer to (b), which quantifies the approximation error for the latent posterior, it remains unclear how to learn the desired parameter of the attention mechanism, which is addressed by the answer to (c).




\item[-] To answer (c) on learning, we prove that both supervised and self-supervised objectives allow empirical risk minimization to learn the desired parameter up to a generalization error, which is independent of input sizes. In particular, through maximum likelihood estimation, we establish the connection between the latent posterior and the masked objective, which is defined by the empirical risk for predicting masked tokens. 

Moreover, we prove that the global minimizer of the masked objective attains a generalization error that is independent of input sizes, which justifies why transformers allow long sequences. Our proof exploits the invariance and equivariance of the attention mechanism and entrywise feedforward neural networks, which deviates from most existing analyses of the generalization error. Particularly, in the self-supervised setting, e.g., as in MAE \citep{he2022masked}, we identify a condition number that is pivotal to solving downstream tasks. Intuitively, the condition number quantifies the amount of information that is transferred from the pretraining task to a new task. 

Meanwhile, in the overparameterized regime, we prove that any stationary point of the masked objective is almost globally optimal when the attention mechanism and entrywise feedforward neural networks have sufficient expressive power. As a result, stochastic gradient descent finds the global minimizer of the masked objective, which generalizes as discussed above. 

Combining the above analysis of the approximation, generalization, and optimization errors in the answer to (a)-(c), we provide a complete characterization of the attention mechanism. 



\end{itemize}
\vskip4pt
\noindent{\bf Contribution:} In summary, our theoretical contribution is threefold:
\begin{itemize}[wide=0pt,topsep=2pt,itemsep=2pt,partopsep=0pt,parsep=0pt]
\item[(i)] We identify a general principle for parameterizing function classes and constructing learning objectives based on latent posterior inference, which requires a minimal assumption of data. In contrast to classical learning paradigms, the latent variable model is defined over input tokens within one data point, which captures relational inductive biases.


\item[(ii)] We recover the attention mechanism from a specific parameterization of latent posterior inference based on kernel conditional mean embedding and nonparametric conditional density estimation. In particular, we demonstrate how the attention mechanism combines the handcrafted architecture, which is induced by latent posterior inference, and the learnable parameter, which determines the kernel function.


\item[(iii)] We characterize the approximation, generalization, and optimization errors for estimating the learnable parameter of the attention mechanism through minimizing the masked objective. In particular, we prove that input sizes do not degrade the approximation and generalization errors, which justifies why transformers allow long sequences. 
\end{itemize}

\vskip4pt
\noindent{\bf Discussion:}
Our theoretical analysis casts the attention mechanism as a ``greybox'' approach to modeling, that is, it combines the handcrafted architecture, which is coined by a probabilistic model over input tokens within one data point (``whitebox''), and the learnable parameter, which is estimated in an end-to-end manner through empirical risk minimization (``blackbox''). It is worth mentioning that our theoretical analysis studies the class of transformers like BERT \citep{devlin2018bert} and ViT \citep{dosovitskiy2020image} (``encoder-only''), which does not exploit the autoregressive structure as in GPT \citep{brown2020language} (``decoder-only''). On the other hand, the general principle identified in (i) is applicable to other probabilistic models like hidden Markov models or general graphical models over trees and grids, which motivates other principled architectures beyond the attention mechanism. We leave it as a future direction.

\subsection*{Related Works} 

\noindent{\bf Transformers and Attention.} The pioneering work  \citep{vaswani2017attention} proposes transformers for the first time and highlights the key role of the attention mechanism. Subsequently, there are a vast body of works that propose various transformer architectures and different pretraining paradigms. See, e.g., \cite{ devlin2018bert, radford2018improving, radford2019language, dai2019transformer, brown2020language, dosovitskiy2020image, he2022masked} and the references therein. Transformers demonstrate significant  empirical successes in natural language processing \citep{wolf2020transformers}, computer vision \citep{dosovitskiy2020image}, protein structure prediction \citep{jumper2021highly}, and sequential decision making \citep{chen2021decision}. Our work provides a theoretical justification of transformers and the attention mechanism, that is, how a latent variable model induced by exchangeability allows us to derive transformer architectures and pretraining paradigms in a principled manner.



\vskip4pt
\noindent{\bf Analysis of Transformers and Attention.} Our work is related to a recent line of works that analyze transformers and the attention mechanism \citep{tsai2019transformer, vuckovic2020mathematical, hron2020infinite, yang2020tensor, yang2021tensor, edelman2021inductive, wei2021statistically, xie2021explanation, malladi2022kernel, garg2022can, zhang2022unveiling}.
Specifically, 
\cite{tsai2019transformer} demonstrate that the attention mechanism can be viewed as a kernel smoother over input tokens.
\citet{vuckovic2020mathematical} establish the Lipschitz continuity of transformers via the lens of interacting particle systems.  
\cite{hron2020infinite, yang2020tensor, yang2021tensor, malladi2022kernel} characterize the infinite-width limit of transformers under the framework of neural tangent kernels \citep{jacot2018neural}. Among them, \cite{malladi2022kernel} demonstrate that neural tangent kernels can capture the parameter update in the fine-tuning phase.
\cite{edelman2021inductive} prove that transformers can represent a sparse function of input tokens and establish a sample complexity that scales logarithmically in input sizes. \citet{wei2021statistically} characterize the approximation and generalization errors for learning a Turing machine with transformers. \cite{xie2021explanation} prove that transformers can infer a latent variable (or ``concept'') assuming that the data distribution is a mixture of hidden Markov models. \cite{garg2022can} demonstrate that transformers can learn to perform linear predictions within one data point (in an ``in-context'' manner). \cite{zhang2022unveiling} evaluate the empirical performance of transformers for learning equality and group operations.

Our work provides a complete characterization of the representation, inference, and learning aspects of the attention mechanism via the lens of exchangeability and latent variable models, which requires a minimal assumption on the data distribution (exchangeability). Specifically, in comparison with \cite{tsai2019transformer}, we demonstrate that the attention mechanism not only parameterizes nonparametric conditional density estimation but also approximates kernel conditional mean embedding, which infers the conditional mean of the value given the key. Moreover, we invoke the latent variable model induced by exchangeability to justify the attention mechanism as a specific parameterization of latent posterior inference. Meanwhile, we leverage the latent variable model to derive the common choice of both supervised and self-supervised objectives, e.g., the masked objective.
In comparison with \cite{xie2021explanation}, we do not assume that the data distribution takes a specific form (a mixture of hidden Markov models) or the latent posterior is given a priori by a specific parameter (no learning required). Instead, we prove that the attention mechanism is capable of instantiating latent posterior inference up to an approximation error and the masked objective allows us to learn to infer the latent posterior up to the generalization and optimization errors. Also, it is worth mentioning that \cite{xie2021explanation} focus on the class of transformers like GPT \citep{brown2020language} (``decoder-only''), while we focus on the class of transformers like  BERT \citep{devlin2018bert} and ViT \citep{dosovitskiy2020image} (``encoder-only'').
In comparison to \cite{edelman2021inductive}, we exploit the invariance and equivariance of transformers and establish a generalization error that is independent of input sizes.  

%

\vskip4pt
\noindent{\bf Generalization of  Deep Neural Networks.} Our work is related to the vast body of works that analyze the generalization error of deep neural networks. See, e.g., \citet{jiang2019fantastic,valle2020generalization} for a comprehensive introduction. However, most of them do not exploit invariance and equivariance. As a result, a direct application of such results yields a vacuous bound as input sizes increase. On the other hand, \citet{sokolic2017generalization, sannai2021improved, elesedy2021provably, zhu2021understanding} establish a generalization error that captures the improvement from invariance and equivariance, which, however, is not applicable to the attention mechanism. Our theoretical analysis of the generalization error follows the framework of \citet{bartlett2017spectrally}, which stems from \cite{bartlett1996valid,bartlett2002rademacher}. In addition, the concurrent work \citep{zhang2022relational} provides a PAC-Bayes analysis of the generalization error of the attention mechanism in the context of multiagent reinforcement learning. 

\vskip4pt
\noindent{\bf Optimization of Deep Neural Networks.} Our work is built on the vast body of works that analyze the optimization error of deep neural networks \citep{allen2019learning, allen2019convergence, allen2019convergence2, arora2019fine, du2018gradient, du2019gradient, zhang2019learning, zou2018stochastic, zou2019improved,allen2019convergence, cao2019generalization, li2018learning, chizat2019lazy,mei2018mean, mei2019mean, rotskoff2018parameters, nguyen2019mean, sirignano2020mean}. Most of them focus on overparameterized neural networks in the neural tangent kernel \citep{jacot2018neural} or mean-field regime \citep{mei2018mean}. Our work analyzes the optimization error in the neural tangent kernel regime, which is similar to \cite{malladi2022kernel}. Meanwhile, it is worth mentioning that our theoretical analysis of the approximation and generalization errors is not restricted to the neural tangent kernel regime.


\vskip4pt
\noindent{\bf Invariance and Equivariance in Deep Neural Networks.} Our work is related to a recent line of works on deep neural networks with invariance or equivariance with respect to permutations and other group operations. See, e.g.,    \citet{scarselli2008graph, zaheer2017deep,lee2019set, keriven2019universal, romero2020group, bloem2020probabilistic, hutchinson2021lietransformer, satorras2021n, kossen2021self} and the references therein. 
Also, see \citet{valle2020generalization, han2022geometrically} for a detailed survey. In comparison, we exploit the latent variable model induced by exchangeability to provide a complete
characterization of the representation, inference, and learning aspects of the attention mechanism.

\section{Preliminary}

\noindent{\bf Notations.}
We denote by $[L]$ the index set $\{1, 2, \ldots, L\}$ for any $L \in \NN_+$. For any vector $v \in \RR^L$, we denote by $\softmax(v) = (\exp(v^\ell) / (\sum_{\ell' = 1}^L \exp(v^{\ell'})))_{\ell \in [L]} \in \RR^L$ the softmax function.
We denote by $\|\cdot\|_2$ the spectral norm, which becomes the $\ell_2$-norm when it operates on a vector. We denote by $\|\cdot\|_{\rm F}$ the Frobenius norm.
For any $d \in \NN_+$, we denote by $\SSS^{d-1}  = \{x \in \RR^d \given \norm{x}_2 = 1\}$ the $(d-1)$-dimensional unit sphere.

\vskip4pt

\noindent{\bf Reproducing Kernel Hilbert Space.} Let $\cH_x$ be a Hilbert space over a domain $\fX$, which contains functions $f: \fX\rightarrow \RR$ and is equipped with the inner product $\inp{\cdot}{\cdot}_{\cH_x}$. We say that $\cH_x$ is a reproducing kernel Hilbert space (RKHS) with the kernel function $\fk: \fX \times \fX \rightarrow \RR$ if we have the reproducing property that $
\inp{f}{\fk(x, \cdot)}_{\cH_x} = f(x)$ for any $f \in \cH_x$ and $ x\in \fX.$
An RKHS $\cH_x$ is associated with a feature mapping $\phi: \fX\rightarrow \ell_2$ such that $\fk(x, x') = \phi(x)^\top \phi(x')$ for any $x, x'\in \fX$ \citep{muandet2016kernel}. Here we denote by $\ell_2$ the space of all square-summable series.

\vskip4pt

\noindent{\bf Attention Mechanism.} For an input sequence $X = \{x^\ell\}_{\ell \in [L]}$ with the input tokens $x^\ell \in \RR^d$, we consider the key matrix $K \in \RR^{L \times d_\tp}$ and the value matrix $V \in \RR^{L \times d}$ defined as
\begin{align}
	K &= (k^1, \ldots, k^L)^\top = \bigl( k_\theta(x^1), \ldots,  k_\theta(x^L)\bigr)^\top \in \RR^{L \times d_\tp}, \nonumber\\ 
	V &= (v^1, \ldots, v^L)^\top = \bigl( v_\theta(x^1), \ldots, v_\theta(x^L)\bigr)^\top \in \RR^{L \times d}. \nonumber
\end{align}
Here $k_\theta: \RR^d \rightarrow \RR^{d_\tp}$ and $v_\theta: \RR^d \rightarrow \RR^{d}$ map the $\ell$-th input token $x^\ell$ to the key $k^\ell$ and the value $v^\ell$, respectively, where $\theta \in \Theta$ is the the learnable parameter.
For any query $q\in\RR^{d_\tp}$, we define the attention mechanism as follows,
\#\label{eq:attn}
\att(q, K, V) = V^\top \Norm\bigl(\fk(K, q)\bigr) \in \RR^{d},
\#
where $\fk: \RR^{d_\tp} \times \RR^{d_\tp} \rightarrow \RR$ is a kernel function and we write $\fk(K, q) = (\fk(k^\ell, q))_{\ell \in L} \in \RR^{L}$. Here we denote by $\Norm: \RR^{L} \rightarrow \RR^{L}$ a normalization mapping. 

A common example of the attention mechanism is the softmax attention \citep{vaswani2017attention}, where the kernel function is the exponential kernel $\fk_{\texp}(q, k) =  \exp(q^\top k / \gamma)$ with a fixed $\gamma > 0$ and the normalization mapping is the following softmax normalization,
\begin{align*}
	\Norm_\sm\bigl(\fk(K, q)\bigr) = \bigl(\one^\top \fk(K, q)\bigr)^{-1} \cdot\fk(K, q) .
\end{align*}
The attention mechanism in \eqref{eq:attn} with the exponential kernel and the softmax normalization is the softmax attention \citep{vaswani2017attention}, which takes the following form,
\begin{align*}
	\att_{\sm}(q, K, V) = V^\top \Norm_\sm\bigl(\fk_{\texp}(K, q)\bigr) = \softmax(Kq/\gamma).
\end{align*}



\section{Representation, Inference, and Estimation \\ via Latent Variable Model}
\label{sec:exchange}



\textbf{From Exchangeability to Latent Variable Model.}
We consider the input sequence $X = \{x^\ell\}_{\ell\in[L]}$, where $x^\ell \in \RR^d$ is an input token and $L\in \NN_+$ is the sequence length. In natural language processing (NLP), such a sequence consists of embeddings of words in a paragraph, while in computer vision (CV), such a sequence consists of embeddings of patches in an image.

As is the case in BERT \citep{devlin2018bert} and ViT \citep{dosovitskiy2020image}, the input sequence is exchangeable since it includes positional encodings. Specifically, we say that a random variable sequence $\{x^\ell\}_{\ell \in \NN_+}$ is exchangeable if and only if it holds for any sequence length $L \in \NN_+$ and any index permutation $\pi: [L] \rightarrow [L]$ that
\$
\PP(x^1, x^2, \ldots, x^L) = \PP(x^{\pi(1)}, x^{\pi(2)}, \ldots, x^{\pi(L)}).
\$
In other words, permuting the index order within the random variable sequence does not affect its joint distribution. The following proposition states that the exchangeability of a random variable sequence induces a latent variable.

\begin{proposition}[de Finetti Representation Theorem \citep{de1937prevision}]
	\label{th:definetti}
	Let $\{x^\ell\}_{\ell \in \NN_+}$ be an exchangeable sequence. Then, there exists a latent variable $z$ such that for any sequence length $L \in \NN_+$,
	\begin{align*}
		\PP(x^1, \ldots, x^L) &= \int \prod_{\ell = 1}^L \PP(x^\ell \given z) \cdot \PP(z) \ud z,  \\
		\PP(x^\ell \given x^1, \ldots, x^{\ell-1}, x^{\ell + 1}, \ldots, x^L) &= \int \PP(x^\ell\given z) \cdot \PP(z\given x^1, \ldots, x^{\ell-1}, x^{\ell + 1}, \ldots, x^L) \ud z, \quad \forall \ell \in [L].
	\end{align*}
\end{proposition}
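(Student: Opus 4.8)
The plan is to prove the de Finetti Representation Theorem for exchangeable sequences, deriving both the product-integral representation of the joint law and the conditional-independence identity for the leave-one-out conditional. Since the de Finetti theorem is a classical result, the natural route is to recall the standard construction of the latent variable $z$ and then extract the second identity as a corollary of the first via the exchangeability of the sequence.

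First I would construct the latent variable. The standard approach is to take $z$ to be the (random) limiting empirical measure of the sequence, i.e. $z = \lim_{n \to \infty} \frac{1}{n} \sum_{\ell=1}^n \delta_{x^\ell}$, whose existence (almost surely, in the weak topology on probability measures over $\RR^d$) follows from applying the martingale convergence theorem / the ergodic theorem for exchangeable sequences to a countable determining class of bounded continuous test functions. Conditioned on $z$, one shows the $x^\ell$ are i.i.d.\ with common law $z$. The cleanest way to package this is via the Hewitt--Savage zero--one law together with the reverse martingale $\EE[\,f(x^1)\mid \mathcal{E}_n\,]$, where $\mathcal{E}_n = \sigma(\text{symmetric functions of } x^1, \dots, x^n, x^{n+1}, \dots)$ is the exchangeable $\sigma$-algebra; exchangeability gives $\EE[f(x^1)\mid \mathcal{E}_n] = \frac{1}{n}\sum_{\ell=1}^n f(x^\ell)$, and letting $n \to \infty$ identifies the conditional expectation with integration against $z$. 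Finitely many coordinates $x^1, \dots, x^L$ are then conditionally independent given the tail/exchangeable $\sigma$-algebra, which yields the first display with $\PP(z)$ the law of $z$ and $\PP(x^\ell \mid z)$ the disintegration. I would present this at the level of citing the martingale argument rather than reproving measure-theoretic convergence in detail.

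For the second identity, I would argue as follows. Fix $\ell \in [L]$ and write $X_{-\ell} = (x^1, \dots, x^{\ell-1}, x^{\ell+1}, \dots, x^L)$. By the first part, the sequence $\{x^m\}_{m \in \NN_+}$ is, conditionally on $z$, i.i.d.\ with law $\PP(\cdot \mid z)$; in particular $x^\ell$ is conditionally independent of $X_{-\ell}$ given $z$, so $\PP(x^\ell \mid X_{-\ell}, z) = \PP(x^\ell \mid z)$. Then the tower property over the conditioning on $z$ gives
\begin{align*}
\PP(x^\ell \mid X_{-\ell}) = \int \PP(x^\ell \mid X_{-\ell}, z) \cdot \PP(z \mid X_{-\ell}) \ud z = \int \PP(x^\ell \mid z) \cdot \PP(z \mid X_{-\ell}) \ud z,
\end{align*}
which is exactly the claimed display. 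Note that the conditional law $\PP(x^\ell \mid X_{-\ell})$ does not depend on which index $\ell$ is singled out, by exchangeability, so the statement ``$\forall \ell \in [L]$'' is consistent and each instance reduces to the same computation.

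The main obstacle is the measure-theoretic heavy lifting in the first step: establishing existence of the limiting empirical measure $z$ as a bona fide random probability measure and justifying the disintegration $\PP(x^\ell \mid z)$, including regularity (existence of regular conditional probabilities, which needs $\RR^d$ to be a standard Borel space --- true here). Rather than grinding through this, I would cite the de Finetti / Hewitt--Savage theorem in its standard form (e.g.\ the reverse-martingale proof) and confine original work to the short tower-property deduction of the second identity from the conditional i.i.d.\ structure. One subtlety worth a remark: all ``densities'' $\PP(x^\ell \mid z)$, $\PP(z \mid X_{-\ell})$ here should be read as conditional distributions (disintegrations), and the integrals as integrals against the appropriate reference/base measures, since the theorem does not presuppose the existence of densities.
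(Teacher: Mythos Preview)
Your proposal is a sound sketch of the classical proof of de Finetti's theorem via the limiting empirical measure and the reverse-martingale/Hewitt--Savage argument, and your derivation of the second display from conditional i.i.d.\ structure plus the tower property is correct. Note, however, that the paper does not actually prove this proposition: it is stated as a classical result with a citation to \cite{de1937prevision} and used as a black box, so there is no ``paper's own proof'' to compare against. Your write-up therefore goes well beyond what the paper requires; if you want to match the paper's treatment, a one-line citation suffices, while if you want a self-contained argument, your outline is appropriate and only needs the measure-theoretic caveats you already flagged (existence of regular conditional probabilities on $\RR^d$ and reading the integrals as disintegrations).
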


We remark that Proposition 
\ref{th:definetti} requires an infinite-length exchangeable sequence. 
Up to an approximation error, a finite-length exchangeable sequence also induces a latent variable \citep{diaconis1980finite}. 
In what follows, we consider the former case where the input sequence includes positional encodings and is thus exchangeable \citep{devlin2018bert, dosovitskiy2020image}. See Figure \ref{fig:exchange} for an illustration of the exchangeability. 

\begin{figure}[H]
	\centering
	\includegraphics[width=3.8in]{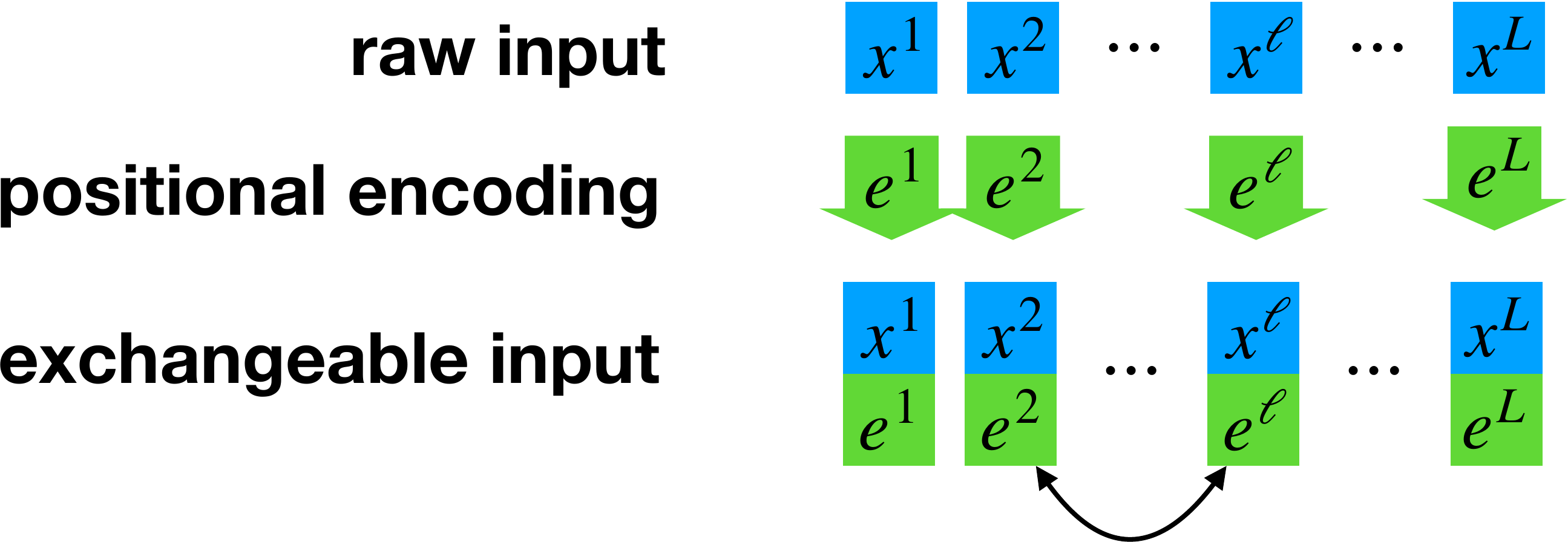}
	\caption{The input sequence (the raw version without positional encodings) becomes exchangeable with  positional encodings. In practice, the positional encoding is incorporated in an additive manner (instead of concatenation). \label{fig:exchange}}
\end{figure}

Proposition~\ref{th:definetti} guarantees the existence of a latent variable, which forms the basis of our theoretical analysis. See Figure \ref{fig:definetti_x} for an illustration.
Intuitively, the latent variable can be viewed as the ``concept'' of the input sequence, which is ``summarized'' over words or patches. For instance, in NLP, the latent variable instantiates the ``meaning'' of a paragraph, while in CV, the latent variable instantiates the ``theme'' of an image. In particular, the latent posterior $\PP(z\given X)$ plays a key role in solving downstream tasks \citep{song2014nonparametric, xie2021explanation}, as it is a desired representation of the input sequence $X$. See Figure \ref{fig:definetti_y} for an illustration.
In the following lemma, we prove that the latent posterior $b_z(X) = \PP(z = \cdot \given X)$ is a minimal sufficient statistic \citep{fisher1922mathematical}.

\begin{lemma}[Minimal Sufficiency of Latent Posterior]
	\label{lem:sufficiency}
	Let $z$ be the latent variable induced by the exchangeability of the input sequence $X$.
	The latent posterior $b_z(X) = \PP(z = \cdot \given X)$ is a minimal sufficient statistic of the input sequence $X$ for the latent variable $z$. Meanwhile, for any target variable $y$ that is independent of the input sequence $X$ conditioning on the latent variable $z$, we assume the invertibility of the operator $\cT$ defined by
	\#\label{eq:def-cT}
	(\cT f)(y) = \int \PP(y \given z) f(z) \ud z.
	\#
	Then, the latent posterior $b_z(X) = \PP(z = \cdot \given X)$ is a minimal sufficient statistic of the input sequence $X$ for the target variable $y$. 
\end{lemma}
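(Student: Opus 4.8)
The plan is to prove the two assertions of Lemma~\ref{lem:sufficiency} separately, first establishing that $b_z(X)$ is a minimal sufficient statistic for the latent variable $z$, and then bootstrapping this to the target variable $y$ using the conditional independence $X \perp y \mid z$ and the invertibility of $\cT$.

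\textbf{Step 1: Sufficiency of $b_z(X)$ for $z$.} I would invoke the Fisher--Neyman factorization theorem. By Proposition~\ref{th:definetti}, the joint density factorizes as $\PP(x^1, \dots, x^L \mid z) = \prod_{\ell=1}^L \PP(x^\ell \mid z)$, and thus the posterior is $b_z(X)(\cdot) = \PP(z = \cdot \mid X) = \PP(z=\cdot) \prod_{\ell} \PP(x^\ell \mid z=\cdot) / \PP(X)$. The key point: the full data likelihood $\PP(X \mid z)$ can be recovered (as a function of $z$, up to the $z$-free normalizer $\PP(X)$) from $b_z(X)$ together with the marginal $\PP(z)$, since $\PP(X \mid z) \propto_z b_z(X)(z) / \PP(z)$. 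Hence any function of $X$ and $z$ that depends on $X$ only through the likelihood ratio depends on $X$ only through $b_z(X)$; this is precisely the factorization criterion for sufficiency. I would state this as: the statistic $T(X) = b_z(X)$ satisfies $\PP(X \mid z) = g(T(X), z)\, h(X)$ for suitable $g, h$, so $T$ is sufficient.

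\textbf{Step 2: Minimality.} For minimality I would use the characterization that a sufficient statistic $T$ is minimal if and only if it is a function of every other sufficient statistic, equivalently $T(X) = T(X')$ whenever the likelihood ratio $\PP(X \mid z)/\PP(X' \mid z)$ is constant in $z$. Suppose $\PP(X\mid z) = c(X, X')\,\PP(X'\mid z)$ for all $z$ with $c$ not depending on $z$. Multiplying both sides by $\PP(z)$ and normalizing over $z$ forces the two posteriors $\PP(z=\cdot\mid X)$ and $\PP(z=\cdot\mid X')$ to coincide, i.e.\ $b_z(X) = b_z(X')$. Conversely the posterior map is obviously a function of any sufficient statistic by Step~1's construction. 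This gives minimality of $b_z(X)$ for $z$.

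\textbf{Step 3: Transfer to $y$.} Now take $y$ with $X \perp y \mid z$. Then $\PP(y \mid X) = \int \PP(y\mid z)\,\PP(z \mid X)\,\ud z = (\cT b_z(X))(y)$, so the conditional law of $y$ given $X$ is determined by $b_z(X)$ — this already yields sufficiency of $b_z(X)$ for $y$ (the distribution of $y$ given the data depends on the data only through $b_z(X)$; more carefully, I would phrase sufficiency via the factorization $\PP(X \mid y)$ or via the Bayesian/Kolmogorov definition that $\PP(y \mid X) = \PP(y \mid T(X))$). For minimality, I would argue that $b_z(X)$ is a function of any statistic sufficient for $y$: if $S(X)$ is sufficient for $y$, then $\PP(y\mid X) = \PP(y\mid S(X))$, i.e.\ $(\cT b_z(X))(y)$ is a function of $S(X)$; since $\cT$ is invertible, $b_z(X) = \cT^{-1}\big(\PP(\,\cdot \mid S(X))\big)$ is a function of $S(X)$, hence $b_z(X)$ is minimal.

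\textbf{Main obstacle.} The delicate point is the measure-theoretic handling of sufficiency when the ``statistic'' $b_z(X)$ takes values in a space of distributions rather than a Euclidean space, and the careful use of the invertibility of $\cT$ in Step~3 — in particular ensuring $\cT^{-1}$ is well-defined on the relevant range and that ``$b_z(X)$ is a function of $S(X)$'' is the right formalization of minimality in this infinite-dimensional setting. I expect the cleanest route is to work with the factorization theorem in its abstract (Halmos--Savage) form and to treat $\cT$ as an injective linear operator on the space of signed measures / densities, so that the composition $\cT^{-1} \circ (\text{law of } y \text{ given } \cdot)$ recovers $b_z$; the rest is routine.
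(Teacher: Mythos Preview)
Your proposal is correct, and Step~3 (sufficiency and minimality for $y$ via $\PP(y\mid X)=\cT b_z(X)$ and then $b_z(X)=\cT^{-1}(\PP(y\mid S(X)))$) is exactly the paper's argument. The only difference is in Steps~1--2: you go through Fisher--Neyman factorization and the Lehmann--Scheff\'e likelihood-ratio criterion, whereas the paper uses the one-line Bayesian characterization of sufficiency directly---namely, any sufficient $U(X)$ satisfies $\PP(z\mid X)=\PP(z\mid U(X))$, so $b_z(X)$ is automatically a function of $U(X)$. Your route is more classical and makes the regularity assumptions explicit; the paper's is shorter but leans on the (equivalent, in the dominated case) posterior formulation of sufficiency. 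The measure-theoretic concerns you flag about the infinite-dimensional range of $b_z$ are real but are not addressed in the paper either; both arguments operate at the same level of formality.
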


\begin{proof}
	See \S\ref{sec:pf-sufficiency} for a detailed proof.
\end{proof}

\noindent{\bf From Latent Variable Model to Learning Objectives.}
In what follows, we consider the prediction task in BERT \citep{devlin2018bert} and ViT \citep{dosovitskiy2020image}. 
Let $y$ be the target variable and $X= \{x^\ell\}_{\ell \in [L]}$ is the input sequence.  In particular, in self-supervised learning (BERT), the target variable $y$ is a masked token of the input sequence, while in supervised learning (ViT), $y$ is the unknown label corresponding to the class encoding. 
We remark that in ViT, the unknown label $y$ corresponds to the masked token in BERT, while the input class encoding corresponds to the mask in BERT.
In both cases, the concatenation $\{y, x^1, \ldots, x^L\}$ is treated as an exchangeable sequence since it includes the positional encodings.
By Proposition \ref{th:definetti}, we have
\begin{align}
	\label{eq:y-exchange}
	\PP(y \given X) = \int \PP(y \given z) \cdot  \PP(z \given X) \rd z,
\end{align}
where $z$ is the latent variable induced by the exchangeability of $X$. See Figure \ref{fig:definetti} for an illustration.

In what follows, we treat $y$ as a target variable that satisfies \eqref{eq:y-exchange}, which specifies that $y$ is independent of the input sequence $X$ conditioning on the latent variable $z$. By Lemma \ref{lem:sufficiency}, the latent posterior $b_z(X)$ is a minimal sufficient statistic of $X$ for $y$. In other words, the latent posterior $b_z(X)$ is a desired representation of the input sequence $X$. 
According to \eqref{eq:y-exchange}, the prediction of the target variable $y$ from the input sequence $X$ (forward pass) takes two implicit steps: i) the inference of the latent posterior $\PP(z \given X)$, and ii) the prediction of $y$ based on the generative distribution $\PP(y\given z)$ integrated with the  latent posterior $\PP(z\given X)$. 

\begin{figure}[htbp]
	\centering
	\begin{subfigure}[t]{0.265\textwidth}
		\centering
		\includegraphics[width=\textwidth]{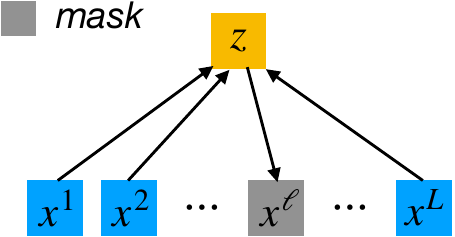}
		\caption{Prediction of a masked token $x^\ell$. \label{fig:definetti_x}}
	\end{subfigure}
	\hspace{30mm}
	\begin{subfigure}[t]{0.3 \textwidth}
		\centering
		\includegraphics[width=\textwidth]{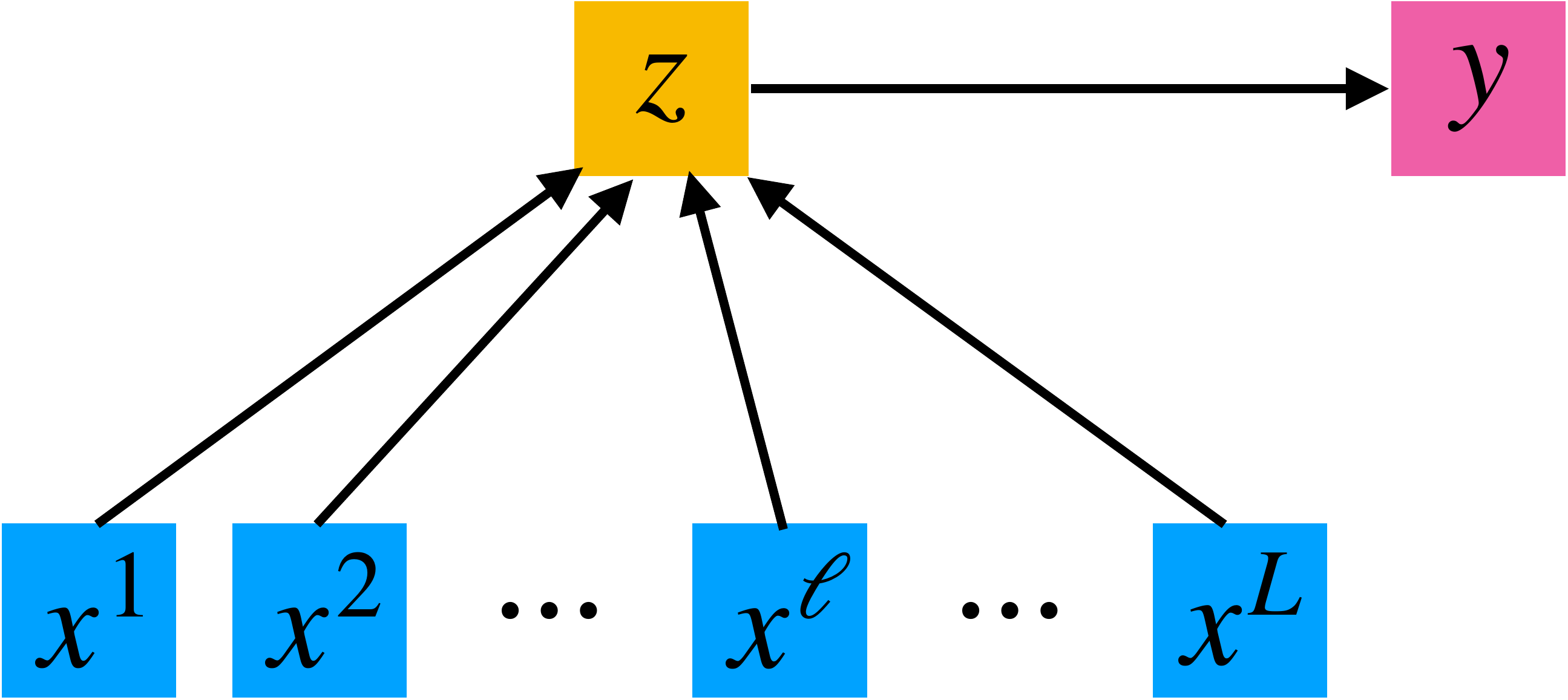}
		\caption{Prediction of the target variable $y$, which can be viewed as a masked token. \label{fig:definetti_y}}
	\end{subfigure}
	\caption{The forward pass for the prediction of the masked token $x^\ell$ and the target variable $y$. The prediction of $y$ takes two steps: i) the inference of the latent posterior $\PP(z \given X)$, and ii) the prediction of $y$ based on the generative distribution $\PP(y\given z)$ integrated with the  latent posterior $\PP(z\given X)$.}
	\label{fig:definetti}
\end{figure}


To construct the learning objective, we consider the distribution of the target variable $y$ conditioning on the input sequence $X$ and parameterize it by $\PP_\theta(y \given X)$, where $\theta$ is the learnable parameter.
Given a dataset $\cD_n = \{(X_i, y_i)\}_{i\in [n] }$, where $X_i$ is the $i$-th input sequence and $y_i$ is the $i$-th target variable, the maximum likelihood estimation (MLE) objective takes the following form,
\begin{align}\label{eq:mle1}
	& \max_\theta \hat\EE_{(X, y) \sim \cD_n} \bigl[\log \PP_\theta (y \given X) \bigr]\\
	&\quad  = \hat\EE_{(X, y) \sim \cD_n} \biggl[ \log \int \PP_{\theta}(y \given z) \PP_{\theta}(z \given X)  \rd z\biggr]. \notag 
\end{align}
We define $\hat\EE_{(X, y) \sim \cD_n}[\,\cdot \,]$ as the empirical expectation with respect to the dataset $\cD_n$. 


\subsection{Preliminary Finite-Dimensional Example} \label{sec:fin-eg}

\vskip4pt

{\noindent \bf Latent Variable Model.} We provide a finite-dimensional Gaussian-distributed example to illustrate the latent variable model and the MLE objective.
Specifically, we consider the setting with the input sequence $X = \{x^\ell\}_{\ell \in [L]}$ and the target variable $y$, where $x^\ell \in \RR^{d}$ and $y \in \RR^d$. For the input sequence $X$, we consider the following example of the latent variable model in \eqref{eq:y-exchange},
\begin{align}
	\label{eq:lvm-fin}
	r^\ell = z c^\ell + \epsilon^\ell,   \qquad \text{where} \quad c^\ell = c_*(x^\ell), \quad r^\ell = r_*(x^\ell), \quad \forall \ell \in [L]. 
\end{align}
Here $z \in \RR^{d_r \times d_c}$ is the latent variable induced by the exchangeability of the input sequence, $c^\ell \in \RR^{d_c}$ and $r^\ell \in \RR^{d_r}$ are the covariate and response, respectively, which are determined by two unknown functions $c_*: \RR^{d} \rightarrow \RR^{d_c} $ and $r_*: \RR^d \rightarrow \RR^{d_r}$, and $\epsilon^\ell \sim N(0, \sigma^2 I)$ is the noise, which is independent of $c^\ell$.
In practice, the covariate $c^\ell$ instantiates the contextual information, while the response $r^\ell$ instantiates the semantic information.
We consider the prediction of the (unknown) target variable $y$ based on its (known) input mask $\mask$. In the self-supervised setting \citep{devlin2018bert}, $y$ is a masked token of the input sequence, while $\mask$ is the positional encoding. In the supervised setting \citep{dosovitskiy2020image}, $y$ is the label of the input sequence, while $\mask$ is the class encoding.
Specifically, corresponding to \eqref{eq:lvm-fin}, 
we consider the prediction model with $y = r^\mask$, 
where $r^\mask$ is the response  corresponding to the covariate $c^\mask$ of the input mask such that
\begin{align}
	\label{eq:lvm-y}
	y = r^\mask = z c^\mask + \epsilon,\qquad \text{where}\quad c^\mask = c_*^\tm(\mask).
\end{align}
Here $c_*^\tm: \RR^d \rightarrow \RR^{d_c}$ is an unknown function and $\epsilon \sim N(0, \sigma^2 I)$ is the noise, which is independent of $c^\mask$. For example, \eqref{eq:lvm-y} holds when we consider an exchangeable sequence $\{x^1, \ldots, x^L, x^{L+1}\}$ satisfying \eqref{eq:lvm-fin} for any $\ell \in [L+1]$ with the input sequence $X = \{x^1, \ldots, x^L\}$, $c^\mask = c^{L+1}$, and the target variable $ y = r^\mask = r^{L+1}$. In the next section, we consider an advanced infinite-dimensional example and show that $c^\mask$, $c^\ell$, and $r^\ell$ correspond to the query, the key, and the value in the attention mechanism, respectively.

Note that the regression model in \eqref{eq:lvm-fin} is a conditional model. Instead of modeling the conditional distribution of $y$ given $X$ as in \eqref{eq:y-exchange}, we model the conditional distribution of $y$ given $X$ and $\mask$. Recall that $y = r^\mask$. Corresponding to \eqref{eq:y-exchange}, the generative distribution takes the following form,
\begin{align}
	\label{eq:gen-dist}
	\PP(y \given \mask, z) \propto \exp\Bigl(-\norm[\big]{ y - zc_*^\tm(\mask) }_2^2 \big/ 2 \sigma^2 \Bigr). 
\end{align}
We take the Gaussian distribution $N(0, \lambda I)$ as the prior of $z$. By \eqref{eq:lvm-fin}, the latent posterior $\PP(z \given X)$ is a Gaussian distribution, which (approximately) takes the following form,
\begin{align}
	\label{eq:lat-post}
	\PP(z \given X) \propto \exp \Bigl(- \norm[\big]{z - \bar z(X)}_2^2 \big/ 2 \iota^2 \Bigr). 
\end{align}
Here the covariance of the latent posterior is approximated by $\iota^2 I$ and the mean $\bar z(X)$ of the latent posterior takes the following form,
\begin{align*}
	\bar z(X) = \EE[z \given X] = R^\top ( C C^\top + \lambda I)^{-1} C,
\end{align*}
where we define $C = (c^1, \ldots, c^L)^\top \in\RR^{L\times d_c}$ and $R = (r^1, \ldots, r^L)^\top \in\RR^{L \times d_r}$.
Combining \eqref{eq:gen-dist} and \eqref{eq:lat-post},  we obtain
\begin{align}
	\label{eq:arch0}
	\PP(y \given \mask, X) = \int \PP(y \given \mask, z) \cdot \PP(z \given X) \rd z \propto  \exp\Bigl( -\norm[\big]{y - \bar z(X) c_*^\tm(\mask) }_2^2 \big/ 2 \tilde \sigma^2 \Bigr), 
\end{align}
which corresponds to \eqref{eq:y-exchange},
Here we approximate the covariance of $y$ conditioning on $X$ and $\mask$ by $\tilde \sigma^2 I$, where $\tilde \sigma$ does not depend on $X$. We remark that \eqref{eq:arch0} is a form of Bayesian model averaging \citep{wasserman2000bayesian} within one data point.

\vskip4pt

\noindent{\bf Parameterization of Latent Variable Model.}
Recall that $c_*$, $r_*$, and $c^\tm_*$ in \eqref{eq:lvm-fin} and \eqref{eq:lvm-y} are unknown. 
We parameterize them with $c_\theta$, $r_\theta$, and $c^\tm_\theta$, where $\theta \in \Theta$ is a learnable parameter. With the ideal parameter $\theta^* \in \Theta$, it holds for any $\ell \in [L]$ that
\begin{align}\label{eq:inv-f-para}
	c_{\theta^*}(x^\ell) = c_*(x^\ell) =  c^\ell, \quad  r_{\theta^*}(x^\ell) =  r_*(x^\ell) = r^\ell, \quad c^\tm_{\theta^*}(\mask) =c^\tm_*(\mask) =  c^\mask.
\end{align}
By \eqref{eq:lat-post}, we parameterize the latent posterior $\PP(z \given X)$ as follows,
\begin{align}
	\label{eq:latent-post-para}
	\PP_{\theta}(z \given X) \propto \exp \Bigl(- \norm[\big]{z - \bar z_\theta(X)}_2^2 \big/ 2 \iota^2 \Bigr),
\end{align}
where $\bar z_\theta(X)$ is calculated as follows,
\begin{align}
	\bar z_\theta(X) =  r_\theta(X)^\top \bigl( c_\theta(X) c_\theta(X)^\top + \lambda I \bigr)^{-1} c_\theta(X). \nonumber
\end{align}
Here $c_\theta(X) = (c_\theta(x^1), \ldots, c_\theta(x^L))^\top \in \RR^{L\times d_c}$ and $r_\theta(X) = (r_\theta(x^1), \ldots, r_\theta(x^L))^\top \in \RR^{L\times d_r}$. By \eqref{eq:gen-dist}, we parameterize the generative distribution $\PP(y \given \mask, z)$ as follows,
\begin{align*}
	\PP_\theta(y \given \mask, z) \propto \exp\Bigl(-\norm[\big]{ y - z c^\tm_\theta(\mask) }_2^2 \big/ 2 \sigma^2 \Bigr).
\end{align*}
By \eqref{eq:arch0}, we define the conditional likelihood $\PP(y \given \mask, X)$ as follows,
\begin{align}
	\label{eq:arch00}
	\PP_\theta(y \given \mask, X) \propto \exp\Bigl( -\norm[\big]{y - \bar z_\theta(X) c_\theta^\tm(\mask) }_2^2 \big/ 2 \tilde \sigma^2 \Bigr). 
\end{align}

\vskip4pt

\noindent{\bf Training and Testing.}
In the training phase, given the dataset $\cD_n = \{(X_i, y_i)\}_{i \in [n]}$, we aim to maximize the MLE objective in \eqref{eq:mle1}. By \eqref{eq:arch00}, maximizing the MLE objective is equivalent to minimizing the mean-squared error as follows,
\begin{align}\label{eq:para}
	\min_\theta \hat \EE_{(X, y) \sim \cD_n}\Bigl[\norm[\big]{y - \bar z_\theta(X) c^\tm_\theta(\mask)}_2^2 \Bigr].
\end{align}
Note that the learnable parameter $\theta$ is estimated across different data points in the dataset $\cD_n$ through the backward pass, while the latent variable $z$ is inferred within one data point $(X_i, y_i)$ through the forward pass. We remark that by learning $\theta$, the model learns to perform Bayesian model averaging.
Suppose that we solve \eqref{eq:para} and obtain the estimator $\hat \theta$.
In the testing phase, given an input sequence $X_\dagger$ and an input mask $\mask_\dagger$, we predict the target variable $y_\dagger$ by maximizing the posterior of $y$,
\begin{align}
	\label{eq:predict}
	\hat y = \argmax_y \PP_{\hat\theta}(y \given  \mask_\dagger, X_\dagger) = \EE[r^\mask \given \mask_\dagger, X_\dagger] = \bar z_{\hat \theta}(X_\dagger) c^\tm_{\hat \theta}(\mask_\dagger).
\end{align} 
We remark that the learning process for the attention mechanism involves two aspects. In the forward pass, within one data point, we infer the latent posterior $\PP(z \given X)$ to predict the target variable $y$. In the backward pass, we estimate the learnable parameter $\theta$ across different data points. See Figure \ref{fig:learning} for an illustration.



\begin{figure}[H]
	\centering
	\includegraphics[width=5in]{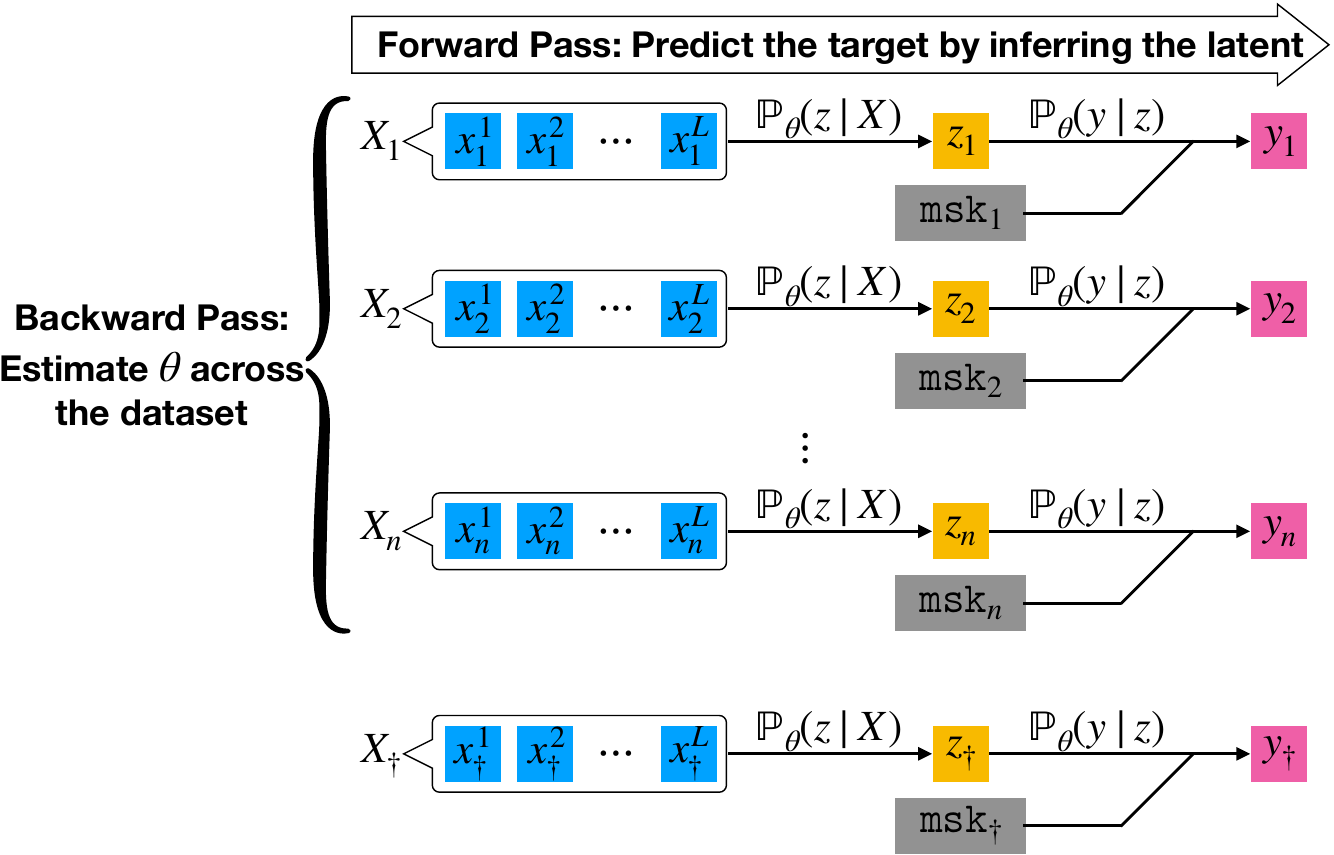}
	\caption{Forward pass: within one data point $(X, y)$, we infer the latent posterior $\PP_\theta(z \given X)$ by \eqref{eq:latent-post-para}. We predict $y_\dagger$ by $\hat y$ in \eqref{eq:predict}. Backward pass: across different data points in the dataset $\cD_n$, we estimate the learnable parameter $\theta$ by \eqref{eq:para}.  \label{fig:learning}}
\end{figure}

The finite-dimensional example illustrates a ``greybox'' approach to modeling, that is, it combines the handcrafted architecture in \eqref{eq:arch0} (``whitebox''), and the learnable parameter in \eqref{eq:para}, which is estimated in an end-to-end manner through empirical risk minimization (``blackbox'').
As shown in Figure \ref{fig:sl-ssl}, the forward pass first infers the latent variable $z$ and then utilizes the latent variable $z$ to predict the masked token (in the self-supervised setting) or the unknown label (in the supervised setting). Meanwhile, the backward pass estimatess the learnable parameter.
In the following section, we extend the finite-dimensional example to the infinite-dimensional setting, which recovers the attention mechanism. In particular, we demonstrate that the attention mechanism infers the latent posterior within a data point. Also, we show that the covariate corresponds to the query and key and that the response corresponds to the value  in the attention mechanism.

\begin{figure}[H]
	\centering
	\includegraphics[width=5in]{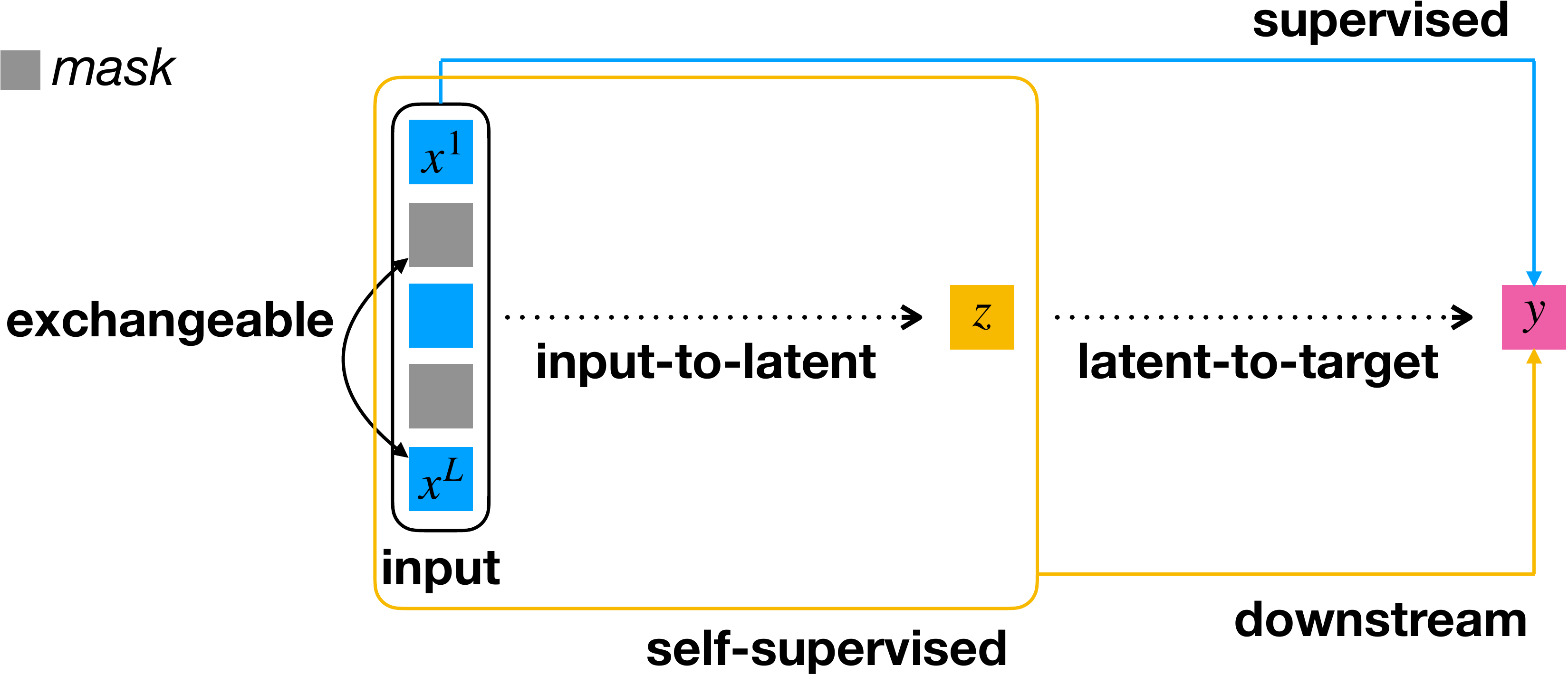}
	\caption{The forward and backward passes in transformers. Dotted arrows stand for forward passes (input$\rightarrow$latent$\rightarrow$target). Solid arrows stand for backward passes (training). Masks (grey tokens) are only used to illustrate the self-supervised setting (yellow box). \label{fig:sl-ssl}}
\end{figure}

\section{Attention as Latent Posterior Inference} \label{sec:attn-cme-kde}

In what follows, we demonstrate how the attention mechanism performs latent posterior inference for the latent variable model, which is induced by the exchangeability of the input sequence.
In \S\ref{sec::Gaussian_Posterior}, we extend the finite-dimensional example in \S\ref{sec:exchange} to an RKHS to induce a variant of the softmax attention, namely, the conditional mean embedding (CME) attention. In particular, we prove that it infers the latent posterior in the forward pass. In \S\ref{sec::KDE_intuition}, we prove that the softmax attention has the same limit as the CME attention when the sequence length goes to infinity, which implies that the softmax attention approximately infers the latent posterior.

\subsection{Attention as Kernel Conditional Mean Embedding}
\label{sec::Gaussian_Posterior}

\noindent{\bf Advanced Infinite-Dimensional Example.}
We present an infinite-dimensional version of the preliminary example in \S\ref{sec:fin-eg}, which motivates us to study the CME attention. Similarly to \eqref{eq:lvm-fin}, we consider the following model for the input sequence $X = \{x^\ell\}_{\ell \in [L]}$ with input token $x^\ell \in \RR^d$,
\begin{align}\label{eq:structure-inf}
	r^\ell &= z \phi(c^\ell) + \epsilon^\ell,  \qquad \text{where}\quad c^\ell = c_*(x^\ell), \quad r^\ell = r_*(x^\ell), \quad \forall \ell \in [L].
\end{align}
Here $c^\ell \in \RR^{d_\tp}$ and $r^\ell \in \RR^{d}$ are the covariate and the response, respectively, which are determined by two unknown functions $c_*: \RR^{d} \rightarrow \RR^{d_\tp} $ and $r_*: \RR^d \rightarrow \RR^{d}$, $\phi: \RR^{d_\tp}\rightarrow \cH_c$ is the feature mapping of the RKHS $\cH_c$, $z: \cH_c \rightarrow \RR^{d}$ is a linear mapping, which is viewed as the latent variable induced by the exchangeability of the input sequence $X$, and $\epsilon^\ell \sim N(0, \sigma^2 I)$ is the Gaussian noise, which is independent of the covariate $c^\ell$. Note that $\phi(c^1)^\top \phi(c^2) = \fk(c^1, c^2)$ for any $c^1, c^2 \in \RR^{d_\tp}$, where $\fk: \RR^{d_\tp} \times \RR^{d_\tp} \rightarrow \RR$ is the kernel function of the RKHS $\cH_c$.
A common example is the Gaussian radial basis function (RBF) kernel $\fk_\rbf(q, k) = \exp( - \norm{q - k}_2^2 / 2 \gamma)$ with $\gamma > 0$.
Similarly to \eqref{eq:lvm-y}, the (unknown) target variable $y$ is determined by its (known) input mask $\mask$, which satisfies 
\begin{align}
	\label{eq:lvm-y-inf}
	y = r^\mask = z\phi(c^\mask) + \epsilon, \qquad \text{where} \quad c^\mask = c_*^\tm (\mask).
\end{align}
Here we denote by $c^\mask$ and $r^\mask$ the covariate and the response corresponding to the input $\mask$, respectively,  $c_*^\tm : \RR^{d} \rightarrow \RR^{d_\tp}$ is an unknown function, and $ \epsilon \sim N(0, \sigma^2 I)$ is a Gaussian noise, which is independent of $c^\mask$.
To simplify the presentation, we view the RKHS $\cH_c$ as a vector space $\RR^{d_\phi}$ with $d_\phi = \infty$. Correspondingly, we view the latent variable $z$ as a matrix in $\RR^{d\times d_\phi}$. We present a rigorous characterization of $z$ in \S\ref{sec:gp-regress} based on Gaussian process regression.
Similarly to \eqref{eq:inv-f-para}, we parameterize $c_*$, $r_*$, and $c^\tm_*$ by $c_\theta$, $r_\theta$, and $c^\tm_\theta$, where $\theta \in \Theta$ is a learnable parameter. Similarly to \eqref{eq:predict},  we predict $r^\mask$ in the forward pass via
\begin{align}
	\label{eq:cme-att}
	\hat r & = \EE[r^\mask \given \mask, X] \nonumber \\
	&= r_\theta(X)^\top \Bigl( \phi\bigl(c_\theta(X)\bigr) \phi\bigl(c_\theta(X)\bigr)^\top + \lambda I \Bigr)^{-1}\phi\bigl(c_\theta(X)\bigr) \phi\bigl(c_\theta^\tm(\mask)\bigr) \nonumber\\
	& = r_\theta(X)^\top \Bigl( \fk\bigl(c_\theta(X), c_\theta(X)\bigr) + \lambda I \Bigr)^{-1}\fk\bigl(c_\theta(X), c_\theta^\tm(\mask)\bigr),
\end{align} 
where we define  $\fk(c_\theta(X), c_\theta(X)) = (\fk(c_\theta(x^i), c_\theta(x^j)))_{i, j \in [L]} \in\RR^{L\times L}$, $\fk(c_\theta(X), c_\theta^\tm(\mask)) = (\fk(c_\theta(x^\ell), c_\theta^\tm(\mask)))_{\ell \in [L]} \in \RR^{L}$, $\phi(c_\theta(X)) = (\phi(c_\theta(x^1)), \ldots, \phi(c_\theta(x^L)))^\top \in\RR^{L \times d_\phi}$, and $r_\theta(X) = (r_\theta(x^1), \ldots, r_\theta(x^L))^\top \in\RR^{L \times d}$. 
We remark that \eqref{eq:cme-att} recovers the empirical version of the kernel conditional mean embedding of $\PP_{\cR \given \cC}$ \citep{song2009hilbert}, where we denote by $\PP_{\cR \given \cC}$ the conditional distribution of $r^\ell$ given $c^\ell$ (as two random variables) within one data point, and $\cH_r = (\RR^{d})^* = \RR^{d}$ is the dual space of $\RR^{d}$ equipped with the Euclidean kernel $ \inp{\cdot}{\cdot}$.

\vskip4pt

\noindent{\bf From Latent Variable Model to Attention.}
Recall that the attention mechanism is defined in \eqref{eq:attn} with $q \in \RR^{d_\tp}$, $K = (k^1, \ldots, k^L)^\top \in \RR^{L\times d_\tp}$, and $V = (v^1, \ldots, v^L)^\top \in \RR^{L\times d}$. 
The kernel conditional mean embedding in \eqref{eq:cme-att} motivates us to consider the following CME normalization, 
\begin{align}
	\label{eq:att-norm-cme}
	\Norm_\cme\bigl(\fk(K, q) \bigr) = \bigl(\fk(K, K) + \lambda I \bigr)^{-1}\fk(K, q),
\end{align}
where we write $\fk(K, q) = (\fk(k^\ell, q))_{\ell \in [L]} \in \RR^{L}$ and $\fk(K, K) = (\fk(k^i, k^j))_{(i, j) \in [L]\times[L]}\in\RR^{L\times L}$
We call the attention mechanism with the CME normalization in \eqref{eq:att-norm-cme} the CME attention and denote it by $\att_{\cme}$. In particular, the CME attention takes the following form,
\begin{align}\label{eq:cme-att-def}
	\att_\cme(q, K, V) = V^\top \Norm_\cme\bigl(\fk(K, q) \bigr) = V^\top \bigl( \fk(K, K) + \lambda I\bigr)^{-1}\fk(K, q) \in \RR^{d}.
\end{align}
We see that the CME attention recovers \eqref{eq:cme-att} when 
\begin{align}
	\label{eq:cme-att-conn}
	q = c_\theta^\tm(\mask), \qquad k^\ell = c_\theta(x^\ell), \qquad v^\ell = r_\theta(x^\ell), \qquad \forall \ell \in [L].
\end{align}
We remark that \eqref{eq:cme-att-conn} establishes a connection between the latent variable model and the attention mechanism. 
In other words, the covariate $c^\mask$ of the input mask $\mask$ corresponds to the query $q$, the covariate $c^\ell$ of the input token $x^\ell$ corresponds to the key $k^\ell$ for $\ell \in [L]$, and the response $r^\ell$ of the input token $x^\ell$ corresponds to the value $v^\ell$ for $\ell \in [L]$. 
In the attention mechanism, we denote by $q_\theta: \RR^{d} \rightarrow \RR^{d_\tp}$, $k_\theta: \RR^{d} \rightarrow \RR^{d_\tp}$, and $v_\theta: \RR^{d} \rightarrow \RR^{d}$ the mappings from the input token to the query, the key, and the value, respectively, with the learnable parameter $\theta$. In particular, we have the following correspondence,
\begin{align*}
	q_{\theta} = c_{\theta}^\tm, \qquad k_{\theta} = c_{\theta}, \qquad v_{\theta} = r_{\theta}.
\end{align*}
In an common example,  we instantiate $q_\theta$, $k_\theta$, and $v_\theta$ for $x \in \RR^d$ as follows,
\#\label{eq:qkv-nn}
	q_\theta(x) = (W^\tq)^\top \nn(x; A), \qquad k_\theta(x) = (W^\tk)^\top\nn(x; A), \qquad v_\theta(x) = (W^\tv)^\top \nn(x; A),
\#
where $W^\tq , W^\tk\in \RR^{d\times d_\tp}$ and $ W^\tv \in \RR^{d\times d}$ are learnable parameters. Here we denote by $\nn(\cdot; A): \RR^d \rightarrow \RR^{d}$ the feedforward neural network with the learnable parameter $A$ and summarize the learnable parameter as $\theta = (A, W^\tq, W^\tk, W^\tv)$. 
Similarly to \eqref{eq:para}, the MLE objective takes the following form,
\begin{align}
	\label{eq:learn-attn}
	\min_\theta \hat \EE_{(X, y) \sim \cD_n}\biggl[\norm[\Big]{y - \att_{\cme}\bigl(q_\theta(\mask), k_\theta(X), v_\theta(X)\bigr)}_2^2 \biggr],
\end{align}
where we write $k_\theta(X) = (k_\theta(x^1), \ldots, k_\theta(x^L))^\top \in \RR^{L\times d_\tp}$ and $v_\theta(X) = (v_\theta(x^1), \ldots, v_\theta(x^L))^\top \in \RR^{L\times d}$.


\vskip4pt

\noindent{\bf Limit of CME Attention with $L\rightarrow \infty$.}
Given an input sequence $X = \{x^\ell\}_{\ell\in [L]}$, we consider the key-value pairs $\{(k^\ell, v^\ell)\}_{\ell \in [L]}$ obtained from $k^\ell = k_\theta(x^\ell)$ and $v^\ell = v_\theta(x^\ell)$ for a fixed $\theta$.
For notational simplicity, we denote by $\cK$ and $\cV$ the random variables with the same distribution as $(k^\ell, v^\ell)$ within one data point.
Recall that we define the CME attention in \eqref{eq:cme-att-def}. Also, we define the covariance operator $\cC_{\cK\cK} = \EE[\fk(\cK, \cdot) \otimes \fk(\cK, \cdot)]$.
In the following proposition, we prove that the CME attention  approximates the kernel conditional mean embedding of $\PP_{\cV\given \cK}$ as $L\rightarrow \infty$. Note that the following proposition does not depend on the latent variable model in \eqref{eq:structure-inf}.

\begin{proposition}[CME Attention Converges to Kernel Conditional Mean Embedding]
	\label{prop:attn-cme}
	Let $\fk$ be a positive definite kernel function. We assume that $\{x^\ell\}_{\ell \in [L]}$ in the input sequence $X$ are independent and identically distributed (within one data point) and the value $\norm{v^\ell}_2$ is upper bounded by 1 for any $\ell \in [L]$. 
	It holds with probability at least $1-\delta$ that
	\begin{align*}
		\norm[\big]{\att_\cme(q, K, V) - \EE[\cV \given \cK = q]}_2 = \cO\biggl( \sqrt{\frac{L}{\lambda}} \cdot \biggl( \frac{2}{\lambda} + \sqrt{\frac{\Gamma(L^{-1}\lambda)}{\lambda}} \biggr)  \log\frac{1}{\delta} + \lambda L^{-1}\biggr).
	\end{align*}
Here $\Gamma(L^{-1}\lambda)$ is the effective dimension of the covariance operator $\cC_{\cK\cK}$, which is defined as $\Gamma(L^{-1}\lambda) = \tr((L^{-1} \lambda \cI + \cC_{\cK\cK})^{-1} \cC_{\cK\cK})$.
\end{proposition}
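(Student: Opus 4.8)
The plan is to decompose the error $\|\att_\cme(q,K,V) - \EE[\cV\given\cK=q]\|_2$ into two pieces: (i) the gap between the empirical CME attention and the \emph{population} CME operator $\mu_{\cV\given\cK}(q) = \cC_{\cV\cK}\cC_{\cK\cK}^{-1}\fk(\cK,\cdot)$ evaluated at $q$, and (ii) the gap between the population kernel conditional mean embedding and the true conditional expectation $\EE[\cV\given\cK=q]$. For (ii), since $\cH_r = \RR^d$ with the Euclidean inner product, the conditional mean embedding coincides with $\EE[\cV\given\cK=q]$ under the standard assumption that $q\mapsto\EE[\cV\given\cK=q]$ lies in the appropriate RKHS range space, so (ii) contributes only a regularization bias term of order $\cO(\lambda L^{-1})$ — note the regularizer in \eqref{eq:att-norm-cme} is $\lambda I$ applied to the unnormalized $L\times L$ Gram matrix, which matches $L^{-1}\lambda$ against the normalized operator $\cC_{\cK\cK}$, explaining the effective dimension argument $\Gamma(L^{-1}\lambda)$ appearing in the bound.

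For (i), which is the main quantitative step, I would rewrite $\att_\cme(q,K,V) = V^\top(\fk(K,K)+\lambda I)^{-1}\fk(K,q)$ in operator form using the empirical feature matrix $\Phi = (\phi(k^1),\dots,\phi(k^L))^\top$, so that $\att_\cme(q,K,V) = V^\top\Phi(\Phi^\top\Phi + \lambda I)^{-1}\phi(q) = V^\top(\Phi\Phi^\top + \lambda I)^{-1}\Phi\phi(q)$ via the push-through identity, i.e.\ $\hat\cC_{\cV\cK}(\hat\cC_{\cK\cK} + L^{-1}\lambda\cI)^{-1}\fk(q,\cdot)$ up to the $1/L$ normalization of empirical covariance operators. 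Then the error (i) is controlled by concentration of the empirical operators $\hat\cC_{\cK\cK}$ and $\hat\cC_{\cV\cK}$ around their population counterparts. The standard toolkit here is a Bernstein-type inequality for self-adjoint (or Hilbert–Schmidt) operators — e.g.\ the results of \citet{minsker2017some} or the approach in \citet{caponnetto2007optimal} and \citet{rudi2015less} — applied to the i.i.d.\ sum $\hat\cC_{\cK\cK} = L^{-1}\sum_\ell \fk(k^\ell,\cdot)\otimes\fk(k^\ell,\cdot)$. Using $\|v^\ell\|_2 \le 1$ and boundedness of the kernel to control the operator-norm and variance proxies, one gets that with probability at least $1-\delta$, both $\|(\hat\cC_{\cK\cK}+L^{-1}\lambda\cI)^{-1/2}(\hat\cC_{\cK\cK} - \cC_{\cK\cK})(\hat\cC_{\cK\cK}+L^{-1}\lambda\cI)^{-1/2}\|$ and the analogous cross-covariance deviation are bounded by expressions involving $\sqrt{\Gamma(L^{-1}\lambda)/\lambda}$ and $1/\lambda$, scaled by $\log(1/\delta)$; propagating these through the resolvent identity $A^{-1} - B^{-1} = A^{-1}(B-A)B^{-1}$ and the $\sqrt{L/\lambda}$ factor that comes from bounding $\|(\hat\cC_{\cK\cK}+L^{-1}\lambda\cI)^{-1/2}\fk(q,\cdot)\|$ yields the claimed rate.

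I expect the main obstacle to be bookkeeping the two different normalizations consistently: the attention formula \eqref{eq:cme-att-def} uses the \emph{un-normalized} Gram matrix $\fk(K,K)$ with regularizer $\lambda I$, whereas the clean operator-concentration statements are phrased for the \emph{averaged} covariance operator $\cC_{\cK\cK}$ with regularizer $L^{-1}\lambda\cI$; getting every factor of $L$ in the right place (including the extra $\sqrt{L/\lambda}$ out front, which reflects that $\lambda$ here plays the role of $L$ times the usual regularization parameter) is where care is needed. A secondary technical point is verifying the range-space / source condition needed for step (ii) so that the population embedding genuinely equals $\EE[\cV\given\cK=q]$ rather than merely approximating it; I would state this as a mild regularity assumption (or absorb the residual into the bias term) rather than prove it from scratch. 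Once the operator-norm deviations are in hand, the remaining algebra is routine resolvent manipulation.
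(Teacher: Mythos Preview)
Your proposal is correct and follows essentially the same route as the paper: the paper decomposes into the same two terms (empirical-vs-regularized-population operator and regularization bias), rewrites $\att_\cme$ as $\hat\cC_{\cV\cK}(\hat\cC_{\cK\cK}+L^{-1}\lambda\cI)^{-1}\phi(q)$ via the push-through identity, controls term (i) with the resolvent identity together with the Hilbert-space Bernstein inequality of \citet{caponnetto2007optimal} (yielding the $\Gamma(L^{-1}\lambda)$ dependence and the $\sqrt{L/\lambda}$ prefactor from $\|(\hat\cC_{\cK\cK}+L^{-1}\lambda\cI)^{-1/2}\|$), and handles the bias term (ii) by a spectral decomposition of $\cC_{\cK\cK}$ to get $\cO(\lambda L^{-1})$. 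Your anticipated bookkeeping of the $L$ factors and the source-condition caveat for (ii) are exactly the places where the paper's argument requires care.
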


\begin{proof}
	See \S\ref{sec:pf-thm-attn-cme} for a detailed proof.
\end{proof}

We remark that when we use the Gaussian RBF kernel $\fk_\rbf$ in the CME attention, it holds that $\Gamma(L^{-1}\lambda) \le \cO(L / \lambda)$ \citep{zhang2015divide}. We then have $\norm[\big]{\att_\cme(q, K, V) - \EE[\cV \given \cK = q]}_2 \le \cO(L \cdot \lambda^{-3/2}\cdot \log(1/\delta) + \lambda L^{-1})$.
Note that the CME attention $\att_\cme$ is a variant of the softmax attention \citep{vaswani2017attention} with a different normalization. 
In the following section, we prove that the softmax attention has the same limit as the CME attention when the sequence length $L$ goes to infinity.

\subsection{Softmax Attention Infers Latent Posterior}
\label{sec::KDE_intuition}
In \S\ref{sec::Gaussian_Posterior}, we demonstrate how the latent variable model motivates the design of the CME attention.
Recall that we consider the attention mechanism in the form of  \eqref{eq:attn} with $q \in \RR^{d_\tp}$, $K \in \RR^{L\times d_\tp}$, and $V \in \RR^{L\times d}$.
In practice, a common normalization is defined as follows, 
\begin{align}\label{eq:norm_sm}
	\Norm_\sm(\fk(K, q)) = \bigl(\mathbf{1}_L^\top \fk(K, q)\bigr)^{-1} \cdot \fk(K, q), 
\end{align}
where $\mathbf{1}_L \in\RR^{L}$ is the $L$-dimensional all-one vector and we recall that $\fk(K, q) = (\fk(k^\ell, q))_{\ell \in [L]}\in \RR^{L}$. We denote by $\att_\sm$ the attention mechanism with the normalization in \eqref{eq:norm_sm}. 
When the kernel function is the exponential kernel $\fk_{\texp}(q, k) = \exp(k^\top q / \gamma)$ for any $q, k \in \RR^{d_\tp}$ and a fixed $\gamma > 0$, the attention mechanism in \eqref{eq:attn} takes the following form,
\#\label{eq::dop_prod_attn}
\att_\sm(q, K, V) = V^\top\Norm_\sm \bigl(\fk_{\texp}(K, q)\bigr) = V^\top\softmax(Kq / \gamma),
\#
which recovers the softmax attention in \cite{vaswani2017attention}. 
In what follows, we prove that as the sequence length $L$ goes to infinity, the softmax attention $\att_{\sm}$ has the same limit as the CME attention $\att_\cme$.




\vskip4pt

\noindent{\bf Softmax Attention Has the Same Limit as CME Attention with $L\rightarrow \infty$.} We  demonstrate that the softmax attention in \eqref{eq::dop_prod_attn} is a conditional kernel density estimator of $\PP_{\cV\given \cK}$.
We define the conditional kernel density estimator (KDE) as follows,
\begin{align}\label{eq::kernel_cond_density}
\hat \PP_{\cV \given \cK}^{\fk}(v\given q) = \iota \cdot \frac{\sum^L_{\ell=1}\fk(k^\ell, q)\cdot\fk(v^\ell, v)}{\sum^L_{\ell=1}\fk(k^\ell, q)},
\end{align}
where $\iota > 0$ is the normalization factor such that $\int \hat \PP_{\cV \given \cK}^{\fk}(v\given q) \ud v = 1$.
We remark that although the definition of the KDE in \eqref{eq::kernel_cond_density} involves the kernel function $\fk(\cdot, \cdot)$, it is not associated with any RKHS.
A common choice of the kernel function is the Gaussian RBF kernel $\fk_\rbf(q, k) = \exp( - \norm{q - k}_2^2 / 2 \gamma)$.
In what follows, we normalize the query $q$, the key $k$, and the value $v$ so that $q, k \in \SSS^{d_\tp - 1}$ and $v \in \SSS^{d-1}$, where $\SSS^{d_\tp - 1}$ and $\SSS^{d-1}$ are the $(d_\tp - 1)$-dimensional and $(d - 1)$-dimensional unit spheres, respectively. 
On the unit sphere, the exponential kernel is equivalent to the Gaussian RBF kernel. Specifically, it holds for a given rescaling $\gamma >0 $ that $\fk_{\texp}(q, k) = \exp(q^\top k / \gamma) = C \cdot \exp(-\norm{q - k}_2^2 / 2\gamma) = C \cdot \fk_\rbf(q, k)$ for any $q, k \in \RR^{d_\tp}$, where $C > 0 $ is an absolute constant.
Moreover, when we use the exponential kernel in \eqref{eq::kernel_cond_density}, the value of $\iota$ does not depend on $q$. 
To see this, note that $\int_{\SSS^{d - 1}} \fk_{\texp}(v^1, v) \ud v = \int_{\SSS^{d - 1}} \fk_{\texp}(v^2, v) \ud v$ for any $v^1, v^2 \in \SSS^{d-1}$ due to the symmetry. 
The following proposition proves that the attention mechanism  in \eqref{eq::dop_prod_attn} outputs the conditional kernel density estimator in \eqref{eq::kernel_cond_density} and has the same limit as the CME attention as $L\rightarrow\infty$.

\begin{proposition}[Softmax Attention Converges to Kernel Conditional Mean Embedding]\label{prop:kernel_attn_story1}
	Recall that the softmax attention is defined in \eqref{eq::dop_prod_attn}. It holds for any $q \in \SSS^{d_\tp - 1}$ that
	\$
	\att_\sm(q, K, V) = C \int_{\SSS^{d - 1}} v \cdot  \hat \PP_{\cV \given \cK}^{\fk}(v\given q) \ud v,
	\$
	where $C> 0$ is an absolute constant.
	Meanwhile, under the condition that $\hat \PP_{\cV \given \cK}^{\fk}(v\given k) \rightarrow \PP_{\cV\given \cK}(v \given k)$ uniformly for any $k$ as $L\rightarrow \infty$,
	it holds for $L\rightarrow \infty$ that
	\#
	\nonumber
	\att_\sm(q, K, V) \rightarrow C \cdot\EE[ \cV \given \cK = q].
	\#
\end{proposition}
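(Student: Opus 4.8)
The plan is to treat the two assertions separately. The exact identity $\att_\sm(q,K,V)=C\int_{\SSS^{d-1}} v\,\hat\PP^{\fk}_{\cV\given\cK}(v\given q)\,\ud v$ will follow from two elementary rotational-symmetry facts about the exponential kernel on the sphere, and the limiting statement will then be immediate once we push the limit through the integral using the assumed uniform convergence of the conditional kernel density estimator.

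For the identity, I would begin from the definition of the conditional KDE in \eqref{eq::kernel_cond_density}, take the first moment, and interchange the finite sum with the integral to get
\begin{align*}
\int_{\SSS^{d-1}} v\cdot\hat\PP^{\fk}_{\cV\given\cK}(v\given q)\,\ud v = \iota\cdot\frac{\sum_{\ell=1}^L\fk_\texp(k^\ell,q)\int_{\SSS^{d-1}} v\,\fk_\texp(v^\ell,v)\,\ud v}{\sum_{\ell=1}^L\fk_\texp(k^\ell,q)}.
\end{align*}
The crux is to evaluate $F(u):=\int_{\SSS^{d-1}} v\,\exp(u^\top v/\gamma)\,\ud v$ for $u\in\SSS^{d-1}$. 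Applying the change of variables $v\mapsto Rv$ for an arbitrary rotation $R$ and using the rotation-invariance of the surface measure gives the equivariance $F(Ru)=RF(u)$; taking $R$ to be the orthogonal reflection $v\mapsto 2(u^\top v)u-v$, which fixes $u$ and leaves $\exp(u^\top v/\gamma)\,\ud v$ invariant, forces $F(u)$ into $\mathrm{span}(u)$, and then the equivariance forces the scalar proportionality factor to be independent of $u$. Hence $F(u)=c_1u$ for a strictly positive constant $c_1$ depending only on $d$ and $\gamma$. The same change of variables shows that $\int_{\SSS^{d-1}}\exp(u^\top v/\gamma)\,\ud v$ does not depend on $u\in\SSS^{d-1}$; call it $c_0>0$, so that the normalization factor in \eqref{eq::kernel_cond_density} is $\iota=1/c_0$ and, in particular, is independent of $q$. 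Substituting $\int_{\SSS^{d-1}} v\,\fk_\texp(v^\ell,v)\,\ud v=c_1v^\ell$ above and comparing with
\begin{align*}
\att_\sm(q,K,V)=V^\top\Norm_\sm\bigl(\fk_\texp(K,q)\bigr)=\frac{\sum_{\ell=1}^L v^\ell\,\fk_\texp(k^\ell,q)}{\sum_{\ell=1}^L\fk_\texp(k^\ell,q)}
\end{align*}
obtained from \eqref{eq::dop_prod_attn} and \eqref{eq:norm_sm} yields the identity with $C=c_0/c_1$, an absolute constant.

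For the limiting statement, the identity just proved reduces the claim to showing that $\int_{\SSS^{d-1}} v\,\hat\PP^{\fk}_{\cV\given\cK}(v\given q)\,\ud v\to\int_{\SSS^{d-1}} v\,\PP_{\cV\given\cK}(v\given q)\,\ud v=\EE[\cV\given\cK=q]$ as $L\to\infty$. Since $\SSS^{d-1}$ is compact with finite surface area, $\|v\|_2=1$ on it, and $\hat\PP^{\fk}_{\cV\given\cK}(\cdot\given q)\to\PP_{\cV\given\cK}(\cdot\given q)$ uniformly by hypothesis, the norm of the difference of the two first moments is at most the surface area of $\SSS^{d-1}$ times $\sup_{v\in\SSS^{d-1}}|\hat\PP^{\fk}_{\cV\given\cK}(v\given q)-\PP_{\cV\given\cK}(v\given q)|$, which tends to zero. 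Multiplying by $C$ gives $\att_\sm(q,K,V)\to C\cdot\EE[\cV\given\cK=q]$.

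The main obstacle is the symmetry computation for $F$: one has to argue cleanly that the appropriate reflection (equivalently, the stabilizer of $u$) pins $F(u)$ to the line $\mathrm{span}(u)$, and then verify that the constants $c_0$ and $c_1$ depend only on $d$ and $\gamma$ and not on the data, so that $C$ is indeed an absolute constant. Everything else---interchanging finite sums with the integral, rewriting $\att_\sm$ as the self-normalized kernel average, and passing the uniform limit through a first moment over a compact domain---is routine.
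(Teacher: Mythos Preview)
Your proposal is correct and follows essentially the same approach as the paper: the key step in both is the reflection $v\mapsto 2(u^\top v)u - v$ on $\SSS^{d-1}$ to show that $\int_{\SSS^{d-1}} v\,\fk_\texp(u,v)\,\ud v$ lies in $\mathrm{span}(u)$, followed by rotational symmetry to make the proportionality constant independent of $u$, and the limiting statement is handled in both by pushing the uniform limit through the integral. Your group-theoretic framing (equivariance plus stabilizer) is a cleaner packaging of what the paper does by an explicit change of variables, and your identification $C=c_0/c_1$ is in fact more careful than the paper's write-up.
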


\begin{proof}
	See \S\ref{sec::pf_lem_kernel_attn_story1} for a detailed proof.
\end{proof}

We remark that the uniform convergence $\hat \PP_{\cV \given \cK}^{\fk}(v\given k) \rightarrow \PP_{\cV\given \cK}(v \given k)$ holds when the density of $\PP_{\cK}$ is bounded from below \citep{de2003conditional}.
As shown in Propositions \ref{prop:attn-cme} and \ref{prop:kernel_attn_story1}, the softmax attention $\att_\sm$ and the CME attention $\att_\cme$ have the same limit as $L\rightarrow \infty$. Since the CME attention captures the latent posterior, which is proved in \S\ref{sec::Gaussian_Posterior}, we conclude that the softmax attention also captures the latent posterior approximately. Moreover, in terms of the limiting expectation $\EE[ \cV \given \cK = q]$, we highlight that it implies the necessity of using the multiple heads and connects the attention mechanism with causal inference. See \S\ref{sec:impl-limit} for a detailed discussion.

\begin{figure}[H]
	\centering
	\includegraphics[width=3.2in]{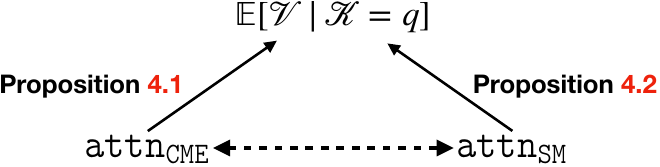}
	\caption{As shown in Propositions \ref{prop:attn-cme} and \ref{prop:kernel_attn_story1}, the softmax attention $\att_\sm$ and the CME attention $\att_\cme$ have the same limit $\EE[ \cV \given \cK = q]$ as $L\rightarrow \infty$. \label{fig:triangle}}
\end{figure}

\section{Excess Risk Analysis}
To demonstrate the theoretical benefit of incorporating latent posterior inference into the transformer architecture, we present a compact version of excess risk analysis for one-layer single-head softmax attention neural networks without skip connections. See \S\ref{appendix:gen} for a detailed analysis of the complete setup of the transformer architecture.


\noindent{\bf Attention Neural Network.} We specify the feedforward neural network in \eqref{eq:qkv-nn} as $\nn(x; A) = \relu(Ax)$\footnote{Here, for ease of presentation, we consider feedforward neural networks without bias terms.}, where $\relu(\cdot)$ is the rectified linear unit (ReLU) activation that operates elementwise. In the rest of the paper, we consider the attention neural networks with a final aggregation layer to allow for the proper scaling of the outputs in the supervised setting and the transfer capability to diverse downstream tasks in the self-supervised setting, which is discussed in \S\ref{sec:ssl}. We define the following function class of attention neural networks,
\#\label{eq:trans-class-simple}
\cF_\att = \Bigl\{ \agg_{\theta_0} \circ \att_\sm\bigl(q_\theta(\mask), k_\theta(X), v_\theta(X)\bigr) : \theta = (\theta_0, A, W^\tq, W^\tk, W^\tv) \in \Theta\Bigr\},
\#
where $\agg_{\theta_0}: \RR^{d} \to \RR^{d_\ty}$ is the aggregation layer parameterized by $\theta_0$ and $\att_\sm$ is the softmax attention defined in \eqref{eq::dop_prod_attn} with the learnable parameters $(A, W^\tq, W^\tk, W^\tv)$ defined in \eqref{eq:qkv-nn}. To characterize the excess risk, we specify the parameter space as follows, which grants $\cF_\att$ a finite capacity.

\begin{assumption}[Parameter Space]\label{assumption:constraint-simple}
We assume  for all $\theta = (\theta_0, A, W^\tq, W^\tk, W^\tv) \in \Theta$ that
 \$
\|W^{\tq}\|_{2} & \leq \omega^\tq, & \|W^{\tk}\|_{2} & \leq \omega^\tk, & \|W^{\tv}\|_{2}  & \leq \omega^\tv,  & \|A\|_{2}  & \leq {\alpha}^\nn,\notag\\
\|W^{\tq}\|_{\rm F} &  \leq R^{\tq}, & \|W^{\tk}\|_{\rm F} & \leq R^{\tk}, & \|W^{\tv}\|_{\rm F} & \leq R^{\tv},  & \|A\|_{\rm F}  & \leq {R}^{\nn},
 \$
where $\omega^\tq, \omega^\tk, \omega^\tv, \alpha^\nn, R^\tq, R^\tk, R^\tv, R^\nn >0$.
\end{assumption}

\noindent{\bf Excess Risk.} Following \eqref{eq:learn-attn}, we consider the learning objective $\cL((X, y), f) = \| y - f(X) \|_2^2$ for $f \in \cF_\att$, where $y$ is the target variable. We make the following assumption on the training dataset.
\begin{assumption}[Data Distribution]\label{asu::data}
We assume that the training dataset $\cD_n = \{(X_i, y_i)\}_{i \in [n]}$ is independently and identically drawn from the data distribution $\cD$, which is supported on the product space $\mathfrak{X}^{L}\times \mathfrak{Y}$, where 
\#\label{eq:data-supp}
\mathfrak{X}^{L} = \bigl\{X \in \RR^{L\times d}: {\textstyle \max_{\ell \in [L]}}\|x^\ell\|_2 \leq R\bigr\}, \qquad \mathfrak{Y} = \bigl\{y \in \RR^{d_\ty}: \|y\|_2 \leq  1/2\bigr\}.
\#
\end{assumption}
We consider the excess risk $\cE = \EE[\cL((X, y), \hat f)] - \EE[ \cL((X, y), f^*) ]$, where $\EE[\,\cdot\,]$ is the population expectation over the data distribution $\cD$. Here $\hat f\in\cF_\att$ is the attention neural network obtained from minimizing the empirical risk $\hat\EE[ \cL((X, y),  f)]$ in the training process, where $\hat{\EE}[\,\cdot\,]$ is the empirical expectation over the training dataset $\cD_n$. Here $f^*(X) = \EE[y\given X]$ is the regression function that we aim to approximate. In other words, $f^*$ is the optimal model that minimizes the population risk $\EE[\cL((X, y), f)]$.

To analyze the excess risk $\cE$, we decompose it into three terms,
\#\label{eq::risk_decomp}
\cE & = \underbrace{\EE\Bigl[\cL\bigl((X, y), \hat f\bigr)\Bigr] - \hat\EE\Bigl[ \cL\bigl((X, y), \hat f\bigr) \Bigr] + \hat\EE\Bigl[\cL\bigl((X, y), \tilde f\bigr)\Bigr] - \min_{f\in\cF_\att} \EE\Bigl[ \cL\bigl((X, y),  f\bigr) \Bigr]}_{\displaystyle \cE_{\text{gen}}\text{: Generalization Error}} \\
&\qquad + \underbrace{\min_{f\in\cF_\att} \EE\Bigl[ \cL\bigl((X, y),  f\bigr) \Bigr] - \EE\Bigl[ \cL\bigl((X, y), f^*\bigr) \Bigr]}_{\displaystyle \cE_{\text{approx}} \text{: Approximation Error}} + \underbrace{\hat\EE\Bigl[\cL\bigl((X, y), \hat f\bigr)\Bigr] - \hat\EE\Bigl[ \cL\bigl((X, y), \tilde f\bigr) \Bigr]}_{\displaystyle \cE_{\text{opt}}\text{: Optimization Error}}.\notag
\#
where $\tilde f = \argmin_{f\in\cF_\att} \hat\EE[ \cL((X, y),  f) ]$ is the attention neural network that minimizes the empirical risk over $\cF_\att$.

In \S\ref{sec:gen}-\ref{sec:opt}, we analyze the three terms on the right-hand side of \eqref{eq::risk_decomp} in the supervised setting. In \S\ref{sec:ssl}, we extend the following analysis of the approximation error $\cE_{\text{approx}}$ to the self-supervised setting.

\subsection{Generalization Error Analysis}\label{sec:gen}
Recall that the softmax attention in \eqref{eq::dop_prod_attn} is instantiated via the exponential kernel, which is equivalent to the Gaussian RBF kernel when $q$ and $k$ are on the unit sphere $\SSS^{d_\tp-1}$. Also, note that vector $\ell_2$-norm scales with the dimension $d_\tp$ at the rate of $\sqrt{d_\tp}$. In the rest of the paper, we consider the Gaussian RBF kernel with inputs rescaled by $1/\sqrt{d_\tp}$, i.e., 
\#\label{eq:ker-rbf}
\fk_\rbf(q, k) = \exp\bigl(-\|q/\sqrt{d_\tp} - k/\sqrt{d_\tp}\|_2^2/2\bigr) = \exp\bigl(-\|q - k\|_2^2/2d_\tp\bigr).
\#
Under Assumption \ref{assumption:constraint-simple}, we define
       \$
    \gamma = \max\{ \alpha^\nn, \omega^\tv\}, \qquad \kappa = \max\biggl\{\frac{R^\nn}{\alpha^\nn}, \frac{R^\tv}{{\omega^\tv}}, \frac{{R}^\tk + R^\tq}{(\omega^\tq + \omega^\tk)\cdot \omega^\tv}\biggr\}, \qquad \zeta = \frac{(\omega^\tq + \omega^\tk)^2 \cdot R^\tv}{\omega^\tv}.
    \$
Recall that $\cF_\att$ is the family of attention neural networks defined in \eqref{eq:trans-class-simple}. Let $\agg_{\theta_0, j}$ be the $j$-th entry of the aggregation layer $\agg_{\theta_0}$ with $j \in [d_\ty]$. We provide the following characterization of the generalization error $\cE_{\text{gen}}$.
\begin{theorem}[Generalization Error]\label{thm:gen-simple}
Let $D = \max\{d, d_\tp, d_\ty\}$. Suppose that Assumptions \ref{assumption:constraint-simple}-\ref{asu::data} hold. We assume that $\agg_{\theta_0}$ has the output range within $\mathfrak{Y}$ and $\agg_{\theta_0, j}$ is $1$-Lipschitz with respect to the $\|\cdot\|_{\rm F}$-norm for all $j \in [d_\ty]$. Then, for any $\delta > 0$, it holds with probability at least $1 - \delta$ that
\$
\cE_{\text{gen}}  = O\biggl( \frac{D^2}{\sqrt{n}} \cdot \bigl[\sqrt{\log(1 + \gamma)} +  \sqrt{\log(1 + \zeta R)} + \sqrt{\log( 1+ \kappa/\zeta)}\bigr]+ \sqrt{\frac{\log (1/\delta)}{n}}\biggr).
\$
\end{theorem}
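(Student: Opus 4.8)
The plan is to follow the spectrally-normalized covering-number framework of \citet{bartlett2017spectrally}, while tracking every Lipschitz constant and every covering radius to verify that none of them depends on the sequence length $L$ --- this $L$-freeness is exactly what the invariance and equivariance of $\att_\sm$ buys us. First I would reduce $\cE_{\text{gen}}$ to the empirical Rademacher complexity $\mathfrak{R}_n(\cF_\att)$ of the function class: since $\agg_{\theta_0}$ maps into $\mathfrak{Y}$ and $\|y\|_2 \le 1/2$, the loss $\cL((X,y),f) = \|y-f(X)\|_2^2$ is bounded by an absolute constant and is $O(1)$-Lipschitz in $f(X)$ on the relevant domain, so standard symmetrization together with McDiarmid's bounded-difference inequality gives $\cE_{\text{gen}} = O(\mathfrak{R}_n(\cF_\att) + \sqrt{\log(1/\delta)/n})$, with a vector-valued contraction peeling off the squared loss coordinatewise at the cost of summing $d_\ty \le D$ scalar Rademacher terms.

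The heart of the proof is a chaining (Dudley entropy integral) bound on each scalar Rademacher term, which reduces to controlling the covering number of $\cF_\att$ in the empirical $L_2$ metric. I would decompose an element of $\cF_\att$ along the composition $\agg_{\theta_0}\circ\att_\sm(q_\theta(\mask),k_\theta(X),v_\theta(X))$ with $q_\theta,k_\theta,v_\theta$ as in \eqref{eq:qkv-nn}, and cover the ReLU network $\nn(\cdot;A)$, the read-out matrices $W^\tq,W^\tk,W^\tv$, the softmax attention, and the $1$-Lipschitz aggregation layer separately, then combine the covers. The one non-routine ingredient --- and the step I expect to be the main obstacle --- is the Lipschitz stability of $\att_\sm$ with respect to its arguments, and hence its parameters, with constants free of $L$. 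This rests on two facts: (i) $\softmax(Kq/\gamma)$ is a probability vector, so $\att_\sm(q,K,V)=V^\top\softmax(Kq/\gamma)$ is a convex combination of the rows of $V$; hence $\|\att_\sm(q,K,V)\|_2\le\max_\ell\|v^\ell\|_2$ and a perturbation of $V$ propagates through the convex combination with no $L$-dependent amplification; and (ii) $\softmax$ is $2$-Lipschitz as a map $(\RR^L,\|\cdot\|_\infty)\to(\RR^L,\|\cdot\|_1)$ uniformly over $L$, because its Jacobian $\mathrm{diag}(p)-pp^\top$ has $\|\cdot\|_{\infty\to1}$-operator norm at most $2$ for every probability vector $p$. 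Combining (i) and (ii) with the pointwise bound $\|Kq-\widetilde K\widetilde q\|_\infty\le\max_\ell(\|k^\ell\|_2\|q-\widetilde q\|_2+\|q\|_2\|k^\ell-\widetilde k^\ell\|_2)$ --- whose right-hand side is $L$-free precisely because the parameters $A,W^\tq,W^\tk,W^\tv$ are shared across all tokens (equivariance) --- yields a Lipschitz bound for $\att_\sm$ in the parameters governed by the spectral scales $\gamma$, $(\omega^\tq+\omega^\tk)\,\omega^\tv$ and the data radius $R$, but independent of $L$.

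Granting this stability lemma, the covering number of $\cF_\att$ at scale $\epsilon$ factors into covers of the spectral/Frobenius balls of $A,W^\tq,W^\tk,W^\tv$ (and of $\agg_{\theta_0}$): each weight matrix has ambient dimension $O(D)\times O(D)$ and contributes $O(D^2\log(r/\epsilon))$ to the metric entropy, where the effective radius $r$ equals the relevant Lipschitz constant times the Frobenius bound --- this is where the dimensionless ratios $\kappa$ and $\zeta$ arise, as they encode exactly ``Frobenius size relative to spectral scale''. Feeding $\log N(\epsilon) = O\big(D^2[\log(1+\gamma)+\log(1+\zeta R)+\log(1+\kappa/\zeta)+\log(1/\epsilon)]\big)$ into the Dudley integral gives a scalar Rademacher complexity of order $D\,n^{-1/2}[\sqrt{\log(1+\gamma)}+\sqrt{\log(1+\zeta R)}+\sqrt{\log(1+\kappa/\zeta)}]$; multiplying by the $d_\ty\le D$ output coordinates and adding back the $\sqrt{\log(1/\delta)/n}$ deviation term yields the stated bound. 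Every step except the $L$-free softmax stability is bookkeeping within the \citet{bartlett2017spectrally} template; the analogous argument for the full transformer with skip connections, deferred to \S\ref{appendix:gen}, requires in addition a layerwise composition lemma that preserves this $L$-freeness.
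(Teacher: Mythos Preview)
Your proposal is correct and matches the paper's approach: both reduce $\cE_{\rm gen}$ to Rademacher complexity via symmetrization and McDiarmid, apply vector contraction and the Dudley entropy integral, and cover the parameter balls of $A,W^\tq,W^\tk,W^\tv$ using an $L$-free Lipschitz bound on $\att_\sm$ (the paper's Lemma~\ref{lemma:lip-sm} is the RBF-kernel analogue of your $\|\cdot\|_{\infty\to1}$ Jacobian bound for $\softmax$, combined with the convex-combination structure of the value aggregation). The paper organizes the covering via the $(r,\infty)$-matrix norm on the stacked outputs rather than the empirical $L_2$ metric directly, but this is a bookkeeping choice that yields the same $O(D^2)$ metric-entropy exponent and the same logarithmic dependence on $\gamma$, $\zeta R$, and $\kappa/\zeta$.
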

\begin{proof}
See \S\ref{appendix:gen} for a detailed proof.
\end{proof}

An important implication of Theorem \ref{thm:gen-simple} is that the generalization error for attention neural networks does not degrade as the sequence length $L$ goes to infinity. It is also worth mentioning that the constants $\alpha$, $\omega$,$R^\nn$, $R^\att$, and $R$ play crucial roles in the theoretical analysis of the generalization error and justify the architecture design of the original transformer. In specific, we observe that (i) skip connections help reducing $\alpha$, $\omega$, $R^\nn$, and $R^\att$, and (ii) layer normalizations help reducing $R$ when there is multilayer composition of many attention mechanisms. See \S\ref{appendix:gen} for a more involved analysis of the generalization error of the complete setup of the transformer architecture and the related discussion.

\subsection{Approximation Error Analysis}\label{sec:approx}
In what follows, we characterize the approximation error in the supervised setting. 
\vskip4pt
\noindent{\bf Approximation Target.} We aim to approximate the regression function $f^*(X) = \EE[y \given X]$ with the attention neural network $f_\theta \in \cF_\att$, which is defined in \eqref{eq:trans-class-simple}. The regression function $f^*(X)$ is the optimal model in the sense that it minimizes the population risk $\EE[\cL((X, y), f)]$. By Lemma \ref{lem:sufficiency}, when the input mask $\mask$ and the latent variable $z$ are given, the target variable $y$ is independent of the input sequence $X$. Thus, the regression function $f^*(X)$ can be decomposed as follows,
\#\label{eq:latent-target}
f^*(X) = \EE[y\given X] & = \EE_{z \given X}\bigl[\EE[y \given \mask, z]\bigr]\notag\\
 &= \int \underbrace{\EE[y \given \mask, z]}_{\displaystyle g^*(z; \mask): \text{latent-to-target}} \cdot \PP(z \given X) \ud z.
\#
Here the latent-to-target mapping $g^*(z; \mask)$ can be viewed as a decoding function, which maps the latent variable $z$ to the target variable $y$ given the input mask $\mask$. On the other hand, the latent posterior $\PP(z\,|\, X)$ encodes the input sequence $X$ into the latent variable $z$. We note that the input mask $\mask$ describes the prediction task and is fixed throughout. For example, the input mask $\mask$ corresponds to the class encoding in the supervised setting or the positional encoding in the self-supervised setting.

From \eqref{eq:latent-target}, we see that approximating the regression function $f^*(X)$ involves capturing (i) the latent posterior $\PP(z \given X)$ and (ii) the latent-to-target mapping $g^*(z; \mask)$. Corresponding to (i), the latent variable $z$ summarizes the ``concept''  of the input sequence $X$, while corresponding to (ii), the target variable $y$ and the input mask $\mask$ specify the prediction task. In the following, we demonstrate the central role of the latent-to-target mapping $g^*(z; \mask)$, which attention neural networks aim to approximate. 
\vskip4pt
\noindent{\bf Approximation Surrogate.} We define the reweighted CME attention
\#\label{eq:f-att-cme}
f^{\dagger}_W(X; \mask) = W^\top\att_\cme\bigl(q_*(\mask), k_*(X), v_*(X)\bigr)
\#
as a surrogate function for approximating the regression function $f^*(X)$ in \eqref{eq:latent-target}. Here the reweighting parameter $W \in \RR^{d \times d_\ty}$ satisfies $\|W\|_{\rm F} < \infty$. In the sequel, we demonstrate that the latent-to-target function contained in $f^{\dagger}_W(X; \mask)$ approximates the latent-to-target mapping $g^*(z; \mask)$, which is a key component of the regression function $f^*(X)$. By \eqref{eq:cme-att} and \eqref{eq:cme-att-def}, we have
\#\label{eq:exact-sm}
W^\top \att_\cme\bigl(q_*(\mask), k_*(X), v_*(X)\bigr)  & =  W^\top \EE[v^\mask\,|\, \mask, X]\notag\\
& = W^\top\int \underbrace{\EE[v^\mask\,|\, \mask, z]}_{\displaystyle \psi(z; \mask) \text{: latent-to-value}} \cdot \PP(z\,|\, X) \ud z,
\#
where $q_*(\mask)$ and $v^\mask$ replace $c_*^\tm(\mask)$ and $r^\mask$ in \eqref{eq:lvm-y-inf}, respectively. Taking \eqref{eq:exact-sm} into \eqref{eq:f-att-cme}, we obtain
\#\label{eq:f-att-cme-g}
f^{\dagger}_W(X; \mask)  & =   \int \underbrace{W^\top\psi(z; \mask)}_{\displaystyle g_W^\dagger(X; \mask)} \cdot \PP(z\,|\, X) \ud z =  \EE_{z \given X}\bigl[ g^\dagger_W(z; \mask)\bigr],
\#
where $g^\dagger_W(z; \mask)$ is a latent-to-target function parameterized by $W \in \RR^{d \times d_\ty}$.

Following the infinite-dimensional counterpart of \eqref{eq:lat-post}, the reweighted CME attention captures the latent posterior $\PP(z \given X)$ under the latent variable model in \eqref{eq:structure-inf}, where the latent prior is Gaussian. Comparing \eqref{eq:latent-target} and \eqref{eq:f-att-cme-g}, we see that the reweighted CME attention $f_{W}^\dagger(X; \mask)$ performs the latent-to-target decoding via $g_{W}^\dagger(z; \mask)$, which plays the same role as the latent-to-target mapping $g^*(z; \mask)$. Thus, it remains to characterize the expressity of the function class
\#\label{eq:g-class}
\cG^\dagger = \bigl\{g^\dagger_W(z; \mask) = W^\top \psi(z; \mask): W \in \RR^{d \times d_\ty}, \|W\|_{\rm F} < \infty\bigr\}
\#
in terms of approximating the latent-to-target mapping $g^*(z; \mask)$ in \eqref{eq:latent-target}.

To characterize the function class $\cG^\dagger$ defined in \eqref{eq:g-class}, we define the function class 
\#\label{eq:g-class-i}
\cG_i^\dagger = \{g^\dagger_{W, i}(z; \mask) = w_i^\top \psi(z; \mask): w_i \in \RR^{d}, \|w_i\|_2 < \infty \bigr\},
\# 
which is formed by the $i$-th entry of the latent-to-target function $g^\dagger_W(z; \mask) \in \cG^\dagger$. Here $i \in [d_\ty]$ and $W = [w_1, \ldots, w_{d_\ty}]^\top$. Note that the function class $\cG_i^\dagger$ is the RKHS $\cH_\ltv$ induced by the kernel function $\fk_{\ltv}(z, z'; \mask) = \psi(z; \mask)^\top \psi(z'; \mask)$, which is a reproducing kernel. Here the latent-to-value ($\ltv$) mapping $\psi(z; \mask)$ is defined in \eqref{eq:exact-sm}. See \S\ref{sec:ltv-rkhs} for a detailed discussion. See Figure \ref{fig:H_mask} for a visualization of the construction of $\cH_{\ltv}$.
\begin{figure}[H]
	\centering
	\includegraphics[width=4.4in]{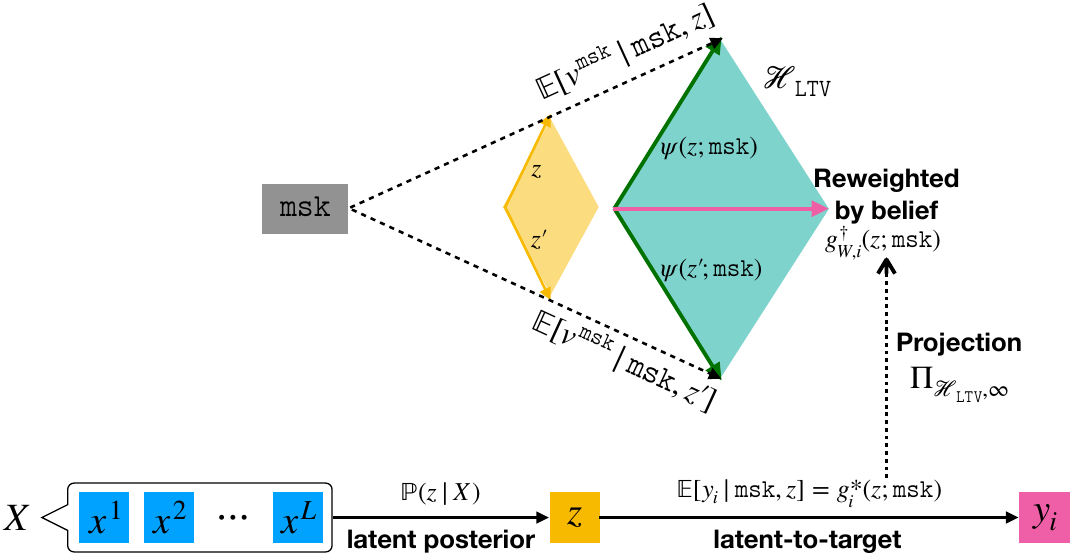}
	\caption{The RKHS $\cH_{\ltv}$ induced by the latent-to-value mapping $\psi(z; \mask)$. The input mask $\mask$ describes the prediction task and determines the RKHS $\cH_\ltv$. \label{fig:H_mask}}
\end{figure}

Therefore, the reweighted CME attention $f_W^\dagger(X; \mask)$ in \eqref{eq:latent-target} aims to capture the $i$-th entry $g^*_i(z; \mask)$ of the latent-to-target mapping $g^*(z; \mask)$ within the RKHS $\cH_{\ltv}$. To this end, we make the following assumption on the fundamental hardness of the recovery task.
\begin{assumption}[Recovery Gap]\label{asu::approx_err_sl}
For any fixed input mask $\mask$, let 
\$
g^\dagger_{W, i}(\cdot; \mask) = \Pi_{\cH_{\ltv}, \infty}\bigl(g^*_i(\cdot; \mask)\bigr) = \argmin_{g_i(\cdot; \mask) \in \cH_{\ltv}} \bigl\|g^*_i(\cdot; \mask) - g_i(\cdot; \mask)\bigr\|_\infty
\$ 
be the $\ell_\infty$-norm projection of the $i$-th entry $g^*_i(z; \mask)$ of the latent-to-target mapping $g^*(z; \mask)$ onto the RKHS $\cH_{\ltv}$. We assume that there exists $\epsilon_{g}(\mask) \in [0, +\infty)$ such that
\$
\sum_{i  = 1}^{d_\ty}\bigl\|g_i^*(\cdot; \mask) - g^\dagger_{W, i}(\cdot; \mask)\bigr\|^2_{\infty} \leq \epsilon_{g}^2(\mask).
\$
Here the $\ell_\infty$-norm is taken over the latent variable $z$.
\end{assumption}
Recall that the function class of attention neural networks $\cF_\att$ is defined in \eqref{eq:trans-class-simple}. We have the following theorem characterizing the approximation error $\cE_{\rm approx}$ defined in \eqref{eq::risk_decomp}.
\begin{theorem}[Approximation Error]
\label{lem::approx_error}
Let $\{g^\dagger_{W, i}(z; \mask) = w_i^\top \psi(z; \mask)\}_{i \in [d_\ty]}$ be a function class satisfying Assumption \ref{asu::approx_err_sl}. We define $W = [w^\top_1, \cdots, w^\top_{d_\ty}]^\top$. Suppose that there exists $f_\theta \in \cF_\att$ and $\epsilon_\att \in [0, +\infty)$ such that
\#\label{eq:appprox-sm-cme}
\sup_{X \in \mathfrak{X}^{L}}\Bigl\|f_\theta(X; \mask) - W ^\top\att_\cme\bigl(q_*(\mask), k_*(X), v_*(X)\bigr)\Bigr\|_2 \leq \epsilon_\att,
\#
where $\mathfrak{X}^{L}$ is defined in \eqref{eq:data-supp}.
Then, we have 
\$
\cE_{\text{approx}} \leq 2\epsilon^2_{g}(\mask) +  2\epsilon^2_{\att}.
\$
\end{theorem}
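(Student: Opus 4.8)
The plan is to reduce the statement to a pointwise approximation estimate by way of the standard bias–variance decomposition of the squared loss, and then to control that pointwise error through a triangle inequality that splits it into (i) the error incurred by replacing $f^*$ with the reweighted CME attention $f^\dagger_W$, and (ii) the error incurred by replacing $f^\dagger_W$ with the attention neural network $f_\theta \in \cF_\att$. Step (ii) is handed to us directly by the hypothesis \eqref{eq:appprox-sm-cme}; the substance is in step (i), where I would use the fact that both $f^*$ and $f^\dagger_W$ are \emph{posterior expectations} of latent-to-target maps, so their difference is governed by the discrepancy of those maps in the RKHS $\cH_{\ltv}$, which Assumption \ref{asu::approx_err_sl} bounds in $\ell_\infty$.

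Concretely: since $f^*(X) = \EE[y \given X]$, for every $f \in \cF_\att$ the cross term vanishes, $\EE[(y - f^*(X))^\top(f^*(X) - f(X))] = \EE[(f^*(X) - f(X))^\top \EE[y - f^*(X)\given X]] = 0$, so $\EE[\cL((X,y),f)] = \EE[\|y - f^*(X)\|_2^2] + \EE[\|f^*(X) - f(X)\|_2^2]$, whence $\cE_{\text{approx}} = \min_{f \in \cF_\att}\EE[\|f^*(X) - f(X)\|_2^2] \le \EE[\|f^*(X) - f_\theta(X;\mask)\|_2^2]$ for the particular $f_\theta$ supplied by the theorem. Next, by the triangle inequality $\|f^*(X) - f_\theta(X;\mask)\|_2 \le \|f^*(X) - f^\dagger_W(X;\mask)\|_2 + \|f^\dagger_W(X;\mask) - f_\theta(X;\mask)\|_2$, and the second term is $\le \epsilon_\att$ for all $X \in \mathfrak{X}^{L}$ by \eqref{eq:appprox-sm-cme} (recalling $f^\dagger_W(X;\mask) = W^\top\att_\cme(q_*(\mask), k_*(X), v_*(X))$ from \eqref{eq:f-att-cme}). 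For the first term I would invoke the posterior-expectation representations $f^*(X) = \EE_{z\given X}[g^*(z;\mask)]$ from \eqref{eq:latent-target} and $f^\dagger_W(X;\mask) = \EE_{z\given X}[g^\dagger_W(z;\mask)]$ from \eqref{eq:f-att-cme-g}; Jensen's inequality then gives $\|f^*(X) - f^\dagger_W(X;\mask)\|_2 \le \EE_{z\given X}[\|g^*(z;\mask) - g^\dagger_W(z;\mask)\|_2]$, and since $\|g^*(z;\mask) - g^\dagger_W(z;\mask)\|_2^2 = \sum_{i=1}^{d_\ty}|g^*_i(z;\mask) - g^\dagger_{W,i}(z;\mask)|^2 \le \sum_{i=1}^{d_\ty}\|g^*_i(\cdot;\mask) - g^\dagger_{W,i}(\cdot;\mask)\|_\infty^2 \le \epsilon_g^2(\mask)$ uniformly in $z$ by Assumption \ref{asu::approx_err_sl}, we get $\|f^*(X) - f^\dagger_W(X;\mask)\|_2 \le \epsilon_g(\mask)$ uniformly in $X$. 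Combining the two bounds, $\|f^*(X) - f_\theta(X;\mask)\|_2 \le \epsilon_g(\mask) + \epsilon_\att$ for all $X$, and $(a+b)^2 \le 2a^2 + 2b^2$ together with the first step yields $\cE_{\text{approx}} \le 2\epsilon_g^2(\mask) + 2\epsilon_\att^2$.

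I do not expect a genuine obstacle here; the whole argument is a short chain of triangle and Jensen inequalities, and its two non-elementary ingredients—the exact identity $f^* = \EE_{z\given X}[g^*(\cdot;\mask)]$ (Lemma \ref{lem:sufficiency} plus \eqref{eq:latent-target}) and the identity $f^\dagger_W = \EE_{z\given X}[g^\dagger_W(\cdot;\mask)]$ (from \eqref{eq:cme-att} and \eqref{eq:f-att-cme-g}, valid under the Gaussian-prior latent variable model \eqref{eq:structure-inf})—are established earlier in the paper and may be taken as given. The only point requiring care is the transition from the RKHS $\ell_\infty$-approximation guarantee of Assumption \ref{asu::approx_err_sl} to a \emph{uniform-in-$X$} bound on the $L_2$ regression error: this works precisely because the $\ell_\infty$ norm over $z$ dominates the integrand after Jensen, so the latent posterior $\PP(z\given X)$ never enters quantitatively and, in particular, the sequence length $L$ plays no role in $\cE_{\text{approx}}$.
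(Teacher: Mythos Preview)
Your proposal is correct and follows essentially the same approach as the paper: reduce $\cE_{\text{approx}}$ to $\EE[\|f^*(X)-f_\theta(X;\mask)\|_2^2]$ via the bias--variance identity, split into $\|f^*-f^\dagger_W\|$ and $\|f^\dagger_W-f_\theta\|$, bound the first by $\epsilon_g(\mask)$ through the posterior-expectation representations and Assumption \ref{asu::approx_err_sl}, and the second by $\epsilon_\att$ from \eqref{eq:appprox-sm-cme}. The only cosmetic difference is that the paper applies $\|a+b\|_2^2\le 2\|a\|_2^2+2\|b\|_2^2$ before bounding each piece, whereas you bound the unsquared norms first and square at the end; the resulting constants and structure are identical.
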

\begin{proof}
See \S\ref{sec::pf_lem_approx_error} for a detailed proof.
\end{proof}
The approximation error bound in Theorem \ref{lem::approx_error} involves the recovery gap $\epsilon_g(\mask)$ and the surrogate approximation error $\epsilon_\att$. Since the latent posterior $\PP(z \given X)$ is captured by the reweighted CME attention, the recovery gap $\epsilon_g(\mask)$ between the function class $\cG^\dagger$ in \eqref{eq:g-class} and the latent-to-target mapping $g^*(z; \mask)$ in \eqref{eq:latent-target} plays the central role in the approximation error bound. On the other hand, the approximation error $\epsilon_\att$ between attention neural networks in $\cF_\att$ and the reweighted CME attention is characterized in Proposition \ref{prop:kernel_attn_story1}.


\subsection{Optimization Error Analysis}\label{sec:opt}
Since the learning objective of attention neural networks is nonconvex with respect to the parameter $\theta$, we consider the property of the stationary points. Let $\hat \theta =(\hat{\theta}_0, \hat{A}, \hat{W}^\tq, \hat{W}^\tk, \hat{W}^\tv)$ be the stationary point of the empirical risk $ \hat{\EE}[\cL((X, y), f)]$, that is,  
\begin{align}\label{eq:statw}
\biggl
\la\nabla_\theta \hat{\EE}\Bigl[\cL\bigl((X, y), {f}_{\hat \theta}\bigr)\Bigr], \theta - \hat \theta\biggr\ra \ge 0, \quad \forall \theta \in \Theta,
\end{align}
which is the learnable parameter obtained in the training process, i.e., $\hat f = f_{\hat \theta}$. Recall that the regression function $f^*(X) = \EE[y\given X]$ is the minimizer of the population risk ${\EE}[\cL((X, y), f)]$. We have the following proposition characterizing the optimization error $\cE_{\rm opt}$, which is defined in \eqref{eq::risk_decomp}.

\begin{proposition}[Optimization Error]\label{prop:opt-stat}
	Suppose that Assumption \ref{asu::data} holds.
Then, it holds that
	\begin{align}\label{eq:opt}
		\cE_{\rm opt} \leq 2\cdot \min_{\theta \in \Theta}\hat \EE\Bigl[\bigl\|f_{\hat \theta}(X) + \nabla_\theta f_{\hat \theta}(X)^\top (\theta - \hat \theta) - f^*(X)\bigr\|_2\Bigr].
	\end{align}
\end{proposition}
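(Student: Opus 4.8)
The plan is to exploit the variational inequality \eqref{eq:statw} defining the stationary point $\hat\theta$ together with the convexity of the linearized least-squares objective around $\hat\theta$, and then to translate the value of that objective into a bound stated in terms of $f^*$. For $\theta \in \Theta$, set $f^{\rm lin}_\theta(X) = f_{\hat\theta}(X) + \nabla_\theta f_{\hat\theta}(X)^\top(\theta - \hat\theta)$, which is affine in $\theta$, and let $h(\theta) = \hat\EE[\|y - f^{\rm lin}_\theta(X)\|_2^2]$. As the composition of the convex loss $\|y - \cdot\|_2^2$ with an affine map, $h$ is a convex quadratic on the convex set $\Theta$. The key point is that $f^{\rm lin}_\theta$ and $f_\theta$ share first-order information at $\hat\theta$, namely $f^{\rm lin}_{\hat\theta} = f_{\hat\theta}$ and $\nabla_\theta f^{\rm lin}_\theta \equiv \nabla_\theta f_{\hat\theta}$; hence $\nabla h(\hat\theta)$ coincides with $\nabla_\theta \hat\EE[\cL((X,y), f_\theta)]|_{\theta = \hat\theta}$, and substituting this into \eqref{eq:statw} gives $\langle \nabla h(\hat\theta), \theta - \hat\theta\rangle \ge 0$ for all $\theta \in \Theta$. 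For a convex function on a convex set this first-order condition certifies $\hat\theta \in \argmin_{\theta \in \Theta} h(\theta)$, so that, using $h(\hat\theta) = \hat\EE[\cL((X,y), \hat f)]$,
\begin{align*}
\hat\EE\bigl[\cL((X,y), \hat f)\bigr] \le \hat\EE\bigl[\|y - f^{\rm lin}_\theta(X)\|_2^2\bigr] \qquad \text{for all } \theta \in \Theta.
\end{align*}

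Next I would choose $\theta^\dagger \in \argmin_{\theta\in\Theta}\hat\EE[\|f^{\rm lin}_\theta(X) - f^*(X)\|_2]$. Since $\tilde f$ minimizes the empirical risk over $\cF_\att$ we have $\hat\EE[\cL((X,y), \tilde f)] \ge 0$, so the display above yields $\cE_{\rm opt} \le \hat\EE[\|y - f^{\rm lin}_{\theta^\dagger}(X)\|_2^2]$. To replace $y$ by $f^*$ I would invoke Assumption \ref{asu::data}: since $\|y\|_2 \le 1/2$ and $f^*(X) = \EE[y \mid X]$, Jensen's inequality gives $\|f^*(X)\|_2 \le 1/2$, hence $\|y - f^*(X)\|_2 \le 1$, and moreover $\cE_{\rm opt}$ is itself at most an absolute constant (both $y$ and $\hat f(X)$ lie in $\mathfrak{Y}$, so $\cL((X,y),\hat f)\le 1$). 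Decomposing $y - f^{\rm lin}_{\theta^\dagger}(X) = (y - f^*(X)) + (f^*(X) - f^{\rm lin}_{\theta^\dagger}(X))$, expanding the square, bounding the cross term by Cauchy--Schwarz and $\|y - f^*(X)\|_2 \le 1$, and matching the noise term $\hat\EE[\|y - f^*(X)\|_2^2] = \hat\EE[\cL((X,y), f^*)]$ against $\hat\EE[\cL((X,y),\tilde f)]$ (which I would keep, rather than drop, in the sharp version of the estimate), the bound reduces to a multiple of $\hat\EE[\|f^{\rm lin}_{\theta^\dagger}(X) - f^*(X)\|_2]$ plus the second moment $\hat\EE[\|f^{\rm lin}_{\theta^\dagger}(X) - f^*(X)\|_2^2]$; converting the latter back to a first moment — a step that is trivial unless $\min_{\theta}\hat\EE[\|f^{\rm lin}_\theta(X) - f^*(X)\|_2]$ is small, because $\cE_{\rm opt}$ is already bounded — collapses the whole estimate to $2\,\min_{\theta\in\Theta}\hat\EE[\|f^{\rm lin}_\theta(X) - f^*(X)\|_2]$.

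The first step is the conceptual core and is robust: it uses only convexity of $\|y - \cdot\|_2^2$ composed with an affine map, the agreement of $f_\theta$ with its linearization $f^{\rm lin}_\theta$ to first order at $\hat\theta$, and the stationarity condition \eqref{eq:statw}. It is precisely the mechanism by which a rich tangent space at $\hat\theta$ — i.e., overparameterization — forces every stationary point to be near-optimal for the empirical risk. The main obstacle is the second step: arranging for the irreducible-noise term $\hat\EE[\cL((X,y),f^*)]$ to cancel cleanly against $\hat\EE[\cL((X,y),\tilde f)]$, and carrying out the surrounding bookkeeping so that the surviving constant is exactly $2$ rather than a larger numerical factor; here the boundedness of $\mathfrak{Y}$ and the regression-function identity $f^*(X) = \EE[y \mid X]$ are exactly what make the conversion go through.
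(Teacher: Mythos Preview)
Your first paragraph is sound and packages the same mechanism the paper uses: convexity of the squared loss plus the stationarity condition \eqref{eq:statw} forces $\hat\theta$ to minimize the linearized empirical risk $h$. The gap is entirely in your second step, and the paper avoids it by never passing through the \emph{value} $h(\theta)$. Instead it uses the first-order convexity inequality in the \emph{output} variable: for any $g$, convexity of $u\mapsto\|y-u\|_2^2$ gives $\hat\EE[\cL((X,y),f_{\hat\theta})]-\hat\EE[\cL((X,y),g)]\le\hat\EE[\nabla_f\cL((X,y),f_{\hat\theta})^\top(f_{\hat\theta}(X)-g(X))]$. One then \emph{adds} the nonnegative stationarity term $\hat\EE[\nabla_f\cL((X,y),f_{\hat\theta})^\top\nabla_\theta f_{\hat\theta}(X)^\top(\theta-\hat\theta)]$, which replaces $f_{\hat\theta}$ by your $f^{\rm lin}_\theta$ inside the inner product, and finishes by Cauchy--Schwarz together with $\|\nabla_f\cL((X,y),f_{\hat\theta})\|_2=2\|y-f_{\hat\theta}(X)\|_2\le 2$ (from Assumption~\ref{asu::data} and the output range of $\agg_{\theta_0}$). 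The constant $2$ is exactly this gradient bound---no decomposition of $y-f^{\rm lin}_\theta$, no noise-term cancellation, no second-moment conversion.

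By contrast, your route $\cE_{\rm opt}\le h(\theta)-\hat\EE[\cL((X,y),\tilde f)]$ leaves, after expanding, a squared residual $\hat\EE[\|f^{\rm lin}_\theta-g\|_2^2]$ that does not reduce to the first moment with coefficient one, and if you take $g=f^*$ rather than $g=\tilde f$ you additionally pick up $\hat\EE[\cL((X,y),f^*)]-\hat\EE[\cL((X,y),\tilde f)]$, whose sign you cannot control without assuming $f^*\in\cF_\att$. The obstacle you flag is real, and it is precisely what the linear (as opposed to quadratic) argument sidesteps. As an aside, the paper's proof actually takes $g=\tilde f$ (written $f_{\theta^*}$ there), so the comparison point in its final bound is the empirical minimizer, not the regression function; the $f^*$ in the proposition statement is inconsistent with the proof, and part of your difficulty in the second step comes from chasing it.
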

\begin{proof}
See \S\ref{appendix:opt-ap} for a detailed proof.
\end{proof}

The right-hand side of \eqref{eq:opt} quantifies the expressity of the function class defined by the local linearization,
\$
\bigl\{f_{\hat \theta}(X) + \nabla_\theta f_{\hat \theta}(X)^\top(\theta - \hat \theta) : \theta \in \Theta \bigr\}.
\$
In the neural tangent kernel (NTK) regime \citep{yang2020tensor, yang2021tensor, jacot2018neural}, it is known that,
\$
	f^*(X) = f_{\hat \theta}(X) + \nabla_\theta f_{\hat \theta}(X)^\top (\theta - \hat \theta) + o(1), \quad \forall X \in \RR^{L \times d},
\$
where the $o(1)$ error captures the local linearization error in the NTK-based analysis. As a consequence, the optimization error satisfies $\cE_{\rm opt} = o(1)$, that is, the stationary point $\hat \theta$ is (approximately) global optimal. Such a result shows the theoretical benefit of incorporating feedforward neural networks in the architecture design. While NTK-based analysis involves a random initialization in the supervised setting, \cite{malladi2022kernel} provide an NTK-based analysis for the downstream training of the transformer with a pretrained initialization in the self-supervised setting.


\section{From Supervised Learning to Self-Supervised Learning}\label{sec:ssl}
An important aspect of the attention mechanism is that one can obtain a sequence embedding by pretraining in a self-supervised manner, which gives rise to the transfer capability for diverse downstream tasks. 
\vskip4pt
\noindent{\bf Self-Supervised Learning.} The attention mechanism enables embedding learning and downstream prediction via the self-supervised learning (SSL) process as follows.
\begin{itemize}
    \item[($\pre$)] \underline{P}re\underline{t}raining process: We train an attention neural network $\hat f_\pre(\overbar X; \mask_\pre) =  f_{\hat \theta_\pre}(\overbar X; \mask_\pre)\in \cF_\pre$ with the learned parameter $\hat \theta_\pre$ to predict the masked token $x^L \in \RR^d$, which is denoted by $y_\pre$, from the truncated input sequence $\overbar X = \{x^\ell\}_{\ell \in [L-1]}$ and the input mask $\mask_\pre$. Here the function class of attention neural networks for the pretraining process is defined as follows,
\#\label{eq:f-pre}
\cF_\pre = \Bigl\{\agg^\pre_{\theta} \circ \att_\sm\bigl(q_{\theta}(\mask_\pre), k_{\theta}(\overbar X), v_{\theta}(\overbar X)\bigr): \theta \in \Theta_\pre\Bigr\},
\#
where $\agg^\pre_{\theta}: \RR^{d} \to \RR^{d}$ is the aggregation layer. For the pretraining process, the input mask $\mask_\pre$ is the positional encoding of the masked token $x^L$.
    \item[($\dt$)] \underline{D}own\underline{s}tream task: We freeze the learned parameter $\hat \theta_\pre$ and train another attention neural network $\hat f_\dt(\overbar X; \mask_\dt) =  f_{\hat \theta_\dt}(\overbar X; \mask_\dt)\in \cF_\dt$ with the learned parameter $\hat \theta_\dt$ to predict another target variable $y_\dt \in \RR^{d_\ty}$ from the truncated input sequence $\overbar X = \{x^\ell\}_{\ell \in [L-1]}$ and another input mask $\mask_\dt$.  Here the function class of attention neural networks for the downstream task is defined as follows,
\#\label{eq:f-dt}
\cF_\dt = \Bigl\{\agg^\dt_{\theta} \circ \att_\sm\bigl(q_{\hat \theta_\pre}(\mask_\dt), k_{\hat \theta_\pre}(\overbar X), v_{\hat \theta_\pre}(\overbar X)\bigr): \theta \in \Theta_\dt\Bigr\},
\#
which means that the aggregation layer $\agg^\dt_{\hat \theta_\dt}: \RR^{d} \to \RR^{d_\ty}$ replaces the aggregation layer $\agg^\pre_{\hat \theta_\pre}: \RR^{d} \to \RR^{d}$ obtained in the pretraining process. For the downstream task, the input mask $\mask_\dt$ is the class encoding of the target variable $y_\dt$.
\end{itemize}
With the full input sequence $X$ replaced by the truncated input sequence $\overbar X$, the attention neural network $\hat f_\dt(\overbar X; \mask)$ obtained in the SSL process has the same decomposition of the excess risk as that in \eqref{eq::risk_decomp}. In the risk decomposition, for the SSL process, we have the same characterization of the generalization error and the optimization error as those in the supervised setting. When the downstream task is trained using the same set of truncated input sequences as that in the pretraining process, our previous analysis of the generalization error in the supervised setting is applicable to the SSL process. On the other hand, when the downstream task is trained using an independent set of truncated input sequences, we can modify our previous analysis to prove that the generalization error only scales with the complexity measure (e.g., the covering number) of the function class $\{\agg^\dt_\theta: \theta \in \Theta_\dt\}$ of aggregation layers without depending on that of the attention mechanism, as $\hat \theta_\pre$ is frozen. Also, the attention neural network $\hat f_\pre(\overbar X; \mask)$ obtained in the pretraining process has the same approximation error as that in the supervised setting. To characterize the approximation error for the SSL process, we analyze the approximation error for the downstream task by connecting it to the approximation error for the pretraining process.

\vskip4pt
\noindent{\bf Approximation Error.} In parallel to the supervised setting, we define the regression function and the latent-to-target mapping for the pretraining process as follows,
\#\label{eq::def_hu_ssl}
 f_\pre^*(\overbar X) = \EE[y_\pre \given \overbar{X}], \qquad g^*_\pre(z; \mask_\pre) = \EE[y_\pre \given \mask_\pre, z].
\#
Correspondingly, we defined the regression function and the latent-to-target mapping for the downstream task as follows,
\#\label{eq:fg-dt}
f_\dt^*(\overbar X) = \EE[y_\dt \given \overbar{X}], \qquad g_\dt^*(z; \mask_\dt) = \EE[y_\dt \given \mask_\dt, z].
\#
In parallel to the reweighted CME attention defined in \eqref{eq:f-att-cme}, we defined the surrogate functions for the pretraining process and the downstream task as follows,
\#\label{eq:surrogate-ssl}
f_{W_\pre}^\dagger(\overbar X; \mask_\pre) & = W_\pre^\top \att_\cme\bigl(q_*(\mask_\pre), k_*(\overbar X), v_*(\overbar X)\bigr),\notag\\
f_{W_\dt}^\dagger(\overbar X; \mask_\dt) & = W_\dt^\top \att_\cme\bigl(q_*(\mask_\dt), k_*(\overbar X), v_*(\overbar X)\bigr),
\#
where $W_\pre \in \RR^{d \times d}$ and $W_\dt \in \RR^{d \times d_\ty}$ are the reweighting parameters. We use the surrogate function to bridge the regression function and the attention neural network, which is illustrated in Figure \ref{fig:ssl}. In parallel to the latent-to-value mapping $\psi(z; \mask)$ defined in \eqref{eq:exact-sm}, we define the latent-to-value mappings for the pretraining process and the downstream task as follows,
\$
\psi_\pre(z; \mask_\pre) = \EE[v^{\mask_\pre}\,|\, \mask_\pre, z], \qquad \psi_\dt(z; \mask_\dt) = \EE[v^{\mask_\dt}\,|\, \mask_\dt, z],
\$
where $v^{\mask_\pre}$ and $v^{\mask_\dt}$ replace $r^\mask$ in \eqref{eq:lvm-y-inf}. The latent-to-value mappings induce the kernel functions as follows,
\$
\fk_\pre(z, z'; \mask_\pre) & = \psi_\pre(z; \mask_\pre)^\top \psi_\pre(z'; \mask_\pre),\\ \fk_\dt(z, z'; \mask_\dt) & = \psi_\dt(z; \mask_\dt)^\top \psi_\dt(z'; \mask_\dt),
\$
which induce the RKHSs $\cH_\pre$ and $\cH_\dt$. Corresponding to \eqref{eq:f-att-cme-g}, we have
\#
f^{\dagger}_{W_\pre}(\overbar X; \mask_\pre)  & =   \int \underbrace{W_\pre^\top\psi_\pre(z; \mask_\pre)}_{\displaystyle g_{W_\pre}^\dagger(\overbar X; \mask_\pre)} \cdot \PP(z\,|\, \overbar X) \ud z =  \EE_{z \given \overbar X}\bigl[ g^\dagger_{W_\pre}(z; \mask_\pre)\bigr],\notag\\
f^{\dagger}_{W_\dt}(\overbar X; \mask_\dt)  & =   \int \underbrace{W_\dt^\top\psi_\dt(z; \mask_\dt)}_{\displaystyle g_{W_\dt}^\dagger(\overbar X; \mask_\dt)} \cdot \PP(z\,|\, \overbar X) \ud z =  \EE_{z \given \overbar X}\bigl[ g^\dagger_{W_\dt}(z; \mask_\dt)\bigr].\label{eq:f-att-cme-g-ssl}
\#
Note that $f^{\dagger}_{W_\pre}(\overbar X; \mask_\pre)$ and $f^{\dagger}_{W_\dt}(\overbar X; \mask_\dt)$ share the same latent posterior since the attention mechanism is frozen for the downstream task. By our previous arguments following \eqref{eq:f-att-cme-g}-\eqref{eq:g-class-i}, it remains to characterize how the reweighted CME attentions in \eqref{eq:surrogate-ssl} recover the latent-to-target mappings in \eqref{eq::def_hu_ssl}-\eqref{eq:fg-dt} within the RKHSs $\cH_\pre$ and $\cH_\dt$. See Figure \ref{fig:ssl} for an illustration of the construction of the RKHSs $\cH_\pre$ and $\cH_\dt$.

\begin{figure}[H]
	\centering
	\includegraphics[width=5.4in]{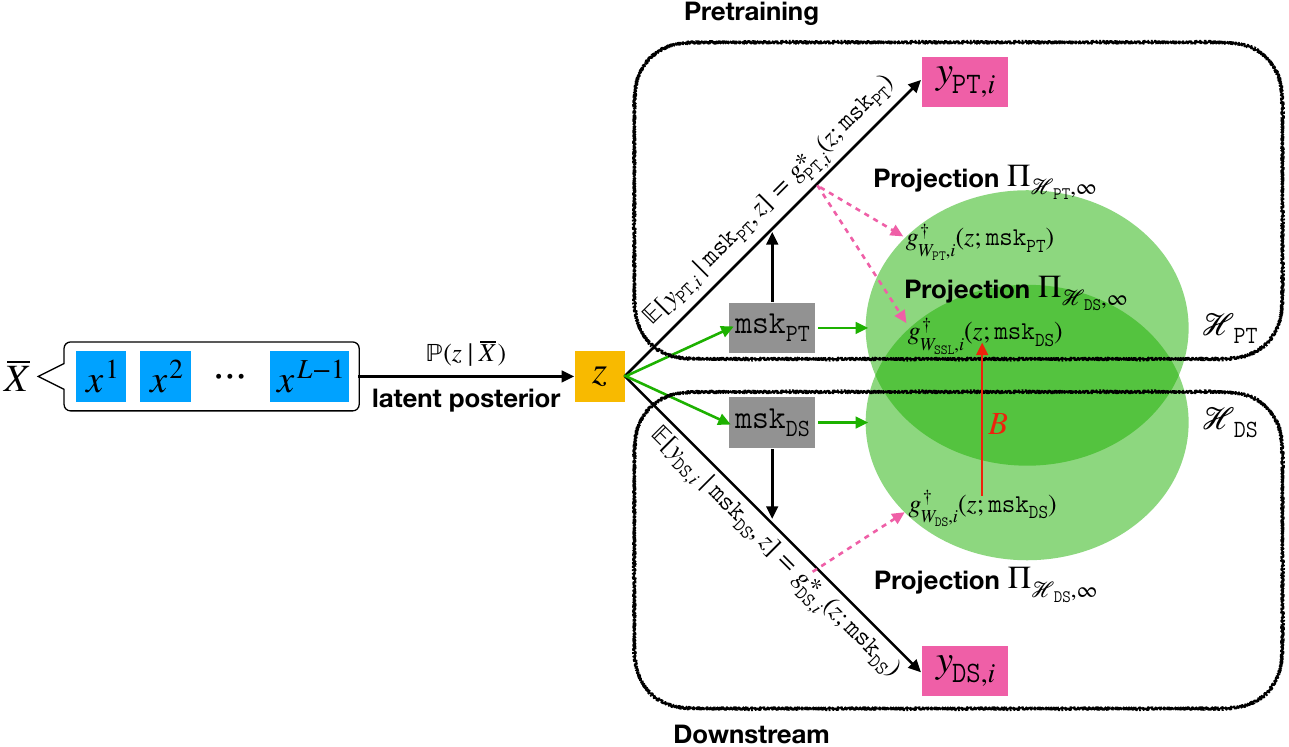}
	\caption{The RKHSs $\cH_\pre$ and $\cH_\dt$ induced by the latent-to-value mappings $\psi_\pre(z; \mask_\pre)$ and $\psi_\dt(z; \mask_\dt)$, respectively. The input masks $\mask_\pre$ and $\mask_\dt$ describe the pretraining process and the downstream task, respectively, and determine the RKHSs correspondingly. The $\ell_\infty$-norm projections $\Pi_{\cH_\pre, \infty}$ and $\Pi_{\cH_\dt, \infty}$ are defined in Assumption \ref{asu::approx_err_ss}. \label{fig:ssl}}
\end{figure}

In parallel to Assumption \ref{asu::approx_err_sl}, we introduce the following assumption on the fundamental hardness of approximating the latent-to-target mappings within the RKHSs $\cH_\pre$ and $\cH_\dt$.


\begin{assumption}[SSL Recovery Gap]\label{asu::approx_err_ss}
For any fixed input masks $\mask_\pre$ and $\mask_\dt$, let 
\$
g^\dagger_{W_\pre, i}(\cdot; \mask_\pre) & = \Pi_{\cH_{\pre}, \infty}\bigl(g^*_{\pre, i}(\cdot; \mask_\pre)\bigr) = \argmin_{g_i(\cdot; \mask_\pre) \in \cH_{\pre}} \bigl\|g^*_{\pre, i}(\cdot; \mask_\pre) - g_i(\cdot; \mask_\pre)\bigr\|_\infty,\\
g^\dagger_{W_\dt, i}(\cdot; \mask_\dt) & = \Pi_{\cH_{\dt}, \infty}\bigl(g^*_{\dt, i}(\cdot; \mask_\dt)\bigr) = \argmin_{g_i(\cdot; \mask_\dt) \in \cH_{\dt}} \bigl\|g^*_{\dt, i}(\cdot; \mask_\dt) - g_i(\cdot; \mask_\dt)\bigr\|_\infty,\\
g^\dagger_{W_\ssl, i}(\cdot; \mask_\dt) & = \Pi_{\cH_{\dt}, \infty}\bigl(g^*_{\pre, i}(\cdot; \mask_\pre)\bigr) = \argmin_{g_i(\cdot; \mask_\dt) \in \cH_{\dt}} \bigl\|g^*_{\pre, i}(\cdot; \mask_\pre) - g_i(\cdot; \mask_\dt)\bigr\|_\infty
\$ 
be the $\ell_\infty$-norm projections of the $i$-th entry $g^*_{\pre, i}(z; \mask_\pre)$ of the latent-to-target mapping $g^*_\pre(z; \mask_\pre)$ onto the RKHS $\cH_{\pre}$, the $i$-th entry $g^*_{\dt, i}(z; \mask_\dt)$ of the latent-to-target mapping $g^*_\dt(z; \mask_\dt)$ onto the RKHS $\cH_{\dt}$, and the $i$-th entry $g^*_{\pre, i}(z; \mask_\pre)$ of the latent-to-target mapping $g^*_\pre(z; \mask_\pre)$ onto the RKHS $\cH_{\dt}$, respectively. We assume the following statements hold.
\begin{itemize}
\item[($\pre$)] There exists $\epsilon_{g}(\mask_\pre) \in [0, +\infty)$ such that
\#\label{eq:eps-pre}
\sum_{i  = 1}^{d}\bigl\|g_{\pre, i}^*(\cdot; \mask_\pre) - g^\dagger_{W_\pre, i}(\cdot; \mask_\pre)\bigr\|^2_{\infty} \leq \epsilon_g^2(\mask_\pre).
\#
\item[($\dt$)] There exists $\epsilon_{g}(\mask_\dt) \in [0, +\infty)$ such that
\#\label{eq:eps-dt}
\sum_{i  = 1}^{d_\ty}\bigl\|g_{\dt, i}^*(\cdot; \mask_\dt) - g^\dagger_{W_\dt, i}(\cdot; \mask_\dt)\bigr\|^2_{\infty} \leq \epsilon_g^2(\mask_\dt).
\#
\item[($\ssl$)] There exists $\epsilon_\ssl(\mask_\pre, \mask_\dt) \in [0, +\infty)$ such that
\#\label{eq:eps-ssl}
\sum_{i  = 1}^{d}\bigl\|g_{\pre, i}^*(\cdot; \mask_\pre) - g^\dagger_{W_\ssl, i}(\cdot; \mask_\dt)\bigr\|^2_{\infty} \leq \epsilon_\ssl^2(\mask_\pre, \mask_\dt).
\#
\end{itemize}
Here the $\ell_\infty$-norms are taken over the latent variable $z$.
\end{assumption}

Intuitively, the feature $\psi_\pre(z; \mask_\pre)$ is obtained in the pretraining process, while the feature $\psi_\dt(z; \mask_\dt)$ is desired by the downstream task. Meanwhile, \eqref{eq:eps-ssl} characterizes the fundamental hardness of recovering the latent-to-target mapping $g_\pre^*(z; \mask_\pre)$ for the pretraining process within the RKHS $\cH_\dt$. Thus, the transfer error $\epsilon_\ssl(\mask_\pre, \mask_\dt)$ captures the transfer capability of the sequence embedding obtained in the pretraining process to the downstream task. In other words, when the pretraining process is sufficiently related to the downstream task, the transfer error $\epsilon_\ssl(\mask_\pre, \mask_\dt)$ is small, which allows us to approximate the $i$-th entry of the latent-to-target mapping $g^*_\pre(z; \mask_\pre)$ within the RKHS $\cH_\dt$ up to the approximation error $\epsilon_\ssl(\mask_\pre, \mask_\dt)$.

We introduce the following assumption on the condition number that characterizes the alignment between the reweighting parameter desired by the downstream task  and the reweighting parameter obtained in the pretraining process. 
\begin{assumption}[SSL Condition Number]\label{asu::effec_ssl_param}
Let $\{g_{W_\dt, i}^\dagger(z; \mask_\dt) = w_{\dt, i}^\top \psi_\dt(z; \mask_\dt)\}_{i \in [d]}$ and $\{g^\dagger_{W_\ssl, i}(z; \mask_\dt) = w_{\ssl, i}^\top \psi_\dt(z; \mask_\dt)\}_{i \in [d_\ty]}$ be the function classes satisfying \eqref{eq:eps-dt} and \eqref{eq:eps-ssl} in Assumption \ref{asu::approx_err_ss}, respectively. Also, let $W_\dt = [w_{\dt, 1}, \ldots, w_{\dt, d_\ty}]^\top \in \RR^{d \times d_\ty}$, $W_\ssl = [w_{\ssl, 1}, \ldots, w_{\ssl, d}]^\top \in \RR^{d \times d}$, and\footnote{For ease of presentation, we assume that $W_\ssl W^\top_\ssl \in \RR^{d \times d}$ is invertible. When $W_\ssl W^\top_\ssl$ is not invertible, our subsequent analysis can be generalized using the pseudoinverse of $W_\ssl W^\top_\ssl$.}
\#\label{eq:b}
B = W_\dt^\top (W_\ssl W^\top_\ssl)^{-1} W_\ssl \in \RR^{d_\ty \times d}.
\#
We assume that there exists $\mu \in [0, +\infty)$ such that $\|B\|_2^2 \leq \mu$.
\end{assumption}

The condition number $\mu$ plays a critical role in our subsequent analysis. To see the intuition behind $\mu$, let $W_\dt = W_\ssl$, which implies that $B$ is a projection matrix and $\mu = 1$. Also, let the row vectors of $W_\ssl$ be an orthonormal basis of $\RR^{d}$, which implies that $W_\ssl W_\ssl^\top = I_{d_\tp}$ and $B = W_\dt^\top W_\ssl$. In this case, $B$ measures the subspace alignment between the reweighting parameter $W_\dt$ desired by the downstream task and the reweighting parameter $W_\ssl$ obtained in the pretraining process. In general cases where $W_\ssl$ is nonorthonormal, we have a similar interpretation through the eigenvalue decomposition of $W_\ssl W_\ssl^\top$.

Recall that $\hat f_\pre(\overbar X; \mask_\pre)$ is the attention neural network obtained in the pretraining process. For any $U \in \RR^{d_\ty \times d}$, we define the following quantity that characterizes the expressity of the function class $\{\agg^\dt_\theta: \theta \in \Theta_\dt\}$ of aggregation layers for the downstream task,
\#\label{eq:approx-sm-cme-ssl}
\epsilon_\agg(U) & = \inf_{f_\dt \in \cF_\dt}\sup_{\overbar X \in \mathfrak{X}^{L-1}}\bigl\|f_\dt(\overbar X; \mask_\dt) - U\hat f_\pre(\overbar X; \mask_\pre)\bigr\|_2\notag\\
& = \inf_{\theta \in \Theta_\dt}\sup_{\overbar X \in \mathfrak{X}^{L-1}}\bigl\|\agg^\dt_{\theta} \circ \att_\sm\bigl(q_{\hat \theta_\pre}(\mask_\dt), k_{\hat \theta_\pre}(\overbar X), v_{\hat \theta_\pre}(\overbar X)\bigr)\notag\\
& \quad\qquad\qquad\qquad  - U\agg^\pre_{\hat \theta_\pre} \circ \att_\sm\bigl(q_{\hat \theta_\pre}(\mask_\pre), k_{\hat \theta_\pre}(\overbar X), v_{\hat \theta_\pre}(\overbar X)\bigr)\bigr\|_2,
\# 
where $\mathfrak{X}^{L-1}$ is defined in \eqref{eq:data-supp}. Since the attention mechanism is frozen for the downstream task, the trainable part of the attention neural network is the aggregation layer $\agg^\dt_\theta$. Thus, the aggregation approximation error $\epsilon_\agg(U)$ characterizes the expressity of the function class $\{\agg^\dt_\theta: \theta \in \Theta_\dt\}$ of aggregation layers in terms of approximating the composition of (i) the linear transformation $U\agg^\pre_{\hat \theta_\pre}$ of the aggregation layer obtained in the pretraining process and (ii) the output variation induced by switching the input mask $\mask_\pre$ to another input mask $\mask_\dt$ in the attention mechanism, which is frozen. To see the intuition behind $\epsilon_\agg(U)$, let $\mask_\pre = \mask_\dt$, which implies that $\epsilon_\agg(U) = 0$ as long as the function class of aggregation layers takes the form of $\{\agg^\dt_\theta = U\agg^\pre_{\hat \theta_\pre}: \theta = U \in \RR^{d_\ty \times d}\}$. In this case, $\epsilon_\agg(U)$ characterizes the compatibility between $\agg_\theta^\dt$ and $\agg^\pre_{\hat \theta_\pre}$ under a linear transformation parameterized by $\theta$. In general cases where $\mask_\pre \neq \mask_\dt$, $\epsilon_\agg(U)$ additionally characterizes the capability of $\agg_\theta^\dt$ to capture the output variation induced by switching the input mask.

Recall that the function class $\cF_\pre$ of attention neural networks for the pretraining process is defined in \eqref{eq:f-pre}. Let
\#\label{eq:approx-pre}
\cE_{\rm approx}^\pre = \min_{f \in \cF_\pre}\EE\Bigl[\cL\bigl((\overbar X, y_\pre), f\bigr)\Bigr] - \EE\Bigl[\cL\bigl((\overbar X, y_\pre), f_\pre^*\bigr)\Bigr]
\#
be the approximation error for the pretraining process, which is characterized in Theorem \ref{lem::approx_error}. Recall that the function class $\cF_\dt$ of attention neural networks for the downstream task is defined in \eqref{eq:f-dt}. For the downstream task, the approximation error in \eqref{eq::risk_decomp} takes the following form,
\#\label{eq:approx-ssl}
\cE_{\rm approx} = \min_{f \in \cF_\dt}\EE\Bigl[\cL\bigl((\overbar X, y_\dt), f\bigr)\Bigr] - \EE\Bigl[\cL\bigl((\overbar X, y_\dt), f_\dt^*\bigr)\Bigr].
\#
The following theorem characterizes the approximation error $\cE_{\rm approx}$ for the SSL process.

\begin{theorem}[SSL Approximation Error]
\label{lem::approx_ssl}
Under Assumptions \ref{asu::approx_err_ss} and \ref{asu::effec_ssl_param}, it holds that
\$
\cE_{\text{approx}} = O\Bigl(\mu\cdot \bigl(\cE_{\rm approx}^{\pre} + \epsilon_\ssl^2(\mask_\pre, \mask_\dt)\bigr) + \epsilon_g^2(\mask_\dt) + \epsilon_\agg^2(B) \Bigr),
\$
where $\cE^\pre_{\rm approx}$, $\epsilon_\ssl(\mask_\pre, \mask_\dt)$, $\epsilon_g(\mask_\dt)$, and $\epsilon_\agg(B)$ are defined in \eqref{eq:approx-pre}, \eqref{eq:eps-ssl}, \eqref{eq:eps-dt}, and \eqref{eq:approx-sm-cme-ssl}, respectively, and $B$ is defined in \eqref{eq:b}.
\end{theorem}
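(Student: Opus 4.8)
The plan is to reduce the excess-risk form of $\cE_{\rm approx}$ in \eqref{eq:approx-ssl} to an $L^2$ approximation error and then exhibit one downstream network $f_\dt \in \cF_\dt$ that attains the claimed bound. Since $f_\dt^*(\overbar X) = \EE[y_\dt \given \overbar X]$, the residual is orthogonal to any $f(\overbar X)$, so the bias--variance identity gives $\cE_{\rm approx} = \min_{f \in \cF_\dt}\EE\,\|f(\overbar X; \mask_\dt) - f_\dt^*(\overbar X)\|_2^2$, and it suffices to bound $\|f_\dt(\overbar X; \mask_\dt) - f_\dt^*(\overbar X)\|_2$ uniformly over $\overbar X \in \mathfrak{X}^{L-1}$ for a single well-chosen $f_\dt$. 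I would take $f_\dt$ to be the (approximate) infimizer in the definition \eqref{eq:approx-sm-cme-ssl} of $\epsilon_\agg$ with $U = B$, so that $\sup_{\overbar X}\|f_\dt(\overbar X; \mask_\dt) - B\,\hat f_\pre(\overbar X; \mask_\pre)\|_2 \le \epsilon_\agg(B)$; this is the only place the trainable aggregation layer of $\cF_\dt$ enters, and it is why $B$ appears as the argument of $\epsilon_\agg$.

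It then remains to bound $\|B\,\hat f_\pre(\overbar X; \mask_\pre) - f_\dt^*(\overbar X)\|_2$ by a triangle-inequality chain whose terms are all written as expectations against the common latent posterior $\PP(z \given \overbar X)$. First, $\hat f_\pre(\overbar X; \mask_\pre)$ is close to $f_\pre^*(\overbar X) = \EE_{z \given \overbar X}[g_\pre^*(z; \mask_\pre)]$, the bias being controlled by the pretraining instance of Theorem~\ref{lem::approx_error} applied to $\cF_\pre$, which gives a uniform error of order $\sqrt{\cE_{\rm approx}^\pre}$; multiplying by $B$ and using $\|B\|_2 \le \sqrt{\mu}$ from Assumption~\ref{asu::effec_ssl_param} contributes $O(\sqrt{\mu\,\cE_{\rm approx}^\pre})$. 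Second, I introduce the auxiliary CME surrogate $f^\dagger_{W_\ssl}(\overbar X; \mask_\dt) := W_\ssl^\top \att_\cme(q_*(\mask_\dt), k_*(\overbar X), v_*(\overbar X)) = \EE_{z \given \overbar X}[g^\dagger_{W_\ssl}(z; \mask_\dt)]$; then by Jensen and the $\ell_\infty$-bound \eqref{eq:eps-ssl}, $\|B f_\pre^*(\overbar X) - B f^\dagger_{W_\ssl}(\overbar X; \mask_\dt)\|_2 \le \sqrt{\mu}\,\bigl(\sum_i \|g_{\pre, i}^*(\cdot; \mask_\pre) - g^\dagger_{W_\ssl, i}(\cdot; \mask_\dt)\|_\infty^2\bigr)^{1/2} \le \sqrt{\mu}\,\epsilon_\ssl(\mask_\pre, \mask_\dt)$. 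Third, the key algebraic identity $B W_\ssl^\top = W_\dt^\top (W_\ssl W_\ssl^\top)^{-1} W_\ssl W_\ssl^\top = W_\dt^\top$ collapses $B f^\dagger_{W_\ssl}(\overbar X; \mask_\dt)$ to $W_\dt^\top \att_\cme(q_*(\mask_\dt), k_*(\overbar X), v_*(\overbar X)) = f^\dagger_{W_\dt}(\overbar X; \mask_\dt) = \EE_{z \given \overbar X}[g^\dagger_{W_\dt}(z; \mask_\dt)]$ from \eqref{eq:f-att-cme-g-ssl}, and $\|f^\dagger_{W_\dt}(\overbar X; \mask_\dt) - f_\dt^*(\overbar X)\|_2 \le \bigl(\sum_i \|g^\dagger_{W_\dt, i}(\cdot; \mask_\dt) - g^*_{\dt, i}(\cdot; \mask_\dt)\|_\infty^2\bigr)^{1/2} \le \epsilon_g(\mask_\dt)$ by \eqref{eq:eps-dt}. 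Adding the four links, $\|f_\dt(\overbar X; \mask_\dt) - f_\dt^*(\overbar X)\|_2 = O\bigl(\epsilon_\agg(B) + \sqrt{\mu}\,\bigl(\sqrt{\cE_{\rm approx}^\pre} + \epsilon_\ssl(\mask_\pre, \mask_\dt)\bigr) + \epsilon_g(\mask_\dt)\bigr)$; squaring via $(a + b + c + d)^2 \le 4(a^2 + b^2 + c^2 + d^2)$ and taking the (trivially uniform) expectation over $\overbar X$ yields the stated bound.

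The step I expect to be the main obstacle is keeping the bookkeeping straight between two notions of attention and two notions of the pretraining error. On one hand, $\cF_\dt$ and $\hat f_\pre$ use the \emph{frozen softmax} attention $\att_\sm(q_{\hat\theta_\pre}(\cdot), k_{\hat\theta_\pre}(\overbar X), v_{\hat\theta_\pre}(\overbar X))$, whereas the surrogates in the chain use the \emph{idealized CME} attention $\att_\cme(q_*(\cdot), k_*(\overbar X), v_*(\overbar X))$; the softmax-versus-CME discrepancy (Proposition~\ref{prop:kernel_attn_story1}) and the $\hat\theta_\pre$-versus-$\theta^*$ discrepancy are, by the very definitions of $\epsilon_\agg(B)$ and $\cE_{\rm approx}^\pre$, already folded into those two error terms and must not be double-counted. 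On the other hand, $\cE_{\rm approx}^\pre$ in \eqref{eq:approx-pre} is an $L^2$-excess risk, whereas the first link of the chain needs a \emph{uniform} bias bound on the frozen $\hat f_\pre$; I would reconcile this by passing through the sup-norm surrogate bound underlying Theorem~\ref{lem::approx_error} for the pretraining problem, namely the uniform control of $\|\hat f_\pre(\overbar X; \mask_\pre) - W_\pre^\top \att_\cme(q_*(\mask_\pre), k_*(\overbar X), v_*(\overbar X))\|_2$ together with $\|W_\pre^\top \psi_\pre(\cdot; \mask_\pre) - g_\pre^*(\cdot; \mask_\pre)\|_\infty$, whose squares jointly dominate $\cE_{\rm approx}^\pre$ up to constants (cf.\ \eqref{eq:eps-pre}). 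Everything else is routine triangle-inequality, Jensen, and Cauchy--Schwarz accounting.
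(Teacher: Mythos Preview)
Your proposal is correct and follows essentially the same triangle-inequality decomposition as the paper: choose $f_\dt$ as the (approximate) infimizer in \eqref{eq:approx-sm-cme-ssl} with $U=B$, insert the intermediate points $B\hat f_\pre$, $Bf_\pre^*$, and $f^\dagger_{W_\dt}$, use the algebraic identity $BW_\ssl^\top = W_\dt^\top$ to collapse $B f^\dagger_{W_\ssl}$ to $f^\dagger_{W_\dt}$, and bound the three remaining links by $\epsilon_\agg(B)$, $\sqrt{\mu}\,\epsilon_\ssl$, and $\epsilon_g(\mask_\dt)$ via Jensen and Assumptions~\ref{asu::approx_err_ss}--\ref{asu::effec_ssl_param}. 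One small over-caution: the link $B\hat f_\pre \to Bf_\pre^*$ does \emph{not} require a uniform (sup-over-$\overbar X$) bound; since the final quantity is $\EE\|\cdot\|_2^2$, you only need $\EE\|\hat f_\pre - f_\pre^*\|_2^2$, and the paper simply identifies this with $\cE_{\rm approx}^\pre$ (implicitly treating $\hat f_\pre$ as achieving the population minimum over $\cF_\pre$), so you can drop the sup-norm detour you were worried about.
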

\begin{proof}
See \S\ref{sec::pf_lem_approx_ssl} for a detailed proof.
\end{proof}

Theorem \ref{lem::approx_ssl} demonstrates that the attention neural network enables the transfer capability to diverse downstream tasks, where the approximation error is subsume from that in the supervised setting with a few extra error terms. We interpret the approximation error in Theorem \ref{lem::approx_ssl} as follows.
\begin{itemize}
\item[(i)] The condition number $\mu$ characterizes the the alignment between the reweighting parameter $W_\dt$ desired by the downstream task and the reweighting parameter $W_\ssl$ obtained in the pretraining process. When $W_\dt = W_\ssl$, we have $\mu = 1$.
\item[(ii)] The approximation error $\cE_{\rm approx}^\pre$ for the pretraining process is characterized in Theorem \ref{lem::approx_error}. Specifically, $\cE_{\rm approx}^\pre$ involves the pretraining recovery gap $\epsilon_g(\mask_\pre)$ defined in \eqref{eq:eps-pre}, which characterizes the fundamental hardness of approximating the $i$-th entry $g_{\pre, i}^*(z; \mask_\pre)$ of the latent-to-target mapping $g_\pre^*(z; \mask_\pre)$ defined in \eqref{eq::def_hu_ssl} within the RKHS $\cH_\pre$, and the attention approximation error $\epsilon_\att$ defined in \eqref{eq:appprox-sm-cme}, which is characterized in Proposition \ref{prop:kernel_attn_story1}.
\item[(iii)] The transfer error $\epsilon_\ssl(\mask_\pre, \mask_\dt)$ captures the transfer capability of the sequence embedding obtained in the pretraining process to the downstream task. By our previous arguments following Assumption \ref{asu::approx_err_ss}, $\epsilon_\ssl(\mask_\pre, \mask_\dt)$ is small as long as the pretraining process is sufficiently related to the downstream task. 
\item[(iv)] The downstream recovery gap $\epsilon_g(\mask_\dt)$ defined in \eqref{eq:eps-dt} characterizes the fundamental hardness of approximating the $i$-th entry $g_{\dt, i}^*(z; \mask_\dt)$ of the latent-to-target mapping $g^*_\dt(z; \mask_\dt)$ defined in \eqref{eq:fg-dt} within the RKHS $\cH_\dt$. 
\item[(v)] The aggregation approximation error $\epsilon_\agg(B)$ measures the expressity of the function class of aggregation layers for the downstream task. By our previous arguments following \eqref{eq:approx-sm-cme-ssl}, $\epsilon_\agg(B)$ is small as long as the aggregation layer $\agg^\dt_\theta$ for the downstream task can approximate the composition of the linear transformation $B\agg^\pre_{\hat \theta_\pre}$ of the aggregation layer obtained in the pretraining process and the variation induced by switching the input mask $\mask_\pre$ to another input mask $\mask_\dt$.
\end{itemize}

%
%



\section{Conclusion}\label{sec:conclusion}
The attention mechanism in transformers demonstrates significant empirical successes in natural language processing and computer vision, but there is a lack of understanding about how it works and why it is effective. To this end, we answer three questions about the attention mechanism: (i) what makes a good representation, (ii) how the attention mechanism produces this representation during the forward pass, and (iii) how the attention mechanism learns to produce this representation during the backward pass. Through the lens of exchangeability, we provide a theoretical characterization of the attention mechanism as a combination of a ``whitebox'' design guided by a latent variable model and a ``blackbox'' design allowing for learnable parameters. Also, we establish the approximation, generalization, and optimization guarantees of the attention mechanism. Several challenging questions remain open, e.g., what is the theoretical benefit of a multilayer composition of many attention mechanisms and how it affects the approximation guarantee.

\section{Acknowledgment}

Zhaoran Wang acknowledges National Science Foundation (Awards 2048075, 2008827, 2015568, 1934931), Simons Institute (Theory of Reinforcement Learning), Amazon, J.P. Morgan, and Two Sigma for their supports.



\newpage
\bibliographystyle{ims}
\bibliography{ref.bib}

\newpage
\appendix
\newpage


%
%

\section{Conditional Mean Embedding}
\label{sec:cme}
We introduce the conditional mean embedding \citep{song2009hilbert}, which embeds a conditional distribution to an element in an RKHS. Let $\cH_x$ and $\cH_y$ be the two RKHSs over the spaces $\fX$ and $\fY$ with the kernels $\fk$ and $\fl$, respectively. We denote by $\phi: \fX \rightarrow \ell_2$ and $\varphi: \fY \rightarrow \ell_2$ the feature mappings associated with $\cH_x$ and $\cH_y$, respectively. In other words, it holds for any $x, x' \in \fX$ and $y, y' \in \fY$ that
\begin{align}
	\label{eq:feature-kernel}
	\phi(x)^\top \phi(x') = \fk(x, x'), \qquad \varphi(y)^\top \varphi(y) = \fl(y, y').
\end{align}
Let $\PP_{\cX, \cY}$ be the joint distribution of the two random variables $\cX$ and $\cY$ taking values in $\fX$ and $\fY$, respectively. The conditional mean embedding $\cme(x, \PP_{\cX, \cY}) \in \cH_y$ of the conditional distribution $\PP_{\cY \given \cX}$ is defined as 
\begin{align*}
	\cme(x, \PP_{\cX, \cY}) = \EE \bigl[\fl(\cY, \cdot) \biggiven \cX= x\bigr].
\end{align*}
By the reproducing property, it holds that
\begin{align*}
	\EE \bigl[g(\cY) \biggiven \cX = x \bigr] = \inp[\big]{g}{\cme(x, \PP_{\cX, \cY})}_{\cH_y}, \quad \forall g \in \cH_y, x \in \fX.
\end{align*}
Correspondingly, the conditional mean embedding operator $\cC_{\cY \given \cX}: \cH_x \rightarrow \cH_y$ is a linear operator such that
\$
	\cC_{\cY \given \cX} \fk(x, \cdot) = \cme(x, \PP_{\cX, \cY}),
\$
for any $x \in \fX$.
We define the (uncentered) covariance operator $\cC_{\cX\cX}: \cH_x \rightarrow \cH_x$ and the (uncentered) cross-covariance operator $\cC_{\cY\cX} : \cH_x \rightarrow \cH_y$ as follows, 
\$
	\cC_{\cX\cX} = \EE \bigl[ \fk(\cX, \cdot) \otimes \fk(\cX, \cdot)\bigr], \qquad \cC_{\cY\cX} = \EE \bigl[ \fl(\cY, \cdot) \otimes \fk(\cX, \cdot)\bigr].
\$
Here $\otimes$ is the tensor product. As shown in \cite{song2009hilbert}, it holds that $\cC_{\cY \given \cX} = \cC_{\cY\cX} \cC_{\cX\cX}^{-1}$. Thus, we have that
\begin{align}\label{eq:cme-cov}
	\cme(x, \PP_{\cX, \cY}) = \cC_{\cY\cX} \cC_{\cX\cX}^{-1} \fk(x, \cdot).
\end{align}

To derive the empirical estimation of $\cC_{\cY \given \cX}$, we consider the following regularized least-squares problem,
\begin{align}
	\label{eq:reg-int-emp}
	\min_{\cC : \cH_x \rightarrow \cH_y} \hat \cE(\cC) =  \sum_{\ell = 1}^{L} \norm[\big]{ \fl(y^\ell, \cdot) - \cC \fk(x^\ell, \cdot) }^2_{\cH_y} + \lambda \cdot \norm{\cC}^2_{\mathrm{HS}},
\end{align}
where $\{(x^\ell, y^\ell)\}_{\ell \in [L]}$ are independently and identically sampled from $\PP_{\cX, \cY}$, $\norm{\cdot}_{\mathrm{HS}}$ denotes the Hilbert-Schmidt norm, and $\lambda > 0$ is the regularization parameter.
Recall from \eqref{eq:feature-kernel} that $\phi$ and $\varphi$ are the feature mappings associated with the RKHSs $\cH_x$ and $\cH_y$. To ease the presentation, we view the space $\ell_2$ as an (infinite-dimensional) vector space and consider the feature mappings $\phi: \fX \rightarrow \RR^{d_\phi}$ and $\varphi: \fY \rightarrow \RR^{d_\varphi}$, where $d_\phi$ and $d_\varphi$ can be infinity.
We write $\phi(X) = (\phi(x^1), \ldots, \phi(x^L))^\top \in \RR^{L \times d_\phi}$ and $\varphi(Y) = (\phi(y^1), \ldots, \phi(y^L))^\top \in \RR^{L \times d_\varphi}$. 
Also, we define the the (uncentered) empirical  covariance operator $\hat \cC_{\cX\cX}$ and (uncentered) empirical cross-covariance operator $\hat \cC_{\cY\cX}$ as follows, 
\begin{align}
	\label{eq:emp-cov}
	\hat \cC_{\cX\cX} &= L^{-1} \sum_{\ell=1}^{L}\phi(x^\ell) \phi(x^\ell)^\top = L^{-1}\phi(X)^\top \phi(X) \in \RR^{d_\phi\times d_\phi} \nonumber \\
	\hat \cC_{\cY\cX} &= L^{-1} \sum_{\ell=1}^{L} \varphi(y^\ell)\varphi(x^\ell)^\top = L^{-1}\varphi(Y)\phi(X)^\top \in \RR^{d_\varphi\times d_\phi}.
\end{align}
Then, the solution to \eqref{eq:reg-int-emp} is 
\begin{align*}
	\hat \cC_{\cY \given \cX}^\lambda = \varphi(Y)^\top \phi(X)\bigl(\phi(X)^\top \phi(X) + \lambda \cI \bigr)^{-1} =  \hat \cC_{\cY\cX}(\hat \cC_{\cX\cX} + L^{-1}\lambda \cI)^{-1} \in \RR^{d_\varphi \times d_\phi}.
\end{align*}
We denote by $\hat \cme_\lambda(x, \PP_{\cX,\cY}) = \hat \cC_{\cY \given \cX} \phi(x)\in \RR^{d_\varphi}$ the empirical conditional mean embedding.
Note that 
\begin{align*}
	\phi(X) \bigl(\phi(X)^\top \phi(X) + \lambda \cI\bigr)^{-1} = \bigl(\phi(X) \phi(X)^\top + \lambda I\bigr)^{-1} \phi(X).
\end{align*}
Thus, it holds that
\begin{align}
\label{eq:reg-int-emp2}
	\hat \cme_\lambda(x, \PP_{\cX,\cY}) & = \hat \cC_{\cY \given \cX}^\lambda \phi(x) \nonumber\\ 
	& = \hat \cC_{\cY\cX}(\hat \cC_{\cX\cX} + L^{-1}\lambda \cI)^{-1} \fk(x, \cdot) \nonumber \\
	& =\varphi(Y)^\top\phi(X) \bigl( \phi(X)^\top \phi(X) + \lambda \cI \bigr)^{-1} \phi(x) \nonumber \\
	& = \varphi(Y)^\top \bigl(\phi(X) \phi(X)^\top + \lambda I\bigr)^{-1} \phi(X)\phi(x) \nonumber \\
	& = \varphi(Y)^\top (\fk(X, X) + \lambda I)^{-1} \fk(X, x).
\end{align}
Here $\fk(X, X) = \phi(X)\phi(X)^\top = (\fk(x^i, x^j) )_{i, j \in [L]} \in \RR^{L\times L}$ is the Gram matrix and $\fk(X, x) = \phi(X) \phi(x) = (\fk(x^1, x), \ldots, \fk(x^L, x)) \in \RR^L$. 









\section{Attention Recovers Latent Posterior}

\subsection{Gaussian Process Regression} \label{sec:gp-regress}

\noindent{\bf Gaussian Process Regression.}
We say that $f$ follows a Gaussian process $\gp(\mu, \fk)$ on $\RR^d$ if for any $x^1, \ldots, x^L$, $(f(x^1), \ldots, f(x^L))$ follows a Gaussian distribution with mean $(\mu(x^1), \ldots, \mu(x^L))$ and covariance $(\fk(x^i, x^j))_{i, j \in [L]}$. 
Here $\mu(x)= \EE[f(x)]$ is the mean function and  $\fk(x, x') = \EE[(f(x) - \mu(x))(f(x') - \mu(x'))] $ is the covariance (or kernel) function, where $f$ is random. 
We take $\gp(0, \fk)$ as the prior of $f$. 
Given a dataset $\cD = \{(x^\ell, y^\ell)\}_{\ell \in [L]}$ from the regression model $ y^\ell = f(x^\ell) + \epsilon^\ell$ with $\epsilon^\ell \sim N(0, \lambda I)$, the posterior of $f$ is a Gaussian process with mean $\mu_\cD(x)$ and covariance $\fk_\cD(x, x')$ \citep{schulz2018tutorial} as follows,
\begin{align*}
	\mu_\cD(x) &= \fk(x, X) \bigl(\fk(X, X) + \lambda I\bigr)^{-1} Y, \\
	\fk_\cD(x, x') & = \fk(x, x') - \fk(x, X) \bigl(\fk(X, X) + \lambda I \bigr)^{-1} \fk(X, x').
\end{align*}
Here $\fk(x, X) = (\fk(x, x^\ell))_{\ell \in [L]}^\top \in \RR^{1 \times L}$, $\fk(X, X) = (\fk(x^i, x^j))_{i,j \in [L]} \in \RR^{L\times L}$, $\fk(X, x') = (\fk(x^\ell, x'))_{\ell \in [L]} \in \RR^{L}$, and $Y = (y^1, \ldots, y^L) \in \RR^L$

\vskip4pt

\noindent{\bf Rigorous Characterization of Latent Variable Model.}
We  provide a rigorous characterization of the advanced infinite-dimensional example of the latent variable model in \S\ref{sec::Gaussian_Posterior}.We consider the following model,
\begin{align}
	\label{eq:lvm-gp}	
	r^\ell = f(c^\ell) + \epsilon^\ell, \qquad 	r^\mask = f(c^\mask) + \epsilon.
\end{align}
Here $f = (f_1, \ldots, f_{d})$ with $f_i \sim \gp(0, \fk(\cdot, \cdot))$ for any $i \in [d]$ and $\epsilon^\ell$ and $\epsilon$ are independent Gaussian noises drawn from $N(0, \lambda I)$. Then, following the Gaussian process regression, we recover \eqref{eq:cme-att} as the mean of the posterior of the Gaussian process.

\subsection{Implication of Convergence with $L\rightarrow \infty$} \label{sec:impl-limit}

\noindent{\bf Necessity of Multiple Heads.}
Based on the definition of the attention mechanism $\att$ in \eqref{eq:attn}, we define the multihead attention as
\#\label{eq:mha}
\mha(q, X; W) = \sum_{i= 1}^h \head_i \in \RR^{d}.
\#
Here $h\in \NN_+$ is the head number, $W = \{(W_i^\tq, W_i^\tk, W_i^\tv)\}_{i = 1}^h$ with $W_i^\tq \in \RR^{d \times d_\tp}$, $W_i^\tk\in \RR^{d \times d_\tp}$, and $W_i^\tv\in \RR^{d \times d}$ is the learnable parameter, and
\$
\head_i = \att(q, K_i, V_i) \in \RR^{d}, \qquad \text{where} \quad K_i = X W_i^\tk \in \RR^{L\times d_\tp}, \quad V_i = XW_i^\tv \in \RR^{L\times d}.
\$
In the multihead attention, we set $d = d_\tp \cdot h$, where $h$ is the head number and $d_\tp$ is the dimension of the key and the query.
We remark that the multihead attention defined in \eqref{eq:mha} is written in the summation form, which is equivalent to the concatenation form \citep{vaswani2017attention}. To see this, we consider the concatenation form of the multihead attention,
\begin{align}\label{eq:mha2}
	\tilde \mha(q, X; \tilde W) = \bigl((W^{\mathrm{o}}_1)^\top, \ldots, (W^{\mathrm{o}}_h)^\top \bigr) \begin{pmatrix}
		\tilde \head_1 \\
		\ldots \\
		\tilde \head_h
	\end{pmatrix}
 = \sum_{i = 1}^{h} (W^{\mathrm{o}}_i)^\top \tilde \head_i ,
\end{align}
where $W^{\mathrm{o}} \in \RR^{d_\tp \times d}$ is a learnable parameter with the $i$-th block $W^{\mathrm{o}}_i$ and the $i$-th head $\tilde \head_i$ is obtained via
\$
\tilde \head_i = \att(q, K_i, \tilde V_i) \in \RR^{d_\tp}, \qquad \text{where} \quad K_i = X W_i^\tk \in \RR^{L\times d_\tp}, \quad V_i = X \tilde W_i^\tv \in \RR^{L\times d_\tp}.
\$
Here $\tilde W_i^\tv \in \RR^{d \times d_\tp}$.
We see that the \eqref{eq:mha} and \eqref{eq:mha2} are equivalent when $\head_i = (W^{\mathrm{o}}_i)^\top \tilde \head_i$, which holds when $ W_i^\tv = \tilde W_i^\tv W^{\mathrm{o}}_i$.


We use $\EE[\cV\given \cK = q]$ to demonstrate the necessity of using multiple heads in the multihead attention. 
Note that the key and value are obtained by the following mappings,
\begin{align*}
k^\ell =  (W^\tk)^\top x^\ell, \qquad v^\ell = (W^\tv)^\top x^\ell,
\end{align*}
where $x^\ell \in \RR^d$ is the input token and $W^\tk \in \RR^{d\times d_\tp}$, $W^\tv \in \RR^{d\times d}$ are the learnable parameters. 
We consider a single-head attention, where $h = 1$, $d_\tp =d $, and $W^\tk\in \RR^{d\times d}$ is invertible.
We denote by $\cK$, $\cV$, and $\cX$ the random variable with the same distribution as $k^\ell$, $v^\ell$, and $x^\ell$, respectively. By Propositions \ref{prop:attn-cme} and \ref{prop:kernel_attn_story1}, we have 
\begin{align*}
	\att(q, K, V) \approx \EE[\cV \given \cK = q] = \EE\bigl[ (W^\tv)^\top \cX \,\big|\, (W^\tk)^\top \cX = q \bigr] = \bigl((W^\tk)^{-1} W^\tv \bigr)^\top q,
\end{align*}
which is a linear mapping and fails to capture the nonlinear interaction query $q$ and the input sequence $X$.
In other words, the single-head attention  becomes a linear mapping in the limit with $L\rightarrow \infty$. 
In contrast, when  $h > 1$, we have $d_\tp = d / h < d$, which implies that the matrix $W^\tk \in \RR^{d \times d}$ is not invertible.
Thus, using multiple heads avoid the degenerating issue.

%
%
%

\vspace{4pt}

\noindent{\bf Connection to Instrumental Variable.} 
We draw a connection from the attention mechanism to the instrumental variable model.
Instrumental variable regression estimates the causal relationship between the input $\cX$ and the output $\cY$. 
Specifically, when $(\cX, \cY)$ is confouneded, an instrumental variable $\cW$ is effective in identifying the causal relationship between $\cX$ and $\cY$. 
Intuitively, $\cW$ is an instrumental variable if it influences $\cY$ only through $\cX$ which is formalized as follows.
\begin{assumption}[Instrumental Variable Model]
    \label{asp:causal}
    Let $(\cX, \cY, \cW)$ be a random variable on the space $\fX\times \fY \times \fW$ with joint distribution $\PP_{\cX, \cY, \cW}$. We assume that 
    \begin{itemize}
    \item[(i)] $\cY = g(\cX) + \epsilon$ and $\EE[\epsilon \given \cW = w] = 0$ for any $w\in \fW$, and
    \item[(ii)] $\PP_{\cX\given \cW}(x \given w)$ does not remain when $w$ varies.
\end{itemize}
\end{assumption}

Under Assumption \ref{asp:causal}, $\cW$ is an instrumental variable.
Specifically, (i) of Assumption \ref{asp:causal} is the exclusion restriction, where function $g$ is the structural function of interest and $\epsilon$ is the confounding noise.  
Also, (ii) of Assumption \ref{asp:causal} is the relevance condition, which ensures that $\cW$ is informative in the sense that it depends on $w$ in a nontrivial manner. 
We remark that the instrumental variable model generalizes the standard regression model.
To see this, when $\cX= \cW$, the estimation of $g$ reduces to standard regression of unconfounded inputs and it holds that $g(\cdot) = \EE[\cY \given \cX = \cdot]$. In particular, the instrumental variable model allows that $\cX$ and $\epsilon$ are confounded, i.e., $\cX$ and $\epsilon$ are dependent. 
By Assumption \ref{asp:causal}, we have the following estimation equation
\begin{align}
    \label{eq:iv1}
    \EE [\cY\given \cW = w] = \EE\bigl[g(\cX) \given \cW = w\bigr].
\end{align}
The right-hand side of \eqref{eq:iv1} provides a two-stage method for estimating the function $g$. At the first stage, we estimate the conditional mean mean embedding of $\PP_{\cX\given \cW}$. Then, at the second stage, we estimate the function $g$ via regressing $\cY$ on the empirical conditional mean mean embedding of $\PP_{\cX\given \cW}$ \citep{singh2019kernel}.

To ease the presentation, we consider the following mapping,
\$
	g_\theta \circ \att(q, K, V) \in \RR^{d},
\$
where $g_\theta$ is a function approximator with a learnable parameter $\theta$. For example, $g_\theta$ is a linear or kernel function.
By Proposition \ref{prop:attn-cme} and Proposition \ref{prop:kernel_attn_story1}, it holds that
\begin{align*}
	g_\theta \circ \att(q, K, V) \approx g_\theta \bigl(\EE[\cV \given \cK =q]\bigr), \qquad \text{as} \quad L \rightarrow \infty.
\end{align*}
Let the target variable be $y$. Then, the learning objective takes the following form,
\begin{align*}
	\min_\theta \hat \EE\Bigl[\norm[\big]{y - g_\theta(\EE[\cV \given \cK = q])}_2^2 \Bigr],
\end{align*}
which corresponds to the second stage of estimating the instrumental variable model. Note that $\EE[\cV \given \cK = q]$ is the conditional mean embedding of $\PP_{\cV \given \cK}$.
Thus, the key $\cK$ can be viewed as the instrumental variable \citep{pearl2009causality}, which handles the endogeneity.
We provide an alternative view on how the attention mechanism performs relational reasoning as a causal inference procedure.

\subsection{Proof of Lemma \ref{lem:sufficiency}} \label{sec:pf-sufficiency}

\begin{proof}
First, we prove the statement that $b_z(X) = \PP(z = \cdot \given X)$ is a minimal sufficient statistic of $X$ for $z$. To see the sufficiency of $b_z(X)$ for $z$, note that
\begin{align*}
	\PP(z \given X) = \PP\bigl(z \biggiven \PP(z=\cdot \given X)\bigr) = \PP\bigl(z \biggiven b_z(X) \bigr).
\end{align*}
To see $b_z(X)$ is the minimal sufficient statistic, let $U(X)$ be another sufficient statistic of $X$ for $z$. Then, we have
\$
\PP(z \,|\,X) = \PP\bigl(z \biggiven U(X) \bigr),
\$
which implies that $b_z(X)$ is a function of $U(X)$. Thus, $b_z(X)$ is minimal.

Second, we prove the statement that $b_z(X)$ is a minimal sufficient statistics of $X$ for $y$. To see the sufficiency of $b_z(X)$ for $y$, note that
\begin{align*}
	\PP(y \given X) = \int \PP(y\given z) \cdot 
	\PP(z \given X)\ud z,
\end{align*}
which implies that $\PP(y \given X) = \PP(y\given b_z(X))$ since $\PP(y \given X)$ only depends on $X$ through $b_z(X)$.
Suppose that $U(X)$ is a sufficient statistic of $X$ for $y$. We have
\$
\int \PP(y\,|\,z) \cdot 
\PP(z \,|\,X)\ud z=
\PP(y \,|\,X) &= \PP\bigl(y \biggiven U(X) \bigr)=
\int \PP(y\,|\,z) \cdot 
\PP\bigl(z \biggiven U(X) \bigr)\ud z.
\$
By the definition of $\cT$ in \eqref{eq:def-cT}, we then have that 
\$
b_z(X) = \PP(z = \cdot \given X)=\cT^{-1}\Bigl( \int \PP(y=\cdot\,|\,z) \cdot 
\PP\bigl (z \,\big|\,U(X) \bigr)\ud z \Bigr),
\$
which implies that $b_z(X)$ is a function of $U(X)$. Thus, $b_z(X)$ is minimal.
\end{proof}

\subsection{Proof of Proposition \ref{prop:attn-cme}}
\label{sec:pf-thm-attn-cme}

\begin{proof}
For notational simplicity, we denote by $\norm{\cdot}$ the RKHS norm for elements in an RKHS and the operator norm for operators between two RKHSs. Also, we denote by $\cH_k$ and $\cH_v$ the RKHSs for the key and the value with the kernel functions $\fk$ and $\fl$, respectively. 
Note that we consider the Euclidean kernel $\fl(v, v') = v^\top v'$ for the value, which uses the identity mapping $\varphi$ as the feature mapping.
Recall the definition of the empirical covariance operator and the empirical cross-covariance operator in \eqref{eq:emp-cov}. Correspondingly, we write
\begin{align*}
	\hat \cC_{\cK\cK} &= L^{-1}\phi(K)^\top \phi(K) \in \RR^{d_\phi\times d_\phi} \nonumber \\
	\hat \cC_{\cV\cK} &= L^{-1}\varphi(V)^\top\phi(K) \in \RR^{d_\varphi\times d_\phi}, \\
	\hat \cC_{\cV\cV} &= L^{-1}\varphi(V)^\top \varphi(V) \in \RR^{d_\varphi\times d_\varphi}.
\end{align*}
Here $\phi(K) = (\phi(k^1), \ldots, \phi(k^L))^\top \in \RR^{L\times d_\phi}$ and $\varphi(V) = (\phi(v^1), \ldots, \phi(v^L))^\top \in \RR^{L\times d_\varphi}$
By the definition of the CME attention in \eqref{eq:cme-att-def} and the equality in \eqref{eq:reg-int-emp2}, we have that
\begin{align*}
	\att_{\cme}(q, K, V) = \hat\cC_{\cV\cK} (\hat \cC_{\cK\cK} + L^{-1}\lambda \cI)^{-1} \phi(q),
\end{align*}
which implies that $\att_\cme$ recovers the empirical conditional mean embedding.
By \eqref{eq:cme-cov}, it holds that
    \begin{align}\label{eq:tac1}
		&\norm[\big]{\att(q, K, V) - \cme(q, \PP_{\cK, \cV})} \noend 
		&\quad \le \underbrace{\norm[\big]{ \hat\cC_{\cV\cK} (\hat \cC_{\cK\cK} + L^{-1}\lambda \cI)^{-1} \phi(q) - \cC_{\cV\cK} (\cC_{\cK\cK} + L^{-1}\lambda \cI)^{-1} \phi(q)}}_{\displaystyle\text{(i)}} \noend 
		&\qquad + \underbrace{\norm[\big]{ \cC_{\cV\cK} (\cC_{\cK\cK} + L^{-1}\lambda \cI)^{-1} \fk(q, \cdot) - \cC_{\cV\cK} \cC_{\cK\cK}^{-1} \fk(q, \cdot)}}_{\displaystyle\text{(ii)}}.
	\end{align}

\vskip4pt

\noindent{\bf Upper bounding term (i) of \eqref{eq:tac1}.} 
We adapt the proof from \cite{song2009hilbert}.
It suffices to upper bound $\norm{ \hat\cC_{\cV\cK} (\hat \cC_{\cK\cK} + L^{-1}\lambda \cI)^{-1} - \cC_{\cV\cK} (\cC_{\cK\cK} + L^{-1}\lambda \cI)^{-1}}$. It holds that 
\begin{align}\label{eq:tac11}
    &\norm[\big]{ \hat\cC_{\cV\cK} (\hat \cC_{\cK\cK} + L^{-1}\lambda \cI)^{-1} - \cC_{\cV\cK} (\cC_{\cK\cK} + L^{-1}\lambda \cI)^{-1}} \\
    & \quad \le \norm[\Big]{ \hat \cC_{\cV\cK} \bigl((\hat \cC_{\cK\cK} + L^{-1}\lambda \cI)^{-1} - (\cC_{\cK\cK} + L^{-1}\lambda \cI)^{-1} \bigr) } + \norm[\big]{ (\hat\cC_{\cV\cK} - \cC_{\cV\cK}) ( \cC_{\cK\cK} + L^{-1}\lambda \cI)^{-1} } \noend
    & \quad  =  \norm[\big]{ \hat \cC_{\cV\cK} (\hat \cC_{\cK\cK} + L^{-1}\lambda \cI)^{-1}(\hat \cC_{\cK\cK} - \cC_{\cK\cK} ) (\cC_{\cK\cK} + L^{-1}\lambda \cI)^{-1} } + \norm[\big]{ (\hat\cC_{\cV\cK} - \cC_{\cV\cK}) ( \cC_{\cK\cK} + L^{-1}\lambda \cI)^{-1} }. \nonumber
\end{align}
For the first term on the right-hand side of \eqref{eq:tac11}, we have the operator decomposition that $\hat\cC_{\cV\cK} = \hat\cC_{\cV\cV}^{1/2} \cW \hat \cC_{\cK\cK}^{1/2}$ for $\cW$ such that  $\norm{\cW} \le 1$.
Then, we have that
\begin{align}\label{eq:tac12}
    & \norm[\big]{ \hat \cC_{\cV\cK} (\hat \cC_{\cK\cK} + L^{-1}\lambda \cI)^{-1}(\hat \cC_{\cK\cK} - \cC_{\cK\cK} ) (\cC_{\cK\cK} + L^{-1}\lambda \cI)^{-1} } \noend
    & \quad \le \norm{\hat\cC_{\cV\cV}}^{1/2} \cdot \norm[\big]{ \hat \cC_{\cK\cK}^{1/2} (\hat \cC_{\cK\cK} + L^{-1}\lambda \cI)^{-1/2} } \cdot \norm[\big]{(\hat \cC_{\cK\cK} + L^{-1}\lambda \cI)^{-1/2}} \cdot \norm[\big]{ (\hat \cC_{\cK\cK} - \cC_{\cK\cK} ) (\cC_{\cK\cK} + L^{-1}\lambda \cI)^{-1} } \noend
    & \quad \le (L^{-1}\lambda)^{-1/2} \cdot \norm[\big]{ (\hat \cC_{\cK\cK} - \cC_{\cK\cK} ) (\cC_{\cK\cK} + L^{-1}\lambda \cI)^{-1} },
\end{align}
where the last inequality follows from 
\begin{align*}
	\norm{\hat\cC_{\cV\cV}}^2 = L^{-1} \sum_{\ell = 1}^{L} \norm{v^\ell}_2^2 \le 1, \quad 
	\hat \cC_{\cK\cK} (\hat \cC_{\cK\cK} + L^{-1}\lambda \cI)^{-1} \le \cI, \quad 
	(\hat \cC_{\cK\cK} + L^{-1}\lambda \cI)^{-1} \le (L^{-1}\lambda)^{-1} \cI.
\end{align*}
Plugging \eqref{eq:tac12} into \eqref{eq:tac11}, we have
\begin{align}\label{eq:tac13}
    &\norm[\big]{ \hat\cC_{\cV\cK} (\hat \cC_{\cK\cK} + L^{-1}\lambda \cI)^{-1} - \cC_{\cV\cK} (\cC_{\cK\cK} + L^{-1}\lambda \cI)^{-1}} \\
    &\quad \le (L^{-1}\lambda)^{-1/2} \cdot \norm[\big]{ (\hat \cC_{\cK\cK} - \cC_{\cK\cK} ) (\cC_{\cK\cK} + L^{-1}\lambda \cI)^{-1} } + \norm[\big]{ (\hat\cC_{\cV\cK} - \cC_{\cV\cK}) ( \cC_{\cK\cK} + L^{-1}\lambda \cI)^{-1} }. \nonumber
\end{align}
In what follows, we upper bound the second term on the right-hand side of \eqref{eq:tac13} using Lemma \ref{lem:cme-concen}. We define $\xi: \RR^{d_\tp} \times \RR^d \rightarrow  \cH_k \otimes \cH_v$ as follows,
	\$
		\xi(k, v) = \varphi(v) \phi(k)^\top (\cC_{\cK\cK} + L^{-1}\lambda \cI)^{-1}.
	\$
	Since $\norm[\big]{(\cC_{\cK\cK} + L^{-1}\lambda \cI)^{-1}} \le (L^{-1}\lambda)^{-1}$, we have that
	\begin{align*}
		\norm[\big]{\xi(k, v)} = \norm[\big]{(\cC_{\cK\cK} + L^{-1}\lambda \cI)^{-1}} \cdot \norm[\big]{\varphi(v)} \cdot \norm[\big]{\phi(k)} \le C \cdot (L^{-1}\lambda)^{-1},
	\end{align*}
	where $C >0$ is an absolute constant.
	In addition, we have that
	\begin{align*}
		\EE\Bigl[\norm[\big]{\xi(k, v)}^2\Bigr] & = \EE\Bigl[ \norm[\big]{  \phi(k)^\top(  \cC_{\cK\cK} + L^{-1}\lambda \cI)^{-1}}^2 \cdot \norm[\big]{\varphi(v)}^2 \Bigr] \noend
		                                  & \le  \EE\Bigl[ \norm[\big]{(  \cC_{\cK\cK} + L^{-1}\lambda \cI)^{-1}  \phi(k)}^2 \Bigr] \noend
		                                  & = \EE\Bigl[ \inp[\big]{(  \cC_{\cK\cK} + L^{-1}\lambda \cI)^{-2}  \phi(k)}{  \phi(k)} \Bigr] \\
		                                  & \le (L^{-1}\lambda)^{-1} \cdot \EE\Bigl[ \inp[\big]{(  \cC_{\cK\cK} + L^{-1}\lambda \cI)^{-1}  \phi(k)}{  \phi(k)} \Bigr].
	\end{align*}
	Using the trace operator, we have
	\begin{align*}
		\EE\Bigl[\norm[\big]{\xi(k, v)}^2\Bigr] & \le \EE\Bigl[ \Tr\bigl((  \cC_{\cK\cK} + L^{-1}\lambda \cI)^{-2}  \phi(k)  \phi(k)^\top \bigr) \Bigr] \noend
		                                  & = \Tr\bigl((  \cC_{\cK\cK} + L^{-1}\lambda \cI)^{-2}  \cC_{\cK\cK} \bigr) \noend
		                                  & \le (L^{-1}\lambda)^{-1} \cdot  \Tr\bigl((  \cC_{\cK\cK} + L^{-1}\lambda \cI)^{-1}  \cC_{\cK\cK}\bigr) \\
		                                  &= (L^{-1}\lambda)^{-1} \cdot \Gamma(L^{-1}\lambda).
	\end{align*}
	Here $\Gamma(L^{-1}\lambda)$ is the effective dimension of $\cC_{\cK\cK}$, which is defined as follows,
	\begin{align*}
		\Gamma(L^{-1}\lambda) =  \Tr\bigl((  \cC_{\cK\cK} + L^{-1}\lambda \cI)^{-1}  \cC_{\cK\cK}\bigr).
	\end{align*}
	Applying Lemma \ref{lem:cme-concen} with $B = C (L^{-1}\lambda)^{-1}$ and $\sigma^2 = (L^{-1}\lambda)^{-1} \cdot \Gamma(L^{-1}\lambda)$, we have with probability at least $1- \delta$ that
	\begin{align}
		\label{eq:ce20}
		\norm[\big]{\hat \cC_{\cV\cK} (  \cC_{\cK\cK} + L^{-1}\lambda \cI)^{-1} -   \cC_{\cV\cK} (  \cC_{\cK\cK} + L^{-1}\lambda \cI)^{-1}} \le C \cdot \biggl( \frac{2}{\lambda} + \sqrt{\frac{\Gamma(L^{-1}\lambda)}{\lambda}} \biggr) \log\frac{2}{\delta},
	\end{align}
	where $C>0$ is an absolute constant. Similarly, we have with probability at least $1-\delta$ that
	\begin{align}
		\label{eq:ce221}
		\norm[\big]{\hat \cC_{\cK\cK} (  \cC_{\cK\cK} + L^{-1}\lambda \cI)^{-1} -   \cC_{\cK\cK} (  \cC_{\cK\cK} + L^{-1}\lambda \cI)^{-1}} \le C' \cdot \biggl( \frac{2}{\lambda} + \sqrt{\frac{\Gamma(L^{-1}\lambda)}{\lambda}} \biggr)  \log\frac{2}{\delta}.
	\end{align}
Here $C'>0$ is an absolute constant.
Plugging \eqref{eq:ce20} and \eqref{eq:ce221} into \eqref{eq:tac13}, we have with probability at least $1- \delta$ that
	\begin{align}\label{eq:tac1100}
	    &\norm[\big]{ \hat\cC_{\cV\cK} (\hat \cC_{\cK\cK} + L^{-1}\lambda \cI)^{-1} - \cC_{\cV\cK} (\cC_{\cK\cK} + L^{-1}\lambda \cI)^{-1}} \noend 
	    &\quad \le C'' \cdot \sqrt{\frac{L}{\lambda}} \cdot \biggl( \frac{2}{\lambda} + \sqrt{\frac{\Gamma(L^{-1}\lambda)}{\lambda}} \biggr)  \log\frac{2}{\delta}.
	\end{align}

\vskip4pt	

\noindent{\bf Upper bounding term (ii) of \eqref{eq:tac1}.}
We adapt the proof from \cite{fukumizu2015nonparametric}. 
For any $g \in \cH_k$, it holds that
\begin{align*}
	\inp{\cC_{\cV\cK} g}{ \cC_{\cV\cK} g} & = \EE\bigl[\fl(\cV, \bar \cV) g(\cK) g(\bar \cK)\bigr] \\
	& = \EE\Bigl[ \EE\bigl[\fl(\cV, \bar \cV) \biggiven \cK, \bar \cK \bigr] g(\cK) g(\bar \cK)\Bigr] \\
	& = \inp[\Big]{(\cC_{\cK\cK} \otimes \cC_{\cK\cK})\EE\bigl[\fl(\cV, \bar \cV) \biggiven \cK = \cdot , \bar \cK = \dagger \bigr]}{g \otimes g}.
\end{align*}
Similarly, we have for any $q \in \RR^{d_\tp}$ and any $g \in \cH_k$ that
\begin{align*}
	\inp[\Big]{ \cC_{\cV\cK}}{\EE\bigl[\fl(\cV, \cdot) \biggiven \cK = q\bigr] } &= \inp[\Big]{\EE\bigl[\fl(\cV, \bar\cV) \biggiven \cK = q, \cK = \dagger\bigr]}{\cC_{\cK\cK} g} \\
	& = \inp[\Big]{(\cI \otimes \cC_{\cK\cK}) \EE\bigl[\fl(\cV, \bar \cV) \biggiven \cK = \cdot , \bar \cK = \dagger \bigr]}{\fl(\cdot, q) \otimes g}.
\end{align*}
By setting $g = (\cC_{\cK\cK} + L^{-1}\lambda \cI)^{-1} \fk(q, \cdot)$, we have that
\begin{align*}
    &\norm[\big]{ \cC_{\cV\cK} (\cC_{\cK\cK} + L^{-1}\lambda \cI)^{-1} \fk(q, \cdot) - \cC_{\cV\cK} \cC_{\cK\cK}^{-1} \fk(q, \cdot)}^2 \noend
    & \quad = \inp[\big]{\cC_{\cV\cK} (\cC_{\cK\cK} + L^{-1}\lambda \cI)^{-1} \fk(q, \cdot) - \cC_{\cV\cK} \cC_{\cK\cK}^{-1} \fk(q, \cdot)}{\cC_{\cV\cK} (\cC_{\cK\cK} + L^{-1}\lambda \cI)^{-1} \fk(q, \cdot) - \cC_{\cV\cK} \cC_{\cK\cK}^{-1} \fk(q, \cdot)} \noend 
    & \quad = \bigg\langle \Big( (\cC_{\cK\cK} + L^{-1}\lambda \cI)^{-1} \cC_{\cK\cK} \otimes (\cC_{\cK\cK} + L^{-1}\lambda \cI)^{-1} \cC_{\cK\cK} - \cI \otimes  (\cC_{\cK\cK} + L^{-1}\lambda \cI)^{-1} \cC_{\cK\cK}    \noend
    &   \quad \qquad (\cC_{\cK\cK} + L^{-1}\lambda \cI)^{-1} \cC_{\cK\cK} \otimes \cI + \cI \otimes \cI \Big) \EE\bigl[\fl(\cV, \bar \cV) \biggiven \cK = \cdot , \bar \cK = \dagger \bigr], \fk(q, \cdot) \otimes \fk(q, \dagger) \bigg\rangle. 
\end{align*}
Note that $\EE[ \fl(v, \bar v) \given k = \cdot , \bar k = \dagger ] \in \cH_k \otimes \cH_k$ is in the range of $\cC_{\cK\cK}\otimes \cC_{\cK\cK}$. We define $ \tilde \cC \in \cH_k \times \cH_k$ such that $(\cC_{\cK\cK}\otimes \cC_{\cK\cK})\tilde \cC = \EE[ \fl(v, \bar v) \given k = \cdot , \bar k = \dagger ]$. Let $\{\lambda_i\}_{i = 1}^\infty$ and $\{\varphi_i\}_{i =1}^\infty$ be the eigenvalues and eigenvectors of $\cC_{\cK\cK}$, respectively. Then, we have that
\begin{align*}
	&\norm[\big]{ \cC_{\cV\cK} (\cC_{\cK\cK} + L^{-1}\lambda \cI)^{-1} \fk(q, \cdot) - \cC_{\cV\cK} \cC_{\cK\cK}^{-1} \fk(q, \cdot)}^4 \\
    & \quad \le \bigg\|\Big( (\cC_{\cK\cK} + L^{-1}\lambda \cI)^{-1} \cC_{\cK\cK} \otimes (\cC_{\cK\cK} + L^{-1}\lambda \cI)^{-1} \cC_{\cK\cK} - \cI \otimes  (\cC_{\cK\cK} + L^{-1}\lambda \cI)^{-1} \cC_{\cK\cK}    \noend
    &   \quad \qquad (\cC_{\cK\cK} + L^{-1}\lambda \cI)^{-1} \cC_{\cK\cK} \otimes \cI + \cI \otimes \cI \Big) \EE\bigl[\fl(\cV, \bar \cV) \biggiven \cK = \cdot , \bar \cK = \dagger \bigr] \bigg\|^2 \noend
    & \quad = \sum_{i,j} \biggl( \frac{\lambda_i^2}{\lambda_i + L^{-1}\lambda} \frac{\lambda_j^2}{\lambda_j + L^{-1}\lambda } - \frac{\lambda_i^2 \lambda_j}{\lambda_i + L^{-1}\lambda} - \frac{\lambda_j^2 \lambda_i}{\lambda_j + L^{-1}\lambda} + \lambda_i \lambda_j \biggr)^2 \cdot \inp{\varphi_i \otimes \varphi_j}{ \tilde \cC}^2 \noend
    & \quad = \sum_{i,j} \biggl( \frac{\lambda_i \lambda_j (L^{-1}\lambda)^2}{(\lambda_i + L^{-1}\lambda)(\lambda_j + L^{-1}\lambda)} \biggr)^2 \cdot  \inp{\varphi_i \otimes \varphi_j}{ \tilde \cC}^2 \noend
    & \quad \le (L^{-1}\lambda)^4 \cdot \norm{\tilde \cC}^2.
\end{align*}
Thus, we have 
\#\label{eq:tac223}
    \norm[\big]{ \cC_{\cV\cK} (\cC_{\cK\cK} + \lambda \cI)^{-1} \fk(q, \cdot) - \cC_{\cV\cK} \cC_{\cK\cK}^{-1} \fk(q, \cdot)}_2 \le C\cdot \lambda L^{-1},
\#
where $C > 0$ is an absolute constant.

Plugging \eqref{eq:tac1100} and \eqref{eq:tac223} into \eqref{eq:tac1}, we have with probability at least $ 1- \delta$ that
\begin{align*}
	\norm[\big]{\att(q, K, V) - \cme(q, \PP_{\cK, \cV})} & \le \cO\biggl( \sqrt{\frac{L}{\lambda}} \cdot \biggl( \frac{2}{\lambda} + \sqrt{\frac{\Gamma(L^{-1}\lambda)}{\lambda}} \biggr)  \log\frac{1}{\delta} + \lambda L^{-1}\biggr).
\end{align*}
Thus, we complete the proof of Proposition~\ref{prop:attn-cme}.
\end{proof}

\subsection{Proof of Proposition \ref{prop:kernel_attn_story1}}
\label{sec::pf_lem_kernel_attn_story1}
\begin{proof}
Under the condition that $\hat\PP^{\fk}_{\cV\given \cK}(v\given q) \rightarrow \PP(v \given q)$ uniformly for any $q \in \SSS^{d_\tp -1}$ as $L\rightarrow \infty$, we have 
\$
\int v \hat\PP^{\fk}_{\cV\given \cK}(v\given q) \ud v \rightarrow \EE[\cV \given \cK = q] \qquad \text{as} \quad L\rightarrow \infty.
\$
Moreover, it holds that
\#\label{eq::pf_lem_kernel_story1_eq1}
\int v \hat\PP^{\fk}_{\cV\given \cK}(v\given q) \ud v 
&= \iota \cdot \int_{\SSS^{d - 1}} v\cdot \frac{\sum^L_{\ell=1}\fk(k^\ell, q)\cdot\fk(v^\ell, v)}{\sum^L_{\ell=1}\fk(k^\ell, q)} \ud v \notag\\
&= \frac{\iota \cdot \sum^L_{\ell = 1} \fk(k^\ell, q)\cdot\int_{\SSS^{d - 1}} v \cdot\fk(v^\ell, v) \ud v }{\sum^L_{\ell=1}\fk(k^\ell, q)},
\#
where $\SSS^{d-1}$ is the $(d-1)$-dimensional unit sphere. It suffices to calculate the integration term$\int_{\SSS^{d - 1}} v \cdot\fk(v^\ell, v) \ud v$.
To this end, we utilize the following lemma.
\begin{lemma}
\label{lem::calculation_of_integral}
Let $\fk(a, b) = \exp(a^\top b / \gamma)$ be the exponential kernel with a fixed $\gamma >0$. It holds for any $b\in\SSS^{d - 1}$ that
\$
\int_{\SSS^{d - 1}} a\cdot \fk(a, b)\ud a = C_1 \cdot b,
\$
where $C_1>0$ is an absolute constant.

\end{lemma}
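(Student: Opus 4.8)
The plan is to exploit the rotational symmetry of the sphere together with the rotation-invariance of the exponential kernel. Define the vector-valued integral $I(b) = \int_{\SSS^{d-1}} a\cdot \fk(a,b)\,\ud a \in \RR^d$ for $b \in \SSS^{d-1}$, so that the claim is precisely $I(b) = C_1 b$ with $C_1 > 0$ an absolute constant. First I would establish that $I$ is $O(d)$-equivariant, i.e., $I(Qb) = Q\,I(b)$ for every orthogonal matrix $Q$. This follows from the change of variables $a \mapsto Qa$, which preserves $\SSS^{d-1}$ and its rotation-invariant surface measure, combined with the identity $\fk(Qa, Qb) = \exp\bigl((Qa)^\top (Qb)/\gamma\bigr) = \exp(a^\top b/\gamma) = \fk(a,b)$: concretely, $I(Qb) = \int_{\SSS^{d-1}} a\,\fk(a, Qb)\,\ud a = \int_{\SSS^{d-1}} Qa'\,\fk(Qa', Qb)\,\ud a' = Q \int_{\SSS^{d-1}} a'\,\fk(a', b)\,\ud a' = Q\,I(b)$.

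Next I would specialize to a fixed basis vector $e_1$ and consider the stabilizer subgroup $\{Q \in O(d) : Qe_1 = e_1\}$, which acts by rotations on the orthogonal complement $e_1^\perp$. For any $Q$ in this subgroup, equivariance gives $I(e_1) = I(Qe_1) = Q\,I(e_1)$, so $I(e_1)$ is fixed by every element of the stabilizer. The common fixed subspace of all rotations of $e_1^\perp$ is exactly $\mathrm{span}(e_1)$ (the case $d = 1$ being trivial), hence $I(e_1) = C_1 e_1$ for some scalar $C_1 \in \RR$. Then for an arbitrary $b \in \SSS^{d-1}$, I choose $Q \in O(d)$ with $Qe_1 = b$ and apply equivariance once more: $I(b) = I(Qe_1) = Q\,I(e_1) = C_1\,Qe_1 = C_1 b$, and $C_1$ does not depend on $b$.

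It remains to check $C_1 > 0$. Taking the inner product with $b$ gives $C_1 = b^\top I(b) = \int_{\SSS^{d-1}} (a^\top b)\exp(a^\top b/\gamma)\,\ud a$. Writing $t = a^\top b \in [-1,1]$ and pushing the surface measure forward along the map $a \mapsto a^\top b$ yields $C_1 = \int_{-1}^{1} t\,e^{t/\gamma}\,\rho(t)\,\ud t$ for a nonnegative density $\rho$ that is symmetric about $0$; pairing $t$ with $-t$ gives $C_1 = \int_{0}^{1} t\bigl(e^{t/\gamma} - e^{-t/\gamma}\bigr)\rho(t)\,\ud t = 2\int_{0}^{1} t\,\sinh(t/\gamma)\,\rho(t)\,\ud t > 0$.

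This lemma is essentially a symmetry computation, so I do not expect a genuine obstacle. The only point meriting a little care is the classical fact that the common fixed subspace of the stabilizer of $e_1$ in $O(d)$ is exactly $\mathrm{span}(e_1)$, which is what forces $C_1$ to be a scalar independent of $b$; everything else is a routine change of variables and an elementary positivity check.
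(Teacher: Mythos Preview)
Your proof is correct. Both your argument and the paper's are symmetry arguments, but they are packaged differently. The paper decomposes $a = (a^\top b)\,b + \bigl(a - (a^\top b)\,b\bigr)$ and kills the perpendicular part by the single reflection $a \mapsto c := 2(a^\top b)\,b - a$ through the line spanned by $b$: since $c^\top b = a^\top b$ and the surface measure is preserved, the perpendicular integral equals its own negative and hence vanishes, leaving $C_1 = \int_{\SSS^{d-1}} (a^\top b)\exp(a^\top b/\gamma)\,\ud a$. You instead invoke full $O(d)$-equivariance and the stabilizer of $e_1$ to force $I(e_1) \in \mathrm{span}(e_1)$, which is the same conclusion reached more abstractly. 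Your route generalizes immediately to any kernel depending only on $a^\top b$, while the paper's reflection is slightly more elementary and self-contained. One point in your favor: you actually verify $C_1 > 0$ via the $\sinh$ pairing, whereas the paper asserts positivity without proof.
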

\begin{proof}
See \S\ref{sec::pf_lem_calculation_of_integral} for a detailed proof.
\end{proof}
By Lemma \ref{lem::calculation_of_integral}, it holds for the right-hand side of \eqref{eq::pf_lem_kernel_story1_eq1} that
\$
\iota \cdot C_1 \cdot \frac{\sum^L_{\ell = 1} \fk(k^\ell, q)\cdot v^\ell}{\sum^L_{\ell=1}\fk(k^\ell, q)} = \iota \cdot C_1  \cdot V^\top\softmax(Kq / \gamma)=\iota \cdot C_1 \cdot \att_\sm(q, K, V),
\$
where the first equality follows from the definition of the softmax function and the second equality follows from the definition of the softmax attention in \eqref{eq::dop_prod_attn}.
By setting $C = \iota \cdot C_1$, we complete the proof of Proposition \ref{prop:kernel_attn_story1}.
\end{proof}

\subsubsection{Proof of Lemma \ref{lem::calculation_of_integral}}\label{sec::pf_lem_calculation_of_integral}
\begin{proof}
	Let $a, b$ be two vectors in the $(d - 1)$-dimensional unit sphere $\SSS^{d - 1}$. We first define the following vector,
	\#\label{eq::pf_lem_calc_int_eq1}
	c = (a^\top b)\cdot b-\bigl(a - (a^\top b)\cdot b\bigr) \in\SSS^{d - 1}.
	\#
	By direct calculation, we have the following property of $c$ defined in \eqref{eq::pf_lem_calc_int_eq1},
	\#\label{eq::pf_lem_calc_int_eq2}
	c^\top b &= (a^\top b)\cdot\|b\|^2_2 - a^\top b +  (a^\top b)\cdot\|b\|^2_2 = a^\top b.
	\#
	By \eqref{eq::pf_lem_calc_int_eq1} and \eqref{eq::pf_lem_calc_int_eq2}, we have that
	\#\label{eq::pf_lem_calc_int_eq3}
	a + c = 2(a^\top b) \cdot b = 2(c^\top b) \cdot b = (a^\top b) \cdot b + (c^\top b) \cdot b.
	\#
	We now calculate the desired integration. Note that
	\#\label{eq::pf_lem_calc_int_eq4}
	\int_{\SSS^{d - 1}} a\cdot\exp(a^\top b)\ud a = b \cdot \int_{\SSS^{d - 1}} (a^\top b) \exp(a^\top b) \ud a + \int_{\SSS^{d - 1}} \bigl(a - (a^\top b)\cdot b\bigr)\cdot\exp(a^\top b) \ud a.
	\#
	For the second term on the right-hand side of \eqref{eq::pf_lem_calc_int_eq4}, it follows from \eqref{eq::pf_lem_calc_int_eq1} and  \eqref{eq::pf_lem_calc_int_eq2} and \eqref{eq::pf_lem_calc_int_eq3} that
	\#\label{eq::pf_lem_calc_int_eq5}
	\int_{\SSS^{d - 1}} \bigl(a - (a^\top b)\cdot b\bigr)\cdot\exp(a^\top b) \ud a &= -\int_{\SSS^{d - 1}} \bigl(c - (c^\top b)\cdot b\bigr)\cdot\exp(c^\top b) \ud a\notag\\
	&=-\int_{\SSS^{d - 1}} \bigl(c - (c^\top b)\cdot b\bigr)\cdot\exp(c^\top b) \ud c,
	\#
	where the second equality follows from the fact that 
	\$
	\ud c = 2\|b\|^2_2\ud a - \ud a = \ud a.
	\$
	By replacing $c$ by $a$ on the right-hand side of \eqref{eq::pf_lem_calc_int_eq5}, we have
	\#\label{eq::pf_lem_calc_int_eq6}
	\int_{\SSS^{d - 1}} \bigl(a - (a^\top b)\cdot b\bigr)\cdot\exp(a^\top b) \ud a = -\int_{\SSS^{d - 1}} \bigl(a - (a^\top b)\cdot b\bigr)\cdot\exp(a^\top b) \ud a= 0
	\#
	Finally, by plugging \eqref{eq::pf_lem_calc_int_eq6} into \eqref{eq::pf_lem_calc_int_eq4}, we obtain that
	\$
	\int_{\SSS^{d - 1}} a\cdot\exp(a^\top b)\ud a = b \cdot \int_{\SSS^{d - 1}} (a^\top b) \exp(a^\top b) \ud a.
	\$
	Thus, by setting
	\$
	C_1 = \int_{\SSS^{d - 1}} (a^\top b) \exp(a^\top b) \ud a, \quad \forall b \in\SSS^{d - 1},
	\$
	we complete the proof of Lemma \ref{lem::calculation_of_integral}. Note that here $C_1$ is an absolute constant that does not depend on $b$ due to the symmetry on the unit sphere.
\end{proof}

\section{Generalization Error Analysis}\label{appendix:gen}
In this section, we analyze the generalization error of the complete setup of the transformer architecture, which involves multiple layers, skip connections, and multihead attentions. We collect the notations used throughout this section as follows.
\vskip4pt
\noindent{\bf Notations.} For two positive reals $r$ and $s$ such that $1/r + 1/s = 1$, we call $(r, s)$ a conjugate pair. We denote by $\|\cdot\|_r$ the vector $\ell_r$-norm when it operates on a vector. Let $M = (m_1, \dots, m_{d_2}) \in \RR^{d_1\times d_2}$, where $m_i \in \RR^{d_1}$ with $i \in [d_2]$. We define the matrix $(r, s)$-norm as $\|M\|_{r, s} = (\sum_{i = 1}^{d_2}\|m_i\|^s_r)^{1/s}$. We define the $(r, s)$-operator norm as $\|M\|_{r\to s} = \sup_{u \in \RR^{d_2}}\|Mu\|_{s}/\|u\|_r$. We write $\|\cdot\|_{r} = \|\cdot\|_{r \to r}$ when the $(r, r)$-operator norm operates on a matrix.

\subsection{Complete Setup of Transformer Architecture}
In what follows, we specify the complete setup of a $T$-layer transformer parameterized by $\theta = (\overbar \theta, \theta^{(0)}, \dots, \theta^{(T-1)})$, where the $t$-th layer ($t = 0, \dots, T-1$) is parameterized by $\theta^{(t)} \in \Theta^{(t)}$ and the aggregation layer is parameterized by $\overbar \theta \in \overbar\Theta$. Here $\Theta^{(t)}$ and $\overbar\Theta$ are the parameter spaces for the $t$-th layer and the aggregation layer, respectively. We define a two-layer feedforward neural network (FFN) with a skip connection (and no bias term) as follows,
\#\label{eq:ffn}
\ffn(X; A) = \relu(X A^{\tx})A^{\sigma} + X \in \RR^{L \times d}, 
\#
which is parameterized by $A = (A^\tx, A^\sigma)$. Here $X \in \RR^{L \times d}$,$A^{\tx} \in \RR^{d \times d_\sigma}, A^{\sigma} \in \RR^{d_\sigma \times d}$, and $\relu(\cdot)$ is the rectified linear unit (ReLU) that operates elementwise. Corresponding to \eqref{eq::dop_prod_attn}, we define the sequence-to-sequence counterpart of the softmax attention as follows, 
\#\label{eq:seq2seq_sm}
 \att_{\sm}(Q, K, V) = \Bigl(V^\top\Norm_\sm\bigl(\fk_\rbf(K, q^\ell) \bigr)\Bigr)_{\ell \in [L]}^\top \in \RR^{L \times d}.
\#
Here $Q = (q^\ell)_{\ell \in [L]}^\top \in \RR^{L \times d_\tp}$, $K = (k^\ell)_{\ell \in [L]}^\top \in \RR^{L \times d_\tp}$, $V \in \RR^{L \times d}$, and $\fk_\rbf(K, q^\ell) = (\fk_\rbf(q^\ell, k^{\ell^\prime}))_{\ell' \in [L]}^\top \in \RR^L$ is specified in Assumption \ref{assumption:ker}. Recall that $h$ is the head number of the multihead attention defined in \eqref{eq:mha} and $d = d_\tp \cdot h$. With a slight abuse of notations, we define the sequence-to-sequence counterpart of the multihead attention (MHA) as follows,
\#\label{eq:mha-seq2seq}
\mha(X; W) = \sum_{i= 1}^h \head_i = \sum_{i= 1}^h\att_\sm(Q_i, K_i, V_i) \in \RR^{L \times d},
\#
which is parameterized by $W = \{(W^{\tq}_i, W^{\tk}_i, W^{\tv}_i)\}_{i \in [h]}$. Here $Q_i = XW^\tq_i \in \RR^{L \times d_\tp}$, $K_i = XW^\tk_i \in \RR^{L \times d_\tp}$, and $V_i = XW^\tv_i \in \RR^{L \times d}$ for the attention head $i \in [h]$, where $W^\tq_i, W^\tk_i \in \RR^{d \times d_\tp}$ and $W^\tv_i \in \RR^{d \times d}$.

With $X_\star^{(0)} = X$, let $X_\star^{(t)} \in \RR^{L \times d}$ be the intermediate input of the $t$-th layer ($t = 0, \dots, T-1$) of the transformer architecture, which is defined as follows,
\#\label{eq:trans-arch}
X^{(t)} & = \ffn(X_\star^{(t)}; A^{(t)}), \qquad & A^{(t)}& = (A^{\tx, (t)}, A^{\sigma, (t)}), \notag\\
X_\star^{(t+1)} &= \mha(X^{(t)}; W^{(t)}) + X^{(t)}, \qquad & W^{(t)} & = \bigl\{(W^{\tq, (t)}_i, W^{\tk, (t)}_i, W^{\tv, (t)}_i)\bigr\}_{i \in [h]},
\#
Here $\theta^{(t)} = (A^{(t)}, W^{(t)})$ is the learnable parameter. We compute the output as follows,
\#\label{eq:agg-trans}
\hat y = \overbar{\agg}_{\overbar \theta}(X^{(T)}_\star) \in \RR^{d_\ty},
\#
where the aggregation layer $\overbar{\agg}_{\overbar \theta}: \RR^{L\times d} \to \mathfrak{Y}$ is parameterized by $\overbar \theta$. Here $\mathfrak{Y}$ is defined in \eqref{eq:data-supp}.  Note that $\overbar\agg_{\overbar\theta}$ combines the aggregation layer $\agg_{\theta_0}$ defined in \eqref{eq:trans-class-simple} and the input mask $\mask$. For example, in the complete setup of ViT \citep{dosovitskiy2020image}, the aggregation layer is the function composition $\overbar \agg_{\overbar \theta}(X^{(T)}_\star) = \agg_{\theta_0}(\mha(q_W(\mask), X^{(T)}_\star; W))$ with $\overbar \theta = (\theta_0, W)$, where $\mask$ corresponds to the class encoding. Here the multihead attention $\mha(q_W(\mask), X^{(T)}_\star; W)$ follows the definition in \eqref{eq:mha}.

\vskip4pt
\noindent{\bf Empirical Image Class.} In what follows, we formalize the function class of the transformer architecture and the empirical image class for each layer. We define the base function class as 
\$
\cF_\mha^{L, (0)} = \bigl\{X^{(0)}_\star(X)\bigr\},
\$
which is the function class that only contains the identity mapping. Here we use $X_\star^{(0)}(X) = X$ to denote the identity mapping since $X^{(0)}_\star = X$. In the following, we use $X_\star^{(t)}(X)$ and $X^{(t-1)}(X)$ to denote the functions that map $X$ to $X_\star^{(t)}$ and $X^{(t)}$ for $t = 1, \dots, T-1$, respectively. We define the intermediate function classes recursively as follows,
\$
\cF^{L, (t)}_\ffn & = \Bigl\{\ffn\bigl(X_\star^{(t)}(X); A^{(t)}\bigr): A^{(t)} \in {\mathfrak{A}}^{(t)}, X_\star^{(t)}(X) \in \cF_\mha^{L, (t)}\Bigr\},\\
\cF_\mha^{L, (t+1)} & = \Bigl\{\mha\bigl(X^{(t)}(X); W^{(t)}\bigr) + X^{(t)}: W^{(t)} \in {\mathfrak{W}}^{(t)}, X^{(t)}(X) \in \cF_\ffn^{L, (t)}\Bigr\},
\$
where $0 \leq t \leq T-1$. Here $\Theta^{(t)} = \mathfrak{A}^{(t)} \times \mathfrak{W}^{(t)}$ is the parameter space of the $t$-th layer of the transformer architecture. Correspondingly, we define the function class of the $T$-layer transformer as follows,
\#\label{eq:trans-class}
\cF^{L} = \Bigl\{ \overbar\agg_{\overbar\theta}\bigl(X_\star^{(T)}(X)\bigr): \overbar \theta \in \overbar \Theta,  X_\star^{(T)}(X) \in \cF^{L, (T)}_\mha\Bigr\}.
\#
Correspondingly, we define the empirical image classes as follows,
\#\label{eq:trans-image}
\mathfrak{I}_{\cD_n}(\cF^{L, (t)}_\ffn) & = \Bigl\{\bigl(f(X_i)^\top\bigr)_{i \in [n]} \in \RR^{d \times nL} : f \in \cF^{L, (t)}_\ffn\Bigr\},\notag\\
\mathfrak{I}_{\cD_n}(\cF_\mha^{L, (t+1)}) & = \Bigl\{\bigl(f(X_i)^\top\bigr)_{i \in [n]} \in \RR^{d\times nL} : f \in \cF_\mha^{L, (t+1)}\Bigr\},\notag\\
\mathfrak{I}_{\cD_n}(\cF^L) & = \Bigl\{\bigl(f(X_i)\bigr)_{i \in [n]} \in \RR^{ d_\ty\times n} : f \in \cF^L\Bigr\},
\#
where $0 \leq t \leq T-1$.

\subsection{Generalization Error Analysis for Complete Setup}
In what follows, we present a general version of Theorem \ref{thm:gen-simple}, which allows for the complete setup of the transformer architecture. By specializing it to the single-layer transformer equipped with singlehead attention mechanism and no skip connection, we obtain Theorem \ref{thm:gen-simple}.

In parallel to \eqref{eq:ker-rbf}, we make the following assumption on the Gaussian RBF kernel $\fk_\rbf(q, k)$, which induces the multihead attention $\mha(X; W)$ defined in \eqref{eq:seq2seq_sm} and \eqref{eq:mha-seq2seq}.
\begin{assumption}[Gaussian RBF Kernel]\label{assumption:ker}
Let $s>0$. We assume that the multihead attention $\mha(X; W)$ adopts the Gaussian RBF kernel $\fk_\rbf(q, k) = \exp(-\|q - k\|_2^2/2\sigma^2)$ with $\sigma = (2d_\tp)^{1/s}$.
\end{assumption}
Note that the kernel function $\fk_\rbf(q, k)$ in Assumption \ref{assumption:ker} is a general version of the Gaussian RBF kernel defined in \eqref{eq:ker-rbf}, which corresponds to the special case where $s = 2$. Recall that the output range $\mathfrak{Y}$ is defined in \eqref{eq:data-supp}. We make the following assumption on the aggregation layer $\overbar\agg_{\overbar\theta}$ defined in \eqref{eq:agg-trans}.
\begin{assumption}[Aggregation Layer]\label{assumption:trans}
We assume that the aggregation layer $\overbar\agg_{\overbar\theta}: \RR^{L \times d} \to \RR^{d_\ty}$ has the output range $\mathfrak{Y}$. Let $\overbar \agg_{\overbar \theta, j}: \RR^{L \times d} \to \RR$ be the $j$-th entry ($j \in [d_\ty]$) of the aggregation function $\overbar \agg_{\overbar \theta}$. We assume that for any $\overbar \theta \in \overbar \Theta$, $X_\star, \tilde X_\star \in \RR^{L \times d}$, and $j \in [d_\ty]$, it holds that
\$
\bigl|\overbar\agg_{\overbar\theta, j}(X_\star) - \overbar\agg_{\overbar\theta, j}(\tilde X_\star)\bigr| & \leq \|X_\star^\top - \tilde X_\star^{\top}\|_{r, \infty}.
\$
\end{assumption}
Recall that $\|\cdot\|_r$ denotes the $(r, r)$-operator norm when it operates on a matrix and $\|\cdot\|_{r, s}$ is the matrix $(r, s)$-norm. In parallel to Assumption \ref{assumption:constraint-simple}, we make the following assumption on the parameter space for each layer of the transformer architecture.

\begin{assumption}[Parameter Space]\label{assumption:constraint}
Let $(r, s)$ be a conjugate pair. For $t = 0, \dots, T-1$, we assume that the parameter space $\Theta^{(t)} = {\mathfrak{A}}^{(t)} \times {\mathfrak{W}}^{(t)}$ of the $t$-th layer of the transformer architecture satisfy
\#
{\mathfrak{A}}^{(t)} & = \Bigl\{(A^{\tx, (t)}, A^{\sigma, (t)}) \in \RR^{d \times d_\sigma} \times \RR^{d_\sigma \times d}:\label{eq:set-at}\\
& \qquad  \bigl\|(A^{\tx, (t)})^\top\bigr\|_{r} \leq {\alpha}^{\tx, (t)}, \bigl\|(A^{\tx, (t)})^\top\bigr\|_{r, s} \leq {R}^{\tx, (t)}, \bigl\|(A^{\sigma, (t)})^\top\bigr\|_{r} \leq {\alpha}^{\sigma, (t)}, \bigl\|(A^{\sigma, (t)})^\top\bigr\|_{r, s} \leq R^{\sigma, (t)} \Bigr\},\notag\\
\mathfrak{W}^{(t)} & = \Bigl\{\bigl\{(W^{\tq, (t)}_i, W^{\tk, (t)}_i, W^{\tv, (t)}_i)\bigr\}_{i \in [h]}: (W^{\tq, (t)}_i, W^{\tk, (t)}_i, W^{\tv, (t)}_i) \in \RR^{d \times d_\tp} \times\RR^{d \times d_\tp} \times\RR^{d \times d},\notag\\
& \qquad \bigl\|(W^{\tq, (t)}_i)^\top\bigr\|_{r} \leq \omega^{\tq, (t)}_i, \bigl\|(W^{\tk, (t)}_i)^\top\bigr\|_{r} \leq \omega^{\tk, (t)}_i, \bigl\|(W^{\tv, (t)}_i)^\top\bigr\|_{r} \leq \omega^{\tv, (t)}_i,\label{eq:set-wt}\\
& \qquad \bigl\|(W^{\tq, (t)}_i)^\top\bigr\|_{r, s} \leq R^{\tq, (t)}_i, \bigl\|(W^{\tk, (t)}_i)^\top\bigr\|_{r, s} \leq R^{\tk, (t)}_i, \bigl\|(W^{\tv, (t)}_i)^\top\bigr\|_{r, s} \leq R^{\tv, (t)}_i\Bigr\}.\notag
\#
Also, we assume that the parameter space $\overbar \Theta$ of the aggregation layer takes the form of $\overbar\Theta = \{\overbar\theta \in \RR^{d_\agg}: \|\overbar\theta\|_r \leq 1\}$. Here ${\alpha}^{\tx, (t)}, {R}^{\tx, (t)}, {\alpha}^{\sigma, (t)}, R^{\sigma, (t)}, \omega^{\tq, (t)}_i, \omega^{\tk, (t)}_i, \omega^{\tv, (t)}_i, R^{\tq, (t)}_i, R^{\tk, (t)}_i, R^{\tv, (t)}_i > 0$ with $i \in [h]$ and $t = 0, \dots, T-1$.
\end{assumption}
To ease the presentation, we define the following quantities that combine the parameter bounds across the $h$ heads within the $t$-th layer of the transformer architecture,
\#\label{eq:sum-head}
{\omega}^{\tv, (t)} & = \sum_{i = 1}^h\omega_i^{\tv, (t)}, \qquad \overbar{\omega}^{\tq\tk, (t)} = \max_{i \in [h]} \{\omega_i^{\tq, (t)} + \omega_i^{\tk, (t)}\},\notag\\
R^{\tv, (t)} & = \sum_{i = 1}^h R_i^{\tv, (t)}, \qquad {R}^{\tq\tk, (t)} = \sum_{i = 1}^h(R_i^{\tq, (t)} + R_i^{\tk, (t)}),
\#
where $t = 0, \dots, T-1$. Let
\#\label{eq:mag-xt}
\tilde{\alpha}^{(t)} = 1 + \alpha^{\tx, (t)} \alpha^{\sigma, (t)}, \qquad \tilde{\omega}^{\tv, (t)} = 1 + \omega^{\tv, (t)}, \qquad \gamma^{(t)} = \max\{\tilde \alpha^{(t)}, \tilde \omega^{\tv, (t)}\}. 
\#
Also, let
\#\label{eq:ratio}
\kappa^{(t)} = \max\biggl\{\frac{\alpha^{\tx, (t)} R^{\sigma, (t)} + \alpha^{\sigma, (t)} R^{\tx, (t)}}{\tilde{\alpha}^{(t)}}, \frac{R^{\tv, (t)}}{\tilde{\omega}^{\tv, (t)}}, \frac{{R}^{\tq\tk, (t)}}{ \overbar{\omega}^{\tq\tk, (t)}\omega^{\tv, (t)}}\biggr\}, \qquad \zeta^{(t)} = \frac{(\overbar{\omega}^{\tq\tk, (t)})^2R^{\tv, (t)}}{\tilde{\omega}^{\tv, (t)}}.
\#
Recall that the generalization error $\cE_{\rm gen}$ is defined in \eqref{eq::risk_decomp}. The following theorem characterizes the generalization error of the transformer.
\begin{theorem}[Generalization Error of Transformer]\label{thm:gen}
Let $D = \max\{d, d_\sigma, d_\tp, d_\ty\}$. Suppose that Assumptions \ref{assumption:ker}-\ref{assumption:constraint} and Assumption \ref{asu::data} hold. Then, for any $\delta > 0$, it holds with probability at least $1 - \delta$ that,
\$
\cE_{\rm gen} = O\biggl( \frac{D^2}{\sqrt{n}} \cdot \Bigl[T\sqrt{\log(1+ \gamma)} + \sqrt{T} \sqrt{\log(1 + \zeta R)} + \sqrt{\log( 1+ \kappa/\zeta)}\Bigr] \cdot \sqrt{hT}+ \sqrt{\frac{\log (1/\delta)}{n}}\biggr),
\$
where $R$ is defined in \eqref{eq:data-supp}. Here
\#\label{eq:const-simplify}
\gamma = \max_{0 \leq t \leq T-1}\gamma^{(t)}, \qquad \kappa = \max_{0 \leq t \leq T-1} \kappa^{(t)}, \qquad \zeta = \max_{0 \leq t \leq T-1} \zeta^{(t)},
\#
where $\gamma^{(t)}$, $\kappa^{(t)}$, and $\zeta^{(t)}$ are defined in \eqref{eq:mag-xt}-\eqref{eq:ratio}.
\end{theorem}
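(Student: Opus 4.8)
The plan is to adapt the spectrally-normalized covering-number machinery of \cite{bartlett2017spectrally} to the transformer architecture in \eqref{eq:trans-class}, engineering every estimate so that the equivariance of $\ffn$ and $\mha$ and the invariance of $\overbar\agg_{\overbar\theta}$ eliminate all dependence on the sequence length $L$. First I would pass from the gap $\cE_{\rm gen}$ in \eqref{eq::risk_decomp} to the empirical Rademacher complexity of $\cL\circ\cF^L$ by the standard symmetrization bound, paying the additive $\sqrt{\log(1/\delta)/n}$ term. Since $\|y\|_2\le 1/2$ on $\mathfrak{Y}$ and $\overbar\agg_{\overbar\theta}$ has range $\mathfrak{Y}$ by Assumption~\ref{assumption:trans}, the squared loss is $O(1)$-Lipschitz on the relevant domain, so Maurer's vector-contraction inequality reduces the problem to the Rademacher complexity of $\cF^L$ itself, up to a $\sqrt{d_\ty}$ factor that feeds into the final $D^2$. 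I would then invoke Dudley's entropy integral, so that everything reduces to controlling the empirical covering numbers of the image classes $\mathfrak{I}_{\cD_n}(\cF^{L,(t)}_\ffn)$, $\mathfrak{I}_{\cD_n}(\cF^{L,(t+1)}_\mha)$, and $\mathfrak{I}_{\cD_n}(\cF^L)$ defined in \eqref{eq:trans-image}, measured in the column-wise matrix norm $\|\cdot\|_{r,\infty}$ (worst token) rather than a Frobenius-type norm.

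The core is a recursive per-layer covering bound with three ingredients. (i) A magnitude bound: since $\att_\sm(Q,K,V)$ is a convex combination of the rows of $V$, one gets $\|\mha(X;W)+X\|_{r,\infty}\lesssim \tilde\omega^{\tv,(t)}\,\|X\|_{r,\infty}$ and $\|\ffn(X;A)\|_{r,\infty}\lesssim\tilde\alpha^{(t)}\,\|X\|_{r,\infty}$ with $\tilde\alpha^{(t)},\tilde\omega^{\tv,(t)}$ as in \eqref{eq:mag-xt}, so all intermediate tokens stay bounded by $R\prod_t\gamma^{(t)}$, which is where $\gamma$ and the data radius $R$ of \eqref{eq:data-supp} enter. (ii) A Lipschitz bound of $\ffn$ and $\mha$ in the input $X$ w.r.t. $\|\cdot\|_{r,\infty}$, where the delicate step is the Lipschitz continuity of the softmax composed with the RBF logits $\fk_\rbf(q^\ell,k^{\ell'})$ from Assumption~\ref{assumption:ker}, which are bounded in $[0,1]$ and controlled through $\overbar\omega^{\tq\tk,(t)}$ under Assumption~\ref{assumption:constraint}. (iii) A covering bound for the weight-to-output map of a single layer: crucially, because the weights are \emph{shared} across all $L$ tokens and each layer is token-wise equivariant, one covers each weight block inside its own $O(D^2)$-dimensional parameter space, so $\log\cN \lesssim D^2\log(\text{function-radius}/\epsilon)$ with no polynomial or logarithmic factor in $L$ or $n$; the per-layer Lipschitz and magnitude constants from (i)–(ii) convert the parameter-radius into a function-radius, which is where the $\gamma,\kappa,\zeta,R$ dependence appears inside the logarithm. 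Summing over the $h$ heads within a layer produces the aggregated constants of \eqref{eq:sum-head}, and isolating the non-identity part of each residual block yields the ratio constants $\kappa^{(t)},\zeta^{(t)}$ of \eqref{eq:ratio}.

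With these estimates I would chain across the $T$ layers: the log-covering number of the composition is the sum of the $O(hT)$ per-block contributions, each amplified by the product of downstream Lipschitz constants; balancing the telescoped geometric factors in $\gamma$ against the data scale $R$ and the parameter-scale ratios $\kappa,\zeta$, and feeding the resulting $\sqrt{\log\cN}$ into Dudley's integral, produces the $D^2/\sqrt n\cdot\sqrt{hT}\cdot[\,T\sqrt{\log(1+\gamma)}+\sqrt T\sqrt{\log(1+\zeta R)}+\sqrt{\log(1+\kappa/\zeta)}\,]$ rate; the aggregation layer, being $1$-Lipschitz in $\|\cdot\|_{r,\infty}$ over the unit $\ell_r$-ball, adds only a lower-order term and collapses the sequence dimension so that $\mathfrak{I}_{\cD_n}(\cF^L)\subset\RR^{d_\ty\times n}$. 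Specializing to $T=1$, $h=1$, and no skip connection recovers Theorem~\ref{thm:gen-simple}. The main obstacle I anticipate is ingredients (ii)–(iii) for the attention block: establishing the genuinely $L$-free Lipschitz-and-covering estimate for softmax attention — this is exactly where the equivariance and weight-sharing structure must be exploited, since a naive application of existing spectral bounds would introduce a $\mathrm{poly}(L)$ or at least a $\log(nL)$ factor that the theorem claims to avoid; a close second is the bookkeeping of the residual-block recursion so that the amplification constants emerge in precisely the packaged form $\gamma^{(t)},\kappa^{(t)},\zeta^{(t)}$ of \eqref{eq:mag-xt}–\eqref{eq:ratio}.
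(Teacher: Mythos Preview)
Your proposal is correct and follows essentially the same route as the paper: symmetrization plus Maurer's vector contraction to pass to coordinate-wise Rademacher complexities, Dudley's entropy integral in the $\|\cdot\|_{r,\infty}$ norm, a per-layer covering obtained by covering the (token-shared) weight matrices directly in their $O(D^2)$-dimensional parameter spaces, magnitude and input-Lipschitz bounds to propagate through the $T$ layers, and a dedicated $L$-free Lipschitz estimate for the softmax with RBF logits as the key technical step. One minor bookkeeping point: the contraction step and the subsequent sum over the $d_\ty$ output coordinates contribute a factor of $d_\ty$ (not $\sqrt{d_\ty}$), which combines with the $D$ from $\sqrt{\log\cN}$ to give the $D^2$ in the final rate.
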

\begin{proof}
See \S\ref{appendix:full} for a detailed proof.
\end{proof}

\vskip4pt
\noindent{\bf Highlight.} In comparison with \citet{edelman2021inductive}, we exploit the invariance/equivariance property of the transformer architecture in a fine grained manner. The key observation is that, due to the invariance/equivariance property, the dimensions of the learnable parameters $W^{(t)}$ and $A^{(t)}$ are independent of the sequence length $L$. By such an observation, we characterize the covering number of the function class with the covering numbers of the parameter spaces and propagate them through the $T$ layers. As a consequence, the covering number of the function class is independent of $L$. In contrast, the generalization error in \citet{edelman2021inductive} has a logarithmic dependency on $L$.

%

\vskip4pt
\noindent{\bf Interpretation.} We interpret the generalization error in Theorem \ref{thm:gen} as follows. On the one hand, the $O(1/\sqrt{n}$ dependencies in $O(\sqrt{\log (1/\delta)/n})$ and $O(D^2/\sqrt{n})$ over the sample size $n$ are standard in the literature. On the other hand, the $O(D^2h^{1/2}T^{3/2})$ scaling implies that the transformer architecture requires more training data points to generalize as the dimension $D$ of the parameter space, the number $T$ of layers, and the head number $h$ grow. Also, the generalization error in Theorem \ref{thm:gen} only scales logarithmically in $\gamma$, $\kappa$, and $\zeta$, which implies that the generalization error remains polynomial order as long as $\gamma$, $\kappa$, and $\zeta$ do not scale doubly exponentially with $D$, $h$, or $T$.

\vskip4pt
\noindent{\bf Implication.} Theorem \ref{thm:gen} demonstrates that $\gamma$, $\kappa$, $\zeta$ and $R$ play a crucial role in the generalization error of the transformer. Specifically, we observe that (i) skip connections allow all layers to resemble the identity mapping \citep{bartlett2018representing,bartlett2018gradient, hardt2016identity}, which helps reducing $\gamma$, $\kappa$ and $\zeta$, and (ii) layer normalizations helps controlling the scaling of the intermediate inputs $\{X^{(t)}_\star\}_{0 \leq t \leq T-1}$, which reduces the covering number of the function class.

\vskip4pt
\noindent{\bf Simplification of Theorem \ref{thm:gen} to Theorem \ref{thm:gen-simple}.} Theorem \ref{thm:gen} characterizes the generalization error of the complete setup of the transformer architecture, which includes Theorem \ref{thm:gen-simple} as a special case. In what follows, we specialize Theorem \ref{thm:gen} to obtain Theorem \ref{thm:gen-simple}.
\begin{itemize}
    \item Single-layer transformer with single-head attention: We set $h=1$ and $T = 1$, which implies that
    \begin{itemize}
    \item \eqref{eq:sum-head} becomes
    \$
{\omega}^{\tv, (0)} & = \omega_1^{\tv, (0)}, & \overbar{\omega}^{\tq\tk, (0)} & = \omega_1^{\tq, (0)} + \omega_1^{\tk, (0)},\notag\\
R^{\tv, (0)} & = R_1^{\tv, (0)},  &{R}^{\tq\tk, (0)} &= R_1^{\tq, (0)} + R_1^{\tk, (0)},
\$
\item \eqref{eq:const-simplify} becomes
\$
\gamma = \gamma^{(0)}, \qquad \kappa = \kappa^{(0)}, \qquad \zeta = \zeta^{(0)},
\$
which are defined in \eqref{eq:ratio} but will be redefined in our subsequent simplification, and
\item the generalization error in Theorem \ref{thm:gen} becomes
\$
\cE_{\rm gen} \leq O\biggl( \frac{D^2}{\sqrt{n}} \cdot \Bigl[\sqrt{\log(1+ \gamma)} +  \sqrt{\log(1 + \zeta R)} + \sqrt{\log( 1+ \kappa/\zeta)}\Bigr]+ \sqrt{\frac{\log (1/\delta)}{n}}\biggr).
\$
    \end{itemize}
    \item Skip connections: Since we have no skip connections, we set 
    \$
    \tilde{\alpha}^{(0)} = {\alpha}^{\tx, (0)}{\alpha}^{\sigma, (0)}, \qquad \tilde\omega^{\tv, (0)} = \omega^{\tv, (0)},
    \$
    which appear in the definitions of $\gamma^{(0)}$, $\kappa^{(0)}$, and $\zeta^{(0)}$ in \eqref{eq:mag-xt} and \eqref{eq:ratio}.
    \item Feedforward neural network: Since there is no linear transformation for the second layer of $\nn(X; A) = \relu(XA)$. Specifically, we
    \begin{itemize}
    \item set $\alpha^{\sigma, (0)} = 1$ and $R^{\sigma, (0)} = 0$,
    \item set $d_\sigma = d$ for the intermediate output, and
    \item set $\alpha^\nn = \alpha^{\tx, (0)}$ and $R^\nn = R^{\tx, (0)}$ in Assumption \ref{assumption:constraint-simple}.
   \end{itemize}
   As a consequence, we obtain
       \$
    \gamma = \max\{ \alpha^\nn, \omega^{\tv, (0)}\}, \qquad \kappa = \max\biggl\{\frac{R^\nn}{\alpha^\nn}, \frac{R^{\tv, (0)}}{{\omega}^{\tv, (0)}}, \frac{{R}^{\tq\tk, (0)}}{ \overbar{\omega}^{\tq\tk, (0)}\omega^{\tv, (0)}}\biggr\}.
    \$
    \item Softmax attention: Since we have only one head, we set
    \begin{itemize}
    \item $(\omega^\tq, \omega^\tk, \omega^\tv) = (\omega^{\tq, (0)}, \omega^{\tk, (0)}, \omega^{\tv, (0)})$ and
    \item $(R^\tq, R^\tk, R^\tv)= (R^{\tq, (0)}, R^{\tk, (0)}, R^{\tv, (0)})$.
    \end{itemize}
       As a consequence, we obtain $\overbar\omega^{\tq\tk, (0)} = \omega^\tq + \omega^\tk$, $R^{\tq\tk, (0)} = R^\tq + R^\tk$, and
       \$
    \gamma = \max\{ \alpha^\nn, \omega^\tv\}, \qquad \kappa = \max\biggl\{\frac{R^\nn}{\alpha^\nn}, \frac{R^\tv}{{\omega^\tv}}, \frac{{R}^\tk + R^\tq}{(\omega^\tq + \omega^\tk)\cdot \omega^\tv}\biggr\}, \qquad \zeta = \frac{(\omega^\tq + \omega^\tk)^2 \cdot R^\tv}{\omega^\tv}.
    \$
        \item Spectral norm and Frobenius norm: We use the conjugate pair $(r, s) = (2, 2)$, which implies that
    \begin{itemize}
    \item $\|\cdot\|_r = \|\cdot\|_2$ is the spectral norm of matrices and $\|\cdot\|_{r, s} = \|\cdot\|_{2, 2} = \|\cdot\|_{\rm F}$ is the Frobenius norm, which correspond to Assumption \ref{assumption:constraint-simple}, and 
    \item the Gaussian RBF kernel $\fk_\rbf(q, k)$ in Assumption \ref{assumption:ker} is normalized by $\sigma = (2d_\tp)^{1/s} = (2d_\tp)^{1/2}$, which corresponds to \eqref{eq:ker-rbf}.
    \end{itemize}
\end{itemize}
Therefore, we obtain Theorem \ref{thm:gen-simple}.

\vskip4pt
\noindent{\bf Proof Sketch.} We organize the proof of Theorem \ref{thm:gen} as follows.
\begin{itemize}
    \item [(\S\ref{appendix:rad})] We review how to analyze the generalization error through the Rademacher complexity, which requires a covering number of the function class.
    \item[(\S\ref{appendix:full})] We provide a covering number of the function class and sketch the proof of Theorem \ref{thm:gen}.
    \item[(\S\ref{appendix:cover})] We characterize the covering number of the function class by (i) analyzing the covering number of each MHA and FFN and (ii) analyzing how the covering numbers propagate through the $T$ layers of the transformer architecture.
    \item[(\S\ref{appendix:gen-aux})] We leave the detailed proofs of the intermediate lemmas to \S\ref{appendix:gen-aux}.
\end{itemize}

\subsection{Preliminary of Generalization}\label{appendix:rad}
In this section, we introduce the building blocks for analyzing the generalization error of the transformer architecture.
\subsubsection{Rademacher Complexity}
Suppose that the dataset $\cD_n = \{(X_i, y_i)\}_{i = 1}^n$ is drawn independently and identically from the data distribution $\cD$. Recall that $\cF^L$ is defined in \eqref{eq:trans-class}. Let $\cL((X, y), f)$ be a fixed learning objective. We define the function class $\cL \circ \cF^L$ as follows,
\#\label{eq:l-f}
\cL \circ \cF^L = \Bigl\{\cL\bigl((X, y), f\bigr): f \in \cF^L\Bigr\},
\#
which contains the function compositions of the learning objective $\cL$ and the transformer function $f \in \cF^L$. We define the empirical Rademacher complexity of the function class $\cL\circ \cF^L$ as follows,
\#\label{eq:rad}
{\cR}_{\cD_n}(\cL\circ\cF^L) =   \EE\biggl[\sup_{f \in \cF^L} \frac{1}{n}\sum_{i = 1}^n \epsilon_i \cdot \cL\bigl((X_i, y_i), f\bigr)\biggr],
\#
where the expectation is taken over the independent Rademacher sequence $\{\epsilon_i\}_{i \in [n]}$. The following lemma characterizes the generalization error for learning with the function class $\cF^L$.

\begin{lemma}[Generalization Error via Rademacher Complexity, \citet{mohri2018foundations}]\label{thm:gen-base}
Suppose that the function class $\cL\circ\cF^L$ defined in \eqref{eq:l-f} has the output range $[0, 1]$. For any $\delta > 0$, with probability at least $1 - \delta$ over the independent and identical draw of the dataset $\cD_n = \{(X_i, y_i)\}_{i \in [n]}$ from the data distribution $\cD$, it holds for all $f \in \cF^L$ that,
\$
\biggl|\EE\Bigl[\cL\bigl((X, y), f\bigr)\Bigr] - \hat{\EE}\Bigl[\cL\bigl((X, y), f\bigr)\Bigr]\biggr| \leq 2 {\cR}_{\cD_n}(\cL \circ \cF^L) + 3\sqrt{\frac{\log (2/\delta)}{2n}},
\$
where $\hat{\EE}[\,\cdot\,]$ is the empirical expectation taken over the dataset $\cD_n$ and the empirical Rademacher complexity ${\cR}_{\cD_n}(\cL\circ\cF^L)$ is defined in \eqref{eq:rad}.
\end{lemma}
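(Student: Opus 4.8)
The plan is to establish this as the classical uniform-convergence bound via McDiarmid's bounded-difference inequality combined with a symmetrization argument, following \citet{mohri2018foundations}. Define the one-sided uniform deviation
\$
\Phi(\cD_n) = \sup_{f \in \cF^L}\Bigl(\EE\bigl[\cL\bigl((X, y), f\bigr)\bigr] - \hat\EE\bigl[\cL\bigl((X, y), f\bigr)\bigr]\Bigr).
\$
Since $\cL \circ \cF^L$ defined in \eqref{eq:l-f} takes values in $[0, 1]$, replacing a single sample in $\cD_n$ alters each empirical average $\hat\EE[\cL((X, y), f)]$ by at most $1/n$ uniformly over $f \in \cF^L$, so $\Phi$ satisfies the bounded-difference property with constant $1/n$. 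McDiarmid's inequality then yields, with probability at least $1 - \delta/2$,
\$
\Phi(\cD_n) \leq \EE\bigl[\Phi(\cD_n)\bigr] + \sqrt{\frac{\log(2/\delta)}{2n}}.
\$

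Next I would bound $\EE[\Phi(\cD_n)]$ by the expected Rademacher complexity. Introducing an independent ghost sample $\cD_n' = \{(X_i', y_i')\}_{i \in [n]}$ with the same distribution and writing $\EE[\cL((X, y), f)] = \EE_{\cD_n'}[\hat\EE_{\cD_n'}[\cL((X, y), f)]]$, Jensen's inequality moves the supremum inside the expectation over $\cD_n'$; inserting i.i.d.\ Rademacher signs $\{\epsilon_i\}_{i \in [n]}$---valid because $(X_i, y_i)$ and $(X_i', y_i')$ are exchangeable---gives $\EE[\Phi(\cD_n)] \leq 2\,\EE[\cR_{\cD_n}(\cL \circ \cF^L)]$, where the right-hand side is the expectation of the empirical Rademacher complexity defined in \eqref{eq:rad}. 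Because $\cR_{\cD_n}(\cL \circ \cF^L)$ is itself a function of $\cD_n$ with bounded differences $1/n$ (again using the $[0, 1]$ range), a second application of McDiarmid gives $\EE[\cR_{\cD_n}(\cL \circ \cF^L)] \leq \cR_{\cD_n}(\cL \circ \cF^L) + \sqrt{\log(2/\delta)/(2n)}$ with probability at least $1 - \delta/2$. A union bound over the two events, together with the factor $2$ from symmetrization and the two $\sqrt{\log(2/\delta)/(2n)}$ terms carrying coefficients $1$ and $2$, produces the stated bound with the constant $3$. The absolute value is recovered by repeating the argument with $-\cL \circ \cF^L$ in place of $\cL \circ \cF^L$, whose empirical Rademacher complexity is identical because the Rademacher variables are symmetric.

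Since this lemma is a standard result quoted from \citet{mohri2018foundations}, there is no substantive obstacle; the only points that warrant care are (i) checking that the $[0, 1]$ output-range hypothesis is precisely what licenses both bounded-difference applications---the perturbation of a single coordinate of $\hat\EE$ and of a single Rademacher-weighted term each being at most $1/n$---and (ii) tracking constants so that the two McDiarmid deviations and the symmetrization factor assemble into $2\cR_{\cD_n}(\cL \circ \cF^L) + 3\sqrt{\log(2/\delta)/(2n)}$. In the write-up we will cite \citet{mohri2018foundations} for the detailed constant bookkeeping and reproduce the symmetrization step for completeness.
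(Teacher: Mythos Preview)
The paper does not prove this lemma at all: it is stated with attribution to \citet{mohri2018foundations} and used as a black box, so there is no ``paper's own proof'' to compare against. Your sketch is the standard McDiarmid-plus-symmetrization argument that underlies the cited result, and it is correct; the only minor remark is that for the two-sided bound one typically applies the one-sided argument to both $\Phi$ and $-\Phi$ (each with failure probability $\delta/2$) rather than to $-\cL\circ\cF^L$, but this is cosmetic and leads to the same constants.
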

We define the product function class as follows,
\$
\prod_{j = 1}^{d_\ty}\cF^L_j = \Bigl\{\bigl(f_j(X)\bigr)_{j \in [d_\ty]}^\top: f_j \in \cF^L_j, j \in [d_\ty]\Bigr\},
\$
where $\cF^L_j$ is the function class of the $j$-th entry ($j \in [d_\ty]$) of $f = (f_1, \dots, f_{d_\ty})^\top \in \cF^L$. The following lemma characterizes the empirical Rademacher complexity $\cR_{\cD_n}(\cL\circ\cF^L)$.

\begin{lemma}[Vector Contraction Inequality, \citet{maurer2016vector}]\label{lemma:contraction}
Let $\cF$ be the function class of $f: \RR^{L \times d} \to \mathfrak{Y} \subseteq \RR^{d_\ty}$ and let $\cL_i: \mathfrak{Y} \to \RR$ be a $1$-Lipschitz function with respect to the vector $\ell_2$-norm, where $i \in [n]$. Then, we have
\$
\EE\biggl[\sup_{f\in\cF}\sum_{i = 1}^n \epsilon_i \cdot \cL_i\bigl(f(X_i)\bigr)\biggr] \leq \sqrt{2}\cdot \EE\biggl[\sup_{f\in\cF}\sum_{i = 1}^n\sum_{j = 1}^{d_\ty} \epsilon_{ij} \cdot f_j(X_i)\biggr],
\$
where $\{\epsilon_{i}\}_{i \in [n]}$ is an independent Rademacher sequence, $\{\epsilon_{ij}\}_{i \in [n], j \in [d]}$ is an independent Rademacher sequence that is doubly indexed, $f_j(X_i)$ is the $j$-th entry of $f(X_i)$, and the expectations are taken over the independent Rademacher sequences.
\end{lemma}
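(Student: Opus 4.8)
The plan is to reproduce the successive-conditioning (``peeling'') argument of \citet{maurer2016vector}: remove the Rademacher variables $\epsilon_1,\dots,\epsilon_n$ one index at a time, at each step replacing the summand $\epsilon_i\cdot\cL_i(f(X_i))$ by the coordinatewise linear form $\sqrt2\sum_{j\in[d_\ty]}\epsilon_{ij}\cdot f_j(X_i)$ in a fresh block of Rademacher variables. Concretely I would establish, for each $k=0,\dots,n-1$, the one-step inequality
$$\EE\biggl[\sup_{f\in\cF}\Bigl(\sqrt2\sum_{i\le k}\sum_{j} \epsilon_{ij} f_j(X_i)+\sum_{i>k}\epsilon_i\cL_i(f(X_i))\Bigr)\biggr]\le \EE\biggl[\sup_{f\in\cF}\Bigl(\sqrt2\sum_{i\le k+1}\sum_{j} \epsilon_{ij} f_j(X_i)+\sum_{i>k+1}\epsilon_i\cL_i(f(X_i))\Bigr)\biggr],$$
and then chain these from $k=0$ to $k=n$; since the surviving double sum is linear, the factor $\sqrt2$ pulls out of it, and the $k=0$ left-hand side (the lemma's left-hand side) is bounded by the $k=n$ right-hand side (which is $\sqrt2$ times the lemma's right-hand side), giving the claim.

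For the inductive step I would condition on every Rademacher variable except $\epsilon_{k+1}$, collect the remaining terms into a functional $\Phi(f)$, and use the standard identity $\EE_{\epsilon_{k+1}}\sup_f[\Phi(f)+\epsilon_{k+1}\cL_{k+1}(f(X_{k+1}))]=\tfrac12\sup_{f,f'}[\Phi(f)+\Phi(f')+\cL_{k+1}(f(X_{k+1}))-\cL_{k+1}(f'(X_{k+1}))]$. The contrast is controlled by $1$-Lipschitzness, $\cL_{k+1}(f(X_{k+1}))-\cL_{k+1}(f'(X_{k+1}))\le\|f(X_{k+1})-f'(X_{k+1})\|_2$, after which I invoke the sharp Khintchine inequality $\|a\|_2\le\sqrt2\,\EE_\sigma|\langle\sigma,a\rangle|$ for a fresh Rademacher vector $\sigma=(\sigma_j)_{j\in[d_\ty]}$, taking $a=f(X_{k+1})-f'(X_{k+1})$. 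The delicate point is removing the absolute value: because the supremum over $(f,f')$ sits \emph{inside} the expectation over $\sigma$ and $\Phi(f)+\Phi(f')$ is symmetric in $(f,f')$, for each realization of $\sigma$ one may relabel $f\leftrightarrow f'$ to make $\langle\sigma,f(X_{k+1})-f'(X_{k+1})\rangle$ nonnegative, so that $\EE_\sigma\tfrac12\sup_{f,f'}[\Phi(f)+\Phi(f')+\sqrt2\langle\sigma,f(X_{k+1})-f'(X_{k+1})\rangle]$ dominates $\Phi(f_0)+\Phi(f_0')+\sqrt2\,\EE_\sigma|\langle\sigma,f_0(X_{k+1})-f_0'(X_{k+1})\rangle|$ for every fixed $f_0,f_0'$; chaining with Khintchine and Lipschitzness and then taking the supremum over $f_0,f_0'$ closes the step. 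A final re-symmetrization rewrites $\EE_\sigma\tfrac12\sup_{f,f'}[\Phi(f)+\Phi(f')+\sqrt2\langle\sigma,f(X_{k+1})-f'(X_{k+1})\rangle]$ as $\EE_\sigma\sup_f[\Phi(f)+\sqrt2\langle\sigma,f(X_{k+1})\rangle]$, and reinstating the conditioned-out variables---with $\sigma$ now playing the role of $(\epsilon_{k+1,j})_j$---is exactly the right-hand side of the one-step inequality.

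The main obstacle is pinning down the constant $\sqrt2$ rather than a weaker multiple: this hinges on the optimal Khintchine constant $\EE_\sigma|\sum_j\sigma_j a_j|\ge\|a\|_2/\sqrt2$ (Szarek's sharp inequality) together with the per-realization relabeling above, which exploits the precise order of the supremum and the expectation. The remaining technicalities---passing to near-maximizers when the suprema over $\cF$ are not attained, and keeping the single-indexed sequence $\{\epsilon_i\}_{i\in[n]}$ independent of the doubly-indexed sequence $\{\epsilon_{ij}\}$---are routine. An alternative route would be a Gaussian comparison (Slepian / Sudakov--Fernique) on the process $f\mapsto\sum_i\epsilon_i\cL_i(f(X_i))$, but that typically costs an extra $\sqrt{\log}$ factor or a worse constant, so the Khintchine-based peeling is preferable.
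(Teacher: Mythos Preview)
The paper does not prove this lemma at all: it is stated with attribution to \citet{maurer2016vector} and used as a black box. Your proposal is a faithful and correct reconstruction of Maurer's original peeling argument---induction over the indices, symmetrization on the $(k+1)$-st Rademacher variable, the $1$-Lipschitz bound, the sharp Khintchine constant $1/\sqrt{2}$ (Szarek), and the $(f,f')$-relabeling trick to drop the absolute value---so there is nothing to compare against in the paper itself, and nothing substantive to correct in your write-up.
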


Lemma \ref{lemma:contraction} generalizes the Ledoux-Talagrand contraction inequality \citep{ledoux1991probability} to the multivariate setting. Note that $\cF^L \subseteq \prod_{j = 1}^{d_\ty}\cF^L_j$ since all entries of $f \in \cF^L$ share the same parameters. By setting $\cL_i(f(X_i)) = \cL((X_i, y_i), f)$ in Lemma \ref{lemma:contraction}, we have
\#\label{eq:rad-decomp}
\cR_{\cD_n}(\cL \circ \cF^L) & \leq \sqrt{2}\cdot \EE\biggl[\sup_{f \in \cF^L}\frac{1}{n}\sum_{i = 1}^n\sum_{j = 1}^{d_\ty} \epsilon_{ij} \cdot f_j(X_i)\biggr]\notag\\
 &\leq \sqrt{2}\cdot \EE\biggl[\sup_{\{f_j \in \cF^L_j\}_{j \in [d_\ty]}}\frac{1}{n}\sum_{i = 1}^n\sum_{j = 1}^{d_\ty} \epsilon_{ij} \cdot f_j(X_i)\biggr]\notag\\
& \leq \sqrt{2}\cdot\sum_{j = 1}^{d_\ty} \EE\biggl[\sup_{f_j\in\cF_j}\frac{1}{n}\sum_{i = 1}^n \epsilon_{ij} \cdot f_j(X_i)\biggr]\notag\\
&  = \sqrt{2}\cdot \sum_{j = 1}^{d_\ty}\cR_{\cD_n}(\cF^L_j),
\#
which implies that it remains to characterize the empirical Rademacher complexity $\cR_{\cD_n}(\cF^L_j)$, where $j \in [d_\ty]$.

\subsubsection{Rademacher Complexity via Covering Number}
In what follows, we connect the Rademacher complexity $\cR_{\cD_n}(\cF^L_j)$ to the covering number of the empirical image class $\mathfrak{I}_{\cD_n}(\cF^L_j)$, which is defined as
\#\label{eq:image-fj}
\mathfrak{I}_{\cD_n}(\cF^L_j) = \Bigl\{\bigl(f_j(X_i)\bigr)_{i \in [n]} \in \RR^{ 1 \times n} : f_j \in \cF^L_j\Bigr\}.
\#
We define the proper covering number as follows.
\begin{definition}[Proper Covering Number]
Let $N(\cS, \varepsilon, \|\cdot\|)$ be the least cardinality of any subset $\cT \subseteq \cS$ that covers the set $\cS$ at the resolution $\varepsilon$ with respect to the norm $\|\cdot\|$, that is,
\$
N\bigl(\cS, \varepsilon, \|\cdot\|\bigr) = \inf\Bigl\{{\rm card}(\cT): \sup_{S \in \cS}\inf_{T \in \cT}\|S - T\| \leq \varepsilon, \cT \subseteq \cS\Bigr\},
\$
where ${\rm card}(\cT)$ is the cardinality of the set $\cT$.
\end{definition}
To characterize $\cR_{\cD_n}(\cF_j)$, we use the following version of the Dudley entropy integral lemma for the matrix $(r, \infty)$-norm.

\begin{lemma}[Dudley Entropy Integral for $\|\cdot\|_{r, \infty}$-Covering, \citet{mohri2018foundations}]\label{lemma:dudley}
Let $\cF^L_j$ be a function class with the output range $[0, 1/2]$. Suppose that $0 \in \cF^L_j$. Then, we have
\$
{\cR}_{\cD_n}(\cF^L_j) \leq \inf_{\xi \in (0, 1/2)}\biggl(4\xi + \frac{12}{\sqrt{n}}\int_{\xi}^{1/2} \sqrt{\log N\bigl(\mathfrak{I}_{\cD_n}(\cF^L_j), \varepsilon, \|\cdot\|_{r, \infty}\bigr)} \ud \varepsilon\biggr).
\$
\end{lemma}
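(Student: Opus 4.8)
This is the classical Dudley entropy integral bound, so the plan is to recall the chaining argument adapted to the $\|\cdot\|_{r,\infty}$-covering. Identify each $f_j \in \cF^L_j$ with the row vector $v_f = (f_j(X_i))_{i \in [n]} \in \mathfrak{I}_{\cD_n}(\cF^L_j) \subseteq \RR^{1 \times n}$. For such row vectors the matrix $(r,\infty)$-norm reduces to the coordinatewise sup-norm, $\|v\|_{r,\infty} = \max_{i \in [n]} |v_i|$, and $\|v\|_2 \le \sqrt{n}\,\|v\|_{r,\infty}$. By the definition of the empirical Rademacher complexity,
\$
{\cR}_{\cD_n}(\cF^L_j) = \frac{1}{n}\,\EE\Bigl[\sup_{v \in \mathfrak{I}_{\cD_n}(\cF^L_j)} \langle \epsilon, v\rangle\Bigr],
\$
where $\epsilon = (\epsilon_i)_{i \in [n]}$ is a Rademacher vector. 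Since $0 \in \cF^L_j$ and the output range is $[0,1/2]$, the set $\mathfrak{I}_{\cD_n}(\cF^L_j)$ contains the origin and lies in the $\|\cdot\|_{r,\infty}$-ball of radius $1/2$, so $N(\mathfrak{I}_{\cD_n}(\cF^L_j), 1/2, \|\cdot\|_{r,\infty}) = 1$ and the coarsest cover is $\{0\}$.

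Fix $\xi \in (0,1/2)$. The main step is a dyadic chaining over the scales $\varepsilon_k = 2^{-k-1}$, $k \ge 0$, truncated at the level $m$ just above $\xi$. For each $k$ let $C_k \subseteq \mathfrak{I}_{\cD_n}(\cF^L_j)$ be a minimal proper $\varepsilon_k$-cover in $\|\cdot\|_{r,\infty}$ (with $C_0 = \{0\}$), and let $\pi_k$ map each $v$ to a nearest point in $C_k$; monotonicity of covering numbers in the resolution gives $|C_{k-1}| \le |C_k|$. Telescoping from $\pi_0 \equiv 0$ gives, for every $v$,
\$
\langle \epsilon, v\rangle = \sum_{k = 1}^{m}\langle \epsilon, \pi_k(v) - \pi_{k-1}(v)\rangle + \langle \epsilon, v - \pi_m(v)\rangle.
\$
The residual is handled by Cauchy--Schwarz: $\langle \epsilon, v - \pi_m(v)\rangle \le \|\epsilon\|_2\,\|v - \pi_m(v)\|_2 \le \sqrt{n}\cdot\sqrt{n}\,\varepsilon_m = O(n\xi)$ by the choice of $m$. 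For a chaining increment at level $k$, the triangle inequality gives $\|\pi_k(v) - \pi_{k-1}(v)\|_{r,\infty} \le \varepsilon_k + \varepsilon_{k-1} = 3\varepsilon_k$, hence $\|\pi_k(v) - \pi_{k-1}(v)\|_2 \le 3\sqrt{n}\,\varepsilon_k$, and these increments range over at most $|C_k|\,|C_{k-1}| \le |C_k|^2$ distinct vectors; Massart's finite-class maximal inequality then yields
\$
\EE\Bigl[\sup_{v}\langle \epsilon, \pi_k(v) - \pi_{k-1}(v)\rangle\Bigr] \le 3\sqrt{n}\,\varepsilon_k\sqrt{2\log(|C_k|^2)} = 6\sqrt{n}\,\varepsilon_k\sqrt{\log N\bigl(\mathfrak{I}_{\cD_n}(\cF^L_j), \varepsilon_k, \|\cdot\|_{r,\infty}\bigr)}.
\$

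Summing over $k = 1, \dots, m$, bounding the supremum of the sum by the sum of the suprema, and dividing by $n$ gives ${\cR}_{\cD_n}(\cF^L_j) \le O(\xi) + \tfrac{6}{\sqrt{n}}\sum_{k = 1}^{m}\varepsilon_k\sqrt{\log N(\mathfrak{I}_{\cD_n}(\cF^L_j), \varepsilon_k, \|\cdot\|_{r,\infty})}$. Because the covering number is non-increasing in $\varepsilon$ and $\varepsilon_k - \varepsilon_{k+1} = \varepsilon_k/2$, each summand is at most $2\int_{\varepsilon_{k+1}}^{\varepsilon_k}\sqrt{\log N(\mathfrak{I}_{\cD_n}(\cF^L_j), \varepsilon, \|\cdot\|_{r,\infty})}\,\ud\varepsilon$, so the dyadic sum telescopes into a single integral with lower endpoint of order $\xi$ and upper endpoint below $1/2$. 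After the standard (if slightly tedious) bookkeeping of the dyadic grid — choosing the truncation level so that the lower limit of the integral becomes $\xi$ and folding the leftover near-bottom slab together with the residual into an additive $4\xi$ — we obtain
\$
{\cR}_{\cD_n}(\cF^L_j) \le 4\xi + \frac{12}{\sqrt{n}}\int_{\xi}^{1/2}\sqrt{\log N\bigl(\mathfrak{I}_{\cD_n}(\cF^L_j), \varepsilon, \|\cdot\|_{r,\infty}\bigr)}\,\ud\varepsilon,
\$
and taking the infimum over $\xi \in (0,1/2)$ completes the proof.

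The conceptual ingredients — reducing $\|\cdot\|_{r,\infty}$ on row vectors to the sup-norm, the triangle-inequality bound on the increments, Massart's lemma, and monotonicity of covering numbers — are all routine. The only real burden is the constant bookkeeping in the final step: tracking precisely how the residual term, the approximation of the dyadic sum by the Dudley integral, and the choice of truncation level combine to produce exactly the constants $4$ and $12$; no conceptual obstacle arises there, and any reasonable implementation yields constants of this order. A secondary point to keep in mind is that $\mathfrak{I}_{\cD_n}(\cF^L_j)$ need not be finite, so the telescoping identity is applied only down to the finite level $m$, with the tail controlled by the residual term and no limiting argument required.
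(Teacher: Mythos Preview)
Your proposal is correct and follows essentially the same approach as the paper's proof: dyadic chaining from the trivial cover $\{0\}$ at scale $1/2$, triangle-inequality bounds $\|\pi_k(v)-\pi_{k-1}(v)\|_2 \le 3\sqrt{n}\,\varepsilon_k$, Massart's finite-class lemma with $|C_{k-1}||C_k|\le |C_k|^2$, and the same dyadic-to-integral conversion with the truncation level chosen so that $\varepsilon_m < 4\xi \le 4\varepsilon_{m+1}$. The only cosmetic difference is that the paper bounds the residual via $\frac{1}{n}\sum_i|\epsilon_i|\cdot\max_i|\cdot|$ rather than Cauchy--Schwarz, but both give the same $\varepsilon_m$ after normalization.
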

\begin{proof}
See \S\ref{appendix:dudley} for a detailed proof.
\end{proof}

By Lemmas \ref{thm:gen-base}-\ref{lemma:dudley}, we see that it remains to characterize the covering number of the empirical image class $\mathfrak{I}_{\cD_n}(\cF^L_j)$ with respect to the matrix $(r, \infty)$-norm.

\subsection{Proof of Theorem \ref{thm:gen}}\label{appendix:full}
Recall that the parameter spaces $\mathfrak{A}^{(t)}$ and $\mathfrak{W}^{(t)}$ are defined in \eqref{eq:set-at} and \eqref{eq:set-wt}, respectively. Also, recall that $\tilde \alpha^{(t)}$, $\tilde{\omega}^{\tv, (t)}$, $\overbar{\omega}^{\tq\tk, (t)}$, and ${R}^{\tq\tk, (t)}$ are defined in \eqref{eq:sum-head}-\eqref{eq:mag-xt}. For the dataset $\cD_n = \{(X_i, y_i)\}_{i \in [n]}$, we define
\#\label{eq:rt}
R^{(0)}= \max_{i \in [n]}\|X_i^\top\|_{r, \infty}, \qquad R^{(t)} = R^{(0)}\cdot \prod_{\tau = 0}^{t-1}\tilde{\omega}^{\tv, (\tau)}\tilde{\alpha}^{(\tau)},
\#
which characterize the scaling of the intermediate input $X_\star^{(t)}$ for the $t$-th layer of the transformer architecture, where $t = 0, \dots, T-1$. The following lemma characterizes the covering number of the empirical image class $\mathfrak{I}_{\cD_n}(\cF^L_j)$.

\begin{lemma}[Covering Number of Transformer Architecture]\label{lemma:cover_trans}
Let $D = \max\{d, d_\sigma, d_\tp, d_\ty\}$. Under Assumption \ref{assumption:constraint}, we have for any $j \in [d_\ty]$ that
\$
\log N\bigl(\mathfrak{I}_{\cD_n}(\cF_j^L), \varepsilon, \|\cdot\|_{r, \infty}\bigr) \leq  (4+h)D^2T\cdot\log \biggl(1 + \frac{2{R}^{(T)} R_\trans}{\varepsilon}\biggr).
\$
Here $\mathfrak{I}_{\cD_n}(\cF_j^L)$ is defined in \eqref{eq:image-fj}, $R^{(t)}$ is defined in \eqref{eq:rt}, and 
\#\label{eq:const-trans}
{R}_\trans = \sum_{t = 0}^{T-2} \biggl(\frac{{R}^{(t)}_\mha}{\tilde{\rho}^{(t)}} \cdot \prod_{\tau = t+1}^{T-1}\frac{\tilde{\rho}^{(\tau)}}{\tilde{\omega}^{\tv, (\tau)}}\biggr) + \sum_{t = 0}^{T-1} \biggl( \frac{\alpha^{\tx, (t)} R^{\sigma, (t)} + \alpha^{\sigma, (t)} R^{\tx, (t)}}{\tilde{\alpha}^{(t)}} \cdot \prod_{\tau = t+1}^{T-1}\frac{\tilde{\rho}^{(\tau)}}{\tilde{\omega}^{\tv, (\tau)}}\biggr),
\#
where $\alpha^{\tx, (t)}$, $\alpha^{\sigma, (t)}$, $R^{\tx, (t)}$, and $R^{\sigma, (t)}$ are defined in Assumption \ref{assumption:constraint}, and
\#\label{eq:rho-rmha}
\tilde{\rho}^{(t)} = \tilde{\omega}^{\tv, (t)}  + (\overbar{\omega}^{\tq\tk, (t)})^2\omega^{\tv, (t)} \cdot (R^{(t)})^2,\qquad R^{(t)}_\mha = R^{\tv, (t)} + \overbar{\omega}^{\tq\tk, (t)}{R}^{\tq\tk, (t)}\cdot (R^{(t)})^2.
\#
\end{lemma}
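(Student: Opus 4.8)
The plan is to build an $\varepsilon$-cover of the empirical image class $\mathfrak{I}_{\cD_n}(\cF_j^L)$ by covering the learnable parameters of each of the $T$ layers together with the aggregation layer, and to propagate the resulting errors through the architecture via Lipschitz continuity in the matrix $(r,\infty)$-norm. The decisive structural fact — this is the invariance/equivariance point highlighted after Theorem \ref{thm:gen} — is that the dimensions of $A^{(t)} = (A^{\tx,(t)},A^{\sigma,(t)})$ and $W^{(t)}$ do not depend on the sequence length $L$, so the parameter count per layer is at most $2dd_\sigma + h(2dd_\tp + d^2) \le (4+h)D^2$. Since $\overbar\agg_{\overbar\theta,j}$ is $1$-Lipschitz with respect to $\|\cdot\|_{r,\infty}$ by Assumption \ref{assumption:trans} and $\overbar\Theta$ is a unit ball, it suffices to cover the empirical image class of the last hidden state $X_\star^{(T)}$ in $\|\cdot\|_{r,\infty}$ and to cover $\overbar\Theta$; combining the two (the latter's $D^2$ parameters being absorbed into $(4+h)D^2T$) yields a cover of $\mathfrak{I}_{\cD_n}(\cF_j^L)$.

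First I would bound the magnitudes. Using $\|\relu(XA^{\tx,(t)})A^{\sigma,(t)} + X\|_{r,\infty} \le \tilde{\alpha}^{(t)}\|X\|_{r,\infty}$ for the FFN, and the fact that each softmax attention head outputs a convex combination of the value rows (so $\mha$ plus the skip connection scales the input by at most $\tilde{\omega}^{\tv,(t)} = 1 + \omega^{\tv,(t)}$), I obtain the recursion $R^{(t+1)} \le \tilde{\omega}^{\tv,(t)}\tilde{\alpha}^{(t)}R^{(t)}$, hence $R^{(t)}$ as in \eqref{eq:rt}; crucially this is $L$-free because convex combinations do not grow with the number of summands. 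Next I would establish, at inputs of $(r,\infty)$-norm at most $R^{(t)}$, the per-layer sensitivity estimates both in the input and in the parameters. The FFN part is routine: input-Lipschitz constant $\tilde{\alpha}^{(t)}$ and parameter-Lipschitz constant of order $\alpha^{\tx,(t)}R^{\sigma,(t)} + \alpha^{\sigma,(t)}R^{\tx,(t)}$ via $1$-Lipschitzness of $\relu$. The MHA part is the delicate step: for the Gaussian RBF kernel $\fk_\rbf(q,k) = \exp(-\|q-k\|_2^2/2\sigma^2)$ with $\sigma = (2d_\tp)^{1/s}$, a perturbation of the input (or of $W^{\tq,(t)}_i, W^{\tk,(t)}_i$) perturbs the exponents linearly in the current input magnitude $R^{(t)}$, hence perturbs the attention weights by order $(\overbar{\omega}^{\tq\tk,(t)})^2 (R^{(t)})^2$ times the perturbation after accounting for the normalization, and this moves the output by an extra factor of the value magnitude of order $\omega^{\tv,(t)}R^{(t)}$; adding the direct perturbation through the values and the skip connection produces exactly the constants $\tilde{\rho}^{(t)}$ and $R^{(t)}_\mha$ of \eqref{eq:rho-rmha}, again uniformly in $L$ since attention weights sum to one.

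Then I would chain these estimates. Perturbing $\theta^{(t)}$ changes $X_\star^{(t+1)}$ by order $R^{(t)}_\mha + \alpha^{\tx,(t)}R^{\sigma,(t)} + \alpha^{\sigma,(t)}R^{\tx,(t)}$, and this error is amplified by the product of the per-layer input-sensitivity factors of the subsequent layers $\tau = t+1,\dots,T-1$, which after the natural normalizations is the factor $\prod_{\tau = t+1}^{T-1}\tilde{\rho}^{(\tau)}/\tilde{\omega}^{\tv,(\tau)}$ appearing in \eqref{eq:const-trans}, and finally by the $1$-Lipschitz aggregation layer. Summing the contributions of all layers and factoring out $R^{(T)}$ yields the composite sensitivity $R^{(T)}R_\trans$ with $R_\trans$ as in \eqref{eq:const-trans}. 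Taking an $\varepsilon/(2R^{(T)}R_\trans)$-net of each parameter ball and invoking the volumetric bound $N(\{u : \|u\| \le \rho\},\delta,\|\cdot\|) \le (1 + 2\rho/\delta)^{\dim}$, with the total scalar dimension $(4+h)D^2T$ over all layers and the aggregation layer, gives $\log N(\mathfrak{I}_{\cD_n}(\cF_j^L),\varepsilon,\|\cdot\|_{r,\infty}) \le (4+h)D^2T\log(1 + 2R^{(T)}R_\trans/\varepsilon)$, as claimed.

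The main obstacle is the Lipschitz analysis of the multihead softmax attention with the RBF kernel — both in the input and in the query/key parameters — and the bookkeeping that assembles the telescoping products of per-layer constants into $R_\trans$; one must check that the rescaling $\sigma = (2d_\tp)^{1/s}$ keeps the kernel's derivative bounded so that all constants are polynomial in the parameter bounds, and that no step secretly introduces a dependence on $L$, which holds precisely because softmax produces convex combinations and the parameter dimensions are $L$-independent. A secondary subtlety is that the sensitivity estimates must be evaluated over the correct norm balls for the intermediate inputs, which is why the magnitude bounds $R^{(t)}$ must be established before the covering argument.
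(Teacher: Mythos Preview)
Your proposal is correct and mirrors the paper's approach: magnitude bounds (Lemma \ref{lemma:mag}), per-layer parameter- and input-Lipschitz estimates for FFN and MHA (Lemmas \ref{lemma:cover_ffn}, \ref{lemma:cover_attn}, \ref{lemma:lip-mha-param}, \ref{lemma:lip-mha-input}), and a propagation lemma (Lemma \ref{lemma:prop}) chaining these into a cover of $\mathfrak{I}_{\cD_n}(\cF_j^L)$ via volumetric parameter covers. The one refinement the paper makes that you elide is to use layer-specific cover resolutions (equations \eqref{eq:cover-eps-t}--\eqref{eq:choice-eps-t}) rather than a single uniform $\varepsilon/(2R^{(T)}R_\trans)$-net, so that every $\log$ argument in the sum collapses to exactly $1 + 2R^{(T)}R_\trans/\varepsilon$ and the precise form of $R_\trans$ in \eqref{eq:const-trans} emerges --- this is exactly the bookkeeping you flag as the main obstacle.
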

\begin{proof}
See \S\ref{appendix:cover_trans} for a detailed proof.
\end{proof}

\begin{proof}[Proof of Theorem \ref{thm:gen}]
For any $a> 0$, we have
\#\label{eq:int-dudley}
\int_{\xi}^{1/2}\sqrt{\log(1 + a/\varepsilon)} \ud \varepsilon & \leq \int_{\xi}^{1/2}\bigl[\sqrt{\log(1+a)} + \sqrt{\log(1 + 1/\varepsilon)}\bigr] \ud \varepsilon\notag\\
& \leq (1/2 - \xi) \cdot \sqrt{\log(1+a)} + \int_{\xi}^{1/2}1/\sqrt{\varepsilon} \ud \varepsilon\notag\\
& = (1/2 - \xi) \cdot \sqrt{\log(1+a)} + \sqrt{2} - 2\sqrt{\xi},
\#
where the first inequality follows from the fact that $\sqrt{a+b} \leq \sqrt{a} + \sqrt{b}$ for any $a, b \geq 0$ and the second inequality follows from the fact that $\log(1+a) \leq a$ for any $a > -1$. By Lemma \ref{lemma:dudley}, we have for any $j \in [d_\ty]$ that
\#\label{eq:rad-bound1}
&{\cR}_{\cD_n}(\cF^L_j)\\
&\quad \leq \inf_{\xi \in (0, 1/2)}\biggl(4\xi + \frac{12}{\sqrt{n}}\int_{\xi}^{1/2} \sqrt{\log N\bigl(\mathfrak{I}_{\cD_n}(\cF_j^L), \varepsilon, \|\cdot\|_{r, \infty}\bigr)} \ud \varepsilon\biggr)\notag\\
&\quad \leq \inf_{\xi \in (0, 1/2)}\biggl(4\xi + \frac{12D\sqrt{(4+h)T}}{\sqrt{n}}\int_{\xi}^{1/2} \sqrt{\log \biggl(1 + \frac{2{R}^{(T)} R_\trans}{\varepsilon}\biggr)} \ud \varepsilon\biggr)\notag\\
& \quad \leq 12D\sqrt{(4+h)T} \cdot \frac{2\sqrt{2} + \sqrt{\log(1+2{R}^{(T)} R_\trans)} }{2\sqrt{n}}\notag\\
& \quad\qquad + \inf_{\xi \in (0, 1/2)}\biggl[\biggl(4-\frac{12D\sqrt{(4+h)T} \cdot \sqrt{\log(1+2{R}^{(T)}R_\trans)}}{\sqrt{n}}\biggr)\cdot \xi - \frac{24D\sqrt{(4+h)T}}{\sqrt{n}} \cdot \sqrt{\xi}\biggr],\notag
\#
where the second inequality follows from Lemma \ref{lemma:cover_trans} and the last inequality follows from \eqref{eq:int-dudley}. On the right-hand side of \eqref{eq:rad-bound1}, we set $\xi = 0^+$ to obtain for any $j \in [d_\ty]$ that
\#\label{eq:rad-bound2}
{\cR}_{\cD_n}(\cF^L_j) & \leq 12D\sqrt{(4+h)T} \cdot \frac{2\sqrt{2} + \sqrt{\log(1+2{R}^{(T)} R_\trans)} }{2\sqrt{n}}\notag\\
& =  O\biggl(D\sqrt{hT} \cdot \frac{1 + \sqrt{\log(1 + {R}^{(T)} R_\trans)}}{\sqrt{n}}\biggr).
\#
Recall that $\cF^L$ is defined in \eqref{eq:trans-class}. Taking \eqref{eq:rad-bound2} into Lemma \ref{thm:gen-base}, Lemma \ref{lemma:contraction}, and \eqref{eq:rad-decomp}, we obtain with probability at least $1 - \delta$ over the independent and identical draw of the dataset $\cD_n = \{(X_i, y_i)\}_{i \in [n]}$, it holds for any $f \in \cF^L$ that
\#\label{eq:gen-raw}
& \biggl|\EE\Bigl[\cL\bigl((X, y), f\bigr)\Bigr] - \hat{\EE}\Bigl[\cL\bigl((X, y), f\bigr)\Bigr]\biggr|\notag\\
&\quad \leq 2\sqrt{2}\cdot \sum_{j = 1}^{d_\ty} \cR_{\cD_n}(\cF_j^L) + 3\sqrt{\frac{\log(2/\delta)}{2n}}\notag\\
&\quad = O\biggl( D^2\sqrt{hT} \cdot \frac{1 + \sqrt{\log(1+R^{(T)} R_\trans)} }{\sqrt{n}} + \sqrt{\frac{\log (1/\delta)}{n}}\biggr).
\#
By simplifying the first term on the right-hand side of \eqref{eq:gen-raw} using Lemma \ref{lemma:simplify}, we have with probability at least $1 - \delta$ over the independent and identical draw of the dataset $\cD_n = \{(X_i, y_i)\}_{i \in [n]}$ that
\#\label{eq:gen-diff}
& \biggl|\EE\Bigl[\cL\bigl((X, y), f\bigr)\Bigr] - \hat{\EE}\Bigl[\cL\bigl((X, y), f\bigr)\Bigr]\biggr|\\
& \quad = O\biggl( \frac{D^2}{\sqrt{n}} \cdot \Bigl[T\cdot\sqrt{\log(1+ \gamma)} + \sqrt{T} \cdot \sqrt{\log(1 + \zeta R^{(0)})} + \sqrt{\log( 1+ \kappa/\zeta)}\Bigr] \cdot \sqrt{hT}+ \sqrt{\frac{\log (1/\delta)}{n}}\biggr),\notag\\
& \quad = O\biggl( \frac{D^2}{\sqrt{n}} \cdot \Bigl[T\cdot\sqrt{\log(1+ \gamma)} + \sqrt{T} \cdot \sqrt{\log(1 + \zeta R)} + \sqrt{\log( 1+ \kappa/\zeta)}\Bigr] \cdot \sqrt{hT}+ \sqrt{\frac{\log (1/\delta)}{n}}\biggr)\notag
\#
holds for any $f \in \cF^L$, where the last line follows from $R \geq R^{(0)}$ defined in Assumption \ref{asu::data}. Here $R^{(0)}$ is defined in \eqref{eq:rt}. Recall that $\tilde f = \argmin_{f \in \cF^L}\hat \EE[ \cL((X, y),  f)]$ is the empirical risk minimizer, where $\cF^L$ is the $T$-layer version of $\cF_\att$ defined in \eqref{eq:trans-class-simple}. Let $\overbar f = \argmin_{f\in\cF^L}  \EE[ \cL((X, y),  f) ]$ be the population risk minimizer. We have
\$
& \hat\EE\Bigl[\cL\bigl((X, y), \tilde f\bigr)\Bigr] - \min_{f\in\cF^L}  \EE\Bigl[ \cL\bigl((X, y),  f\bigr) \Bigr]\\
&\quad = \min_{f\in\cF^L}\hat\EE\Bigl[\cL\bigl((X, y), f\bigr)\Bigr] -  \EE\Bigl[ \cL\bigl((X, y),  \overbar f\bigr) \Bigr]\\
&\quad = \min_{f\in\cF^L}\hat\EE\Bigl[\cL\bigl((X, y), f\bigr)\Bigr] -  \hat \EE\Bigl[ \cL\bigl((X, y),  \overbar f\bigr) \Bigr] + \hat\EE\Bigl[\cL\bigl((X, y), \overbar f\bigr)\Bigr] -  \EE\Bigl[ \cL\bigl((X, y),  \overbar f\bigr) \Bigr]\\
& \quad \leq \hat\EE\Bigl[\cL\bigl((X, y), \overbar f\bigr)\Bigr] -  \EE\Bigl[ \cL\bigl((X, y),  \overbar f\bigr) \Bigr].
\$
By the definition of the generalization error $\cE_{\rm gen}$ in \eqref{eq::risk_decomp}, we have with probability at least $1 - \delta$ over the independent and identical draw of the dataset $\cD_n = \{(X_i, y_i)\}_{i \in [n]}$ that
\$
\cE_{\rm gen} & = \EE\Bigl[\cL\bigl((X, y), \hat f\bigr)\Bigr] - \hat\EE\Bigl[ \cL\bigl((X, y), \hat f\bigr) \Bigr] + \hat\EE\Bigl[\cL\bigl((X, y), \tilde f\bigr)\Bigr] - \min_{f\in\cF^L}  \EE\Bigl[ \cL\bigl((X, y),  f\bigr) \Bigr]\\
& \leq \EE\Bigl[\cL\bigl((X, y), \hat f\bigr)\Bigr] - \hat\EE\Bigl[ \cL\bigl((X, y), \hat f\bigr) \Bigr] + \hat\EE\Bigl[\cL\bigl((X, y), \overbar f\bigr)\Bigr] -  \EE\Bigl[ \cL\bigl((X, y),  \overbar f\bigr) \Bigr]\\
& = O\biggl( \frac{D^2}{\sqrt{n}} \cdot \Bigl[T \cdot \sqrt{\log(1+ \gamma)} + \sqrt{T}\cdot \sqrt{\log(1 + \zeta R)} + \sqrt{\log( 1+ \kappa/\zeta)}\Bigr] \cdot \sqrt{hT}+ \sqrt{\frac{\log (1/\delta)}{n}}\biggr),
\$
where the last line follows from \eqref{eq:gen-diff}. Therefore, we conclude the proof of Theorem \ref{thm:gen}.
\end{proof}

\subsection{Proof of Lemma \ref{lemma:cover_trans}}\label{appendix:cover}
We organize the proof of Lemma \ref{lemma:cover_trans} as follows.
\begin{itemize}
    \item[(\S\ref{appendix:modular})] We analyze the covering numbers of the empirical image classes of MHA and FFN, respectively, which serve as the building blocks for covering the empirical image class of the transformer architecture.
    \item[(\S\ref{appendix:propagation})] We propagate the covering numbers of MHA and FFN through the $T$ layers of the transformer architecture.
    \item[(\S\ref{appendix:cover_trans})] We combine \S\ref{appendix:modular} and \S\ref{appendix:propagation} to characterize the covering number of the empirical image class of the transformer architecture.
\end{itemize}

\subsubsection{Covering Numbers of FFN and MHA }\label{appendix:modular}
In what follows, we characterize the covering numbers of the empirical image classes of FFN and MHA. The following lemma characterizes the covering number of a set of matrices with respect to the matrix $(r, s)$-norm, which serves as the building block for analyzing the covering number of the empirical image classes of FFN and MHA.

\begin{lemma}[Covering Number of Matrix Set]\label{lemma:cover_matrix_ball}
Let $(r, s)$ be a conjugate pair. We have 
\$
\log N \Bigl(\bigl\{M^\top \in \RR^{d_2 \times d_1}: \|M^\top\|_{r, s} \leq R_M\bigr\}, \varepsilon, \|\cdot\|_{r, s}\Bigr) \leq d_1 d_2 \cdot \log\biggl(1+ \frac{2R_M}{\varepsilon}\biggr),
\$
where $R_M, \varepsilon > 0$.
\end{lemma}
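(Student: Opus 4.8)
The plan is to invoke the standard volumetric (packing/covering) argument for a ball in a finite-dimensional normed space. First I would observe that $\|\cdot\|_{r,s}$ is a genuine norm on $\RR^{d_2\times d_1}$, which we identify with $\RR^{d_1d_2}$ via vectorization: it is the $\ell_s$-norm of the vector whose entries are the $\ell_r$-norms of the $d_1$ columns, hence a composition of norms and in particular a norm. Consequently $\cS = \{M^\top\in\RR^{d_2\times d_1}: \|M^\top\|_{r,s}\le R_M\}$ is, under this linear identification, the closed ball of radius $R_M$ centered at the origin in a $D$-dimensional normed space with $D = d_1d_2$; it is a convex, centrally symmetric body with nonempty interior and finite, positive Lebesgue measure.

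Second I would run the packing-implies-covering comparison. Let $\{M_1,\dots,M_N\}\subseteq\cS$ be a maximal $\varepsilon$-separated subset of $\cS$ with respect to $\|\cdot\|_{r,s}$, i.e. $\|M_i-M_j\|_{r,s}>\varepsilon$ for all $i\ne j$, and no further point of $\cS$ can be adjoined without violating this. By maximality, this set is an $\varepsilon$-cover of $\cS$ that is itself contained in $\cS$, so $N(\cS,\varepsilon,\|\cdot\|_{r,s})\le N$. The open $\|\cdot\|_{r,s}$-balls $B(M_i,\varepsilon/2)$ are then pairwise disjoint and all contained in $B(0,R_M+\varepsilon/2)$. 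Since Lebesgue measure on $\RR^{D}$ satisfies $\operatorname{vol}(B(0,\rho)) = \rho^{D}\operatorname{vol}(B(0,1))$ for any norm ball $B$, disjointness together with the containment gives $N\cdot(\varepsilon/2)^{D}\le (R_M+\varepsilon/2)^{D}$, hence $N\le (1+2R_M/\varepsilon)^{D}$. Combining this with the previous display and taking logarithms yields $\log N(\cS,\varepsilon,\|\cdot\|_{r,s})\le d_1d_2\cdot\log(1+2R_M/\varepsilon)$, which is exactly the claimed bound; note also that the cover we produce lies inside $\cS$, so the estimate holds for the proper covering number in the sense of the Definition above.

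As for the main obstacle: there is none of real substance here, as this is a textbook estimate. The only points requiring a modicum of care are (i) verifying that $\|\cdot\|_{r,s}$ is indeed a norm, so that $\cS$ is convex and the volume-scaling identity applies, and (ii) ensuring the cover is \emph{proper}, i.e. $\cT\subseteq\cS$, which is automatic because a maximal $\varepsilon$-packing of $\cS$ consists of points of $\cS$. Alternatively, one may simply cite the standard covering-number bound $N(B_R,\varepsilon,\|\cdot\|)\le (1+2R/\varepsilon)^{D}$ for the radius-$R$ ball in a $D$-dimensional normed space and apply it with $D = d_1d_2$ and $R = R_M$.
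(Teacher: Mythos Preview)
Your proposal is correct and follows essentially the same route as the paper: both identify $\|\cdot\|_{r,s}$ as a genuine norm on $\RR^{d_1d_2}$ via vectorization and then apply the standard volumetric bound $N\le(1+2R_M/\varepsilon)^{d_1d_2}$ for a ball in a finite-dimensional normed space. The only cosmetic difference is that the paper invokes a cited volume-ratio lemma (from \cite{wainwright2019high}) with $\mathbb{B}=\mathbb{B}_*$, whereas you derive the same inequality directly from the maximal-packing argument; the content is identical.
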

\begin{proof}
See \S\ref{appendix:cover_ball} for a detailed proof.
\end{proof}


\noindent{\bf Empirical Image Class of FFN.} In the sequel, we characterize the covering number of the empirical image class of FFN. In parallel to the parameter space $\mathfrak{A}^{(t)}$ defined in \eqref{eq:set-at}, we define
\#\label{eq:set-a}
{\mathfrak{A}} & = \Bigl\{(A^\tx, A^\sigma) \in \RR^{d \times d_\sigma} \times \RR^{d_\sigma \times d}:\notag\\
& \qquad \bigl\|({A}^\tx)^\top\bigr\|_{r} \leq {\alpha}^\tx, \bigl\|({A}^\tx)^\top\bigr\|_{r, s} \leq {R}^\tx, \bigl\|({A}^\sigma)^\top\bigr\|_{r} \leq {\alpha}^\sigma, \bigl\|({A}^\sigma)^\top\bigr\|_{r, s} \leq R^\sigma\Bigr\},
\#
where $\alpha^\tx, \alpha^\sigma , R^\tx, R^\sigma > 0$, while $d$ and $d_\sigma$ are positive integers. Correspondingly, we define the function class of FFN and the empirical image class of FFN as follows,
\#\label{eq:class-ffn-temp}
\cF_\ffn & = \bigl\{\ffn(X_\star; A): A \in \mathfrak{A}\bigr\},\notag\\
\mathfrak{I}_{\tilde{\cD}_{n\star}}(\cF_\ffn) & = \Bigl\{\bigl(f(\tilde X_{i\star})^\top\bigr)_{i \in [n]} \in \RR^{d \times nL}: f \in \cF_\ffn\Bigr\}.
\#
Here $\tilde{\cD}_{n\star} = \{\tilde X_{i\star}\}_{i \in [n]}$ is the input set of FFN, where $\tilde X_{i\star} \in \RR^{L \times d}$. In the following, we characterize the covering number of the empirical image class $\mathfrak{I}_{\tilde \cD_{n\star}}(\cF_\ffn)$.
\begin{lemma}[Covering Number of FFN]\label{lemma:cover_ffn}
Let $\varepsilon > 0$. Suppose that the input set $\tilde{D}_{n\star} = \{\tilde X_{i\star}\}_{i \in [n]} \subseteq \RR^{L \times d}$ of FFN satisfies $\max_{i \in [n]}\|\tilde X_{i\star}^\top \|_{r, \infty} \leq \tilde R_\star$. Then, we have
\$
\log N\bigl(\mathfrak{I}_{\tilde \cD_{n\star}}(\cF_\ffn), \varepsilon, \|\cdot\|_{r,\infty}\bigr) \leq 2dd_\sigma \cdot \log\biggl(1 + \frac{2(\alpha^\tx R^\sigma + \alpha^\sigma R^\tx) \cdot \tilde R_\star}{\varepsilon}\biggr),
\$
where $\alpha^\tx, \alpha^\sigma, R^\tx, R^\sigma$ are defined in \eqref{eq:set-a}.
\end{lemma}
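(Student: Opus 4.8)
The goal is to bound the $\|\cdot\|_{r,\infty}$-covering number of the empirical image class $\mathfrak{I}_{\tilde\cD_{n\star}}(\cF_\ffn)$ of the feedforward network $\ffn(X_\star;A) = \relu(X_\star A^\tx)A^\sigma + X_\star$ defined in \eqref{eq:ffn}. Since the skip connection $X_\star$ is the same fixed input for every element of $\cF_\ffn$, it contributes nothing to the covering number, so it suffices to cover the map $A \mapsto \relu(X_\star A^\tx)A^\sigma$ over $\mathfrak{A}$. The strategy is the standard two-step ``cover the parameters, then translate to a cover of the outputs'': (i) build a joint cover of the parameter set $\mathfrak{A} = \{(A^\tx,A^\sigma)\}$ in the $\|\cdot\|_{r,s}$-norm using Lemma \ref{lemma:cover_matrix_ball}, and (ii) show that if $(A^\tx,A^\sigma)$ and $(\tilde A^\tx,\tilde A^\sigma)$ are close in $\|\cdot\|_{r,s}$, then the corresponding outputs are close in the $\|\cdot\|_{r,\infty}$-norm on the dataset, with a Lipschitz constant controlled by the operator-norm bounds $\alpha^\tx,\alpha^\sigma$ and the input radius $\tilde R_\star$.

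\textbf{Key steps in order.} First I would write the output perturbation as a telescoping sum,
\#\label{eq:telescope-ffn}
\relu(X_\star A^\tx)A^\sigma - \relu(X_\star \tilde A^\tx)\tilde A^\sigma = \bigl(\relu(X_\star A^\tx) - \relu(X_\star \tilde A^\tx)\bigr)A^\sigma + \relu(X_\star \tilde A^\tx)(A^\sigma - \tilde A^\sigma),
\#
and bound each term on the dataset in $\|\cdot\|_{r,\infty}$. For the second term, I use that $\relu$ is $1$-Lipschitz and kills nothing in norm relative to $0$ (i.e.\ $\|\relu(M)\|\le\|M\|$ entrywise-dominated, so $\|\relu(X_\star\tilde A^\tx)^\top\|_{r,\infty}\le \|(X_\star\tilde A^\tx)^\top\|_{r,\infty}\le \alpha^\tx \tilde R_\star$ via submultiplicativity of the operator norm with $\|X_\star^\top\|_{r,\infty}\le\tilde R_\star$), hence this term is at most $\alpha^\tx\tilde R_\star\cdot\|(A^\sigma-\tilde A^\sigma)^\top\|_{r}$, which I further relax to $\alpha^\tx\tilde R_\star\cdot\|(A^\sigma-\tilde A^\sigma)^\top\|_{r,s}$ since $\|\cdot\|_r\le\|\cdot\|_{r,s}$. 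For the first term, $\relu$ being $1$-Lipschitz elementwise gives $\|(\relu(X_\star A^\tx)-\relu(X_\star\tilde A^\tx))^\top\|_{r,\infty}\le \|(X_\star(A^\tx-\tilde A^\tx))^\top\|_{r,\infty}\le \tilde R_\star\|(A^\tx-\tilde A^\tx)^\top\|_{r}$, and multiplying by $A^\sigma$ (operator norm $\le\alpha^\sigma$) yields at most $\alpha^\sigma\tilde R_\star\|(A^\tx-\tilde A^\tx)^\top\|_{r,s}$. So the total output perturbation is at most $(\alpha^\tx R^\sigma_{\rm pert} + \alpha^\sigma R^\tx_{\rm pert})\tilde R_\star$ where $R^\tx_{\rm pert},R^\sigma_{\rm pert}$ are the two parameter-cover resolutions. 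Next, I choose the parameter cover: by Lemma \ref{lemma:cover_matrix_ball}, the ball $\{\|(A^\tx)^\top\|_{r,s}\le R^\tx\}$ in $\RR^{d\times d_\sigma}$ has a $\delta^\tx$-cover of log-cardinality $\le dd_\sigma\log(1+2R^\tx/\delta^\tx)$, and similarly for $A^\sigma$ in $\RR^{d_\sigma\times d}$ with $\delta^\sigma$. I set $\delta^\tx = \varepsilon/(2\alpha^\sigma\tilde R_\star)$ and $\delta^\sigma = \varepsilon/(2\alpha^\tx\tilde R_\star)$ so that the combined output perturbation is $\le\varepsilon$; then $\log(1+2R^\tx/\delta^\tx) = \log(1+4\alpha^\sigma\tilde R_\star R^\tx/\varepsilon)$ and similarly for the other, and summing the two log-cardinalities and crudely bounding both logarithmic arguments by $1 + 2(\alpha^\tx R^\sigma+\alpha^\sigma R^\tx)\tilde R_\star/\varepsilon$ gives the claimed $2dd_\sigma\log(1+2(\alpha^\tx R^\sigma+\alpha^\sigma R^\tx)\tilde R_\star/\varepsilon)$. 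Finally, I verify properness: the cover elements should lie in $\cF_\ffn$, which holds because the parameter cover can be taken inside $\mathfrak{A}$ (a standard fact, or one absorbs a factor of $2$).

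\textbf{Main obstacle.} The only genuinely delicate point is the interaction between the three different matrix norms — the operator norm $\|\cdot\|_r$ (used to bound forward propagation of inputs), the mixed norm $\|\cdot\|_{r,s}$ (used for the parameter covers), and the $\|\cdot\|_{r,\infty}$ norm on the image — and getting the submultiplicativity/monotonicity chain $\|MN^\top\|_{r,\infty}\le \|M^\top\|_{r,\infty}\|N^\top\|_{r}$-type inequalities pointed in the right direction, together with the elementwise $1$-Lipschitzness of $\relu$ translating correctly to each of these norms (which it does, since each is a ``column-wise then aggregate'' norm and $\relu$ acts entrywise). I expect this bookkeeping, rather than any conceptual difficulty, to be where care is needed; once the norm inequalities are lined up, the rest is the routine ``cover parameters, pay a Lipschitz factor, invoke Lemma \ref{lemma:cover_matrix_ball}'' computation. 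A secondary minor point is making sure the constant ends up as exactly $2dd_\sigma$ and the logarithmic argument as stated rather than with an extra factor of $2$ inside, which is handled by the crude relaxation $\max\{a,b\}\le a+b$ applied inside both logs.
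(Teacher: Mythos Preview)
Your approach is essentially identical to the paper's: telescope the output perturbation, bound each piece via the $1$-Lipschitzness of $\relu$ and the norm inequalities in Lemma~\ref{lemma:norm}, then cover the two parameter balls using Lemma~\ref{lemma:cover_matrix_ball}. The telescoping and Lipschitz step are fine (the paper pivots at $\relu(X_\star A^\tx)\hat A^\sigma$ rather than $\relu(X_\star\tilde A^\tx)A^\sigma$, but this is cosmetic).

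The one slip is in your final constant-matching. With your choice $\delta^\tx=\varepsilon/(2\alpha^\sigma\tilde R_\star)$ and $\delta^\sigma=\varepsilon/(2\alpha^\tx\tilde R_\star)$ the two log-arguments are $1+4\alpha^\sigma R^\tx\tilde R_\star/\varepsilon$ and $1+4\alpha^\tx R^\sigma\tilde R_\star/\varepsilon$, and your stated ``$\max\{a,b\}\le a+b$'' does \emph{not} bound each of these by $1+2(\alpha^\tx R^\sigma+\alpha^\sigma R^\tx)\tilde R_\star/\varepsilon$ (e.g.\ $4\alpha^\sigma R^\tx\le 2\alpha^\tx R^\sigma+2\alpha^\sigma R^\tx$ fails when $\alpha^\sigma R^\tx>\alpha^\tx R^\sigma$). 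The sum of the two logs \emph{is} still bounded by $2\log(1+2(\alpha^\tx R^\sigma+\alpha^\sigma R^\tx)\tilde R_\star/\varepsilon)$, but the correct reason is concavity of $\log$ (i.e.\ $\log(1+a)+\log(1+b)\le 2\log(1+(a+b)/2)$), not the inequality you invoke. The paper sidesteps this by choosing the resolutions proportional to the ball radii,
\[
\varepsilon^\tx=\varepsilon\cdot\frac{R^\tx}{(\alpha^\tx R^\sigma+\alpha^\sigma R^\tx)\tilde R_\star},\qquad
\varepsilon^\sigma=\varepsilon\cdot\frac{R^\sigma}{(\alpha^\tx R^\sigma+\alpha^\sigma R^\tx)\tilde R_\star},
\]
which makes both ratios $R^\tx/\varepsilon^\tx$ and $R^\sigma/\varepsilon^\sigma$ equal to $(\alpha^\tx R^\sigma+\alpha^\sigma R^\tx)\tilde R_\star/\varepsilon$, so the two log-terms are identical and combine to exactly $2dd_\sigma\log(\cdots)$ without any further inequality.
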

\begin{proof}
For any $A = ({A}^\tx, {A}^\sigma) \in \mathfrak{A}$, suppose that $\hat{A} = (\hat{A}^\tx, \hat{A}^\sigma) \in \mathfrak{A}$ satisfy
\#\label{eq:eps-ffn-temp}
\bigl\|(A^\tx)^\top - (\hat{A}^\tx)^\top \bigr\|_{r, s} & \leq \varepsilon^\tx, \qquad \bigl\|(A^\sigma)^\top - (\hat{A}^\sigma)^\top\bigr\|_{r, s} \leq \varepsilon^\sigma,
\#
where $\varepsilon^\tx, \varepsilon^\sigma > 0$. For any $i \in [n]$, we have
\$
& \bigl\|\ffn(\tilde X_{i\star}; A)^\top - \ffn(\tilde X_{i\star};\hat{A})^\top\bigr\|_{r, \infty}\\
&\quad = \Bigl\|\bigl(\relu(\tilde X_{i\star} A^\tx)A^\sigma\bigr)^\top + \tilde X_{i\star}^\top - \bigl(\relu( \tilde X_{i\star}\hat{A}^\tx)\hat{A}^\sigma\bigr)^\top - \tilde X_{i\star}^\top\Bigr\|_{r, \infty}\notag\\
&\quad \leq \Bigl\|\bigl((A^\sigma)^\top - (\hat{A}^\sigma)^\top\bigr)\relu(\tilde X_{i\star} A^\tx)^\top\Bigr\|_{r, \infty} + \Bigl\| (\hat{A}^\sigma)^\top\bigl(\relu(\tilde X_{i\star}\hat{A}^\tx) - \relu(\tilde X_{i\star}{A}^\tx)\bigr)^\top\Bigr\|_{r, \infty}\notag\\
&\quad \leq \big\|(A^\sigma)^\top - (\hat{A}^\sigma)^\top\bigr\|_{r, s}\cdot \bigl\|(A^\tx)^\top\bigr\|_{r} \cdot \|\tilde X_{i\star}^\top\|_{r, \infty} + \bigl\| (\hat{A}^\sigma)^\top\bigr\|_r \cdot \bigl\|(\hat{A}^\tx - {A}^\tx)^\top\bigr\|_{r, s}\cdot \|\tilde X_{i\star}^\top\|_{r, \infty}\notag\\
&\quad \leq \varepsilon^\sigma \cdot \bigl\|(A^\tx)^\top\bigr\|_{r} \cdot \|\tilde X_{i\star}^\top\|_{r, \infty} + \varepsilon^\tx \cdot \bigl\| (\hat{A}^\sigma)^\top\bigr\|_r\cdot \|\tilde X_{i\star}^\top\|_{r, \infty}\notag\\
&\quad \leq (\varepsilon^\sigma \alpha^\tx + \varepsilon^\tx \alpha^\sigma)\cdot \tilde R_\star,\notag
\$
where the second line follows from the definition of FFN in \eqref{eq:ffn}, the fourth line follows from Lemma \ref{lemma:norm}, the fifth line follows from \eqref{eq:eps-ffn-temp}, and the last line follows from the definition of $\cF_\ffn$ in \eqref{eq:class-ffn-temp} and the fact that $\max_{i \in [n]}\|\tilde X_{i \star}^\top\|_{r, \infty} \leq \tilde R_\star$. Setting 
\#\label{eq:eps-a}
\varepsilon^\tx = \varepsilon \cdot \frac{R^\tx}{(\alpha^\tx R^\sigma + \alpha^\sigma R^\tx) \cdot \tilde R_\star} , \qquad \varepsilon^\sigma = \varepsilon \cdot \frac{R^\sigma}{(\alpha^\tx R^\sigma + \alpha^\sigma R^\tx) \cdot \tilde R_\star},
\#
we obtain $\|\ffn(\tilde X_{i\star}; A)^\top - \ffn(\tilde X_{i\star};\hat{A})^\top\|_{r, \infty} \leq \varepsilon$ for any $i \in [n]$, which implies
\$
\Bigl\|\bigl(\ffn(\tilde X_{i\star}; A)^\top\bigr)_{i \in [n]}^\top - \bigl(\ffn(\tilde X_{i\star}; \hat A)^\top\bigr)_{i \in [n]}^\top\Bigr\|_{r, \infty} = \max_{i \in [n]}\bigl\|\ffn(\tilde X_{i\star}; A)^\top - \ffn(\tilde X_{i\star}; \hat A)^\top\bigr\|_{r, \infty} \leq \varepsilon.
\$
To cover the empirical image class $\mathfrak{I}_{\tilde \cD_{n\star}}(\cF_\ffn)$ at the resolution $\varepsilon$, it remains to cover the parameter spaces of $A^\tx$ and $A^\sigma$ at the resolutions $\varepsilon^\tx$ and $\varepsilon^\sigma$, respectively, that is,
\$
&\log N\bigl(\mathfrak{I}_{\tilde \cD_{n\star}}(\cF_\ffn), \varepsilon, \|\cdot\|_{r,\infty}\bigr)\\
& \quad \leq \log N\biggl(\Bigl\{(A^\tx)^\top \in \RR^{d_\sigma \times d}: \bigl\|(A^\tx)^\top\bigr\|_{r, s} \leq R^\tx,  \bigl\|(A^\tx)^\top\bigr\|_{r} \leq \alpha^\tx \Bigr\}, \varepsilon^\tx, \|\cdot\|_{r,\infty}\biggr)\\
&\quad\qquad + \log N\biggl(\Bigl\{(A^\sigma)^\top \in \RR^{d \times d_\sigma}: \bigl\|(A^\sigma)^\top\bigr\|_{r, s} \leq R^\sigma, \bigl\|(A^\sigma)^\top\bigr\|_{r} \leq \alpha^\sigma \Bigr\}, \varepsilon^\sigma, \|\cdot\|_{r,\infty}\biggr)\\
& \quad \leq \log N\biggl(\Bigl\{(A^\tx)^\top \in \RR^{d_\sigma \times d}: \bigl\|(A^\tx)^\top\bigr\|_{r, s} \leq R^\tx \Bigr\}, \varepsilon^\tx, \|\cdot\|_{r,\infty}\biggr)\\
&\quad\qquad + \log N\biggl(\Bigl\{(A^\sigma)^\top \in \RR^{d \times d_\sigma}: \bigl\|(A^\sigma)^\top\bigr\|_{r, s} \leq R^\sigma \Bigr\}, \varepsilon^\sigma, \|\cdot\|_{r,\infty}\biggr)\\
& \quad \leq dd_\sigma \cdot \log\biggl(1 + \frac{2R^\tx}{\varepsilon^\tx}\biggr) + dd_\sigma \cdot \log\biggl(1 + \frac{2R^\sigma}{\varepsilon^\sigma}\biggr)\\
& \quad = 2dd_\sigma \cdot \log\biggl(1 + \frac{2(\alpha^\tx R^\sigma + \alpha^\sigma R^\tx) \cdot \tilde R_\star}{\varepsilon}\biggr),
\$
where the third inequality follows from Lemma \ref{lemma:cover_matrix_ball} and the equality follows from \eqref{eq:eps-a}. Therefore, we conclude the proof of Lemma \ref{lemma:cover_ffn}.
\end{proof}

\noindent{\bf Empirical Image Class of MHA.} In the sequel, we characterize the covering number of the empirical image class of MHA. In parallel to the parameter space $\mathfrak{W}^{(t)}$ defined in \eqref{eq:set-wt}, we define
\#\label{eq:set-w}
\mathfrak{W} & = \Bigl\{\bigl\{(W^{\tq}_i, W^{\tk}_i, W^{\tv}_i)\bigr\}_{i \in [h]}: (W^{\tq}_i, W^{\tk}_i, W^{\tv}_i) \in \RR^{d \times d_\tp} \times\RR^{d \times d_\tp} \times\RR^{d \times d},\\
&\quad \qquad \bigl\|(W^{\tq}_i)^\top\bigr\|_{r} \leq \omega^{\tq}_i, \bigl\|(W^{\tk}_i)^\top\bigr\|_{r} \leq \omega^{\tk}_i, \bigl\|(W^{\tv}_i)^\top\bigr\|_{r} \leq \omega^{\tv}_i,\notag\\
&\quad \qquad \bigl\|(W^{\tq}_i)^\top\bigr\|_{r, s} \leq R^{\tq}_i, \bigl\|(W^{\tk}_i)^\top\bigr\|_{r, s} \leq R^{\tk}_i, \bigl\|(W^{\tv}_i)^\top\bigr\|_{r, s} \leq R^{\tv}_i\Bigr\},\notag
\#
where $\omega^{\tq}_i, \omega^{\tk}_i, \omega^{\tv}_i, R^{\tq}_i, R^{\tk}_i, R^{\tv}_i > 0$. Correspondingly, we define the function class of MHA (with a skip connection) and the empirical image class of MHA as follows,
\#\label{eq:class-mha-temp}
\cF_\mha & = \bigl\{\mha(X; W) + X: W \in \mathfrak{W}\bigr\},\notag\\
\mathfrak{I}_{\tilde{\cD}_{n}}(\cF_\mha) & = \Bigl\{\bigl(f(\tilde X_{i})^\top\bigr)_{i \in [n]} \in \RR^{d \times nL}: f \in \cF_\mha\Bigr\}.
\#
Here $\tilde{\cD}_{n} = \{\tilde X_{i}\}_{i \in [n]}$ is the input set of MHA, where $\tilde X_{i} \in \RR^{L \times d}$. Recall that $h$ is the head number of MHA. The following lemma characterizes the Lipschitz continuity of MHA with respect to the parameter in $W = \bigl\{(W^{\tq}_i, W^{\tk}_i, W^{\tv}_i)\bigr\}_{i \in [h]} \in \mathfrak{W}$.
\begin{lemma}[Parameter Lipschitz Continuity of MHA]\label{lemma:lip-mha-param}
Let $(r, s)$ be a conjugate pair. Suppose that $\tilde X \in \RR^{L \times d}$ satisfies $\|\tilde X^\top\|_{r, \infty} \leq \tilde R$. Given any $W = \{(W^{\tq}_i, W^{\tk}_i, W^{\tv}_i)\}_{i \in [h]}\in \mathfrak{W}$, suppose that $\hat{W} = \{(\hat{W}^{\tq}_i, \hat{W}^{\tk}_i, \hat{W}^{\tv}_i)\}_{i \in [h]} \in \mathfrak{W}$ satisfies
\$
\bigl\|(W^{\tq}_i)^\top - (\hat{W}^{\tq}_i)^\top \bigr\|_{r, s} \leq \varepsilon_i^\tq, \quad \bigl\|(W^{\tk}_i)^\top - (\hat{W}^{\tk}_i)^\top\bigr\|_{r, s} \leq \varepsilon_i^\tk, \quad \bigl\|(W^{\tv}_i)^\top - (\hat{W}^{\tv}_i)^\top\bigr\|_{r, s} \leq \varepsilon_i^\tv
\$
for any $i \in [h]$. Then, we have
\$
\bigl\|\mha(\tilde X; W)^\top - \mha(\tilde X; \hat{W})^\top\bigr\|_{r, \infty} \leq \tilde R\cdot \sum_{i = 1}^h \varepsilon_i^\tv +  \tilde R^3\cdot \sum_{i = 1}^h (\omega^{\tq}_i + \omega^{\tk}_i) \cdot (\varepsilon_i^\tq + \varepsilon_i^\tk),
\$
where $\omega^{\tq}_i$, $\omega^{\tk}_i$, and $\omega^{\tv}_i$ are defined in \eqref{eq:set-w}.
\end{lemma}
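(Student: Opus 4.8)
The plan is to reduce the multihead difference to a sum of per-head differences and, for each head, to split the difference along a ``value path'' and a ``query/key path.'' I would start from $\mha(\tilde X; W)^\top - \mha(\tilde X; \hat W)^\top = \sum_{i=1}^h \bigl(\att_\sm(Q_i, K_i, V_i) - \att_\sm(\hat Q_i, \hat K_i, \hat V_i)\bigr)^\top$, where $Q_i = \tilde X W^\tq_i$, $K_i = \tilde X W^\tk_i$, $V_i = \tilde X W^\tv_i$ and $\hat Q_i, \hat K_i, \hat V_i$ are the analogues for $\hat W$, and apply the triangle inequality for $\|\cdot\|_{r, \infty}$. For a fixed head I insert the intermediate term $\att_\sm(Q_i, K_i, \hat V_i)$, leaving a value-path term $\|\att_\sm(Q_i, K_i, V_i)^\top - \att_\sm(Q_i, K_i, \hat V_i)^\top\|_{r, \infty}$ and a query/key-path term $\|\att_\sm(Q_i, K_i, \hat V_i)^\top - \att_\sm(\hat Q_i, \hat K_i, \hat V_i)^\top\|_{r, \infty}$.

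For the value path the key point is that, for each row index $\ell$, the weight vector $a_i^\ell = \Norm_\sm(\fk_\rbf(K_i, q_i^\ell))$ is nonnegative with $\mathbf{1}_L^\top a_i^\ell = 1$, so the $\ell$-th row of the value-path difference equals $(V_i - \hat V_i)^\top a_i^\ell$, a convex combination of the rows of $V_i - \hat V_i$; its $\ell_r$-norm is therefore at most $\|(V_i - \hat V_i)^\top\|_{r, \infty}$, with no dependence on $L$. Writing $V_i - \hat V_i = \tilde X(W^\tv_i - \hat W^\tv_i)$ and using the conjugate-pair inequality $\|M\|_r = \|M\|_{r \to r} \leq \|M\|_{r, s}$ together with $\|\tilde X^\top\|_{r, \infty} \leq \tilde R$ and the submultiplicativity collected in Lemma \ref{lemma:norm}, this term is at most $\varepsilon_i^\tv \tilde R$, which sums to the $\tilde R \sum_i \varepsilon_i^\tv$ contribution.

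For the query/key path --- the technically delicate step --- I would bound the $\ell$-th row $\hat V_i^\top(a_i^\ell - \hat a_i^\ell)$ through the parameter-Lipschitz continuity of the normalized RBF weights, which combines: (i) Lipschitz continuity of the Gaussian RBF kernel in its arguments, via $|e^{-a} - e^{-b}| \leq |a - b|$ and the polarization identity, to control $|\fk_\rbf(k_i^{\ell'}, q_i^\ell) - \fk_\rbf(\hat k_i^{\ell'}, \hat q_i^\ell)|$ by a $\sigma^{-2}$-multiple of $(\|q_i^\ell - \hat q_i^\ell\|_2 + \|k_i^{\ell'} - \hat k_i^{\ell'}\|_2)(\|q_i^\ell\|_2 + \|k_i^{\ell'}\|_2 + \|\hat q_i^\ell\|_2 + \|\hat k_i^{\ell'}\|_2)$; (ii) the parameter bounds, which give $\|q_i^\ell - \hat q_i^\ell\| \leq \varepsilon_i^\tq \tilde R$ and $\|q_i^\ell\| \leq \omega_i^\tq \tilde R$ (and their $k$-analogues), where the normalization $\sigma = (2d_\tp)^{1/s}$ of Assumption \ref{assumption:ker} is precisely what cancels the ambient-dimension factors arising from the equivalence between the $\ell_2$-norm appearing in the kernel and the $(r, s)$-norms of the parameters; (iii) stability of the map $u \mapsto u/(\mathbf{1}_L^\top u)$ on strictly positive vectors, which is the crucial ingredient for $L$-independence, since the factor $L$ in $\|u - \hat u\|_1 \leq L\max_{\ell'}|u_{\ell'} - \hat u_{\ell'}|$ cancels against $\mathbf{1}_L^\top u = \sum_{\ell'}\fk_\rbf(k_i^{\ell'}, q_i^\ell) \geq L \cdot \underline{c}$; and (iv) the zero-sum identity $\mathbf{1}_L^\top(a_i^\ell - \hat a_i^\ell) = 0$, which turns $\hat V_i^\top(a_i^\ell - \hat a_i^\ell) = (\hat W_i^\tv)^\top\tilde X^\top(a_i^\ell - \hat a_i^\ell)$ into a product I can bound by $\|(\hat W_i^\tv)^\top\|_{r \to r}\cdot\tilde R\cdot\|a_i^\ell - \hat a_i^\ell\|_1$. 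Collecting the factors --- one $\tilde R$ from the query/key perturbation, one from the query/key magnitude (the $\omega_i^\tq + \omega_i^\tk$), and one from the value matrix --- then yields a bound of the form $\tilde R^3 \sum_i (\omega_i^\tq + \omega_i^\tk)(\varepsilon_i^\tq + \varepsilon_i^\tk)$, with the constants absorbing $\sigma$, the RBF lower bound, and the value operator-norm bound.

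The hard part is guaranteeing $L$-independence throughout: on the value path this rests on the probability-simplex (convex-combination) structure of the normalized weights, and on the query/key path on the cancellation of $L$ between numerator and denominator of $u/(\mathbf{1}_L^\top u)$. A secondary, more mechanical difficulty is the bookkeeping among the matrix $(r, s)$-norm, the $(r \to r)$-operator norm, and the $\|\cdot\|_{r, \infty}$-norm through the products $\tilde X W^\bullet_i$ --- for which the conjugate-pair relation $1/r + 1/s = 1$ and Lemma \ref{lemma:norm} are used repeatedly --- and between these norms and the $\ell_2$-norm of the RBF kernel, which is where the $\sigma$-normalization of Assumption \ref{assumption:ker} does its work.
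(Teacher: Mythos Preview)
Your head-by-head decomposition and value/query-key split match the paper exactly, and the value-path bound (the softmax weights are a convex combination, hence $\|(V_i-\hat V_i)^\top a_i^\ell\|_r\le\|(V_i-\hat V_i)^\top\|_{r,\infty}\le\varepsilon_i^\tv\tilde R$) is correct.

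The query/key path has a genuine gap in step~(iii). Your cancellation of $L$ relies on a uniform lower bound $\mathbf{1}_L^\top u\ge L\underline{c}$ for the unnormalized RBF weights, but under the hypotheses $\underline{c}=\exp\bigl(-\max_{\ell'}\|q_i^\ell-k_i^{\ell'}\|_2^2/(2\sigma^2)\bigr)$ is of order $\exp\bigl(-c\,(\omega_i^\tq+\omega_i^\tk)^2\tilde R^2\bigr)$; the factor $1/\underline{c}$ is therefore \emph{exponential} in $\tilde R$ and in the operator-norm bounds and cannot be ``absorbed'' into a constant while leaving only the polynomial $\tilde R^3(\omega_i^\tq+\omega_i^\tk)$ asserted in the lemma. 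You do get $L$-independence this way, but at an exponential price that would wreck the downstream generalization bound. The paper avoids this by never touching the raw kernel values: it invokes Lemma~\ref{lemma:lip-sm}, which differentiates the composite map $(q,K)\mapsto\softmax(g)$ with $g_\ell=-\|q-k^\ell\|_2^2/(2\sigma^2)$ and uses the $\underline{c}$-free estimate $\sum_\ell p_\ell\|e_\ell-p\|_s\le 2^{1/s}$, valid for \emph{any} probability vector $p$ regardless of how small its entries are. Together with the $\sigma$-normalization this yields $\|a_i^\ell-\hat a_i^\ell\|_1\le(\omega_i^\tq+\omega_i^\tk)\tilde R^2(\varepsilon_i^\tq+\varepsilon_i^\tk)$, and one more factor of $\tilde R$ from $\hat V_i$ gives the second term. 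A minimal repair of your argument: your step~(i) already upper-bounds the \emph{logit} perturbation $|g_\ell-\hat g_\ell|$, so instead of converting back to $|u_\ell-\hat u_\ell|$ and invoking step~(iii), apply directly $\|\softmax(g)-\softmax(\hat g)\|_1\le 2\|g-\hat g\|_\infty$ (the $\ell_\infty\!\to\!\ell_1$ instance of the same Jacobian estimate), and the exponential factor disappears.
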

\begin{proof}
See \S\ref{appendix:lip-mha-param} for a detailed proof.
\end{proof}

The following lemma characterizes the covering number of the empirical image class $\mathfrak{I}_{\tilde \cD_n}(\cF_\mha)$ defined in \eqref{eq:class-mha-temp}.

\begin{lemma}[Covering Number of MHA]\label{lemma:cover_attn}
Let $\varepsilon > 0$. Suppose that the input set $\tilde{\cD}_{n} = \{\tilde X_{i}\}_{i \in [n]} \subseteq \RR^{L \times d}$ of MHA satisfies $\max_{i \in [n]}\|\tilde X_{i}^\top \|_{r, \infty} \leq \tilde R$. Then, we have
\$
\log N\bigl(\mathfrak{I}_{\tilde \cD_n}(\cF_\mha), \varepsilon, \|\cdot\|_{r,\infty}\bigr) \leq (2 + h)\cdot d^2\cdot \log \biggl(1 + \frac{2\tilde R \cdot R_\mha(\mathfrak{W})}{\varepsilon}\biggr).
\$
Here
\#\label{eq:r-mha}
R_\mha(\mathfrak{W}) = \sum_{i = 1}^hR^{\tv}_i + \tilde R^2\cdot \sum_{i = 1}^h (\omega^{\tq}_i + \omega^{\tk}_i)(R^{\tq}_i + R^{\tk}_i),
\#
where $\omega^{\tq}_i$, $\omega^{\tk}_i$, $R^{\tq}_i$, $R^{\tk}_i$, and $R^{\tv}_i$ are defined in \eqref{eq:set-w}.
\end{lemma}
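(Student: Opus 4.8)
The plan is to transport an $(r,s)$-norm covering of the parameter space $\mathfrak{W}$ in \eqref{eq:set-w} to an $(r,\infty)$-norm covering of the empirical image class $\mathfrak{I}_{\tilde\cD_n}(\cF_\mha)$ using the parameter Lipschitz bound of Lemma \ref{lemma:lip-mha-param}, and then to count dimensions using the identity $d = d_\tp \cdot h$. First I would note that the skip connection is irrelevant to the covering: for any $W, \hat W \in \mathfrak{W}$ and any input $\tilde X_i$,
\begin{align*}
\bigl(\mha(\tilde X_i; W) + \tilde X_i\bigr) - \bigl(\mha(\tilde X_i; \hat W) + \tilde X_i\bigr) = \mha(\tilde X_i; W) - \mha(\tilde X_i; \hat W).
\end{align*}
Moreover, since $\|(M_i)_{i \in [n]}\|_{r,\infty} = \max_{i \in [n]}\|M_i\|_{r,\infty}$ for block matrices, covering $\mathfrak{I}_{\tilde\cD_n}(\cF_\mha)$ at resolution $\varepsilon$ reduces to covering $\mha(\tilde X_i; \cdot)^\top$ uniformly over $i \in [n]$; by hypothesis $\max_{i \in [n]}\|\tilde X_i^\top\|_{r,\infty} \leq \tilde R$, so Lemma \ref{lemma:lip-mha-param} applies with the same bound $\tilde R$ to every $\tilde X_i$.

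Next I would allocate the perturbation budget. Given a target resolution $\varepsilon$, I set, for $i \in [h]$,
\begin{align*}
\varepsilon_i^\tq = \frac{\varepsilon R_i^\tq}{\tilde R\, R_\mha(\mathfrak{W})}, \qquad \varepsilon_i^\tk = \frac{\varepsilon R_i^\tk}{\tilde R\, R_\mha(\mathfrak{W})}, \qquad \varepsilon_i^\tv = \frac{\varepsilon R_i^\tv}{\tilde R\, R_\mha(\mathfrak{W})},
\end{align*}
where $R_\mha(\mathfrak{W})$ is defined in \eqref{eq:r-mha}. Substituting into Lemma \ref{lemma:lip-mha-param}, the value term contributes $\tilde R \sum_i \varepsilon_i^\tv = \varepsilon\sum_i R_i^\tv / R_\mha(\mathfrak{W})$ and the query/key term contributes $\tilde R^3 \sum_i (\omega_i^\tq + \omega_i^\tk)(\varepsilon_i^\tq + \varepsilon_i^\tk) = \varepsilon\, \tilde R^2 \sum_i (\omega_i^\tq + \omega_i^\tk)(R_i^\tq + R_i^\tk)/R_\mha(\mathfrak{W})$, and these two terms sum to exactly $\varepsilon$ by the definition of $R_\mha(\mathfrak{W})$ in \eqref{eq:r-mha}. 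Hence replacing each transposed parameter matrix by an $\varepsilon_i^\bullet$-representative in $\|\cdot\|_{r,s}$ moves every $\mha(\tilde X_i;\cdot)^\top$ by at most $\varepsilon$ in $\|\cdot\|_{r,\infty}$.

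It then remains to count. By Lemma \ref{lemma:cover_matrix_ball}, the ball $\{(W_i^\tq)^\top \in \RR^{d_\tp \times d}: \|(W_i^\tq)^\top\|_{r,s}\leq R_i^\tq\}$ admits an $\varepsilon_i^\tq$-net of log-cardinality at most $d\,d_\tp \log(1 + 2R_i^\tq/\varepsilon_i^\tq)$ — and the additional operator-norm constraint in \eqref{eq:set-w} only shrinks the set to be covered — and likewise $d\,d_\tp$ for $W_i^\tk$ and $d^2$ for $W_i^\tv \in \RR^{d \times d}$. With the choices above, each ratio $R_i^\bullet/\varepsilon_i^\bullet$ equals $\tilde R\, R_\mha(\mathfrak{W})/\varepsilon$, so taking the product of the $3h$ nets and mapping it through $W \mapsto (\mha(\tilde X_i; W)^\top)_{i \in [n]}$ yields a cover of $\mathfrak{I}_{\tilde\cD_n}(\cF_\mha)$ of log-cardinality at most
\begin{align*}
\Bigl(\sum_{i = 1}^h 2 d\,d_\tp + \sum_{i = 1}^h d^2\Bigr)\log\Bigl(1 + \frac{2\tilde R\, R_\mha(\mathfrak{W})}{\varepsilon}\Bigr) = (2+h)d^2 \log\Bigl(1 + \frac{2\tilde R\, R_\mha(\mathfrak{W})}{\varepsilon}\Bigr),
\end{align*}
where I use $\sum_{i=1}^h d\,d_\tp = d\,(h d_\tp) = d^2$, which holds because $d = d_\tp \cdot h$. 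This is the claimed bound.

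The main obstacle is confined to Lemma \ref{lemma:lip-mha-param}, which I may assume: proving it requires propagating the $(r,s)$-perturbations of $W_i^\tq, W_i^\tk, W_i^\tv$ through the softmax normalization $\Norm_\sm(\fk_\rbf(K, q^\ell))$ and the bilinear aggregation $V^\top(\cdot)$, exploiting the Lipschitz continuity of the softmax map together with the input bound $\tilde R$, which is where the $\tilde R^3$ factor and the $\omega_i^\tq + \omega_i^\tk$ factors originate. Given that lemma, the remainder is bookkeeping, the only delicate point being the dimension collapse $\sum_{i=1}^h d\,d_\tp = d^2$ that turns the combined query and key contributions into $2d^2$ rather than $2h\,d\,d_\tp$.
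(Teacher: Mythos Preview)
Your proposal is correct and follows essentially the same approach as the paper's proof: the same proportional budget allocation $\varepsilon_i^\bullet = \varepsilon R_i^\bullet/(\tilde R\,R_\mha(\mathfrak{W}))$, the same invocation of Lemma \ref{lemma:lip-mha-param} to transport the parameter cover to the image class, the same use of Lemma \ref{lemma:cover_matrix_ball} on each matrix ball, and the same dimension collapse $\sum_{i=1}^h d\,d_\tp = d^2$ via $d = h\,d_\tp$. The paper's writeup is organized identically, with the skip-connection cancellation handled implicitly in the first display rather than called out separately.
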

\begin{proof}
Throughout the following proof, we set $\varepsilon^\tq_i, \varepsilon^\tk_i, \varepsilon^\tv_i > 0$ such that
\#\label{eq:eps-mha-temp}
\frac{R^{\tq}_i}{\varepsilon_i^\tq}=\frac{R^{\tk}_i}{\varepsilon_i^\tk}=\frac{R^{\tv}_i}{\varepsilon_i^\tv} = \frac{\tilde R\cdot R_\mha(\mathfrak{W})}{\varepsilon}
\#
for any $i \in [h]$, where $R_\mha(\mathfrak{W})$ is defined in \eqref{eq:r-mha}. By Lemma \ref{lemma:lip-mha-param}, we have for any $i \in [n]$ that
\$
&\bigl\|\mha(\tilde X_i; W)^\top + \tilde X_i^\top - \mha(\tilde X_i; \hat{W})^\top - \tilde X_i^\top\bigr\|_{r, \infty}\\
& \quad \leq \tilde R\cdot \sum_{i = 1}^h \varepsilon_i^\tv + \tilde R^3\cdot \sum_{i = 1}^h (\omega^{\tq}_i + \omega^{\tk}_i) \cdot (\varepsilon_i^\tq + \varepsilon_i^\tk)\\
& \quad = \tilde R\cdot \sum_{i = 1}^h \frac{R^\tv_i\cdot \varepsilon}{\tilde R\cdot R_\mha(\mathfrak{W})} + \tilde R^3\cdot \sum_{i = 1}^h (\omega^{\tq}_i + \omega^{\tk}_i) \cdot \biggl(\frac{R^\tq_i \cdot \varepsilon}{\tilde R\cdot R_\mha(\mathfrak{W})} + \frac{R^\tk_i \cdot \varepsilon}{\tilde R\cdot R_\mha(\mathfrak{W})}\biggr)\\
& \quad = \varepsilon \cdot \frac{\sum_{i = 1}^hR^\tv_i + \tilde R^2\cdot \sum_{i = 1}^h (\omega^{\tq}_i + \omega^{\tk}_i) \cdot (R^\tq_i + R^\tk_i ) }{\tilde R\cdot R_\mha(\mathfrak{W})}  = \varepsilon,
\$
where the third line follows from \eqref{eq:eps-mha-temp} and the last line follows from the definition of $R_\mha(\mathfrak{W})$ in \eqref{eq:r-mha}. Hence, we have
\$
& \Bigl\|\bigl(\mha(\tilde X_i; W)^\top + \tilde X_i^\top\bigr)_{i \in [n]}^\top - \bigl(\mha(\tilde X_i; \hat{W})^\top + \tilde X_i^\top\bigr)_{i \in [n]}^\top\Bigr\|_{r, \infty}\\
&\quad = \max_{i \in [n]}\bigl\|\mha(\tilde X_i; W)^\top + \tilde X_i^\top  - \mha(\tilde X_i; \hat{W})^\top - \tilde X_i^\top\bigr\|_{r, \infty} \leq \varepsilon.
\$
To cover the empirical image class $\mathfrak{I}_{\tilde \cD_{n}}(\cF_\mha)$ at the resolution $\varepsilon$, it remains to cover the parameter spaces of $W^\tq_i$, $W^\tk_i$, and $W^\tv_i$ at the resolutions $\varepsilon^\tq_i$, $\varepsilon^\tk_i$, and $\varepsilon^\tv_i$, respectively, for any $i \in [n]$, that is,
\$
& \log N\Bigl(\mathfrak{I}_{\tilde \cD_n}(\cF_\mha), \varepsilon, \|\cdot\|_{r,\infty}\Bigr)\\
& \quad \leq \sum_{i = 1}^h\biggl(\log N\biggl(\Bigl\{(W^{\tq}_i)^\top \in \RR^{d_\tp \times d}: \bigl\|(W^{\tq}_i)^\top\bigr\|_{r, s} \leq R^{\tq}_i, \bigl\|(W^{\tq}_i)^\top\bigr\|_{r} \leq \omega^{\tq}_i \Bigr\}, \varepsilon_i^\tq, \|\cdot\|_{r, \infty}\biggr)\\
& \quad\qquad + \log N\biggl(\Bigl\{(W^{\tk}_i)^\top \in \RR^{d_\tp \times d}: \bigl\|(W^{\tk}_i)^\top\bigr\|_{r, s} \leq R^{\tk}_i, \bigl\|(W^{\tk}_i)^\top\bigr\|_{r} \leq \omega^{\tk}_i \Bigr\}, \varepsilon_i^\tk, \|\cdot\|_{r, \infty}\biggr)\\
&\quad\qquad + \log N\biggl(\Bigl\{(W^{\tv}_i)^\top \in \RR^{d \times d}: \bigl\|(W^{\tv}_i)^\top\bigr\|_{r, s} \leq R^{\tv}_i, \bigl\|(W^{\tv}_i)^\top\bigr\|_{r} \leq \omega^{\tv}_i \Bigr\}, \varepsilon_i^\tv, \|\cdot\|_{r, \infty}\biggr)\biggr)\\
& \quad \leq \sum_{i = 1}^h\biggl(\log N\biggl(\Bigl\{(W^{\tq}_i)^\top \in \RR^{d_\tp \times d}: \bigl\|(W^{\tq}_i)^\top\bigr\|_{r, s} \leq R^{\tq}_i \Bigr\}, \varepsilon_i^\tq, \|\cdot\|_{r, \infty}\biggr)\\
& \quad\qquad + \log N\biggl(\Bigl\{(W^{\tk}_i)^\top \in \RR^{d_\tp \times d}: \bigl\|(W^{\tk}_i)^\top\bigr\|_{r, s} \leq R^{\tk}_i \Bigr\}, \varepsilon_i^\tk, \|\cdot\|_{r, \infty}\biggr)\\
&\quad\qquad + \log N\biggl(\Bigl\{(W^{\tv}_i)^\top \in \RR^{d \times d}: \bigl\|(W^{\tv}_i)^\top\bigr\|_{r, s} \leq R^{\tv}_i \Bigr\}, \varepsilon_i^\tv, \|\cdot\|_{r, \infty}\biggr)\biggr)\\
& \quad \leq dd_\tp \cdot\sum_{i = 1}^h \biggl( \log \biggl(1 + \frac{2R^{\tq}_i}{\varepsilon_i^\tq}\biggr) +\log \biggl(1 + \frac{2R^{\tk}_i}{\varepsilon_i^\tk}\biggr)\biggr) +  d^2 \cdot \sum_{i = 1}^h\log \biggl(1 + \frac{2R^{\tv}_i}{\varepsilon_i^\tv}\biggr)\\
& \quad =  (2 + h) \cdot d^2 \cdot \log \biggl(1 + \frac{2\tilde R R_\mha(\mathfrak{W})}{\varepsilon}\biggr),
\$
where the third inequality follows from Lemma \ref{lemma:cover_matrix_ball} and the equality follows from \eqref{eq:eps-mha-temp} and the fact that $d = d_\tp\cdot h$. Therefore, we conclude the proof of Lemma \ref{lemma:cover_attn}.
\end{proof}

%
%
%
%

\subsubsection{Propagation of Covering Numbers}\label{appendix:propagation}
Recall that $\mathfrak{W}$ is defined in \eqref{eq:set-w}. The following lemma characterizes the Lipschitz continuity of MHA in the input $X$.
\begin{lemma}[Input Lipschitz Continuity of MHA]\label{lemma:lip-mha-input}
Let $(r, s)$ be a conjugate pair. Suppose that $X\in \RR^{L \times d}$ and $\hat{X} \in \RR^{L \times d}$ satisfy $\|X^\top\|_{r, \infty} \leq \tilde R$ and $\|\hat{X}^\top\|_{r, \infty} \leq \tilde R$, respectively. Then for any $W = \{(W_i^\tq, W_i^\tk, W_i^\tv)\}_{i \in [n]} \in \mathfrak{W}$, we have
\$
\bigl\|\mha(X; W)^\top - \mha(\hat{X}; W)^\top\bigr\|_{r, \infty} \leq \rho(\mathfrak{W})\cdot \|X^\top - \hat{X}^{\top}\|_{r, \infty}.
\$
Here
\#\label{eq:rho}
\rho(\mathfrak{W}) = \sum_{i = 1}^h\omega^{\tv}_i  + \tilde R^2\cdot\sum_{i = 1}^h(\omega^{\tq}_i+ \omega^{\tk}_i)^2\omega^{\tv}_i,
\#
where $\omega^\tq_i$, $\omega^\tk_i$, and $\omega^\tv_i$ are defined in \eqref{eq:set-w}.
\end{lemma}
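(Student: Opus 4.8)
The plan is to reduce to a single attention head and then, for that head, to split the effect of perturbing $X \mapsto \hat X$ into a change in the values and a change in the attention weights. Since $\mha(X; W)^\top = \sum_{i=1}^h \head_i(X)^\top$ with $\head_i(X) = \att_\sm(XW_i^\tq, XW_i^\tk, XW_i^\tv)$, the triangle inequality for $\|\cdot\|_{r, \infty}$ reduces the claim to establishing, for each $i \in [h]$,
\begin{align*}
\bigl\|\head_i(X)^\top - \head_i(\hat X)^\top\bigr\|_{r, \infty} \leq \bigl(\omega_i^\tv + \tilde R^2 (\omega_i^\tq + \omega_i^\tk)^2 \omega_i^\tv\bigr)\cdot \|X^\top - \hat X^\top\|_{r, \infty},
\end{align*}
after which summing over $i$ produces $\rho(\mathfrak{W})$ as in \eqref{eq:rho}. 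Fix a head, drop the index $i$, and write $q^\ell = (W^\tq)^\top x^\ell$ and $k^{\ell'} = (W^\tk)^\top x^{\ell'}$. Let $a_{\ell\cdot}(X) = \softmax\bigl((-\|q^\ell - k^{\ell'}\|_2^2/2\sigma^2)_{\ell' \in [L]}\bigr)$ denote the normalized attention weights of row $\ell$, so that the $\ell$-th row $\head(X)_\ell$ of $\head(X)$ equals $\sum_{\ell'} a_{\ell\ell'}(X)\, (W^\tv)^\top x^{\ell'}$, and analogously for $\hat X$. I would then decompose
\begin{align*}
\head(X)_\ell - \head(\hat X)_\ell & = \sum_{\ell'} a_{\ell\ell'}(X)\cdot (W^\tv)^\top(x^{\ell'} - \hat x^{\ell'}) \\
& \qquad + \sum_{\ell'} \bigl(a_{\ell\ell'}(X) - a_{\ell\ell'}(\hat X)\bigr)\cdot (W^\tv)^\top \hat x^{\ell'}.
\end{align*}

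For the first sum, $a_{\ell\cdot}(X)$ is a probability vector, so this is a convex combination of the vectors $(W^\tv)^\top(x^{\ell'} - \hat x^{\ell'})$; by convexity of $\|\cdot\|_r$ and $\|(W^\tv)^\top\|_{r\to r} \leq \omega^\tv$ its $\ell_r$-norm is at most $\omega^\tv \max_{\ell'}\|x^{\ell'} - \hat x^{\ell'}\|_r = \omega^\tv\|X^\top - \hat X^\top\|_{r, \infty}$. For the second sum, I would use $\|(W^\tv)^\top \hat x^{\ell'}\|_r \leq \omega^\tv\tilde R$ and factor out the $\ell_1$-distance of the attention weights, getting a bound of $\omega^\tv\tilde R\cdot \|a_{\ell\cdot}(X) - a_{\ell\cdot}(\hat X)\|_1$. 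The map $\softmax$ is Lipschitz from $(\RR^L, \|\cdot\|_\infty)$ to $(\RR^L, \|\cdot\|_1)$ with an absolute constant, since its Jacobian $\mathrm{diag}(p) - pp^\top$ has $\ell_\infty \to \ell_1$ operator norm bounded by an absolute constant; hence it remains to control $(2\sigma^2)^{-1}\max_{\ell'}\bigl|\, \|q^\ell - k^{\ell'}\|_2^2 - \|\hat q^\ell - \hat k^{\ell'}\|_2^2 \,\bigr|$.

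For this last quantity I would apply polarization, $\|a\|_2^2 - \|b\|_2^2 = \langle a - b, a + b\rangle$ with $a = q^\ell - k^{\ell'}$ and $b = \hat q^\ell - \hat k^{\ell'}$, then Cauchy--Schwarz and the triangle inequality: $\|a - b\|_2 \leq \|q^\ell - \hat q^\ell\|_2 + \|k^{\ell'} - \hat k^{\ell'}\|_2$ and $\|a + b\|_2 \leq \|q^\ell\|_2 + \|\hat q^\ell\|_2 + \|k^{\ell'}\|_2 + \|\hat k^{\ell'}\|_2$. Every term here has the form $\|(W^\bullet)^\top u\|_2$ with $u$ either an input token ($\|u\|_r \leq \tilde R$) or a token difference ($\|u\|_r \leq \|X^\top - \hat X^\top\|_{r, \infty}$). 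Using the norm equivalence $\|w\|_2 \leq d_\tp^{(1/2 - 1/r)_+}\|w\|_r$ on $\RR^{d_\tp}$ together with $\|(W^\bullet)^\top\|_{r\to r} \leq \omega^\bullet$, the product of the two factors is at most an absolute constant times $d_\tp^{(1 - 2/r)_+}\tilde R(\omega^\tq + \omega^\tk)^2\|X^\top - \hat X^\top\|_{r, \infty}$; dividing by $2\sigma^2 = 2(2d_\tp)^{2/s}$ and using $d_\tp^{(1 - 2/r)_+}/(2d_\tp)^{2/s} \leq 1$ for every conjugate pair $(r, s)$ and every $d_\tp \geq 1$ — which is exactly why the Gaussian RBF kernel $\fk_\rbf$ in Assumption \ref{assumption:ker} is normalized by $\sigma = (2d_\tp)^{1/s}$ — leaves an absolute constant times $\tilde R(\omega^\tq + \omega^\tk)^2\|X^\top - \hat X^\top\|_{r, \infty}$. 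Combining with the $\omega^\tv\tilde R$ prefactor and absorbing the (at most unit) numerical constants, the second sum is at most $\tilde R^2(\omega^\tq + \omega^\tk)^2\omega^\tv\|X^\top - \hat X^\top\|_{r, \infty}$.

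Adding the bounds on the two sums, maximizing over rows $\ell$ (which is exactly what $\|\cdot\|_{r, \infty}$ computes for $\head(X)^\top - \head(\hat X)^\top$), and summing over the $h$ heads yields the claim with $\rho(\mathfrak{W})$ as in \eqref{eq:rho}. The convex-combination estimate and the softmax contraction are routine; the main obstacle is this final step, namely carefully reconciling the $\ell_2$-norm that appears inside the Gaussian RBF kernel with the $\ell_r$-norms used for the parameters and inputs, and verifying that the bandwidth $\sigma = (2d_\tp)^{1/s}$ precisely absorbs the $d_\tp$-dependent norm-equivalence constant so that no dimensional factor survives — which is also the reason $\rho(\mathfrak{W})$ depends only on the weight-norm bounds and on $\tilde R$.
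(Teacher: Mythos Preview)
Your proposal is correct and follows the same skeleton as the paper: reduce to single heads, split each head into a value-change term and an attention-weight-change term, and bound the latter via a Lipschitz estimate for the softmax weights. The paper packages this last step into Lemma~\ref{lemma:lip-sm}, which directly computes the Jacobian of $\Norm_\sm(\fk_\rbf(K,q))$ with respect to $K$ and $q$ and obtains the bound $(\|q\|_r + \|K^\top\|_{r,\infty})$ entirely in $\ell_r$-norms; you instead factor through the logits, using that softmax is $1$-Lipschitz from $(\RR^L,\|\cdot\|_\infty)$ to $(\RR^L,\|\cdot\|_1)$ and then controlling the logit perturbation via polarization and the $\ell_2$--$\ell_r$ norm comparison. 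Both routes land on the same constant (your dimensional factor $d_\tp^{(1-2/r)_+}/(2d_\tp)^{2/s}$ is indeed at most $1$, and the softmax $\ell_\infty\!\to\!\ell_1$ Lipschitz constant is exactly $1$, so your ``at most unit'' claim checks out). The paper's route is slightly cleaner because it never leaves $\ell_r$, while yours is more self-contained but pays for the detour through $\ell_2$ with the norm-equivalence bookkeeping.
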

\begin{proof}
See \S\ref{appendix:lip-mha-input} for a detailed proof.
\end{proof}

Recall that the empirical image classes $\mathfrak{I}_{\cD_n}(\cF_\ffn^{L , (t)})$, $\mathfrak{I}_{\cD_n}(\cF_\mha^{L, (t+1)})$, and  $\mathfrak{I}_{\cD_n}(\cF_j^{L})$ are defined in \eqref{eq:trans-image} and \eqref{eq:image-fj}. Also, recall that the parameter space $\Theta^{(t)}$ is specified in Assumption \ref{assumption:constraint}. The following lemma characterizes the propagation of the covering numbers of the empirical image classes of FFN and MHA through the $T$ layers of the transformer architecture.

\begin{lemma}[Propagation of Covering Number]\label{lemma:prop}
Suppose that Assumption \ref{assumption:constraint} holds. For any $j \in [d_\ty]$, we have
\$
\log N\bigl(\mathfrak{I}_{\cD_n}(\cF_j^{L}), \varepsilon, \|\cdot\|_{r, \infty}\bigr) & \leq \sum_{t = 0}^{T-2} \sup_{\{\theta^{(\tau)} \in \Theta^{(\tau)}\}_{0 \leq \tau \leq t}} \log N\bigl(\mathfrak{I}_{\cD_n}(\cF^{L, (t+1)}_\mha), \varepsilon^{(t)}_\mha, \|\cdot\|_{r, \infty}\bigr)\\
&\quad\qquad +  \sum_{t = 0}^{T-1} \sup_{\{\theta^{(\tau)} \in \Theta^{(\tau)}\}_{0 \leq \tau \leq t}} \log N\bigl(\mathfrak{I}_{\cD_n}(\cF^{L, (t)}_\ffn), \varepsilon^{(t)}_\ffn, \|\cdot\|_{r, \infty}\bigr).
\$
With the conventions $\prod_{\tau = T-1}^{T-1} \cdot \equiv 1$ and $\prod_{\tau = T}^{T-1} \cdot \equiv 1$, the covering resolution $\varepsilon$ is defined as follows,
\#\label{eq:epsilon}
\varepsilon = \sum_{t = 0}^{T-1} \biggl(\tilde{\rho}^{(t)} \varepsilon^{(t)}_\ffn\cdot \prod_{\tau = t+1}^{T-1}\tilde{\rho}^{(\tau)}\tilde{\alpha}^{(\tau)} \biggr) + \sum_{t = 0}^{T-2} \biggl(\varepsilon^{(t)}_\mha\cdot \prod_{\tau = t+1}^{T-1}\tilde{\rho}^{(\tau)}\tilde{\alpha}^{(\tau)} \biggr),
\#
where $\tilde{\alpha}^{(t)}$ and $\tilde{\rho}^{(t)}$ are defined in \eqref{eq:mag-xt} and \eqref{eq:rho-rmha}, respectively.
\end{lemma}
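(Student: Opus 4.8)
The plan is a downward ``layer‑peeling'' argument on the composition that defines $\cF^L$ in \eqref{eq:trans-class}, i.e.\ on the alternating sequence of maps $\ffn(\cdot;A^{(0)}),\ \mha(\cdot;W^{(0)})+\cdot,\ \dots,\ \ffn(\cdot;A^{(T-1)}),\ \mha(\cdot;W^{(T-1)})+\cdot$, together with the (outermost) aggregation layer; each of these maps is peeled off one at a time, passing from $\cF^{L,(t+1)}_\mha$ to $\cF^{L,(t)}_\ffn$ and from $\cF^{L,(t)}_\ffn$ to $\cF^{L,(t)}_\mha$, down to $\cF^{L,(0)}_\mha=\{X\mapsto X\}$. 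Each peeling step needs two ingredients: a uniform input‑Lipschitz bound for the map being peeled, in the $\|\cdot\|_{r,\infty}$ geometry, and a uniform $\|\cdot\|_{r,\infty}$ bound on the inputs fed to that map. The first ingredient is supplied for $\ffn(\cdot;A^{(t)})$ by $\|\ffn(X;A^{(t)})^\top-\ffn(\tilde X;A^{(t)})^\top\|_{r,\infty}\le\tilde\alpha^{(t)}\|X^\top-\tilde X^\top\|_{r,\infty}$ with $\tilde\alpha^{(t)}=1+\alpha^{\tx,(t)}\alpha^{\sigma,(t)}$ (the $\relu$ is $1$‑Lipschitz, the skip connection contributes the ``$1+$'', and the product of operator norms is split via Lemma~\ref{lemma:norm}), and for $\mha(\cdot;W^{(t)})+\cdot$ by Lemma~\ref{lemma:lip-mha-input} plus the skip connection, which gives the constant $\tilde\rho^{(t)}$ of \eqref{eq:rho-rmha}.

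The second ingredient requires a separate norm‑propagation step, done \emph{before} the peeling: by induction on $t$, every intermediate image $X^{(t)}_\star(X_i)$ (over all admissible parameters in the full parameter space of Assumption~\ref{assumption:constraint}, not merely over a net, and over all $i\in[n]$) satisfies $\|X^{(t)}_\star(X_i)^\top\|_{r,\infty}\le R^{(t)}$, with $R^{(t)}$ as in \eqref{eq:rt}. This uses the output‑norm bounds $\|\ffn(X;A^{(t)})^\top\|_{r,\infty}\le\tilde\alpha^{(t)}\|X^\top\|_{r,\infty}$ and $\|\mha(X;W^{(t)})^\top+X^\top\|_{r,\infty}\le\tilde\omega^{\tv,(t)}\|X^\top\|_{r,\infty}$ — the latter because the softmax normalization turns $V^\top\Norm_\sm(\cdot)$ into a convex combination of the rows of $V$ — so that passing through one layer multiplies the norm by $\tilde\omega^{\tv,(t)}\tilde\alpha^{(t)}$. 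This is exactly what makes the MHA Lipschitz constant $\tilde\rho^{(t)}$ a fixed finite number: it depends (quadratically) on the input‑norm bound $R^{(t)}$, so the norm control must precede the Lipschitz bookkeeping.

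The peeling step itself is a triangle‑inequality lemma: if $\cF=\{g_a\circ h_b\}$ where the $h_b$'s range over a class whose empirical image $\mathfrak{I}_{\cD_n}$ admits a $\|\cdot\|_{r,\infty}$‑cover of log‑cardinality $C_h$ at resolution $\varepsilon_h$, and every $g_a$ is $L$‑Lipschitz with inputs bounded so that its own image (on any admissible input set) admits a cover of log‑cardinality at most $C_g$ at resolution $\varepsilon_g$ \emph{uniformly over the input set}, then $\mathfrak{I}_{\cD_n}(\cF)$ admits a cover of log‑cardinality $C_h+C_g$ at resolution $\varepsilon_g+L\varepsilon_h$ (fix $h_{b_j}$ in the $h$‑cover; on the input set $\{h_{b_j}(X_i)\}$ take a $g$‑cover $\{g_{a_k}\}$; then $\|g_a(h_b(X_i))-g_{a_k}(h_{b_j}(X_i))\|_{r,\infty}\le L\varepsilon_h+\varepsilon_g$). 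The essential point is that the input set to the outer map is not fixed but depends on which element of the inner cover one landed on, which forces the supremum over earlier‑layer parameters in the statement; since the bounds of Lemmas~\ref{lemma:cover_ffn} and \ref{lemma:cover_attn} depend on the input set only through its $\|\cdot\|_{r,\infty}$‑bound, and that bound is $R^{(t)}$ (resp.\ a controlled multiple of it), the supremum is harmless. Iterating through the $T$ layers and the $1$‑Lipschitz aggregation layer (Assumption~\ref{assumption:trans}): the per‑layer resolution $\varepsilon^{(t)}_\ffn$ gets multiplied by the Lipschitz constant of $\mha(\cdot;W^{(t)})+\cdot$ and of every map after it, namely $\tilde\rho^{(t)}\prod_{\tau=t+1}^{T-1}\tilde\rho^{(\tau)}\tilde\alpha^{(\tau)}$, and $\varepsilon^{(t)}_\mha$ by $\prod_{\tau=t+1}^{T-1}\tilde\rho^{(\tau)}\tilde\alpha^{(\tau)}$ (the aggregation layer's constant $1$ drops out), while the log‑cardinalities add; this is precisely \eqref{eq:epsilon}, and the outermost aggregation layer preserves the resolution, so its contribution is absorbed (and, in the ViT instantiation where $\overbar\agg$ itself contains an attention map, folds into the same bookkeeping).

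\textbf{Main obstacle.} The crux is the coupling between the geometric norm growth \eqref{eq:rt} and the Lipschitz constants: because $\tilde\rho^{(t)}$ in \eqref{eq:rho-rmha} depends quadratically on $R^{(t)}$, one cannot peel the attention map before the norm bound is in force, and the norm bound itself must be established uniformly over the entire (pre‑covering) parameter space; keeping straight which product of downstream Lipschitz factors attaches to which resolution in \eqref{eq:epsilon}, and checking that the per‑layer covering numbers of Lemmas~\ref{lemma:cover_ffn}–\ref{lemma:cover_attn} really are uniform over the admissible input sets (so the supremum over earlier parameters does not enlarge them), is the fiddly but otherwise routine part of the argument.
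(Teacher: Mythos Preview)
Your proposal is correct and follows essentially the same approach as the paper's proof: build covering sets $\mathfrak{N}^{(t)}_\ffn$, $\mathfrak{N}^{(t+1)}_\mha$ for the intermediate image classes, then propagate the covering resolutions forward through the layers using the input-Lipschitz constants $\tilde\alpha^{(t)}$ for $\ffn$ and $\tilde\rho^{(t)}$ for $\mha(\cdot;W^{(t)})+\cdot$ (the latter via Lemma~\ref{lemma:lip-mha-input} combined with $1+\rho(\mathfrak{W}^{(t)})\le\tilde\rho^{(t)}$ from Lemma~\ref{lemma:const-bound}), with the aggregation layer absorbed by its $1$-Lipschitz property from Assumption~\ref{assumption:trans}. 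The only cosmetic difference is that the paper phrases the argument as a forward recursive construction of ``perturbed intermediate inputs'' $\tilde X^{(t)}_i,\tilde X^{(t)}_{i\star}$ rather than as downward peeling, and it defers the uniform norm bound $\|X^{(t)}_\star\|_{r,\infty}\le R^{(t)}$ to a separate Lemma~\ref{lemma:mag} (invoked only later in the proof of Lemma~\ref{lemma:cover_trans}) rather than establishing it inside the present proof; your observation that this norm control must logically precede the use of $\tilde\rho^{(t)}$ is exactly right.
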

\begin{proof}

\vskip4pt
Throughout the following proof, we fix the dataset $\cD_n = \{(X_i, y_i)\}_{i \in [n]}$ and the parameters $\overbar \theta$ and $\{\theta^{(t)} = (W^{(t)}, A^{(t)})\}_{0 \leq t \leq T-1}$. By \eqref{eq:trans-arch} and \eqref{eq:agg-trans}, the intermediate inputs $\{X_i^{(t)}\}_{i \in [n]}$ and $\{X_{i\star}^{(t)}\}_{i \in [n]}$ and the outputs $\{\hat y_{i} = (\hat y_{i, j})_{j \in [d_\ty]}^\top\}_{i \in [n]}$ are fixed.
\vskip4pt
\noindent{\bf Perturbed Intermediate Inputs.} For all $t = 0, \dots, T-1$, we denote by $\mathfrak{N}_\ffn^{(t)}$ and $\mathfrak{N}^{(t+1)}_\mha$ the covering sets of the empirical image classes $\mathfrak{I}_{\cD_n}(\cF_\ffn^{L , (t)})$ and $\mathfrak{I}_{\cD_n}(\cF_\mha^{L, (t+1)})$ at the resolutions $\varepsilon_\ffn^{(t)}$ and $\varepsilon_\mha^{(t)}$ with respect to the matrix $(r, \infty)$-norm, respectively. Starting from $((X_{i\star}^{(0)})^\top)_{i \in [n]} = ((\tilde X_{i\star}^{(0)})^\top)_{i \in [n]} =  (X_i^\top)_{i \in [n]}$, we construct the perturbed intermediate inputs in a recursive manner as follows,
\#\label{eq:xz-hat}
    \bigl((\tilde{X}^{(t)}_i)^\top\bigr)_{i \in [n]} & \in \Bigl\{(\tilde X_i^\top)_{i \in [n]} \in \mathfrak{N}_\ffn^{(t)}: \bigl\|\ffn(\tilde{X}^{(t)}_{i\star}; A^{(t)})^\top - \tilde X_i^\top\bigr\|_{r, \infty} \leq \varepsilon^{(t)}_\ffn\Bigr\},\\
    \bigl((\tilde{X}^{(t+1)}_{i\star})^\top\bigr)_{i \in [n]} & \in \Bigl\{(\tilde X_{i\star}^\top)_{i \in [n]} \in \mathfrak{N}_\mha^{(t+1)}: \bigl\|\mha(\tilde{X}^{(t)}_i; W^{(t)})^\top + (\tilde{X}_i^{(t)})^\top - \tilde X_{i\star}^\top\bigr\|_{r, \infty} \leq \varepsilon^{(t)}_\mha\Bigr\},\notag\\
        \bigl((\tilde{X}^{(T-1)}_i)^\top\bigr)_{i \in [n]} & \in \Bigl\{(\tilde X_i^\top)_{i \in [n]} \in \mathfrak{N}_\ffn^{(T-1)}: \bigl\|\ffn(\tilde{X}^{(T-1)}_{i\star}; A^{(T-1)})^\top - \tilde X_i^\top\bigr\|_{r, \infty} \leq \varepsilon^{(T-1)}_\ffn\Bigr\},\notag
\#
where $t = 0, \dots, T-2$. For any $i \in [n]$ and any $j \in [d_\ty]$, let 
\#\label{eq:xz-hat-final}
\tilde{X}^{(T)}_{i\star} & = \mha(\tilde{X}^{(T-1)}_i; W^{(T-1)}) + \tilde{X}_i^{(T-1)},\\
\tilde y_{i, j} & = \overbar \agg_{\overbar \theta, j}(\tilde X^{(T)}_{i\star}),\notag
\#
which implies
\$
|\hat y_{i, j} - \tilde y_{i, j}| = \bigl|\overbar \agg_{\overbar\theta, j}(X^{(T)}_{i\star}) - \overbar \agg_{\overbar\theta, j}(\tilde X^{(T)}_{i\star})\bigr| \leq \bigl\|(X^{(T)}_{i\star})^\top - (\tilde X^{(T)}_{i\star})^\top\bigr\|_{r, \infty},
\$
where the inequality follows from Assumption \ref{assumption:trans}. Hence, to cover the empirical image class $\mathfrak{I}_{\cD_n}(\cF^L_j)$ at the resolution $\varepsilon$, it remains to cover the empirical image class $\mathfrak{I}_{\cD_n}(\cF^{L, (T)}_\mha)$ at the resolution $\varepsilon$.
\vskip4pt
\noindent{\bf Propagation of Covering Resolutions.} For the recursive constructions in \eqref{eq:xz-hat}, it holds for any $i \in [n]$ that
\#\label{eq:xhat-diff}
& \bigl\|(X^{(t)}_i)^\top - (\tilde{X}^{(t)}_i)^\top\bigr\|_{r, \infty}\notag\\
 &\quad = \bigl\|\ffn({X}^{(t)}_{i\star}; A^{(t)})^\top - (\tilde{X}^{(t)}_i)^\top\bigr\|_{r, \infty}\notag\\
&\quad \leq \bigl\|\ffn({X}^{(t)}_{i\star}; A^{(t)})^\top - \ffn(\tilde{X}^{(t)}_{i\star}; A^{(t)})^\top\bigr\|_{r, \infty} + \bigl\| \ffn(\tilde{X}^{(t)}_{i\star}; A^{(t)})^\top - (\tilde{X}^{(t)}_i)^\top\bigr\|_{r, \infty}\notag\\
&\quad \leq \bigl\|\ffn({X}^{(t)}_{i\star}; A^{(t)})^\top - \ffn(\tilde{X}^{(t)}_{i\star}; A^{(t)})^\top\bigr\|_{r, \infty} + \varepsilon_\ffn^{(t)},
\#
where the first line follows from \eqref{eq:trans-arch} and the last line follows from the definition of $\tilde{X}^{(t)}_i$ in \eqref{eq:xz-hat}. For the first term on the right-hand side of \eqref{eq:xhat-diff}, it holds for all $t = 0, \dots, T-1$ and any $i \in [n]$ that
\#\label{eq:nn-diff}
& \bigl\|\ffn({X}^{(t)}_{i\star}; A^{(t)})^\top - \ffn(\tilde{X}^{(t)}_{i\star};{A}^{(t)})^\top\bigr\|_{r, \infty}\notag\\
&\quad = \bigl\|(A^{\sigma, (t)})^\top\relu({X}^{(t)}_{i\star}{A}^{\tx, (t)})^\top + ({X}^{(t)}_{i\star})^\top - (A^{\sigma, (t)})^\top\relu(\tilde{X}^{(t)}_{i\star}{A}^{\tx, (t)})^\top - (\tilde{X}^{(t)}_{i\star})^\top\bigr\|_{r, \infty}\notag\\
&\quad \leq \bigl\|(A^{\sigma, (t)})^\top\bigr\|_r \cdot \bigl\|\relu({X}^{(t)}_{i\star}{A}^{\tx, (t)})^\top - \relu(\tilde{X}^{(t)}_{i\star}{A}^{\tx, (t)})^\top\bigr\|_{r, \infty} + \bigl\| ({X}^{(t)}_{i\star})^\top - (\tilde{X}^{(t)}_{i\star})^\top\bigr\|_{r, \infty}\notag\\
&\quad \leq \Bigl(1 + \bigl\|({A}^{\tx, (t)})^\top\bigr\|_{r}\cdot \bigl\|({A}^{\sigma, (t)})^\top\bigr\|_{r}\Bigr) \cdot \bigl\| ({X}^{(t)}_{i\star})^\top - (\tilde{X}^{(t)}_{i\star})^\top\bigr\|_{r, \infty}\notag\\
&\quad \leq \tilde{\alpha}^{(t)} \cdot \bigl\| ({X}^{(t)}_{i\star})^\top - (\tilde{X}^{(t)}_{i\star})^\top\bigr\|_{r, \infty},
\#
where the third and fourth lines follow from Lemma \ref{lemma:norm} and the last line follows from the requirement in Assumption \ref{assumption:constraint} and the definition of $\tilde{\alpha}^{(t)}$ in \eqref{eq:mag-xt}. Hence, it holds for all $t = 0, \dots, T-2$ and any $i \in [n]$ that
\#\label{eq:z-diff}
\bigl\|({X}^{(t+1)}_{i\star})^\top - (\tilde{X}^{(t+1)}_{i\star})^\top\bigr\|_{r, \infty} & \leq  \bigl\|\mha({X}^{(t)}_i; A^{(t)})^\top + ({X}^{(t)}_i)^\top - \mha(\tilde{X}^{(t)}_i; A^{(t)})^\top - (\tilde{X}_i^{(t)})^\top\bigr\|_{r, \infty}\notag\\
&\qquad + \bigl\|\mha(\tilde{X}^{(t)}_i; A^{(t)})^\top + (\tilde{X}_i^{(t)})^\top - (\tilde{X}^{(t+1)}_{i\star})^\top\bigr\|_{r, \infty}\notag\\
& \leq \bigl(\rho(\mathfrak{W}^{(t)})+1\bigr) \cdot \bigl\|(X^{(t)}_i)^\top - (\tilde{X}_i^{(t)})^\top\bigr\|_{r, \infty} + \varepsilon^{(t)}_\mha\notag\\
& \leq \tilde{\rho}^{(t)} \cdot \bigl\|(X^{(t)}_i)^\top - (\tilde{X}^{(t)}_i)^\top \bigr\|_{r, \infty} + \varepsilon^{(t)}_\mha,
\#
where the second inequality follows from Lemma \ref{lemma:lip-mha-input} and the definition of $\tilde X^{(t+1)}_{i\star}$ in \eqref{eq:xz-hat} and the last inequality follows from Lemma \ref{lemma:const-bound}. Taking \eqref{eq:nn-diff} into \eqref{eq:xhat-diff} and \eqref{eq:xhat-diff} into \eqref{eq:z-diff}, we obtain for any $i \in [n]$ and $t = 0, \dots, T-2$ that
\#\label{eq:step-prop}
\bigl\|(X^{(t+1)}_{i\star})^\top - (\tilde{X}^{(t+1)}_{i\star})^\top \bigr\|_{r, \infty} \leq \tilde{\rho}^{(t)}\tilde{\alpha}^{(t)} \cdot \bigl\|(X_{i\star}^{(t)})^\top - (\tilde{X}_{i\star}^{(t)})^\top\|_{r, \infty} + \tilde{\rho}^{(t)} \varepsilon^{(t)}_\ffn +  \varepsilon^{(t)}_\mha.
\#
Recursively applying \eqref{eq:step-prop}, we have
\$
\bigl\|(X^{(T-1)}_{i\star})^\top - (\tilde{X}^{(T-1)}_{i\star})^\top \bigr\|_{r, \infty} \leq \sum_{t = 0}^{T-2} \biggl[(\tilde{\rho}^{(t)}\varepsilon^{(t)}_\ffn + \varepsilon^{(t)}_\mha)\cdot \prod_{\tau = t+1}^{T-2}\tilde{\rho}^{(\tau)}\tilde{\alpha}^{(\tau)} \biggr],
\$
which implies
\$
& \bigl\|(X^{(T)}_{i\star})^\top - (\tilde{X}^{(T)}_{i\star})^\top \bigr\|_{r, \infty}\\
& \quad  = \bigl\|\mha({X}^{(T-1)}_i; W^{(T-1)}) + {X}_i^{(T-1)} - \mha(\tilde{X}^{(T-1)}_i; W^{(T-1)}) - \tilde{X}_i^{(T-1)}\bigr\|_{r, \infty}\\
& \quad \leq \tilde{\rho}^{(T-1)} \cdot \bigl\|(X^{(T-1)}_i)^\top - (\tilde{X}^{(T-1)}_i)^\top \bigr\|_{r, \infty}\\
&\quad \leq \tilde{\rho}^{(T-1)}\tilde{\alpha}^{(T-1)} \cdot \bigl\| ({X}^{(t)}_{i\star})^\top - (\tilde{X}^{(t)}_{i\star})^\top\bigr\|_{r, \infty} + \tilde{\rho}^{(T-1)} \varepsilon_\ffn^{(T-1)}\\
&\quad \leq \sum_{t = 0}^{T-2} \biggl[(\tilde{\rho}^{(t)}\varepsilon^{(t)}_\ffn + \varepsilon^{(t)}_\mha)\cdot \prod_{\tau = t+1}^{T-1}\tilde{\rho}^{(\tau)}\tilde{\alpha}^{(\tau)} \biggr] + \tilde{\rho}^{(T-1)} \varepsilon_\ffn^{(T-1)} = \varepsilon,
\$
where the second line follows from \eqref{eq:xz-hat-final}, the third line follows from \eqref{eq:z-diff}, the fourth line follows from \eqref{eq:xhat-diff} and \eqref{eq:nn-diff}, and the last line follows from the definition of $\varepsilon$ in \eqref{eq:epsilon}.
To cover the empirical image class $\mathfrak{I}_{\cD_n}(\cF_\mha^{L, (T)})$ at the resolution $\varepsilon$, it suffices to cover (i) the empirical image class $\mathfrak{I}_{\cD_n}(\cF_\mha^{L, (t+1)})$ at the resolution $\varepsilon^{(t)}_\mha$ for all $t = 0, \dots, T-2$, and (ii) the empirical image class $\mathfrak{I}_{\cD_n}(\cF_\ffn^{L, (t)})$ at the resolution $\varepsilon^{(t)}_\ffn$ for all $t = 0, \dots, T-1$. Therefore, we conclude the proof of Lemma \ref{lemma:prop}.
\end{proof}

\subsubsection{Proof of Lemma \ref{lemma:cover_trans}}\label{appendix:cover_trans}
\begin{proof}
By Lemma \ref{lemma:prop}, we have
\$
\log N\bigl(\mathfrak{I}_{\cD_n}(\cF_j^{L}), \varepsilon, \|\cdot\|_{r, \infty}\bigr) & \leq \sum_{t = 0}^{T-2} \sup_{\{\theta^{(\tau)} \in \Theta^{(\tau)}\}_{0 \leq \tau \leq t}} \log N\bigl(\mathfrak{I}_{\cD_n}(\cF^{L, (t+1)}_\mha), \varepsilon^{(t)}_\mha, \|\cdot\|_{r, \infty}\bigr)\\
&\quad\qquad +  \sum_{t = 0}^{T-1} \sup_{\{\theta^{(\tau)} \in \Theta^{(\tau)}\}_{0 \leq \tau \leq t}} \log N\bigl(\mathfrak{I}_{\cD_n}(\cF^{L, (t)}_\ffn), \varepsilon^{(t)}_\ffn, \|\cdot\|_{r, \infty}\bigr),
\$
where $\varepsilon$ is defined in \eqref{eq:epsilon}. In what follows, we set 
\#\label{eq:cover-eps-t}
\varepsilon^{(t)}_\ffn = \varepsilon^{(t)}\cdot \frac{\alpha^{\tx, (t)} R^{\sigma, (t)} + \alpha^{\sigma, (t)} R^{\tx, (t)}}{\tilde{\omega}^{\tv, (t)}\tilde{\alpha}^{(t)}}, \qquad \varepsilon^{(t)}_\mha = \varepsilon^{(t)} \cdot \frac{R^{(t)}_\mha}{\tilde{\omega}^{\tv, (t)}}.
\#
By Lemma \ref{lemma:mag}, the intermediate inputs $\{X^{(t)}_i\}_{i \in [n]}$ and $\{X^{(t)}_{i\star}\}_{i \in [n]}$ satisfy
\$
\max_{i \in [n]}\bigl\|(X^{(t)}_i)^\top \bigr\|_{r, \infty}  \leq \tilde{\alpha}^{(t)} R^{(t)}, \qquad \max_{i \in [n]}\bigl\|(X^{(t)}_{i\star})^\top \bigr\|_{r, \infty}  \leq R^{(t)},
\$
where $R^{(t)}$ is defined in \eqref{eq:rt}. By Lemma \ref{lemma:cover_ffn}, it holds for all $t = 0, \dots, T-1$ that
\#\label{eq:bound-fa}
& \log N\bigl(\mathfrak{I}_{\cD_n}(\cF^{L, (t)}_\ffn), \varepsilon^{(t)}_\ffn, \|\cdot\|_{r, \infty}\bigr)\notag\\
&\quad  \leq 2dd_\sigma \cdot \log\biggl(1 + \frac{2(\alpha^{\tx, (t)} R^{\sigma, (t)} + \alpha^{\sigma, (t)} R^{\tx, (t)}) \cdot R^{(t)}}{\varepsilon^{(t)}_\ffn}\biggr) \leq 2D^2 \cdot \log \biggl(1 + \frac{2 R^{(t+1)}}{\varepsilon^{(t)}}\biggr),
\#
where the second inequality follows from \eqref{eq:cover-eps-t}, the fact that $D = \max\{d, d_\tp, d_\sigma, d_\ty\}$, and the definition of $R^{(t)}$ in \eqref{eq:rt}. By Lemmas \ref{lemma:cover_attn} and \ref{lemma:const-bound}, it holds for all $t = 0, \dots, T-2$ that
\#\label{eq:bound-fw}
& \log N\bigl(\mathfrak{I}_{\cD_n}(\cF^{L, (t+1)}_\mha), \varepsilon^{(t)}_\mha, \|\cdot\|_{r, \infty}\bigr)\notag\\
&\quad \leq (2+h) d^2\cdot \log \biggl(1 + \frac{2\tilde{\alpha}^{(t)} R^{(t)}\cdot R^{(t)}_\mha}{\varepsilon^{(t)}_\mha}\biggr) \leq (2+h) D^2\cdot \log \biggl(1 + \frac{2R^{(t+1)}}{\varepsilon^{(t)}}\biggr),
\#
where the second inequality follows from \eqref{eq:cover-eps-t}, the fact that $D = \max\{d, d_\tp, d_\sigma, d_\ty\}$, and the definition of $R^{(t)}$ in \eqref{eq:rt}. It remains to choose the resolutions $\{\varepsilon^{(t)}\}_{0 \leq t \leq T-1}$ that satisfy \eqref{eq:epsilon}, which is
\#\label{eq:epsilon-t}
\varepsilon & = \sum_{t = 0}^{T-2} \biggl(\varepsilon^{(t)}\cdot \frac{R^{(t)}_\mha}{\tilde{\omega}^{\tv, (t)}}\cdot \prod_{\tau = t+1}^{T-1}\tilde{\rho}^{(\tau)}\tilde{\alpha}^{(\tau)}\biggr)\notag\\
& \quad\qquad + \sum_{t = 0}^{T-1} \biggl(\varepsilon^{(t)}\cdot  \frac{\alpha^{\tx, (t)} R^{\sigma, (t)} + \alpha^{\sigma, (t)} R^{\tx, (t)}}{\tilde{\alpha}^{(t)}}\cdot \prod_{\tau = t+1}^{T-1}\tilde{\rho}^{(\tau)}\tilde{\alpha}^{(\tau)}\biggr).
\#
Recall that $R_\trans$ is defined in \eqref{eq:const-trans}. For all $t = 0, \dots, T-1$, we set
\#\label{eq:choice-eps-t}
\varepsilon^{(t)} & = \frac{\varepsilon}{{R}_\trans \cdot \prod_{\tau = t+1}^{T-1}  \tilde{\omega}^{\tv, (\tau)}\tilde{\alpha}^{(\tau)}},
\#
which satisfies \eqref{eq:epsilon-t}. Note that, by the definition of $R^{(t)}$ in \eqref{eq:rt}, it holds that $R^{(t+1)}\cdot \prod_{\tau = t+1}^{T-1}  \tilde{\omega}^{\tv, (\tau)}\tilde{\alpha}^{(\tau)} = R^{(T)}$ for all $t = 0, \dots, T-1$. Combining Lemma \ref{lemma:prop}, \eqref{eq:bound-fa}, \eqref{eq:bound-fw}, and the choices of $\{\varepsilon^{(t)}\}_{0 \leq t \leq T-1}$ in \eqref{eq:choice-eps-t}, we obtain
\$
\log N\bigl(\mathfrak{I}_{\cD_n}(\cF^{L}_j), \varepsilon, \|\cdot\|_{r, \infty}\bigr)&\leq \bigl[(4+h)T - h-2\bigr] D^2\cdot\log \biggl(1 + \frac{2{R}^{(T)} R_\trans}{\varepsilon}\biggr)\\
& \leq (4+h) D^2T\cdot\log \biggl(1 + \frac{2{R}^{(T)} R_\trans}{\varepsilon}\biggr).
\$
Therefore, we conclude the proof of Lemma \ref{lemma:cover_trans}.
\end{proof}


\section{Optimization Error Analysis} \label{appendix:opt-ap}
\begin{proof}[Proof of Proposition \ref{prop:opt-stat}]
Let $\hat \cL(f_\theta) = \hat{\EE}[\cL((X, y), {f}_{\theta})]$. By \eqref{eq:statw}, it holds for the stationary point $\hat \theta$ that,
	\$
		0 \leq \bigl\la \nabla_\theta \hat \cL(f_{\hat \theta}), \theta - \hat\theta\bigr\ra = \hat \EE\Bigl[\nabla_f \cL\bigl((X, y), f_{\hat \theta}\bigr) \nabla_\theta f_{\hat \theta}(X)^\top (\theta - \hat \theta) \Bigr].
	\$
	Since the objective function $\cL((X, y), f) = \|y - f(X)\|_2^2$ is convex with respect to $f(X)$, we have
	\#\label{eq:optf}
		0 \leq \hat \EE\Bigl[\nabla_f\cL\bigl((X, y), f_{\theta^*}\bigr)^\top (f - f_{\theta^*})(X) \Bigr],
	\#
where $\theta^* = \argmin_{\theta \in \Theta}\hat \cL(f_\theta)$. By definition of the objective function $\cL((X, y), f)$, we have
\#\label{eq:obj-norm}
\Bigl\|\nabla_f\cL\bigl((X, y), f_{\hat \theta}\bigr)\Bigr\|_2 = 2\bigl\|y - f_{\hat \theta}(X)\bigr\|_2 \leq 2\|y\|_2 + 2\|f_{\hat \theta}(X)\bigr\|_2 \leq 2,
\#
where the last inequality follows from Assumption \ref{asu::data} and that the aggregation layer $\agg_{\theta_0}: \RR^{d_\tp} \to \RR^{d_\ty}$ outputs within $\mathfrak{Y}$. For any $\theta \in \Theta$, it holds that,
	\#\label{eq:obj-diff}
		\hat\cL(f_{\hat \theta}) - \hat\cL(f_{\theta^*}) &\leq \hat \EE\Bigl[\nabla_f\cL\bigl((X, y), f_{\hat \theta}\bigr)^\top (f_{\hat \theta} - f_{\theta^*})(X) \Bigr]\notag \\
		& \le \hat\EE\Bigl[\nabla_f\cL\bigl((X, y), f_{\hat \theta}\bigr)^\top (f_{\hat \theta} - f_{\theta^*})(X) \Bigr] + \hat \EE\Bigl[\nabla_f\cL\bigl((X, y), f_{\hat \theta}\bigr)^\top \nabla_\theta f_{\hat \theta}(x)^\top (\theta - \hat \theta) \Bigr]\notag \\
		& = \hat \EE\Bigl[\nabla_f\cL\bigl((X, y), f_{\hat \theta}\bigr)^\top \bigl(f_{\hat \theta}(X) + \nabla_\theta f_{\hat \theta}(X)^\top (\theta - \hat \theta) - f_{\theta^*}(X)\bigr) \Bigr] \notag\\
		& \leq \hat \EE\biggl[\Bigl\|\nabla_f\cL\bigl((X, y), f_{\hat \theta}\bigr)\Bigr\|_2\cdot \bigl\|f_{\hat \theta}(X) + \nabla_\theta f_{\hat \theta}(X)^\top (\theta - \hat \theta) - f_{\theta^*}(X)\bigr\|_2 \biggr] \notag\\
		& \le 2 \cdot \bigl\|f_{\hat \theta}(X) + \nabla_\theta f_{\hat \theta}(X)^\top (\theta - \hat \theta) - f_{\theta^*}(X)\bigr\|_2,
	\#
where the second line follows from \eqref{eq:optf}, the fourth line follows from the Cauchy-Schwartz inequality, and the last line follows from \eqref{eq:obj-norm}. Since \eqref{eq:obj-diff} holds for any $\theta \in \Theta$, we have
	\$
		\hat\cL(f_{\hat \theta}) - \hat\cL(f_{\theta^*}) \leq 2 \cdot \min_{\theta \in \Theta}\hat \EE\Bigl[\bigl\|f_{\hat \theta}(X) + \nabla_\theta f_{\hat \theta}(X)^\top (\theta - \hat \theta) - f_{\theta^*}(X)\bigr\|_2\Bigr].
	\$
	Therefore, we conclude the proof of Proposition \ref{prop:opt-stat}.
\end{proof}

\section{Approximation Error Analysis}
\subsection{Latent-to-Value RKHS}\label{sec:ltv-rkhs}
In what follows, we cast the function class $\cG^\dagger_i$ defined in \eqref{eq:g-class-i} as the RKHS $\cH_\ltv$, which plays a key role in our subsequent analysis of the approximation error. Recall that the latent-to-value mapping $\psi(z; \mask)$ is defined in \eqref{eq:exact-sm}, which induces the kernel function $\fk_{\ltv}(z, z'; \mask) = \psi(z; \mask)^\top \psi(z'; \mask)$ and the following RKHS,
\#\label{eq:def-rkhs-ltv}
\cH_{\ltv} = \biggl\{g_\alpha(z; \mask) = \int \alpha(z')\fk_\ltv(z', z; \mask) \ud z': \bigl\|g_\alpha(\cdot; \mask)\bigr\|_{\cH_\ltv} < \infty \biggr\},
\#
which is equipped with the inner product $\la \cdot, \cdot \ra_{\cH_{\ltv}}$. By the definition of the kernel function $\fk_\ltv(\cdot, \cdot; \mask)$, we have for any $g_\alpha(\cdot; \mask) \in \cH_\ltv$ that
\#\label{eq:g-rkhs}
g_\alpha(z; \mask) & = \int \alpha(z')\fk_\ltv(z', z; \mask) \ud z'\notag\\
& = \biggl(\underbrace{\int \alpha(z')\psi(z'; \mask) \ud z'}_{\displaystyle w_\alpha \in \RR^{d}}\biggr)^\top \psi(z; \mask) = w_\alpha^\top \psi(z; \mask).
\#
Here $w_\alpha$ corresponds to the parameter vector $w_i \in \RR^{d}$ in the function class $\cG^\dagger_i$. On the other hand, we have
\#\label{eq:g-rkhs-norm}
\bigl\|g_\alpha(\cdot; \mask)\bigr\|^2_{\cH_\ltv} & = \bigl\la g_\alpha(\cdot; \mask), g_\alpha(\cdot; \mask) \bigr\ra_{\cH_\ltv}\notag\\
& = \int \alpha(z')\fk_\ltv(z', z; \mask)\alpha(z)\ud z \ud z'\notag\\
& = \biggl(\int \alpha(z')\psi(z'; \mask)\ud z'\biggr)^\top\biggl(\int \alpha(z)\psi(z; \mask)\ud z \biggr) = \|w_\alpha\|_2^2,
\#
where the third equality follows from the definition of $\fk_\ltv(\cdot, \cdot; \mask)$ and the last equality follows from the definition of $w_\alpha$ in \eqref{eq:g-rkhs}. Combining \eqref{eq:g-rkhs}, \eqref{eq:g-rkhs-norm}, and the definition of $\cH_\ltv$ in \eqref{eq:def-rkhs-ltv}, we have
\$
\cH_\ltv = \bigl\{w_\alpha^\top \psi(z; \mask): w_\alpha \in \RR^d, \|w_\alpha\|_2<\infty\bigr\} = \cG_i^\dagger.
\$
Thus, the function class $\cG_i^\dagger$, which correspondes to the $i$-th entry of the function class $\cG$, is the RKHS $\cH_\ltv$. Here the function class $\cG^\dagger$ is defined in \eqref{eq:g-class}, which contains the latent-to-target function $g_W^\dagger(z; \mask) = W^\top \psi(z;\mask)$ within the reweighted CME attention $f^\dagger_W(X; \mask)$ defined in \eqref{eq:f-att-cme}.

\subsection{Supervised Learning}\label{sec::pf_lem_approx_error}
\begin{proof}[Proof of Theorem \ref{lem::approx_error}]
Suppose $f_\theta \in \cF_\att$ and $\epsilon_\att \in [0,+\infty)$ satisfy \eqref{eq:appprox-sm-cme}. By the definition of the approximation error $\cE_{\rm approx}$ in \eqref{eq::risk_decomp}, we have
\#\label{eq::pf_lem_approx_eq1}
\cE_{\rm approx}& = \min_{f\in\cF_\att}\EE\Bigl[\cL\bigl((X, y), f\bigr)\Bigr] - \EE\Bigl[\cL\bigl((X, y), f^*\bigr)\Bigr]\notag\\
& \leq \EE\Bigl[\cL\bigl((X, y), f_\theta\bigr)\Bigr] - \EE\Bigl[\cL\bigl((X, y), f^*\bigr)\Bigr]\notag\\
& = \EE\Bigl[\bigl\|f_\theta(X ;\mask) - f^*(X)\bigr\|_2^2\Bigr]\notag\\
& \leq  2\EE\Bigl[\bigl\|f_\theta(X; \mask) - f_W^\dagger(X; \mask)\bigr\|_2^2\Bigr] + 2\EE\Bigl[\bigl\|f_W^\dagger(X; \mask) - f^*(X)\bigr\|_2^2\Bigr]\notag\\
& \leq 2\epsilon^2_\att + 2\EE\Bigl[\bigl\|f_W^\dagger(X; \mask) - f^*(X)\bigr\|_2^2\Bigr],
\#
where the second line follows from the fact that $f_\theta \in \cF_\att$, the third line follows from the fact that $\cL((X, y), f) = \|y - f(X)\|_2^2$ and the definition of the regression function $f^*(X) = \EE[y\given X]$, and the last line follows from \eqref{eq:appprox-sm-cme} and the definition of $f_W^\dagger(X; \mask)$ in \eqref{eq:f-att-cme}.

In what follows, we characterize the gap between the regression function $f^*(X)$  and the reweighted CME attention in $f_W^\dagger(X; \mask)$, which is used as a surrogate function for approximating $f^*(X)$. By \eqref{eq:latent-target}, we have
\$
f^*(X) & = \EE_{z \given X}\bigl[g^*(z; \mask)\bigr]\notag\\
& = \EE_{z \given X}\bigl[g_{W}^\dagger(z; \mask)\bigr] + \EE_{z \given X}\bigl[g^*(z; \mask) - g_{W}^\dagger(z; \mask)\bigr]\notag\\
& = f_{W}^\dagger(X; \mask) + \EE_{z \given X}\bigl[g^*(z; \mask) - g_{W}^\dagger(z; \mask)\bigr],
\$
where the last line follows from \eqref{eq:f-att-cme-g}. Hence, it holds for $f_{W}^\dagger(X; \mask)$ that
\#\label{eq::pf_lem_approx_eq2}
\EE\Bigl[\bigl\|f^*(X) - f_{W}^\dagger(X; \mask)\bigr\|_2^2\Bigr] = \EE\biggl[ \Bigl\|\EE_{z \given X}\bigl[g^*(z; \mask) - g^\dagger_{W}(z; \mask)\bigr]\Bigr\|_2^2\biggr].
\#
By Assumption \ref{asu::approx_err_sl}, we have
\#\label{eq::pf_lem_approx_eq3}
\Bigl\|\EE_{z \given X}\bigl[g^*(z; \mask) - g^\dagger_{W}(z; \mask)\bigr]\Bigr\|_2^2 & = \sum_{i = 1}^{d_\ty}\EE_{z \given X}\bigl[g_i^*(z; \mask) - g^\dagger_{W, i}(z; \mask)\bigr]^2\notag\\
& \leq \sum_{i = 1}^{d_\ty}\bigl\|g_i^*(\cdot; \mask) - g^\dagger_{W, i}(\cdot; \mask)\bigr\|_\infty^2 \leq \epsilon^2_g(\mask),
\#
where the $\ell_\infty$-norm is taken over the latent variable $z$. Taking \eqref{eq::pf_lem_approx_eq3} into \eqref{eq::pf_lem_approx_eq2}, we obtain
\#\label{eq::pf_lem_approx_eq4}
\EE\Bigl[\bigl\|f^*(X) - f_{W}^\dagger(X; \mask)\bigr\|_2^2\Bigr] \leq \epsilon^2_g(\mask).
\#
Taking \eqref{eq::pf_lem_approx_eq4} into \eqref{eq::pf_lem_approx_eq1}, we obtain
\$
\cE_{\rm approx} \leq 2\epsilon_\att^2 + 2\epsilon_g^2(\mask),
\$
which concludes the proof of Theorem \ref{lem::approx_error}.
\end{proof}

\subsection{Self-Supervised Learning}\label{sec::pf_lem_approx_ssl}
\begin{proof}[Proof of Theorem \ref{lem::approx_ssl}]
Suppose that $f_\dt(\overbar X; \mask_\dt)$ attains the infimum on the right-hand side of \eqref{eq:approx-sm-cme-ssl}. Recall that $B = W_\dt^\top(W_\ssl W^\top_\ssl)^{-1}W_\ssl$ is defined in \eqref{eq:b}. We define a surrogate function as follows,
\$
\tilde f_\pre(\overbar X; \mask_\pre) = B \hat{f}_{\pre}(\overbar X; \mask_\pre).
\$
Here $\hat f_\pre(\overbar X;\mask_\pre)$ is the attention neural network obtained from the pretraining process. Recall that the regression function $f^*_\dt(\overbar X)$ for the downstream task is defined in \eqref{eq:fg-dt} and $f^\dagger_{W_\dt}(\overbar X; \mask_\dt)$ is defined in \eqref{eq:surrogate-ssl}. For the approximation error $\cE_{\rm approx}$ defined in \eqref{eq:approx-ssl}, we have
\#\label{eq::pf_lem_approx_ssl_eq1}
\cE_{\text{approx}} &\leq \EE\Bigl[\cL\bigl((\overbar X, y_\dt), f_{\dt}\bigr)\Bigr] - \EE\Bigl[\cL\bigl((\overbar X, y_\dt), f_\dt^*\bigr)\Bigr]\\
& = \EE\Bigl[\bigl\|f_{\dt}(\overbar X; \mask_\dt) - f^*_\dt(\overbar X)\bigr\|^2_2\Bigr]\notag\\
& = \EE\Bigl[\bigl\|f_{\dt}(\overbar X; \mask_\dt) -\tilde f_{\pre}(\overbar X; \mask_\pre) + \tilde f_{\pre}(\overbar X; \mask_\pre) - f^\dagger_{W_\dt}(\overbar X; \mask_\dt)\notag\\
& \quad\qquad + f^\dagger_{W_\dt}(\overbar X; \mask_\dt) - f^*_\dt(\overbar X)\bigr\|^2_2\Bigr]\notag\\
& \leq 3\EE\Bigl[\bigl\|f_{\dt}(\overbar X; \mask_\dt) - \tilde f_{\pre}(\overbar X; \mask_\pre)\bigr\|^2_2\Bigr] + 3\EE\Bigl[\bigl\|\tilde f_{\pre}(\overbar X; \mask_\pre) - f^\dagger_{W_\dt}(\overbar X; \mask_\dt)\bigr\|^2_2\Bigr]\notag\\
& \quad \qquad  + 3 \EE\Bigl[\bigl\|f^\dagger_{W_\dt}(\overbar X; \mask_\dt) - f^*_\dt(\overbar X)\bigr\|^2_2\Bigr]\notag\\
& \leq 3\epsilon^2_\agg(B) + 3\underbrace{\EE\Bigl[\bigl\|\tilde f_{W_\dt}(\overbar X; \mask_\dt) - f^\dagger_{W_\dt}(\overbar X; \mask_\dt)\bigr\|^2_2\Bigr]}_{\displaystyle \text{(i)}}  + 3 \underbrace{\EE\Bigl[\bigl\|f^\dagger_{W_\dt}(\overbar X; \mask_\dt) - f^*_\dt(\overbar X)\bigr\|^2_2\Bigr]}_{\displaystyle \text{(ii)}},\notag
\#
where the second line follows from the definition of the regression function $f_\dt^*(\overbar X) = \EE[y_\dt\given \overbar X]$ and the last line follows from \eqref{eq:approx-sm-cme-ssl}. In what follows, we characterize terms (i) and (ii).
\vskip4pt
\noindent{\bf Term (i).} Recall that the regression function $f^*_\pre(\overbar X)$ for the pretraining process is defined in \eqref{eq::def_hu_ssl}. For any truncated input sequence $\overbar X$, it holds that
\#\label{eq::pf_lem_approx_ssl_eq5}
&  \bigl\|\tilde f_\pre(\overbar X; \mask_\pre) - f_{W_\dt}^\dagger(\overbar X; \mask_\dt) \bigr\|_2^2\notag\\
  &\quad = \Bigl\|B \hat f_\pre(z; \mask_\pre) - W^\top_\dt\EE_{z \given \overbar X}\bigl[\psi_\dt(z; \mask_\dt)\bigr]\Bigr\|_2^2\notag\\
  &\quad = \biggl\| B \bigl(\hat f_\pre(\overbar X; \mask_\pre) - f^*_\pre(\overbar X)\bigr)  + \Bigl(B f^*_\pre(\overbar X)- \EE_{z \given \overbar X}\bigl[W^\top_\dt\psi_\dt(z; \mask_\dt)\bigr]\Bigr)\biggr\|_2^2\notag\\
    &\quad \leq 2\underbrace{\Bigl\| B \bigl(\hat f_\pre(\overbar X; \mask_\pre) - f^*_\pre(\overbar X)\bigr)\Bigr\|_2^2}_{\displaystyle \text{(i.a)}}  + 2\underbrace{\biggl\|\Bigl(B f^*_\pre(\overbar X)- \EE_{z \given \overbar X}\bigl[W^\top_\dt\psi_\dt(z; \mask_\dt)\bigr]\Bigr)\biggr\|_2^2}_{\displaystyle \text{(i.b)}},
    \#
where the second line follows from the definition of $f^\dagger_{W_\dt}(\overbar X; \mask_\dt)$ in \eqref{eq:f-att-cme-g-ssl}. In the sequel, we characterize terms (i.a) and (i.b). By Assumption \ref{asu::effec_ssl_param}, we have
\#\label{eq::pf_lem_approx_ssl_eq6}
\text{(i.a)}  \leq \|B\|^2_{2} \cdot \bigl\|\hat f_\pre(\overbar X; \mask_\pre) - f^*_\pre(\overbar X)\bigr\|_2^2 \leq \mu \cdot \bigl\|\hat f_\pre(\overbar X; \mask_\pre) - f^*_\pre(\overbar X)\bigr\|_2^2.
\#
Recall that $g_{W_\ssl}^\dagger(z; \mask_\dt)$ is defined in Assumption \ref{asu::approx_err_ss}. Since $BW^\top_\ssl = W^\top_\dt$, we have
\#\label{eq::pf_lem_approx_ssl_eq7}
\text{(i.b)} & = \Bigl\|B\EE_{z \given \overbar X}\bigl[g^*_\pre(z; \mask_\pre) - W_\ssl^\top\psi_\dt(z; \mask_\dt)\bigr]\Bigr\|_2^2\notag\\
& \leq \|B\|_2^2 \cdot \Bigl\|\EE_{z \given \overbar X}\bigl[g^*_\pre(z; \mask_\pre) - g_{W_\ssl}^\dagger(z; \mask_\dt)\bigr]\Bigr\|_2^2\notag\\
& \leq \mu \cdot \sum_{i = 1}^{d}\EE_{z \given \overbar X}\bigl[g^*_{\pre, i}(z; \mask_\pre) - g_{W_\ssl, i}^\dagger(z; \mask_\dt)\bigr]^2\notag\\
& \leq \mu \cdot \sum_{i = 1}^{d}\bigl\|g^*_{\pre, i}(\cdot; \mask_\pre) - g_{W_\ssl, i}^\dagger(\cdot; \mask_\dt)\bigr\|_\infty^2\notag\\
& \leq \mu \cdot \epsilon^2_\ssl(\mask_\pre, \mask_\dt),
\#
where the third line follows from Assumption \ref{asu::effec_ssl_param} and the last line follows from Assumption \ref{asu::approx_err_ss}. Taking \eqref{eq::pf_lem_approx_ssl_eq6} and \eqref{eq::pf_lem_approx_ssl_eq7} into \eqref{eq::pf_lem_approx_ssl_eq5}, we obtain
\#\label{eq::pf_lem_approx_ssl_eq8}
\text{(i)} & = \EE\Bigl[\bigl\|\tilde f_\pre(\overbar X; \mask_\pre) - f_{W_\dt}^\dagger(\overbar X; \mask_\dt) \bigr\|_2^2\Bigr]\notag\\
& \leq \EE\Bigl[ 2\mu \cdot \bigl\|\hat f_\pre(\overbar X; \mask_\pre) - f^*_\pre(\overbar X)\bigr\|_2^2 + 2\mu \cdot \epsilon_\ssl(\mask_\pre, \mask_\dt)\Bigr]\notag\\
& \leq 2\mu \cdot \cE_{\rm approx}^\pre + \mu \cdot \EE\Bigl[ \epsilon^2_\ssl(\mask_\pre, \mask_\dt)\Bigr]\notag\\
& = 2\mu \cdot \bigl(\cE_{\rm approx}^\pre + \epsilon^2_\ssl(\mask_\pre, \mask_\dt)\bigr),
\#
where the third line follows from the definition of the regression function $f^*_\pre(\overbar X) = \EE[y_\pre\given \overbar X]$ for the pretraining process and the definition of $\cE_{\rm approx}^\pre$ in \eqref{eq:approx-pre}.
\vskip4pt
\noindent{\bf Term (ii).} By the same argument for \eqref{eq::pf_lem_approx_eq2}, we have
\#\label{eq::pf_lem_approx_ssl_eq2}
\text{(ii)}= \EE\biggl[ \Bigl\|\EE_{z \given \overbar X}\bigl[g_\dt^*(z; \mask_\dt) - g^\dagger_{W_\dt}(z; \mask_\dt)\bigr]\Bigr\|_2^2\biggr].
\#
By Assumption \ref{asu::approx_err_ss}, we have
\#\label{eq::pf_lem_approx_ssl_eq3}
& \Bigl\|\EE_{z \given \overbar X}\bigl[g_\dt^*(z; \mask_\dt) - g^\dagger_{W_\dt}(z; \mask_\dt)\bigr]\Bigr\|_2^2\notag\\
 &\quad = \sum_{i = 1}^{d_\ty}\EE_{z \given \overbar X}\bigl[g_{\dt, i}^*(z; \mask_\dt) - g^\dagger_{W_\dt, i}(z; \mask_\dt)\bigr]^2\notag\\
& \quad \leq \sum_{i = 1}^{d_\ty}\bigl\|g_{\dt, i}^*(\cdot; \mask_\dt) - g^\dagger_{W_\dt, i}(\cdot; \mask_\dt)\bigr\|_\infty^2 \leq \epsilon^2_g(\mask_\dt).
\#
Taking \eqref{eq::pf_lem_approx_ssl_eq3} into \eqref{eq::pf_lem_approx_ssl_eq2}, we obtain
\#\label{eq::pf_lem_approx_ssl_eq4}
\text{(ii)} \leq \epsilon^2_g(\mask_\dt).
\#
Taking \eqref{eq::pf_lem_approx_ssl_eq8} and \eqref{eq::pf_lem_approx_ssl_eq4} into \eqref{eq::pf_lem_approx_ssl_eq1}, we conclude the proof of Theorem \ref{lem::approx_ssl}.
\end{proof}


\section{Auxiliary Proofs for Generalization}\label{appendix:gen-aux}
\subsection{Proof of Lemma \ref{lemma:dudley}}\label{appendix:dudley}
\begin{proof}
Throughout this proof, we consider a fixed dataset $\cD_n = \{(X_i, y_i)\}_{i \in [n]}$. Let $\varepsilon_m = 2^{-m}$ with $m \in [M+2]$, where $M$ is a positive integer. We denote by $\mathfrak{N}_m$ the covering of the empirical image class $\mathfrak{I}_{\cD_n}(\cF_j)$ that achieves the covering number $N(\mathfrak{I}_{\cD_n}(\cF_j), \varepsilon_m, \|\cdot\|_{r, \infty})$. In other words, for any $f_j \in \cF_j$, let $\hat{f}^m[f_j] = (\hat{f}^m[f_{j, i}])_{i \in [n]}^\top \in \mathfrak{N}_m$ be the nearest element of $f_j(X_i)$ in $\mathfrak{N}_m$, which implies that
\$
\max_{i \in [n]}\bigl|f_{j}(X_i)  - \hat{f}^m[f_{j, i}]\bigr| \leq \varepsilon_m.
\$
We have
\#\label{eq:dudley-decomp}
\cR_{\cD_n}(\cF_j)& = \EE\biggl[\sup_{f_j \in \cF_j}\frac{1}{n}\sum_{i = 1}^n\epsilon_i\cdot f_j(X_i)\biggr]\\
& = \EE\biggl[\sup_{f_j \in \cF_j}\biggl\{\frac{1}{n}\sum_{i = 1}^n\epsilon_i\cdot \bigl(f_j(X_i) - \hat{f}^M[f_{j, i}]\bigr) + \frac{1}{n}\sum_{m = 1}^{M-1}\sum_{i = 1}^n\epsilon_i \cdot (\hat{f}^m[f_{j, i}] - \hat{f}^{m+1}[f_{j, i}])\notag\\
& \qquad - \frac{1}{n}\sum_{i = 1}^n\epsilon_i\cdot \hat{f}^{1}[f_{j, i}]\biggr\}\biggr]\notag\\
& \leq \EE\biggl[\sup_{f_j \in \cF_j}\frac{1}{n}\sum_{i = 1}^n\epsilon_i\cdot \bigl(f_j(X_i) - \hat{f}^M[f_{j, i}]\bigr)\biggr] + \sum_{m = 1}^{M-1}\EE\biggl[\sup_{f_j \in \cF_j}\frac{1}{n}\sum_{i = 1}^n\epsilon_i \cdot (\hat{f}^m[f_{j, i}] - \hat{f}^{m+1}[f_{j, i}])\biggr]\notag\\
& \qquad + \EE\biggl[\sup_{f_j \in \cF_j}\frac{1}{n}\sum_{i = 1}^n\epsilon_i\cdot \hat{f}^{1}[f_{j, i}]\biggr]\notag\\
& \leq \underbrace{\EE\biggl[\sup_{f_j \in \cF_j}\frac{1}{n}\sum_{i = 1}^n\epsilon_i\cdot \bigl(f_j(X_i) - \hat{f}^M[f_{j, i}]\bigr)\biggr]}_{\displaystyle \text{(i)}} + \sum_{m = 1}^{M-1}\EE\biggl[\underbrace{\sup_{f_j \in \cF_j}\frac{1}{n}\sum_{i = 1}^n\epsilon_i \cdot (\hat{f}^m[f_{j, i}] - \hat{f}^{m+1}[f_{j, i}])}_{\displaystyle \text{(ii)}}\biggr],\notag
\#
where the last line follows from the choice $\mathfrak{N}_1 = \{0\}$ and the fact that $f_{j}(X_i) \in [0, 1/2]$ for any $i \in [n]$. In what follows, we analyze terms (i) and (ii).
\vskip4pt
\noindent{\bf Term (i).} We have
\#\label{eq:dudley-i}
\text{(i)} \leq  \EE\biggl[\frac{1}{n}\sum_{i = 1}^n|\epsilon_i|\biggr]\cdot \sup_{f_j \in \cF_j}\max_{i \in [n]}\bigl|f_{j}(X_i) - \hat{f}^M[f_{j, i}]\bigr| \leq n \cdot \varepsilon_M.
\#
\vskip4pt
\noindent{\bf Term (ii).} Let $f_{j, \cD_n} = (f_j(X_i))_{i \in [n]} \in \RR^{1 \times n}$. We have
\#\label{eq:dudley-ii-temp}
 \sup_{f_j \in \cF_j}\bigl\|\hat{f}^m[f_j] - \hat{f}^{m+1}[f_j]\bigr\|_2& \leq \sup_{f_j \in \cF_j}\bigl\|\hat{f}^m[f_j] - f_{j, \cD_n}\bigr\|_2 + \sup_{f_j \in \cF_j}\bigl\|f_{j, \cD_n} - \hat{f}^{m+1}[f_j]\bigr\|_2\notag\\
& \leq \sqrt{n} \cdot \sup_{f_j \in \cF_j}\bigl\|\hat{f}^m[f_j] - f_{j, \cD_n}\bigr\|_\infty + \sqrt{n} \cdot \sup_{f_j \in \cF_j}\bigl\|f_{j, \cD_n} - \hat{f}^{m+1}[f_j]\bigr\|_\infty\notag\\
& \leq \sqrt{n} \cdot \varepsilon_m + \sqrt{n} \cdot \varepsilon_{m+1} = 3\sqrt{n} \cdot \varepsilon_{m+1}.
\#
Combining \eqref{eq:dudley-ii-temp} with the Massart's finite class lemma \citep{mohri2018foundations}, we obtain
\#\label{eq:dudley-ii}
\EE\biggl[\sup_{f_j \in \cF_j}\sum_{i = 1}^n\epsilon_i \cdot \bigl(\hat{f}^m[f_{j, i}] - \hat{f}^{m+1}[f_{j, i}]\bigr)\biggr] & \leq 3\sqrt{n} \cdot \varepsilon_{m+1} \cdot \sqrt{2 \log \bigl(|\mathfrak{N}_m|\cdot |\mathfrak{N}_{m+1}|\bigr)}\notag\\
& \leq 6\sqrt{n} \cdot \varepsilon_{m+1} \cdot \sqrt{\log |\mathfrak{N}_{m+1}|},
\#
where the second line follows from the fact that $|\mathfrak{N}_{m+1}| \geq |\mathfrak{N}_{m}|$. Taking \eqref{eq:dudley-i} and \eqref{eq:dudley-ii} into \eqref{eq:dudley-decomp}, we obtain
\$
\cR_{\cD_n}(\cF_j) & \leq \varepsilon_M + \frac{6}{\sqrt{n}} \cdot\sum_{m = 1}^{M-1}\varepsilon_{m+1} \cdot \sqrt{\log |\mathfrak{N}_{m+1}|}\\
& \leq \varepsilon_M + \frac{12}{\sqrt{n}} \cdot\sum_{m = 1}^{M}(\varepsilon_m - \varepsilon_{m+1}) \cdot \sqrt{\log |\mathfrak{N}_{m}|}\\
& \leq \varepsilon_M + \frac{12}{\sqrt{n}} \int_{\varepsilon_{M+1}}^{1/2}\sqrt{\log N\bigl(\mathfrak{I}_{\cD_n}(\cF_j), \varepsilon, \|\cdot\|_{r, \infty}\bigr)} \ud \varepsilon\\
& \leq 4\xi + \frac{12}{\sqrt{n}}\int_{\xi}^{1/2}\sqrt{\log N\bigl(\mathfrak{I}_{\cD_n}(\cF_j), \varepsilon, \|\cdot\|_{r, \infty}\bigr)} \ud \varepsilon,
\$
where the last inequality holds for any $0 < \xi < 1$ and the smallest $M$ such that $\xi \leq \varepsilon_{M+1}$, which implies that $\varepsilon_{M} = 2\varepsilon_{M+1} < 4\xi$. Therefore, we conclude the proof of Lemma \ref{lemma:dudley}.
\end{proof}

\subsubsection{Proof of Matrix Ball Covering Lemma \ref{lemma:cover_matrix_ball}}\label{appendix:cover_ball}
\begin{proof}[Proof of Lemma \ref{lemma:cover_matrix_ball}]
Let $M^\top  = (m_1, \dots, m_{d_1}) \in \RR^{d_2 \times d_1}$, where $m_j \in \RR^{d_1}$ with $j \in [d_2]$. We define the vectorization of the matrix $M \in \RR^{d_1 \times d_2}$ as $\vec(M) = (m_j^\top)^\top_{j \in [d_2]} \in \RR^{d_1d_2}$. We define the sectional norm for the vector $\vec(M) \in \RR^{d_1d_2}$ as follows,
\$
\bigl\|\vec(M)\bigr\|_{r(d_2), s(d_1)} = \|M^\top\|_{r, s},
\$
which can be verified to be a proper norm. In Lemma \ref{lemma:cover_ball}, setting 
\$
\mathbb{B} = \mathbb{B}_* = \bigl\{m \in \RR^{d_1d_2}: \|m\|_{r(d_2), s(d_1)} \leq 1\bigr\} = \bigl\{M^\top \in \RR^{d_2 \times d_1}: \|M^\top\|_{r, s} \leq 1\bigr\},
\$
and $\|\cdot\| = \|\cdot\|_* = \|\cdot\|_{r(d_2), s(d_1)}$, we obtain
\$
& \log N\Bigl(\bigl\{M^\top \in \RR^{d_2 \times d_1}: \|M^\top\|_{r, s} \leq R^{\mathrm{m}}\bigr\}, \varepsilon, \|\cdot\|_{r, s}\Bigr)\\
& \quad = \log N\Bigl(\bigl\{m \in \RR^{d_1d_2}: \|m\|_{r(d_2), s(d_1)} \leq 1 \bigr\}, \varepsilon/R^M, \|\cdot\|_{r, s}\Bigr)\\
& \quad \leq \frac{\text{vol}(2R_M/\varepsilon \cdot \mathbb{B}_* + \mathbb{B})}{\text{vol}(\mathbb{B})} = d_1d_2\cdot \log \biggl(1+\frac{2R_M}{\varepsilon}\biggr).\notag
\$
Therefore, we conclude the proof of Lemma \ref{lemma:cover_matrix_ball}.
\end{proof}

\subsection{Lipschitz Continuity of Multihead Attention}\label{appendix:lip-sm}
\begin{lemma}[Lipschitz Continuous Softmax]\label{lemma:lip-sm}
Let $(r, s)$ be a conjugate pair. Under Assumption \ref{assumption:ker}, it holds for any $q, \hat{q} \in \RR^{d_\tp}$ and $K = (k^\ell)_{\ell \in [L]}^\top, \hat{K} = (\hat k^\ell)_{\ell \in [L]}^\top \in \RR^{L \times d_\tp}$ that
\#
\Bigl\|\Norm_\sm\bigl(\fk_\rbf(K, q)\bigr) - \Norm_\sm\bigl(\fk_\rbf(\hat K, q)\bigr)\Bigr\|_{1} & \leq \bigl(\|q\|_r + \|K^\top\|_{r, \infty}\bigr) \cdot\|K^\top  - \hat{K}^\top \|_{r, \infty}, \label{eq:sm1}\\
\Bigl\|\Norm_\sm\bigl(\fk_\rbf(K, q)\bigr) - \Norm_\sm\bigl(\fk_\rbf(K, \hat q)\bigr)\Bigr\|_{1} & \leq \bigl(\|q\|_r + \|K^\top\|_{r, \infty}\bigr) \cdot \|q - \hat{q}\|_r.\label{eq:sm2}
\#
\end{lemma}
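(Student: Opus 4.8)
The plan is to reduce the normalized Gaussian-kernel vector to an ordinary softmax, exploit the Lipschitz continuity of the softmax map from $\ell_\infty$ to $\ell_1$, and then bound the $\ell_\infty$-perturbation of the resulting logit vectors by a single Hölder estimate combined with the identity $\|k\|_2^2-\|\hat k\|_2^2=\langle k-\hat k,\,k+\hat k\rangle$. Throughout, $(r,s)$ is the conjugate pair fixed in the statement, and $\sigma=(2d_\tp)^{1/s}$ is the kernel bandwidth from Assumption~\ref{assumption:ker}.

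\textbf{Step 1 (softmax reformulation).} Expanding $\fk_\rbf(k^\ell,q)=\exp(-\|q-k^\ell\|_2^2/2\sigma^2)$ and writing $\|q-k^\ell\|_2^2=\|q\|_2^2-2\langle q,k^\ell\rangle+\|k^\ell\|_2^2$, the term $-\|q\|_2^2/2\sigma^2$ is identical across all coordinates $\ell\in[L]$ and therefore cancels in the ratio defining $\Norm_\sm$. I would use this to record the identity $\Norm_\sm(\fk_\rbf(K,q))=\softmax(u)$ with $u_\ell=\sigma^{-2}\langle q,k^\ell\rangle-\sigma^{-2}\|k^\ell\|_2^2/2$; more generally, any per-query constant shift of $u$ is admissible since $\softmax$ is shift-invariant, and this is exactly what makes the query-dependent normalizing constant of the RBF kernel harmless.

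\textbf{Step 2 (Lipschitz softmax and logit perturbation).} Next I would establish $\|\softmax(u)-\softmax(\tilde u)\|_1\le\|u-\tilde u\|_\infty$: the Jacobian of $\softmax$ at $p=\softmax(u)$ is $\mathrm{diag}(p)-pp^\top$, and for any $x$ the quantity $\|(\mathrm{diag}(p)-pp^\top)x\|_1=\sum_\ell p_\ell|x_\ell-\langle p,x\rangle|$ is the mean absolute deviation of a $[\,\min_\ell x_\ell,\,\max_\ell x_\ell\,]$-valued random variable, hence at most $(\max_\ell x_\ell-\min_\ell x_\ell)/2\le\|x\|_\infty$ by Popoviciu's inequality; integrating along the segment joining $u$ and $\tilde u$ yields the claim (a weaker absolute constant would also suffice downstream). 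It then remains to bound $\|u-\tilde u\|_\infty$. For \eqref{eq:sm1} the two logit vectors share the query, so $u_\ell-\tilde u_\ell=\sigma^{-2}\langle q-(k^\ell+\hat k^\ell)/2,\,k^\ell-\hat k^\ell\rangle$, and Hölder's inequality with $(r,s)$ gives $|u_\ell-\tilde u_\ell|\le\sigma^{-2}\,\|q-(k^\ell+\hat k^\ell)/2\|_r\,\|k^\ell-\hat k^\ell\|_s$; bounding the first factor by $\|q\|_r+\|K^\top\|_{r,\infty}$ and maximizing over $\ell$ gives the stated estimate. For \eqref{eq:sm2} the two logit vectors share the key matrix, so $u_\ell-\tilde u_\ell=\sigma^{-2}\langle q-\hat q,\,k^\ell\rangle$ and the same Hölder step applies directly.

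\textbf{Main obstacle.} The delicate part is the norm bookkeeping: Hölder naturally produces the dual norm $\|k^\ell-\hat k^\ell\|_s$ rather than $\|K^\top-\hat K^\top\|_{r,\infty}=\max_\ell\|k^\ell-\hat k^\ell\|_r$, so one must simultaneously control the factor $\sigma^{-2}$ and the exchange between $\|\cdot\|_s$ and $\|\cdot\|_r$. I expect this to be handled by the finite-dimensional comparison $\|v\|_s\le d_\tp^{(1/s-1/r)_+}\|v\|_r$ for $v\in\RR^{d_\tp}$, observing that the choice $\sigma=(2d_\tp)^{1/s}$ in Assumption~\ref{assumption:ker} is calibrated precisely so that $\sigma^{-2}d_\tp^{(1/s-1/r)_+}\le 1$: the $d_\tp$-exponents telescope to $-(1/r+1/s)=-1$ when $r>2$, and the factor is trivially $\le 1$ when $r\le 2$ since $\sigma\ge 1$. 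A minor related point is that the midpoint bound $\|q-(k^\ell+\hat k^\ell)/2\|_r\le\|q\|_r+\|K^\top\|_{r,\infty}$ leaves a quadratic-in-perturbation remainder $\frac12\|k^\ell-\hat k^\ell\|_r$, which is absorbed because in every application $K$ and $\hat K$ range over a common bounded parameter set. Once these bookkeeping steps are in place, combining Steps~1 and~2 yields both \eqref{eq:sm1} and \eqref{eq:sm2}.
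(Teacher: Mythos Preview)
Your argument is correct and takes a somewhat different decomposition from the paper. The paper works directly with the Jacobian $J_\ell=\partial p/\partial k^\ell$ of the softmax output with respect to each key, bounds $\|J_\ell\|_{r\to 1}$ via the rank-one structure $P(e_\ell q^\top - E_{\ell,\ell}K)/\sigma^2$, and then controls $\sum_\ell\|J_\ell\|_{r\to 1}$ through the inequality $\sum_\ell p_\ell\|e_\ell - p\|_s\le 2^{1/s}$; the $d_\tp$-power arises from passing from the $\|\cdot\|_{1\to 1}$ to the $\|\cdot\|_{r\to 1}$ operator norm. You instead factor the map as $\softmax\circ(\text{logits})$: the $\ell_\infty\to\ell_1$ Lipschitz constant of $\softmax$ is handled once by the Popoviciu/MAD argument, and the logit perturbation is a single H\"older estimate, with the dimension factor entering through the $\|\cdot\|_s$-to-$\|\cdot\|_r$ comparison in $\RR^{d_\tp}$. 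Your route is more modular (the softmax step is a reusable black box) and makes transparent why the bandwidth $\sigma=(2d_\tp)^{1/s}$ absorbs the dimension factor; the paper's route avoids the case split on whether $r\le 2$ and produces the single constant $(2d_\tp)^{1/s}/\sigma^2$ in one stroke. Both approaches actually yield only a \emph{local} Lipschitz constant $\|q\|_r+\|K^\top\|_{r,\infty}$, so strictly the right-hand side should carry $\max(\|K^\top\|_{r,\infty},\|\hat K^\top\|_{r,\infty})$ after integrating along the segment; as you already observe, this is immaterial in every downstream use, where $K$ and $\hat K$ obey a common a~priori bound.
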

\begin{proof}
Let $P = \text{diag}(p) - p p^\top \in \RR^{L \times L}$ with $p = (p_\ell)_{\ell \in [L]}=\Norm_\sm\bigl(\fk_\rbf(K, q)\bigr) \in \RR^{L}$. We have
\$
p_\ell \propto \exp\bigl\{- \|q - k^\ell\|_2^2/2\sigma^2\bigr\}.
\$
We define $g_\ell = - \|q - k^\ell\|_2^2/2\sigma^2$ and $g = (g_\ell)_{\ell \in [L]}^\top \in \RR^L$. Let the Jacobian of $p \in \RR^L$ with respect to $k^\ell \in \RR^{d_\tp}$ be $J_\ell \in \RR^{L\times d_\tp}$. We have
\$
J_\ell = \frac{\partial p}{\partial k^\ell} =  \frac{\partial p}{\partial g}\cdot\frac{\partial g}{\partial k^\ell} =P\frac{\partial g}{\partial k^\ell},
\$
where $\partial g/\partial k^\ell = (e_\ell q^\top - E_{\ell, \ell}K)/\sigma^2$. Here $E_{\ell, \ell'} \in \RR^{L \times L}$ is the unit matrix whose $(\ell, \ell')$-th entry is one and all other entries are zero. Note that
\$
\biggl\|\sum_{\ell = 1}^L J_\ell \Delta_\ell \biggr\|_1 \leq \sum_{\ell = 1}^L \|J_\ell \Delta_\ell \|_1 \leq \sum_{\ell = 1}^L \|J_\ell\|_{r \to 1} \cdot \|\Delta_\ell \|_r \leq \|\Delta\|_{r, \infty} \cdot \sum_{\ell = 1}^L\|J_\ell\|_{r \to 1},
\$
where $\Delta = (\Delta_\ell^\top)_{\ell \in [L]}$. Thus, the Lipschitz continuity constant of $\softmax(q, K)$ is bounded by $\sum_{\ell = 1}^L\|J_\ell\|_{r \to 1}$. Let $e_\ell \in \RR^L$ be the $\ell$-th one-hot vector with $\ell \in [L]$. For any $\ell \in [L]$, we have
\#\label{eq:jl-bound}
\|J_\ell\|_{r \to 1} & \leq \frac{d_\tp^{1 - 1/r}}{\sigma^2}\cdot \bigl\|P (e_\ell q^\top - E_{\ell, \ell}K)\bigr\|_{1}\notag\\
 & = \frac{d_\tp^{1/s}}{\sigma^2}\cdot p_\ell \cdot \bigl\|(e_\ell - p)(q - k^\ell)^\top\bigr\|_{1}\notag\\
 & = \frac{d_\tp^{1/s}p_\ell}{\sigma^2}\cdot \|e_\ell - p\|_s\cdot\|q - k^\ell\|_r,
\#
where the equalities follow from $1/r + 1/s = 1$.
Summing up \eqref{eq:jl-bound} for all $\ell \in [L]$, we obtain
\#\label{eq:sum-jinf}
\sum_{\ell \in [L]}\|J_\ell\|_{r \to 1} & \leq  \frac{d_\tp^{1/s}}{\sigma^2}\cdot \sum_{\ell \in [L]}p_\ell\cdot \|e_\ell - p\|_s\cdot\|q - k^\ell\|_r\notag\\
& \leq  \frac{d_\tp^{1/s}}{\sigma^2}\cdot \bigl(\|q\|_r + \|K^\top\|_{r, \infty}\bigr)\cdot\sum_{\ell \in [L]}p_\ell \cdot \|e_\ell - p\|_s.
\#
On the other hand, we have
\#\label{eq:sum-ps}
\sum_{\ell \in [L]}p_\ell\cdot \|e_\ell - p\|_s & = \sum_{\ell \in [L]}\biggl\{p_\ell\cdot \biggl[\sum_{\ell' \neq l}p_{\ell'}^s + (1 - p_\ell)^s\biggr]^{1/s}\biggr\}\notag\\
& \leq \sum_{\ell \in [L]}p_\ell\cdot \bigl[2(1 - p_\ell)^s\bigr]^{1/s} \leq 2^{1/s}.
\#
Combining \eqref{eq:sum-jinf} and \eqref{eq:sum-ps}, we have for $\sigma = (2d_\tp)^{1/2s}$ that
\$
\sum_{\ell \in [L]}\|J_\ell\|_{r \to 1}  \leq \frac{(2d_\tp)^{1/s}}{\sigma^2}\cdot\bigl(\|q\|_r + \|K^\top\|_{r, \infty}\bigr).
\$
Thus, $\Norm_\sm\bigl(\fk_\rbf(K, q)\bigr)$ is $(\|q\|_r + \|K^\top\|_{r, \infty})$-Lipschitz in $K^\top$ with respect to $\|\cdot\|_{r, \infty}$, which concludes the proof of \eqref{eq:sm1}. Since $\fk_\rbf(q, k) = \fk_\rbf(k, q)$, we also have \eqref{eq:sm2} by the same arguments for \eqref{eq:sm1}. Therefore, we conclude the proof of Lemma \ref{lemma:lip-sm}.
\end{proof}

\subsubsection{Proof of Lemma \ref{lemma:lip-mha-param}}\label{appendix:lip-mha-param}
\begin{proof}
For notational simplicity, we write
\$
\head_i = \att_\sm(XW^{\tq}_i, XW^{\tk}_i, XW^{\tv}_i), \quad \hat{\head}_i = \att_\sm(X\hat{W}^{\tq}_i, X\hat{W}^{\tk}_i, X\hat{W}^{\tv}_i).
\$
By the definition of sequence-to-sequence multihehead attention in \eqref{eq:mha-seq2seq}, we have
\#\label{eq:mha-decomp}
\bigl\|\mha(X; W)^\top - \mha(X; \hat{W})^\top\bigr\|_{r, \infty} & \leq \biggl\|\sum_{i = 1}^h\head^\top_i - \sum_{i = 1}^h\hat{\head}^\top_i\biggr\|_{r, \infty}\notag\\
&\leq \sum_{i = 1}^h\bigl\|(\head_i - \hat{\head}_i)^\top\bigr\|_{r, \infty}.
\#
Also, we have
\#\label{eq:head-diff-decomp}
&\bigl\|(\head_i  - \hat{\head}_i)^\top\bigr\|_{r, \infty}\\
&\quad = \bigl\|\att_\sm(XW^{\tq}_i, XW^{\tk}_i, XW^{\tv}_i)^\top - \att_\sm(X\hat{W}^{\tq}_i, X\hat{W}^{\tk}_i, X\hat{W}^{\tv}_i)^\top\bigr\|_{r, \infty}\notag\\
& \quad = \biggl\|\Bigl((XW^{\tv}_i)^\top\Norm_\sm\bigl(\fk_\rbf(XW^{\tk}_i, x^\ell W^{\tq}_i) \bigr)\Bigr)_{\ell \in [L]} - \Bigl((X\hat W^{\tv}_i)^\top\Norm_\sm\bigl(\fk_\rbf(X\hat W^{\tk}_i, x^\ell \hat W^{\tq}_i) \bigr)\Bigr)_{\ell \in [L]}\biggr\|_{r, \infty}\notag\\
&\quad = \max_{\ell \in [L]}\Bigl\|(XW^{\tv}_i)^\top\Norm_\sm\bigl(\fk_\rbf(XW^{\tk}_i, x^\ell W^{\tq}_i) \bigr) - (X\hat{W}^{\tv}_i)^\top\Norm_\sm\bigl(\fk_\rbf(X\hat W^{\tk}_i, x^\ell \hat W^{\tq}_i)\bigr) \Bigr\|_{r}.\notag
\#
Note that
\#\label{eq:lip-bound}
\|x^\ell W^{\tq}_i\|_{r} + \bigl\|(XW^{\tk}_i)^\top\bigr\|_{r, \infty} & \leq \bigl\|(XW^{\tq}_i)^\top\bigr\|_{r, \infty} + \bigl\|(XW^{\tk}_i)^\top\bigr\|_{r, \infty}\\
& \leq \Bigl(\bigl\|(W^{\tq}_i)^\top\bigr\|_{r} + \bigl\|(W^{\tk}_i)^\top\bigr\|_{r}\Bigr)\cdot \|X^\top\|_{r, \infty} \leq (\omega^{\tq}_i + \omega^{\tk}_i) \cdot R.\notag
\#
Then for any $\ell \in [L]$, we have
\#\label{eq:attn-diff}
&\Bigl\|(XW^{\tv}_i)^\top\Norm_\sm\bigl(\fk_\rbf(XW^{\tk}_i, x^\ell W^{\tq}_i) \bigr) - (X\hat{W}^{\tv}_i)^\top\Norm_\sm\bigl(\fk_\rbf(X\hat W^{\tk}_i, x^\ell \hat W^{\tq}_i)\bigr) \Bigr\|_{r}\notag\\
& \quad \leq \Bigl\|\Norm_\sm\bigl(\fk_\rbf(X\hat W^{\tk}_i, x^\ell \hat W^{\tq}_i)\bigr)^\top X(W^{\tv}_i - \hat{W}^{\tv}_i) \Bigr\|_{r}\notag\\
& \quad \qquad + \biggl\|\Bigl(\Norm_\sm\bigl(\fk_\rbf(XW^{\tk}_i, x^\ell W^{\tq}_i) \bigr) - \Norm_\sm\bigl(\fk_\rbf(X\hat W^{\tk}_i, x^\ell \hat W^{\tq}_i)\bigr)\Bigr)^\top X{W}^{\tv}_i \biggr\|_{r}\notag\\
& \quad \leq \bigl\|(W^{\tv}_i - \hat{W}^{\tv}_i)^\top\bigr\|_{r, s} \cdot \|X^\top\|_{r, \infty}\notag\\
& \quad \qquad + \Bigl\|\Norm_\sm\bigl(\fk_\rbf(XW^{\tk}_i, x^\ell W^{\tq}_i) \bigr) - \Norm_\sm\bigl(\fk_\rbf(X\hat W^{\tk}_i, x^\ell \hat W^{\tq}_i)\bigr)\Bigr\|_1 \cdot \bigl\|({W}^{\tv}_i)^\top\bigr\|_{r} \cdot \|X^\top\|_{r, \infty}\notag\\
& \quad \leq R \cdot \varepsilon_i^\tv +  \Bigl(\|x^\ell W^{\tq}_i\|_{r} + \bigl\|(XW^{\tk}_i)^\top\bigr\|_{r, \infty}\Bigr) \cdot \omega^{\tv}_i R^2 \cdot (\varepsilon_i^\tq + \varepsilon_i^\tk)\notag\\
& \quad \leq R \cdot \varepsilon_i^\tv +  (\omega^{\tq}_i + \omega^{\tk}_i) \omega^{\tv}_i \cdot R^3 \cdot (\varepsilon_i^\tq + \varepsilon_i^\tk),
\#
where the third inequality follows from Lemma \ref{lemma:norm}, the fourth inequality follows from Lemma \ref{lemma:lip-sm}, and the last inequality follows from \eqref{eq:lip-bound}. Taking \eqref{eq:attn-diff} into \eqref{eq:head-diff-decomp}, we have
\#\label{eq:head-diff}
\bigl\|(\head_i  - \hat{\head}_i)^\top\bigr\|_{r, \infty} \leq  R \cdot \varepsilon_i^\tv +  (\omega^{\tq}_i + \omega^{\tk}_i) \omega^{\tv}_i \cdot  R^3 \cdot (\varepsilon_i^\tq + \varepsilon_i^\tk).
\#
Taking \eqref{eq:head-diff} into \eqref{eq:mha-decomp}, we obtain
\$
\bigl\|\mha(X; W)^\top - \mha(X; \hat{W})^\top\bigr\|_{r, \infty} \leq  R\cdot \sum_{i = 1}^h \varepsilon_i^\tv + R^3\cdot \sum_{i = 1}^h (\omega^{\tq}_i + \omega^{\tk}_i) \cdot (\varepsilon_i^\tq + \varepsilon_i^\tk).
\$
Therefore, we conclude the proof of Lemma \ref{lemma:lip-mha-param}.
\end{proof}

\subsubsection{Proof of Lemma \ref{lemma:lip-mha-input}}\label{appendix:lip-mha-input}
\begin{proof}
In this proof, with a slight abuse of notations, we write
\$
\head_i = \att_\sm(XW^{\tq}_i, XW^{\tk}_i, XW^{\tv}_i), \quad \hat{\head}_i = \att_\sm(\hat{X}{W}^{\tq}_i, \hat{X}{W}^{\tk}_i, \hat{X}{W}^{\tv}_i).
\$
Similar to \eqref{eq:mha-decomp}, we have
\$
& \bigl\|\mha(X; W)^\top - \mha(\hat{X}; W)^\top\bigr\|_{r, \infty} \leq  \sum_{i = 1}^h\bigl\|(\head_i - \hat{\head}_i)^\top\bigr\|_{r, \infty}.
\$
For any fixed $\ell \in [L]$, we have
\#\label{eq:head-diff-input} 
& \bigl\|(\head_i - \hat{\head}_i)^\top\bigr\|_{r, \infty}\notag\\
&\quad =\biggl\|\Bigl( (XW^{\tv}_i)^\top \Norm_\sm\bigl(\fk_\rbf(XW^{\tk}_i, x^\ell W^{\tq}_i) \bigr) - (\hat{X}W^{\tv}_i)^\top \Norm_\sm\bigl(\fk_\rbf(\hat X W^{\tk}_i, \hat x^\ell W^{\tq}_i)\bigr)\Bigr)_{\ell \in [L]}\biggr\|_{r, \infty}\notag\\
&\quad =\max_{\ell \in [L]}\Bigl\| (XW^{\tv}_i)^\top \Norm_\sm\bigl(\fk_\rbf(XW^{\tk}_i, x^\ell W^{\tq}_i) \bigr) - (\hat{X}W^{\tv}_i)^\top \Norm_\sm\bigl(\fk_\rbf(\hat X W^{\tk}_i, \hat x^\ell W^{\tq}_i)\bigr)\Bigr\|_{r}\notag\\
&\quad \leq \max_{\ell \in [L]}\Bigl\|\Norm_\sm\bigl(\fk_\rbf(XW^{\tk}_i, x^\ell W^{\tq}_i) \bigr)\Bigr\|_1 \cdot\bigl\|(W^{\tv}_i)^\top\bigr\|_{r} \cdot \|X^\top - \hat{X}^\top\|_{r, \infty}\notag\\
&\quad \qquad + \max_{\ell \in [L]}\Bigl\|\Norm_\sm\bigl(\fk_\rbf(XW^{\tk}_i, x^\ell W^{\tq}_i) \bigr) - \Norm_\sm\bigl(\fk_\rbf(\hat X W^{\tk}_i, \hat x^\ell W^{\tq}_i)\bigr)\Bigr\|_1 \cdot \bigl\|({W}^{\tv}_i)^\top\bigr\|_{r} \cdot \|X^\top\|_{r, \infty}\notag\\
&\quad \leq \bigl\|(W^{\tv}_i)^\top\bigr\|_{r} \cdot \|X^\top - \hat{X}^\top\|_{r, \infty}\notag\\
&\quad \qquad + \max_{\ell \in [L]}\bigl\|(W^{\tv}_i)^\top\bigr\|_{r}\cdot  R^2 \cdot  (\omega^{\tq}_i + \omega^{\tk}_i)\cdot \Bigl(\|x^\ell W^{\tq}_i\|_{r} + \bigl\|(XW^{\tk}_i)^\top\bigr\|_{r, \infty}\Bigr)\notag\\
&\quad \leq \omega^{\tv}_i \cdot \bigl[1 + R^2 \cdot (\omega^{\tq}_i+ \omega^{\tk}_i)^2\bigr] \cdot  \|X^\top - \hat{X}^\top\|_{r, \infty},
\#
where the second inequality follows from Lemma \ref{lemma:lip-sm} and \eqref{eq:lip-bound}, the third inequality follows from Lemma \ref{lemma:lip-sm}, and the last inequality follows from \eqref{eq:lip-bound}. Summing up \eqref{eq:head-diff-input}, we obtain
\$
\bigl\|\mha(X; W)^\top - \mha(\hat{X}; W)^\top\bigr\|_{r, \infty} & \leq \biggl\{\sum_{i = 1}^h\omega^{\tv}_i \cdot \bigl[1 + R^2 \cdot (\omega^{\tq}_i+ \omega^{\tk}_i)^2\bigr]\biggr\} \cdot \|X^\top - \hat{X}^\top\|_{r, \infty}\\
& =  \biggl[\sum_{i = 1}^h\omega^{\tv}_i  + R^2\cdot\sum_{i = 1}^h(\omega^{\tq}_i+ \omega^{\tk}_i)^2\omega^{\tv}_i\biggr] \cdot \|X^\top - \hat{X}^\top\|_{r, \infty}.
\$
Therefore, we conclude the proof of Lemma \ref{lemma:lip-mha-input}.
\end{proof}

\section{Auxiliary Lemmas}
\begin{lemma}[Volume Ratios and Metric Entropy, \cite{wainwright2019high}]\label{lemma:cover_ball}
Consider a pair of norms $\|\cdot\|$ and $\|\cdot\|_*$ on $\RR^d$, and let $\mathbb{B}$ and $\mathbb{B}_*$ be their corresponding unit balls. Then the $\varepsilon$-covering number of $\mathbb{B}_*$ in the $\|\cdot\|$-norm obeys the bounds
\$
\varepsilon^{-d}\cdot \frac{\text{vol}(\mathbb{B}_*)}{\text{vol}(\mathbb{B})} \leq N\bigl(\mathbb{B}_*, \varepsilon, \|\cdot\|\bigr) \leq \frac{\text{vol}(2/\varepsilon \cdot \mathbb{B}_* + \mathbb{B})}{\text{vol}(\mathbb{B})}.
\$
\end{lemma}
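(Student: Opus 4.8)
The statement is the classical volume-ratio sandwich for covering numbers, and I would prove the two inequalities separately, the lower one by a union bound on volumes and the upper one by a maximal packing argument. Throughout, recall that $\mathbb{B}$ and $\mathbb{B}_*$ are unit balls of norms on $\RR^d$, hence bounded convex bodies with nonempty interior, so that $0 < \text{vol}(\mathbb{B}), \text{vol}(\mathbb{B}_*) < \infty$, and that Lebesgue measure is homogeneous: $\text{vol}(cS) = |c|^d\,\text{vol}(S)$ for any measurable $S \subseteq \RR^d$ and $c \in \RR$. Both facts are used repeatedly and without further comment.

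\textbf{Lower bound.} Let $\{y_1, \dots, y_M\}$ be any $\varepsilon$-cover of $\mathbb{B}_*$ in the $\|\cdot\|$-norm achieving $M = N(\mathbb{B}_*, \varepsilon, \|\cdot\|)$, so that $\mathbb{B}_* \subseteq \bigcup_{i = 1}^M (y_i + \varepsilon\mathbb{B})$. Taking volumes and using subadditivity together with the scaling identity, $\text{vol}(\mathbb{B}_*) \leq \sum_{i = 1}^M \text{vol}(y_i + \varepsilon\mathbb{B}) = M \varepsilon^d\, \text{vol}(\mathbb{B})$, which rearranges to $M \geq \varepsilon^{-d}\cdot \text{vol}(\mathbb{B}_*)/\text{vol}(\mathbb{B})$. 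This is the claimed lower bound, and it holds for arbitrary (not necessarily proper) covers.

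\textbf{Upper bound.} Take a maximal subset $\{x_1, \dots, x_N\} \subseteq \mathbb{B}_*$ that is $\varepsilon$-separated in the $\|\cdot\|$-norm, i.e. $\|x_i - x_j\| > \varepsilon$ for all $i \neq j$; such a set is finite because $\mathbb{B}_*$ is bounded and $\text{vol}(\mathbb{B}) > 0$. By maximality, for every $x \in \mathbb{B}_*$ there is some $x_i$ with $\|x - x_i\| \leq \varepsilon$ (otherwise $x$ could be appended), so $\{x_1, \dots, x_N\}$ is an $\varepsilon$-net of $\mathbb{B}_*$ contained in $\mathbb{B}_*$ — a proper cover in the sense of the paper's Definition — hence $N(\mathbb{B}_*, \varepsilon, \|\cdot\|) \leq N$. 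Next, the translates $\{x_i + (\varepsilon/2)\mathbb{B}\}_{i = 1}^N$ are pairwise disjoint: if $z$ lay in two of them, the triangle inequality would give $\|x_i - x_j\| \leq \varepsilon$, contradicting separation. Each translate lies in $\mathbb{B}_* + (\varepsilon/2)\mathbb{B}$ since $x_i \in \mathbb{B}_*$, so comparing the volume of the disjoint union with that of the enclosing set yields $N (\varepsilon/2)^d\,\text{vol}(\mathbb{B}) \leq \text{vol}(\mathbb{B}_* + (\varepsilon/2)\mathbb{B})$. Writing $\mathbb{B}_* + (\varepsilon/2)\mathbb{B} = (\varepsilon/2)\bigl((2/\varepsilon)\mathbb{B}_* + \mathbb{B}\bigr)$ and applying homogeneity once more gives $\text{vol}(\mathbb{B}_* + (\varepsilon/2)\mathbb{B}) = (\varepsilon/2)^d\,\text{vol}\bigl((2/\varepsilon)\mathbb{B}_* + \mathbb{B}\bigr)$, so $N \leq \text{vol}\bigl((2/\varepsilon)\mathbb{B}_* + \mathbb{B}\bigr)/\text{vol}(\mathbb{B})$, and combining with $N(\mathbb{B}_*, \varepsilon, \|\cdot\|) \leq N$ finishes the proof.

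\textbf{Main obstacle.} There is no genuine difficulty: the result is standard and every step is routine. The only points deserving a sentence of care are the equivalence ``maximal $\varepsilon$-separated set implies $\varepsilon$-net'' (which is what lets a packing bound transfer to a covering bound), the disjointness of the $(\varepsilon/2)$-balls centered at an $\varepsilon$-separated set, and the homogeneity of Lebesgue measure used to absorb the factors of $\varepsilon/2$; I would also note in passing that the packing-based construction produces a \emph{proper} cover, which is the form in which the lemma is invoked in the proof of Lemma~\ref{lemma:cover_matrix_ball}.
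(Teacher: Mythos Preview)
Your proof is correct and is exactly the standard volume-comparison argument for this result. The paper does not supply its own proof of this lemma; it simply cites it from \cite{wainwright2019high}, so there is nothing further to compare.
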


\begin{lemma}[\citet{caponnetto2007optimal}]
	\label{lem:cme-concen}
	Let $(\Omega, \nu)$ be a probability space and $\xi$ be a random variable on $\Omega$ taking value in a real separable Hilbert space $\cH$. We assume that there exists constants $B, \sigma > 0$ such that
	\$
		\bigl\|\xi(w)\bigr\|_\cH \le B/2,\ \mathrm{a.s.}, \quad \EE\bigl[\norm{\xi}_\cH^2\bigr] \le \sigma^2.
	\$
	Then, it holds with probability at least $ 1- \delta$ that
	\begin{align*}
		\biggl\| L^{-1} \sum_{i = 1}^L \xi(\omega_i) - \EE[\xi] \biggr\| \le 2\biggl( \frac{B}{L} + \frac{\sigma}{\sqrt{L}} \biggr) \log \frac{2}{\delta}.
	\end{align*}
\end{lemma}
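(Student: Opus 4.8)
The plan is to reduce the statement to a Bernstein-type concentration bound for a sum of centered i.i.d.\ Hilbert-space-valued random variables. Set $\zeta_i = \xi(\omega_i) - \EE[\xi]$ for $i \in [L]$, where $\omega_1, \ldots, \omega_L$ are i.i.d.\ draws from $\nu$. These $\zeta_i$ are i.i.d.\ and mean zero. Since $\|\EE[\xi]\|_\cH \le \EE[\|\xi\|_\cH] \le B/2$, the triangle inequality gives $\|\zeta_i\|_\cH \le B$ almost surely; here the ``$/2$'' in the hypothesis on $\|\xi\|_\cH$ is exactly what ensures the centered variable is bounded by $B$ rather than $2B$. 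Moreover
\begin{align*}
\EE\bigl[\|\zeta_i\|_\cH^2\bigr] = \EE\bigl[\|\xi\|_\cH^2\bigr] - \|\EE[\xi]\|_\cH^2 \le \EE\bigl[\|\xi\|_\cH^2\bigr] \le \sigma^2.
\end{align*}

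Next I would verify the Bernstein moment condition: for every integer $m \ge 2$,
\begin{align*}
\EE\bigl[\|\zeta_i\|_\cH^m\bigr] \le B^{m-2}\cdot \EE\bigl[\|\zeta_i\|_\cH^2\bigr] \le B^{m-2}\sigma^2 \le \frac{m!}{2}\, B^{m-2}\sigma^2,
\end{align*}
where the last step uses $m! \ge 2$. This is precisely the hypothesis under which the Pinelis--Sakhanenko inequality for martingales in a separable Hilbert space applies (the version recorded in the appendix of \citet{caponnetto2007optimal}), specialized to the i.i.d.\ case with the natural filtration: for any $\delta \in (0, 1)$, with probability at least $1 - \delta$,
\begin{align*}
\biggl\| \frac{1}{L}\sum_{i = 1}^L \zeta_i \biggr\|_\cH \le 2\biggl(\frac{B}{L} + \frac{\sigma}{\sqrt{L}}\biggr)\log\frac{2}{\delta},
\end{align*}
which is the claimed bound once the definition of $\zeta_i$ is unwound.

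The substance of the argument is the Pinelis--Sakhanenko inequality itself, which I expect to be the main obstacle. The standard route exploits that a separable Hilbert space is $2$-uniformly smooth (the parallelogram law gives a clean second-order estimate for $\|x+y\|_\cH^2$ in terms of $\|x\|_\cH^2$, $\|y\|_\cH^2$, and a first-order term), which lets one control an exponential moment of the partial sums $S_k = \sum_{i \le k}\zeta_i$ conditionally on the past, using only $\|\zeta_i\|_\cH \le B$ and $\EE[\|\zeta_i\|_\cH^2] \le \sigma^2$. One then runs a Chernoff argument on the resulting supermartingale $\exp(\lambda\|S_k\|_\cH - k\,\psi(\lambda))$ for a suitable $\psi$ and optimizes over $\lambda$; the delicate point is obtaining the clean additive form with a sub-exponential term $B/L$ and a sub-Gaussian term $\sigma/\sqrt{L}$ together with the explicit constant $2$, which is precisely Pinelis's refinement over a naive Hoeffding-type bound in Hilbert space. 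A cruder alternative---bounding $\EE[\|S_L\|_\cH^2] = L\,\EE[\|\zeta_1\|_\cH^2] \le L\sigma^2$ and applying Markov's inequality---only yields a $\delta^{-1/2}$ tail rather than the $\log(2/\delta)$ dependence, so the exponential-moment machinery cannot be bypassed if one insists on the stated high-probability form. For the purposes of this paper I would simply cite the inequality from \citet{caponnetto2007optimal} after carrying out the reduction and moment verification above.
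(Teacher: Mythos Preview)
Your reduction is correct and is exactly the standard route to this inequality: center, verify the Bernstein moment condition from the almost-sure bound and second-moment bound, and invoke the Pinelis--Sakhanenko exponential inequality in a $2$-smooth Banach space (here a Hilbert space). There is nothing to compare against in the paper itself, however, since the paper does not prove this lemma at all; it is stated as an auxiliary result and attributed directly to \citet{caponnetto2007optimal} without a proof environment. Your write-up is thus more detailed than what the paper provides, and would serve as a self-contained justification if one were needed.
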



\begin{lemma}\label{lemma:norm}
Let $(r, s)$ be a conjugate pair. For any $M \in \RR^{d_1 \times d_2}$, $u \in \RR^{d_2 \times 1}$, and $U \in \RR^{d_2 \times d_3}$, we have
\$
\|Mu\|_{r} & \leq \|M\|_{r, \infty} \cdot \|u\|_1,\\
\|MU\|_{r, \infty} & \leq \|M\|_{r, s} \cdot \|U\|_{r, \infty},\\
\|MU\|_{r, \infty} & \leq \|M\|_{r} \cdot \|U\|_{r, \infty}.
\$
\end{lemma}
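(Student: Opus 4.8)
The three inequalities are handled one at a time, each by unwinding the norm definitions from the Notations paragraph and invoking a single elementary inequality. Write $M = (m_1, \dots, m_{d_2})$ with columns $m_i \in \RR^{d_1}$, so that $\|M\|_{r, \infty} = \max_{i \in [d_2]}\|m_i\|_r$, $\|M\|_{r, s} = (\sum_{i = 1}^{d_2}\|m_i\|_r^s)^{1/s}$, and recall $\|M\|_r = \sup_{v \neq 0}\|Mv\|_r / \|v\|_r$. For the first bound I expand $Mu = \sum_{i = 1}^{d_2} u_i m_i$, apply the triangle inequality for $\|\cdot\|_r$ to get $\|Mu\|_r \le \sum_i |u_i|\,\|m_i\|_r$, then dominate $\|m_i\|_r \le \|M\|_{r, \infty}$ uniformly in $i$ and factor it out, leaving $\sum_i |u_i| = \|u\|_1$.

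For the second bound, observe that the columns of $MU$ are $Mu_1, \dots, Mu_{d_3}$, where $u_1, \dots, u_{d_3} \in \RR^{d_2}$ are the columns of $U$; hence $\|MU\|_{r, \infty} = \max_j \|Mu_j\|_r$. Fixing $j$ and expanding $Mu_j = \sum_i u_{j, i} m_i$ as before, I apply H\"older's inequality to the scalar sum $\sum_i |u_{j, i}|\,\|m_i\|_r$ with the conjugate pair $(r, s)$, which gives $\|Mu_j\|_r \le \|u_j\|_r \cdot (\sum_i \|m_i\|_r^s)^{1/s} = \|u_j\|_r\,\|M\|_{r, s}$; the point to verify here is that the exponent conjugate to $s$ is exactly $r$, which is immediate from $1/r + 1/s = 1$. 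Taking the maximum over $j$ and using $\max_j \|u_j\|_r = \|U\|_{r, \infty}$ yields the claim. The third bound is the most direct: for each column, $\|Mu_j\|_r \le \|M\|_r \|u_j\|_r$ by the very definition of the $(r \to r)$-operator norm, and taking the maximum over $j$ again turns $\max_j \|u_j\|_r$ into $\|U\|_{r, \infty}$.

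\textbf{Main obstacle.} There is essentially none beyond careful bookkeeping: the notation $\|\cdot\|_{r, s}$ nests a columnwise $\ell_r$-norm inside an outer $\ell_s$-norm, so the only thing that needs attention is matching the H\"older conjugation to the conjugate pair $(r, s)$ in the second inequality; everything else reduces to the triangle inequality and the definition of the operator norm.
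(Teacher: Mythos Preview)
Your proposal is correct and follows essentially the same approach as the paper: expand $Mu$ columnwise and use the triangle inequality for the first bound, apply H\"older's inequality with the conjugate pair $(r,s)$ to $\sum_i |u_{j,i}|\|m_i\|_r$ for the second bound, and invoke the definition of the $(r\to r)$-operator norm for the third. The paper's proof is terser but identical in substance.
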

\begin{proof}
	Let $M = (m_i)_{i \in [d_2]}$ and $b = (b_i)_{i \in [d_2]}^\top$. We have
	\$
	\|Mu\|_r = \biggl\|\sum_{j =1}^{d_2} u_j \cdot m_j\biggr\|_r \leq \biggl(\sum_{j =1}^{d_2} |u_j|\biggr) \cdot \max_{j \in [d_2]}\|m_j\|_r = \|M\|_{r, \infty} \cdot \|u\|_1.
	\$
	Also, we have
	\$
	\|Mu\|_r = \biggl\|\sum_{j =1}^{d_2} u_j \cdot m_j\biggr\|_r \leq \biggl(\sum_{j =1}^{d_2} |u_j| \cdot \|m_j\|_r \biggr) \leq \|M\|_{r, s} \cdot \|u\|_{r}.
	\$
	As a consequence, with $U = (u_i)_{i \in [d_3]}$, we have
	\$
	\|MU\|_{r, \infty} = \max_{j \in [d_3]} \|M u_j\|_r \leq \max_{j \in [d_3]} \|M\|_{r, s}\cdot \|u_j\|_{r} = \|M\|_{r, s} \cdot \|U\|_{r, \infty}.
	\$
	On the other hand, by the definition of the matrix operator norm, we obtain
	\$
	\|MU\|_{r, \infty} = \max_{j \in [d_3]} \|M u_j\|_r \leq \max_{j \in [d_3]} \|M\|_{r}\cdot \|u_j\|_{r} = \|M\|_{r} \cdot \|U\|_{r, \infty}.
	\$
	Therefore, we conclude the proof of Lemma \ref{lemma:norm}.
\end{proof}

\begin{lemma}[Covering Coefficient Bounds]\label{lemma:const-bound}
We have for all $t = 0, \dots, T-1$ that
\$
1 + \rho(\mathfrak{W}^{(t)}) & \leq \tilde{\omega}^{\tv, (t)}  + (\overline{\omega}^{\tq\tk, (t)})^2\omega^{\tv, (t)} \cdot (R^{(t)})^2 = \tilde{\rho}^{(t)},\\
R_\mha(\mathfrak{W}^{(t)}) & \leq  R^{\tv, (t)} + \overline{\omega}^{\tq\tk, (t)}{R}^{\tq\tk, (t)}\cdot (R^{(t)})^2 = R^{(t)}_\mha,
\$
where $\tilde{\rho}^{(t)}$ and $R^{(t)}_\mha$ are defined in \eqref{eq:rho-rmha}.
\end{lemma}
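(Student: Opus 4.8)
The plan is to simply unfold the definitions of $\rho(\mathfrak{W}^{(t)})$ and $R_\mha(\mathfrak{W}^{(t)})$ from \eqref{eq:rho} and \eqref{eq:r-mha}, instantiated with the parameter space $\mathfrak{W}^{(t)}$ of the $t$-th layer (so that the per-head bounds appearing there are $\omega_i^{\tq,(t)},\omega_i^{\tk,(t)},\omega_i^{\tv,(t)},R_i^{\tq,(t)},R_i^{\tk,(t)},R_i^{\tv,(t)}$ as specified in Assumption \ref{assumption:constraint}) and with the input radius $\tilde R = R^{(t)}$. The only nontrivial ingredient is the elementary inequality $\sum_{i=1}^h a_i b_i \le \big(\max_{i\in[h]} a_i\big)\sum_{i=1}^h b_i$, valid whenever all $a_i,b_i \ge 0$; nonnegativity of every parameter bound is guaranteed by Assumption \ref{assumption:constraint}, which is what legitimizes pulling the maximum out of the sums.

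For the first bound I would write $1 + \rho(\mathfrak{W}^{(t)}) = 1 + \sum_{i=1}^h\omega_i^{\tv,(t)} + (R^{(t)})^2\sum_{i=1}^h(\omega_i^{\tq,(t)}+\omega_i^{\tk,(t)})^2\omega_i^{\tv,(t)}$. The first two terms combine to $1 + \omega^{\tv,(t)} = \tilde\omega^{\tv,(t)}$ by the definitions of $\omega^{\tv,(t)}$ and $\tilde\omega^{\tv,(t)}$ in \eqref{eq:sum-head}--\eqref{eq:mag-xt}. For the last term I would bound $(\omega_i^{\tq,(t)}+\omega_i^{\tk,(t)})^2 \le (\overbar\omega^{\tq\tk,(t)})^2$ inside the sum, using $\overbar\omega^{\tq\tk,(t)} = \max_{i\in[h]}\{\omega_i^{\tq,(t)}+\omega_i^{\tk,(t)}\}$, and factor it out, leaving $\sum_i\omega_i^{\tv,(t)} = \omega^{\tv,(t)}$; this gives $(R^{(t)})^2(\overbar\omega^{\tq\tk,(t)})^2\omega^{\tv,(t)}$. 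Adding the two pieces yields exactly $\tilde\rho^{(t)}$ as defined in \eqref{eq:rho-rmha}.

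For the second bound I would similarly write $R_\mha(\mathfrak{W}^{(t)}) = \sum_{i=1}^h R_i^{\tv,(t)} + (R^{(t)})^2\sum_{i=1}^h(\omega_i^{\tq,(t)}+\omega_i^{\tk,(t)})(R_i^{\tq,(t)}+R_i^{\tk,(t)})$. The first sum equals $R^{\tv,(t)}$ by \eqref{eq:sum-head}; for the second sum the max--sum inequality with $\omega_i^{\tq,(t)}+\omega_i^{\tk,(t)} \le \overbar\omega^{\tq\tk,(t)}$ gives $\le \overbar\omega^{\tq\tk,(t)}\sum_i(R_i^{\tq,(t)}+R_i^{\tk,(t)}) = \overbar\omega^{\tq\tk,(t)}R^{\tq\tk,(t)}$, again by \eqref{eq:sum-head}. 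Hence $R_\mha(\mathfrak{W}^{(t)}) \le R^{\tv,(t)} + \overbar\omega^{\tq\tk,(t)}R^{\tq\tk,(t)}(R^{(t)})^2 = R_\mha^{(t)}$. There is essentially no genuine obstacle here; the only thing requiring care is the bookkeeping, namely correctly matching the per-head quantities in \eqref{eq:rho}/\eqref{eq:r-mha} against the aggregated quantities in \eqref{eq:sum-head}, and remembering that $\rho(\cdot)$ and $R_\mha(\cdot)$ are to be evaluated at the radius $\tilde R = R^{(t)}$ that controls the intermediate input of layer $t$.
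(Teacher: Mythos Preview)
Your proposal is correct and follows essentially the same approach as the paper: unfold the definitions of $\rho(\mathfrak{W}^{(t)})$ and $R_\mha(\mathfrak{W}^{(t)})$ with $\tilde R = R^{(t)}$, apply the max--sum inequality $\sum_i a_i b_i \le (\max_i a_i)\sum_i b_i$ to pull out $\overbar\omega^{\tq\tk,(t)}$, and identify the remaining sums with the aggregated constants in \eqref{eq:sum-head}--\eqref{eq:mag-xt}. The paper does the two bounds in the opposite order but the arguments are identical.
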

\begin{proof}
By the definition of $R_\mha(\mathfrak{W})$ in \eqref{eq:r-mha}, we have
\$
R_\mha(\mathfrak{W}^{(t)}) & = \sum_{i = 1}^hR^{\tv, (t)}_i + (R^{\tx, (t)})^2\cdot \sum_{i = 1}^h (\omega_i^{\tq, (t)} + \omega_i^{\tk, (t)})(R_i^{\tq, (t)} + R_i^{\tk, (t)})\\
& \leq R^{\tv, (t)} + (R^{(t)})^2\cdot \max_{i \in [h]}\{\omega_i^{\tq, (t)} + \omega_i^{\tk, (t)}\}\cdot \sum_{i = 1}^h (R_i^{\tq, (t)} + R_i^{\tk, (t)})\\
& = R^{\tv, (t)} + \overline{\omega}^{\tq\tk, (t)}{R}^{\tq\tk, (t)}\cdot (R^{(t)})^2 = R^{(t)}_\mha.
\$
Also, by the definition of $\rho(\mathfrak{W})$ in \eqref{eq:rho}, we have
\$
1 + {\rho}(\mathfrak{W}^{(t)}) & = 1 + \sum_{i = 1}^h\omega^{\tv, (t)}_i  + (R^{\tx, (t)})^2\cdot\sum_{i = 1}^h(\omega^{\tq, (t)}_i+ \omega^{\tk, (t)}_i)^2\omega^{\tv, (t)}_i\\
& \leq \tilde{\omega}^{\tv, (t)}  + (R^{(t)})^2\cdot\max_{i \in [h]}\{\omega_i^{\tq, (t)} + \omega_i^{\tk, (t)}\}^2 \cdot\sum_{i = 1}^h\omega^{\tv, (t)}_i\\
& = \tilde{\omega}^{\tv, (t)}  + (\overline{\omega}^{\tq\tk, (t)})^2\omega^{\tv, (t)} \cdot (R^{(t)})^2 = \tilde{\rho}^{(t)}.
\$
Therefore, we conclude the proof of Lemma \ref{lemma:const-bound}.
\end{proof}

\begin{lemma}[Simplified Covering Coefficient]\label{lemma:simplify}
It holds that
\$
\sqrt{\log(1+R^{(T)} R_\trans)} = O\Bigl(T\sqrt{\log(1+ \gamma)} + \sqrt{T} \sqrt{\log(1 + \zeta R^{(0)})} + \sqrt{\log( 1+ \kappa/\zeta)}\Bigr),
\$
where $\gamma$, $\zeta$, and $\kappa$ are defined in \eqref{eq:const-simplify}. Here $R^{(t)}$ and $R_\trans$ are defined in \eqref{eq:rt} and \eqref{eq:const-trans}, respectively.
\end{lemma}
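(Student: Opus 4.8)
The plan is to first reduce, via $\sqrt{a+b+c}\le\sqrt a+\sqrt b+\sqrt c$, to the ``unsquared'' estimate $\log\bigl(1+R^{(T)}R_\trans\bigr)=O\bigl(T^2\log(1+\gamma)+T\log(1+\zeta R^{(0)})+\log(1+\kappa/\zeta)\bigr)$, since then $\sqrt{\log(1+R^{(T)}R_\trans)}$ splits into $O(T\sqrt{\log(1+\gamma)})+O(\sqrt T\sqrt{\log(1+\zeta R^{(0)})})+O(\sqrt{\log(1+\kappa/\zeta)})$, which is the claim. The remaining work is then purely arithmetic: (i) control every multiplicative growth factor by a power of $\gamma$; (ii) observe that the two families of coefficients in the definition of $R_\trans$ in \eqref{eq:const-trans} are each bounded by $\kappa$; and (iii) bound the telescoping product $\prod_\tau\tilde\rho^{(\tau)}/\tilde\omega^{\tv,(\tau)}$ by a $T$-th power of a quantity of size $1+\zeta(R^{(0)})^2\gamma^{O(T)}$.

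\textbf{Key steps, in order.} First I would record that $\tilde\alpha^{(\tau)}\le\gamma$ and $\tilde\omega^{\tv,(\tau)}\le\gamma$ by the definition of $\gamma$ in \eqref{eq:const-simplify}, so \eqref{eq:rt} gives $R^{(t)}\le R^{(0)}\gamma^{2t}$ for all $t$. Next, the three inequalities hidden in the definition of $\kappa^{(t)}$ in \eqref{eq:ratio} are $\alpha^{\tx,(t)}R^{\sigma,(t)}+\alpha^{\sigma,(t)}R^{\tx,(t)}\le\kappa\tilde\alpha^{(t)}$, $R^{\tv,(t)}\le\kappa\tilde\omega^{\tv,(t)}$, and ${R}^{\tq\tk,(t)}\le\kappa\,\overbar{\omega}^{\tq\tk,(t)}\omega^{\tv,(t)}$; feeding the last two into $R^{(t)}_\mha$ from \eqref{eq:rho-rmha} yields $R^{(t)}_\mha\le\kappa\bigl(\tilde\omega^{\tv,(t)}+(\overbar{\omega}^{\tq\tk,(t)})^2\omega^{\tv,(t)}(R^{(t)})^2\bigr)=\kappa\tilde\rho^{(t)}$. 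Hence every coefficient $R^{(t)}_\mha/\tilde\rho^{(t)}$ and $(\alpha^{\tx,(t)}R^{\sigma,(t)}+\alpha^{\sigma,(t)}R^{\tx,(t)})/\tilde\alpha^{(t)}$ appearing in \eqref{eq:const-trans} is $\le\kappa$; since each factor $\tilde\rho^{(\tau)}/\tilde\omega^{\tv,(\tau)}\ge1$, lengthening the inner products to run over all of $\{0,\dots,T-1\}$ only enlarges them, and as there are $2T-1$ summands I get $R_\trans\le2T\kappa\prod_{\tau=0}^{T-1}\tilde\rho^{(\tau)}/\tilde\omega^{\tv,(\tau)}$. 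Finally, for each $\tau$, $\tilde\rho^{(\tau)}/\tilde\omega^{\tv,(\tau)}=1+\bigl((\overbar{\omega}^{\tq\tk,(\tau)})^2\omega^{\tv,(\tau)}/\tilde\omega^{\tv,(\tau)}\bigr)(R^{(\tau)})^2$; using $\|\cdot\|_{r\to r}\le\|\cdot\|_{r,s}$ one may assume $\omega^{\tv,(\tau)}\le R^{\tv,(\tau)}$ (otherwise the spectral-norm constraint is vacuous and may be tightened), so the bracketed coefficient equals $\zeta^{(\tau)}\,\omega^{\tv,(\tau)}/R^{\tv,(\tau)}\le\zeta^{(\tau)}\le\zeta$ (the degenerate case $R^{\tv,(\tau)}=0$ forces $W^{\tv,(\tau)}=0$ and kills the term), and combining with $R^{(\tau)}\le R^{(0)}\gamma^{2\tau}$ gives $\tilde\rho^{(\tau)}/\tilde\omega^{\tv,(\tau)}\le1+\zeta(R^{(0)})^2\gamma^{4\tau}$.

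\textbf{Assembling and the hard part.} Multiplying the three bounds gives $R^{(T)}R_\trans\le2T\kappa R^{(0)}\gamma^{2T}\prod_{\tau=0}^{T-1}(1+\zeta(R^{(0)})^2\gamma^{4\tau})\le2T\kappa R^{(0)}\gamma^{2T}(1+\zeta(R^{(0)})^2\gamma^{4T})^T$; then taking $\log(1+\cdot)$, applying $1+\prod_i a_i\le\prod_i(1+a_i)$ and $\log(1+ab)\le\log(1+a)+\log(1+b)$ repeatedly, $\log(1+\gamma^m)\le m\log(1+\gamma)$, $\log(1+x^2)\le2\log(1+x)$, and $\sum_{\tau=0}^{T-1}4\tau=O(T^2)$, I obtain $\log(1+R^{(T)}R_\trans)=O\bigl(T^2\log(1+\gamma)+T\log(1+\zeta)+T\log(1+R^{(0)})+\log(1+\kappa)+\log T\bigr)$. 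The last move is to fold this into the stated three-term form: $\log(1+\kappa)\le\log(1+\kappa/\zeta)+\log(1+\zeta)$, while $\log T$ and the surviving $O(T\log(1+R^{(0)}))$, $O(T\log(1+\zeta))$ terms get absorbed --- when $R^{(0)},\zeta\ge1$ via $\log(1+\zeta)+\log(1+R^{(0)})\le2\log(1+\zeta R^{(0)})$, and otherwise because $\gamma\ge1$ (since $\tilde\alpha^{(t)},\tilde\omega^{\tv,(t)}\ge1$) forces $\log(1+\gamma)\ge\log2$, so those $O(T)$ and $O(\log T)$ leftovers are dominated by $O(T^2\log(1+\gamma))$. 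I expect this final bookkeeping --- matching the coarse product estimate to the precise three-term bound, and in particular exploiting $\gamma\ge1$ to swallow the slack --- to be the only delicate point; together with the trivial $\omega^{\tv}\le R^{\tv}$ reduction, everything else is mechanical.
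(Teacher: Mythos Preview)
Your plan and the paper's proof follow the same skeleton: bound $R^{(t)}\le R^{(0)}\gamma^{2t}$ via \eqref{eq:rt}, show each coefficient in \eqref{eq:const-trans} is at most $\kappa$ (the paper does this via a mediant inequality, you via the three defining inequalities of $\kappa^{(t)}$---equivalent), rewrite $\tilde\rho^{(\tau)}/\tilde\omega^{\tv,(\tau)}=1+\text{(coef)}\cdot(R^{(\tau)})^2$, and finish with $\log(1+ab)\le\log(1+a)+\log(1+b)$ and $\sqrt{a+b}\le\sqrt a+\sqrt b$.

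The one substantive difference is how the sum in $R_\trans$ is handled. You upper-bound it by $2T\kappa\prod_{\tau=0}^{T-1}(\tilde\rho^{(\tau)}/\tilde\omega^{\tv,(\tau)})$; the paper instead keeps the sum $\sum_t[\cdot]^{T-1-t}$ and evaluates it as a geometric series, so that the denominator $[\cdot]-1=\zeta R^{(0)}(\ldots)$ cancels against the prefactor $2\kappa R^{(0)}(\ldots)$ and produces $2\kappa/\zeta$ directly. This is why the paper lands on $\log(1+\kappa/\zeta)$ and $T\log(1+\zeta R^{(0)})$ without any post-hoc splitting. Your route yields separate $\log(1+\kappa)$, $\log T$, $T\log(1+\zeta)$, $T\log(1+R^{(0)})$ terms, and the absorption you sketch---$\log(1+\kappa)\le\log(1+\kappa/\zeta)+\log(1+\zeta)$ plus a case split on whether $R^{(0)},\zeta\ge1$---does not obviously close in the mixed regime (e.g.\ $\zeta$ large, $R^{(0)}$ small with $\zeta R^{(0)}$ bounded). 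The geometric-series trick sidesteps this entirely, so if the final bookkeeping resists you, that is the cleaner fix.

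One point where you are more careful than the paper: the identity $\tilde\rho^{(\tau)}/\tilde\omega^{\tv,(\tau)}=1+\zeta^{(\tau)}(R^{(\tau)})^2$ that the paper writes is only literally true if $\omega^{\tv,(\tau)}=R^{\tv,(\tau)}$; your observation that $\|\cdot\|_{r}\le\|\cdot\|_{r,s}$ lets one assume $\omega^{\tv,(\tau)}\le R^{\tv,(\tau)}$ without loss of generality, turning it into the needed inequality, is correct and fills a gap the paper leaves implicit.
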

\begin{proof}
By the definitions of $\kappa^{(t)}$ and $\zeta^{(t)}$ in \eqref{eq:ratio}, we have
\$
\frac{R^{(t)}_\mha}{\tilde{\rho}^{(t)}} & = \frac{R^{\tv, (t)} + \overline{\omega}^{\tq\tk, (t)}{R}^{\tq\tk, (t)}\cdot (R^{(t)})^2}{\tilde{\omega}^{\tv, (t)}  + (\overline{\omega}^{\tq\tk, (t)})^2\omega^{\tv, (t)} \cdot (R^{(t)})^2} \leq \max\biggl\{\frac{R^{\tv, (t)}}{\tilde{\omega}^{\tv, (t)}}, \frac{{R}^{\tq\tk, (t)}}{ \overline{\omega}^{\tq\tk, (t)}\omega^{\tv, (t)}}\biggr\} \leq \kappa^{(t)},\\
\frac{\tilde{\rho}^{(t)}}{\tilde{\omega}^{\tv, (t)}} & = 1 + \frac{(\overline{\omega}^{\tq\tk, (t)})^2\omega^{\tv, (t)}}{\tilde{\omega}^{\tv, (t)}}\cdot (R^{(t)})^2 = 1+ \zeta^{(t)}\cdot (R^{(t)})^2,
\$
which implies
\$
{R}_\trans & = \sum_{t = 0}^{T-2}\biggl(\frac{R^{(t)}_\mha}{\tilde{\rho}^{(t)}} \cdot \prod_{\tau = t+1}^{T-1}\frac{\tilde{\rho}^{(\tau)}}{\tilde{\omega}^{\tv, (\tau)}}\biggr) + \sum_{t = 0}^{T-1}\biggl(\frac{\alpha^{\tx, (t)} R^{\sigma, (t)} + \alpha^{\sigma, (t)} R^{\tx, (t)}}{\tilde{\alpha}^{(t)}}\cdot \prod_{\tau = t+1}^{T-1}\frac{\tilde{\rho}^{(\tau)}}{\tilde{\omega}^{\tv, (\tau)}}\biggr)\\
& \leq \sum_{t = 0}^{T-1}\biggl\{(\kappa^{(t)} + \kappa^{(t)}) \cdot \prod_{\tau = t+1}^{T-1}\bigl[1+ \zeta^{(t)}\cdot (R^{(t)})^2\bigr]\biggr\}\\
& \leq 2\kappa \cdot \sum_{t = 0}^{T-1}\biggl\{\prod_{\tau = t+1}^{T-1}\bigl[1+ \zeta\cdot (R^{(t)})^2\bigr]\biggr\},
\$
where the last line follows from the definition of $\kappa$ and $\zeta$ in \eqref{eq:const-simplify}. By the definition of $R^{(t)}$ in \eqref{eq:rt} and the definition of $\gamma$ in \eqref{eq:const-simplify}, we have
\$
R^{(t)} = R^{(0)}\cdot \prod_{\tau = 0}^{t-1}\tilde{\omega}^{\tv, (\tau)}\tilde{\alpha}^{(\tau)} \leq R^{(0)}\cdot (1 + \gamma)^{2t}.
\$
As a consequence, we obtain
\$
R^{(T)} R_\trans & \leq 2\kappa R^{(0)} \cdot (1+\gamma)^{2T} \cdot \sum_{t = 0}^{T-1} \prod_{\tau = t+1}^{T-1}\bigl[1 + \zeta R^{(0)}\cdot (1+\gamma)^{4\tau} \bigr]\\
& \leq 2\kappa R^{(0)}\cdot (1+\gamma)^{2T} \cdot \sum_{t = 0}^{T-1}\bigl[1 + \zeta R^{(0)}\cdot (1+\gamma)^{4T} \bigr]^{T-t+2}\\
& = 2\kappa \cdot \bigl[1 + \zeta R^{(0)} \cdot (1+\gamma)^{4T} \bigr]^3 \cdot \frac{\bigl[1 + \zeta R^{(0)} \cdot (1+\gamma)^{4T} \bigr]^T - 1}{\zeta \cdot (1+\gamma)^{2T}}\\
& \leq \frac{2\kappa}{\zeta}  \cdot \bigl[1 + \zeta R^{(0)} \cdot (1+\gamma)^{4T} \bigr]^{T+3}.
\$
Thus, using $\sqrt{a+b} \leq \sqrt{a} + \sqrt{b}$ and $\log(1+ab)\leq \log(1+a) + \log(1+b)$, we further obtain
\$
\sqrt{\log(1 + R^{(T)} R_\trans)} & = O\Bigl(\sqrt{T^2  \log(1 + \gamma) + T  \log(1 + \zeta R^{(0)}) +  \log(1+ \kappa/\zeta)}\Bigr)\\
& = O\Bigl(T \sqrt{\log(1 + \gamma)} + \sqrt{T} \sqrt{\log(1 + \zeta R^{(0)})} +  \sqrt{\log(1+ \kappa/\zeta)}\Bigr).
\$
Therefore, we conclude the proof of Lemma \ref{lemma:simplify}.
\end{proof}
\begin{lemma}[Inter-Layer Magnitude]\label{lemma:mag}
Under Assumptions \ref{asu::data} and \ref{assumption:constraint}, it holds that
\$
\bigl\|(X^{(t)})^\top\bigr\|_{r, \infty} \leq \tilde \alpha^{(t)}R^{(t)}, \qquad \bigl\|(X_\star^{(t)})^\top\bigr\|_{r, \infty} \leq  R^{(t)},
\$
where $R^{(t)}$ is defined in \eqref{eq:rt}.
\end{lemma}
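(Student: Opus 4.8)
The plan is to prove both inequalities simultaneously by induction on the layer index $t = 0, \dots, T-1$, unrolling one transformer layer as the FFN step $X^{(t)} = \ffn(X_\star^{(t)}; A^{(t)})$ followed by the MHA-with-skip step $X_\star^{(t+1)} = \mha(X^{(t)}; W^{(t)}) + X^{(t)}$, as specified in \eqref{eq:trans-arch}. Throughout, I would work with a fixed input $X$ (any $X_i$ in the dataset), using that $\|(X_\star^{(t)})^\top\|_{r, \infty}$ is simply $\max_{\ell \in [L]}\|x_\star^{(t), \ell}\|_r$, the largest $\ell_r$-norm among the token columns, and that the base case $\|(X_\star^{(0)})^\top\|_{r, \infty} = \|X^\top\|_{r, \infty} \leq R^{(0)}$ is immediate from the definition of $R^{(0)}$ in \eqref{eq:rt}.

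For the FFN step, assuming the inductive hypothesis $\|(X_\star^{(t)})^\top\|_{r, \infty} \leq R^{(t)}$, I would start from the identity $(X^{(t)})^\top = (A^{\sigma, (t)})^\top\relu(X_\star^{(t)} A^{\tx, (t)})^\top + (X_\star^{(t)})^\top$ implied by \eqref{eq:ffn}, apply the triangle inequality, and control the first term by the submultiplicativity bounds of Lemma \ref{lemma:norm} together with the facts that $\relu$ acts entrywise, is $1$-Lipschitz, and vanishes at $0$, so it does not increase the $\ell_r$-norm of any token column. Inserting the spectral-norm bounds $\|(A^{\tx, (t)})^\top\|_r \leq \alpha^{\tx, (t)}$ and $\|(A^{\sigma, (t)})^\top\|_r \leq \alpha^{\sigma, (t)}$ from Assumption \ref{assumption:constraint} and recalling the definition of $\tilde\alpha^{(t)}$ in \eqref{eq:mag-xt}, this yields $\|(X^{(t)})^\top\|_{r, \infty} \leq (1 + \alpha^{\tx, (t)}\alpha^{\sigma, (t)})R^{(t)} = \tilde\alpha^{(t)} R^{(t)}$, which is the first claimed bound.

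For the MHA step, the key observation is that for every head $i \in [h]$ and every query index $\ell$, the softmax-normalized weight vector $\Norm_\sm(\fk_\rbf(K_i, q_i^\ell))$ is a probability vector and hence has unit $\ell_1$-norm; therefore, by the inequality $\|Mu\|_r \leq \|M\|_{r, \infty}\|u\|_1$ of Lemma \ref{lemma:norm}, the $\ell$-th output column of head $i$ has $\ell_r$-norm at most $\|V_i^\top\|_{r, \infty} = \|(W^{\tv, (t)}_i)^\top (X^{(t)})^\top\|_{r, \infty} \leq \omega_i^{\tv, (t)}\|(X^{(t)})^\top\|_{r, \infty}$. Summing over the $h$ heads and adding the skip connection gives $\|(X_\star^{(t+1)})^\top\|_{r, \infty} \leq (1 + \sum_{i = 1}^h \omega_i^{\tv, (t)})\|(X^{(t)})^\top\|_{r, \infty} = \tilde\omega^{\tv, (t)}\tilde\alpha^{(t)} R^{(t)} = R^{(t+1)}$, where the last equality is the recursion satisfied by $R^{(t)}$ in \eqref{eq:rt}. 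This closes the induction and proves both inequalities.

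There is no deep obstacle here; the argument is bookkeeping with the matrix $(r, s)$- and operator-norm inequalities of Lemma \ref{lemma:norm}. The only mildly delicate points are (i) checking that $\|\cdot\|_{r, \infty}$ on a transposed matrix is the maximum over token columns, so that the entrywise $\relu$ does not inflate it, and (ii) correctly invoking the unit $\ell_1$-norm of the softmax weights, which is exactly why skip-free attention contributes a factor $\omega^{\tv, (t)}$ rather than something involving the query/key norms, and hence why the resulting bound is independent of the sequence length $L$.
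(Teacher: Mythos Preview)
Your proposal is correct and follows essentially the same approach as the paper: the paper likewise bounds the FFN step by $\tilde\alpha^{(t)}$ (by specializing \eqref{eq:nn-diff} to $\tilde X_\star^{(t)}=0$) and the MHA-plus-skip step by $\tilde\omega^{\tv,(t)}$ (by specializing \eqref{eq:mha-decomp} and using that the softmax weights have unit $\ell_1$-norm), then recursively combines the two. Your explicit induction is a slightly cleaner packaging of the same argument, and your observations about $\relu$ not increasing token $\ell_r$-norms and the softmax $\ell_1$-normalization are exactly the two key ingredients the paper uses.
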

\begin{proof}
Setting $\hat{X} = 0^{L \times d}$ in \eqref{eq:mha-decomp}, we have
\$
& \bigl\|\mha(X; W)^\top\bigr\|_{r, \infty} \leq \sum_{i = 1}^h\|\head_i^\top\|_{r, \infty}.
\$
We have for all $i \in [h]$ that
\$
\|{\head}_i^\top\|_{r, \infty} & = \biggl\|\Bigl((X{W}^{\tv}_i)^\top\Norm_\sm\bigl(\fk_\rbf(XW^{\tk}_i, x^\ell W^{\tq}_i) \bigr)\Bigr)_{\ell \in [L]} \biggr\|_{r, \infty}\\
& = \max_{l \in [L]} \Bigl\|(X{W}^{\tv}_i)^\top\Norm_\sm\bigl(\fk_\rbf(XW^{\tk}_i, x^\ell W^{\tq}_i) \bigr) \Bigr\|_{r}\notag\\
& \leq \max_{l \in [L]}\Bigl\|\Norm_\sm\bigl(\fk_\rbf(XW^{\tk}_i, x^\ell W^{\tq}_i) \bigr)\Bigr\|_1 \cdot \bigl\|(W^{\tv}_i)^\top\bigr\|_{r}\cdot \| X^\top \|_{r, \infty}  \leq \omega^{\tv}_i\cdot \| X^\top \|_{r, \infty},
\$
which implies that
\$
\bigl\|\mha(X; W)^\top + X^\top\bigr\|_{r, \infty} \leq \|X^\top\|_{r, \infty} +  \biggl(\sum_{i = 1}^h\omega^{\tv}_i\biggr)\cdot \| X^\top \|_{r, \infty} = (1 + \omega^{\tv})\cdot \| X^\top \|_{r, \infty}.
\$
As a consequence, we have
\#\label{eq:z-mag}
 \bigl\|(X_\star^{(t)})^\top\bigr\|_{r, \infty} \leq (1 + \omega^{\tv, (t)})\cdot \bigl\|(X^{(t)})^\top\bigr\|_{r, \infty} = \tilde{\omega}^{\tv, (t)}\cdot \bigl\|(X^{(t)})^\top\bigr\|_{r, \infty}.
\#
On the other hand, setting $\hat{X}_\star^{(t)} = 0^{L \times d}$ in \eqref{eq:nn-diff}, we have
\#\label{eq:x-mag}
\bigl\|(X^{(t+1)})^\top\bigr\|_{r, \infty} = \bigl\|\ffn({X}_\star^{(t)}; A^{(t)})^\top\bigr\|_{r, \infty} \leq \tilde{\alpha}^{(t)} \cdot \bigl\|({X}_\star^{(t)})^\top\bigr\|_{r, \infty}.
\#
Recursively applying \eqref{eq:z-mag} and \eqref{eq:x-mag}, we conclude the proof of Lemma \ref{lemma:mag}.
\end{proof}

\end{document}